\documentclass{article}

\usepackage[main, final]{neurips_2026}

\usepackage[utf8]{inputenc} 
\usepackage[T1]{fontenc}    
\usepackage{hyperref}       
\usepackage{url}            
\usepackage{booktabs}       
\usepackage{amsfonts}       
\usepackage{nicefrac}       
\usepackage{microtype}      
\usepackage{xcolor}         
\usepackage[utf8]{inputenc} 
\usepackage[T1]{fontenc}    
\usepackage{hyperref}       
 
\usepackage{url}            
\usepackage{booktabs}       
\usepackage{amsfonts}       
\usepackage{nicefrac}       
\usepackage{microtype}      
\usepackage[dvipsnames]{xcolor}         
\definecolor{mypurple}{HTML}{6663A1}
\definecolor{myred}{HTML}{ED7D8D}
\definecolor{myblue}{HTML}{7CC9ED}
\definecolor{mygray}{HTML}{D9D9D9}
\usepackage{enumitem}
\usepackage{amsthm}
\usepackage{amsmath}
\usepackage{bm}
\usepackage{amssymb}
\usepackage{mathtools}
\usepackage[linesnumbered,ruled]{algorithm2e}
\usepackage{makecell}
\usepackage{multirow}
\usepackage{threeparttable}
\usepackage{capt-of}
\usepackage{booktabs}
\usepackage{wrapfig}
\usepackage{extarrows}
\usepackage{colortbl}
\usepackage{tikz}
\usepackage{pifont}
\usepackage[most]{tcolorbox}
\usepackage{setspace}
\usepackage{adjustbox}
\usepackage{fvextra}
\tcbuselibrary{breakable}
\usepackage{longtable}
\usepackage{wasysym}
\newcommand{\method}{\textsc{SAGE}}
\providecommand{\modelname}{\method{}}

\newcommand{\mycite}[1]{%
  \nocite{#1}%
  \hyperlink{cite.#1}{\ensuremath{\rhd}}%
}

\usepackage{url}            
\usepackage{booktabs}       
\usepackage{amsfonts}       
\usepackage{nicefrac}       
\usepackage{microtype}      
\usepackage[dvipsnames]{xcolor}         
\definecolor{mypurple}{HTML}{6663A1}
\definecolor{myred}{HTML}{ED7D8D}
\definecolor{myblue}{HTML}{7CC9ED}
\definecolor{mygray}{HTML}{D9D9D9}
\usepackage{enumitem}
\usepackage{amsthm}
\usepackage{amsmath}
\usepackage{bm}
\usepackage{amssymb}
\usepackage{mathtools}
\usepackage{makecell}
\usepackage{booktabs}
\usepackage{wrapfig}
\usepackage{extarrows}
\usepackage{colortbl}
\usepackage{tikz}
\usepackage{pifont}
\usepackage[most]{tcolorbox}
\usepackage{setspace}
\usepackage{adjustbox}
\usepackage{fvextra}
\tcbuselibrary{breakable}
\usepackage{longtable}
\usepackage{wasysym}
\usepackage{graphicx}
\usepackage{wrapfig}

\renewcommand\eqref[1]{\textup{(\ref{#1})}}

\theoremstyle{definition}

\providecommand{\Prob}{\mathbb{P}}

\newtcolorbox{mathbox}{
  colframe=black,
  colback=white,
  rounded corners,
  boxrule=0.5pt,
  breakable,
  left=0.5pt, right=0.5pt, top=0.5pt, bottom=0.3pt
}

\providecommand{\placeholdercell}{\cellcolor{mygray!2}{--}}
\providecommand{\placeholderrank}{\cellcolor{myred!2}{--}}
\providecommand{\NA}{--}
\providecommand{\litA}{\textsuperscript{\ensuremath{\dagger}}}
\providecommand{\litB}{\textsuperscript{\ensuremath{\ddagger}}}
\providecommand{\litC}{\textsuperscript{\ensuremath{\star}}}
\providecommand{\zsmark}{\textsuperscript{\textcolor{red}{0-shot}}}

\newcommand{\constructor}{\pi_{\theta}}
\newcommand{\retriever}{f_{\phi}}

\newcommand{\support}{\mathcal{D}^{+}}
\newcommand{\contextset}{\mathcal{D}}
\newcommand{\kg}{\mathcal{G}}
\newcommand{\triples}{\mathcal{T}}
\newcommand{\entityset}{\mathcal{V}_{e}}
\newcommand{\docset}{\mathcal{V}_{d}}
\newcommand{\edgeset}{\mathcal{E}}
\newcommand{\TopK}{\mathrm{Top}\text{-}k}
\newcommand{\I}{\mathbb{I}}
\newcommand{\R}{\mathbb{R}}
\newcommand{\E}{\mathbb{E}}
\newcommand{\A}{\mathcal{A}}
\newcommand{\Sset}{\mathcal{S}}
\newcommand{\act}{a}
\newcommand{\state}{s}

\newcommand{\reward}{\mathcal{R}}

\newcommand{\V}{\mathcal{V}}
\newcommand{\M}{\mathbf{M}}
\newcommand{\Hh}{\mathbf{H}}
\newcommand{\xx}{\mathbf{x}}
\newcommand{\hh}{\mathbf{h}}
\newcommand{\zz}{\mathbf{z}}
\newcommand{\rr}{\mathbf{r}}

\newcommand{\mm}{\mathbf{m}}

\newcommand{\one}{\mathbf{1}}
\newcommand{\sigmoid}{\operatorname{sigmoid}}
\newcommand{\eps}{\varepsilon}
\newcommand{\softmax}{\operatorname{softmax}}
\newcommand{\mean}{\operatorname{mean}}
\newcommand{\std}{\operatorname{std}}
\newcommand{\diag}{\operatorname{diag}}
\newcommand{\dens}{\operatorname{dens}}

\newcommand{\Emb}{\operatorname{Emb}}
\newcommand{\MLP}{\operatorname{MLP}}
\newcommand{\BCE}{\operatorname{BCE}}
\newcommand{\BCEWithLogits}{\operatorname{BCEWithLogits}}

\newcommand{\gv}{\mathbf{g}}
\newcommand{\mv}{\mathbf{m}}
\newcommand{\bv}{\mathbf{b}}
\newcommand{\pv}{\mathbf{p}}
\newcommand{\ppi}{\bm{\pi}}
\newcommand{\Pv}{\mathbf{P}}
\newcommand{\uv}{\mathbf{u}}
\newcommand{\uu}{\mathbf{u}}
\newcommand{\vv}{\mathbf{v}}
\newcommand{\vvec}{\mathbf{v}}
\newcommand{\av}{\mathbf{a}}
\newcommand{\PReLU}{\operatorname{PReLU}}
\newcommand{\LayerNorm}{\operatorname{LayerNorm}}
\newcommand{\Dropout}{\operatorname{Dropout}}

\newcommand{\SNR}{\operatorname{SNR}}
\newcommand{\ctx}{\mathrm{ctx}}
\newcommand{\sch}{\mathrm{sch}}

\newcommand{\leak}{\mathrm{leak}}

\newcommand{\bbone}{\mathbf{1}}

\newcommand{\calN}{\mathcal{N}}

\newcommand{\calB}{\mathcal{B}}

\providecommand{\method}{SAGE}

\newcommand{\cH}{\mathcal{H}}
\newcommand{\cG}{\mathcal{G}}
\newcommand{\cD}{\mathcal{D}}
\newcommand{\cR}{\mathcal{R}}

\newcommand{\norm}[1]{\left\lVert #1\right\rVert}
\newcommand{\abs}[1]{\left\lvert #1\right\rvert}

\DeclareMathOperator{\dist}{dist}
\DeclareMathOperator{\TV}{TV}

\DeclareMathOperator{\LN}{LN}

\DeclareMathOperator{\Rad}{Rad}
\theoremstyle{plain}
\newtheorem{theorem}{Theorem}[section]
\newtheorem{lemma}[theorem]{Lemma}
\newtheorem{corollary}[theorem]{Corollary}
\theoremstyle{definition}
\newtheorem{definition}[theorem]{Definition}
\newtheorem{assumption}[theorem]{Assumption}
\theoremstyle{remark}
\newtheorem{remark}[theorem]{Remark}
\providecommand{\Rad}{\operatorname{Rad}}
\providecommand{\dist}{\operatorname{dist}}
\newtheorem{proposition}{Proposition}

\tcbset{
  sagepropbase/.style={
    enhanced,
    breakable,
    arc=2mm,
    boxrule=0.7pt,
    left=1.2mm,
    right=1.2mm,
    top=1.0mm,
    bottom=1.0mm,
    before skip=8pt,
    after skip=8pt,
    fonttitle=\sffamily\bfseries\small,
    coltitle=white,
    attach boxed title to top left={xshift=2.5mm,yshift=-2.1mm},
    boxed title style={
      sharp corners,
      rounded corners=northwest,
      rounded corners=southeast,
      boxrule=0pt,
      left=1.2mm,
      right=1.2mm,
      top=0.6mm,
      bottom=0.6mm
    }
  },
  sagesnrbox/.style={
    sagepropbase,
    colback=SAGEBlueLight,
    colframe=SAGEBlue!75!black,
    boxed title style={colback=SAGEBlue!85!black},
    underlay unbroken and first={
      \fill[SAGEBlue!75!black]
      ([xshift=0pt]frame.south west)
      rectangle
      ([xshift=2.2pt]frame.north west);
    }
  },
  sagebudgetbox/.style={
    sagepropbase,
    colback=SAGEVioletLight,
    colframe=SAGEViolet!75!black,
    boxed title style={colback=SAGEViolet!85!black},
    underlay unbroken and first={
      \fill[SAGEViolet!75!black]
      ([xshift=0pt]frame.south west)
      rectangle
      ([xshift=2.2pt]frame.north west);
    }
  }
}

\newenvironment{sagepropbox}[3]{%
  \refstepcounter{proposition}\label{#2}%
  \begin{tcolorbox}[
    #3,
    title={Proposition~\theproposition. #1}
  ]%
}{%
  \end{tcolorbox}
}

\usepackage{amsmath,amssymb}
\usepackage[nameinlink,noabbrev]{cleveref}

\title{\textsc{SAGE}: A Self-Evolving Agentic Graph-Memory Engine for Structure-Aware Associative Memory}

\author{
Juntong Wang$^{1,2}$ \hspace{2mm}
Haoyue Zhao$^{3}$ \hspace{2mm}
Guanghui Pan$^{3}$ \hspace{2mm}
Yanbo Wang$^{1,2}$ \\
Xiyuan Wang$^{1,2}$ \hspace{2mm}
Qiyan Deng$^{3}$ \hspace{2mm}
Muhan Zhang$^{1}$\thanks{Corresponding author.} \\
$^1$Institute for Artificial Intelligence, Peking University \\
$^2$School of Intelligence Science and Technology, Peking University \\
$^3$School of Computer Science and Technology, Beijing Institute of Technology \\
\texttt{jtwang25@stu.pku.edu.cn},
\texttt{18503260963@163.com},
\texttt{3220251221@bit.edu.cn}, \\
\texttt{wangyanbo@stu.pku.edu.cn},
\texttt{wangxiyuan@pku.edu.cn},
\texttt{qiyandeng@bit.edu.cn},
\texttt{muhan@pku.edu.cn}
}

\begin{document}

\maketitle

\begin{abstract}
Long-term memory is becoming a central bottleneck for language agents. Existing RAG and GraphRAG systems largely treat memory graphs as static retrieval middleware, which limits their ability to recover complete evidence chains from partial cues, exploit reusable graph-structural roles, and improve the memory itself through downstream feedback. We introduce \method{}, a \textbf{S}elf-evolving \textbf{A}gentic \textbf{G}raph-memory \textbf{E}ngine that models graph memory as a dynamic long-term memory substrate. \method{} couples two roles: a memory writer that incrementally constructs structured graph memory from interaction histories, and a Graph Foundation Model-based memory reader to perform retrieval and provide feedback to the memory writer. We provide rigorous theoretical analyses supporting the effectiveness of carefully designed architectural components and the framework. Across multi-hop QA, open-domain retrieval, domain-specific review QA, and long-term agent-memory benchmarks, SAGE improves evidence recovery, answer grounding, and retrieval efficiency: after two self-evolution rounds, it achieves the best average rank on multi-hop QA; in zero-shot open-domain transfer, it reaches 82.5/91.6 Recall@2/5 on NQ. Further results on LongMemEval and HaluMem show that training and reader–writer feedback improve multiple long-term memory and hallucination-diagnostic metrics, suggesting that self-evolving, structure-aware graph memory is a promising foundation for robust long-horizon language agents. Our code is available \href{https://anonymous.4open.science/r/Unified-Representation-A9D9/}{here}.
\end{abstract}

\section{Introduction}
\label{sec:introduction}

As large language models evolve from single-turn question-answering systems into general-purpose agents for multi-turn dialogue, personalized assistance, multi-agent collaboration, and open-environment exploration, the system bottleneck is shifting from whether a model can answer within the current context to whether it can accumulate, organize, invoke, and update memory over longer time scales. Memory is a core system capability that determines whether Agents can achieve long-term consistency, personalized adaptation, cross-turn reasoning, and self-improvement.\textbf{ Memory is to Agents what parameters are to foundation models} ~\citep{park2023generative,zhong2024memorybank,packer2024memgpt,wu2025humanmemorysurvey,yang2026graphmemorysurvey}. Recent memory benchmarks have made this bottleneck explicit, evaluating agents on ultra-long conversational consistency, multi-session reasoning, temporal reasoning, knowledge updating, selective forgetting, abstention, and hallucination control~\citep{maharana2024locomo,wu2024longmemeval,hu2025memoryagentbench,chen2025halumem,li2026locomoplus}.

In engineering practice, RAG has become the dominant non-parametric interface for extending language models with external memory, alleviating the static nature of parametric knowledge and the limited size of context windows~\citep{lewis2020retrieval}. Yet standard RAG usually retrieves independent text chunks, whereas long-term agent memory often requires recovering evidence distributed across entities, events, aliases, temporal constraints, and multi-hop dependencies. GraphRAG takes an important step by organizing documents, entities, relations, and summaries as graphs, making cross-document dependencies and reasoning paths more explicit~\citep{edge2024local,gutierrez2024hipporag}. 
However, for long-horizon agents, graph structure should not merely serve as an external retrieval index. In this work, we study \emph{agent graph memory} as a coupled write--read--update problem. Given interaction histories or external documents, a memory writer should construct an \textbf{evolving graph} whose nodes and edges encode entities, episodes, documents, aliases, temporal constraints, and cross-fragment relations. Given a query, a memory reader should not simply expand from a few matched entities; it should return a \textbf{compact, verifiable evidence chain}. The retrieval outcome should further \textbf{provide feedback about what the graph lacks}. In other words, the graph is not only built before retrieval and searched afterward; it is the \textbf{working substrate through which memory is written, read, corrected, and self-improved}. Around this goal, we identify the following three core challenges.

\begin{figure}[t]
    \centering
    \includegraphics[width=\textwidth]{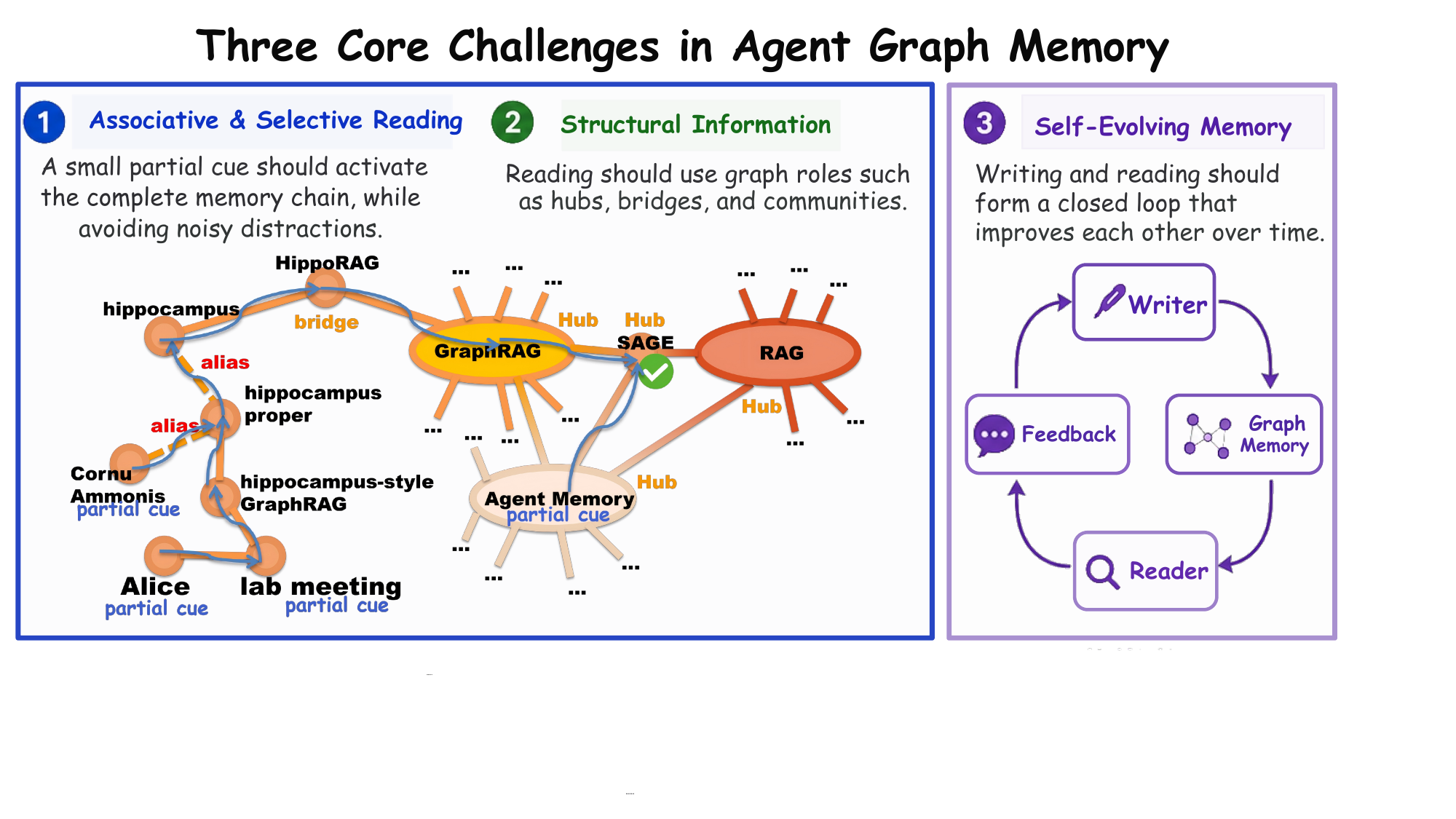}
    \vspace{-1.5em}
    \caption{
    Overview of the three core challenges in agent graph memory, illustrated with a concrete example. Given the query, \textit{\textbf{“Alice mentioned a work in last week’s lab meeting that seemed to be inspired by the Cornu Ammonis. Among works in the same field as that work, are there any that can also help with agent memory? Give one example,”}} the memory reader must address associative and selective reading by expanding sparse partial cues into the correct evidence chain while avoiding noisy distractors. It must then exploit structural information in the memory graph, such as aliases, bridges, and hubs, to traverse from Cornu Ammonis to SAGE. Self-evolving memory highlights the closed loop between writing and reading.
    }
    \label{fig:agent_graph_memory_challenges}
    \vspace{-1.5em}
\end{figure}

\noindent\textbf{Challenge I: Agent memory requires global associative reading from fragmented cues.}
The first challenge is not merely to retrieve text that is semantically similar to the query, but to reconstruct a complete reasoning chain from sparse, fragmented, and sometimes indirect cues. In long-term agent memory, a query may mention only an episodic clue, an alias, or a distant conceptual hint, while the answer depends on intermediate entities that are not explicitly named. Standard vector retrieval tends to return locally similar snippets, and many graph-based retrieval methods start propagation from a small set of query-matched anchor entities. However, if these anchors only cover a local subgraph, the necessary bridge nodes may lie outside the activated region, leaving the evidence chain disconnected even after graph propagation. Thus, agent memory reading should not commit too early to a small set of partial cues~\citep{trivedi2023interleaving,gutierrez2024hipporag}.

\noindent\textbf{Challenge II: Agent memory requires learned structural use rather than fixed structural expansion.}
The second challenge is that graph structure should not be used only as a fixed index after graph construction. Many GraphRAG-style systems exploit structure through pre-built communities, paths, graph indexes, or heuristic expansion rules, but once the graph is constructed, the role of structure is largely fixed: a hub remains broadly expanded, a bridge may be missed if it is not reached by the initial anchors, and noisy shortcuts may be treated similarly to useful evidence edges. This is insufficient for agent memory, where the graph itself is continuously updated by new interactions and where the same topological pattern may have different meanings across domains. A structure-aware reader should therefore learn how structural roles affect retrieval~\citep{edge2024local,gutierrez2024hipporag,liu2025graphfoundation,luo2025gfm}.

The example in Figure~\ref{fig:agent_graph_memory_challenges} illustrates both challenges. Given the query, the explicit cues are only \emph{Alice}, \emph{lab meeting}, \emph{Cornu Ammonis}, and \emph{agent memory}. A retrieval system that only anchors on the most query-matched nodes may retrieve the meeting note or the biological cue, but still fail to connect them to \emph{hippocampus}, \emph{HippoRAG}, \emph{GraphRAG}, and finally \method{}. This is the associative-reading challenge: the system must piece together a long chain from scattered cues. At the same time, the correct path depends on structural roles: \emph{HippoRAG} is a bridge, \emph{GraphRAG} and \emph{RAG} are hubs that must be controlled rather than blindly expanded, and the edge from \emph{GraphRAG} to \method{} is critical for reaching the final answer. This is the structural-information challenge: the reader must use graph topology in a learned and selective way, not simply propagate uniformly over a fixed graph.

\noindent\textbf{Challenge III: Existing methods mostly optimize retrieval trajectories, but rarely optimize the self-evolution of the memory system itself.}
Existing RAG and GraphRAG systems often assume that the external memory graph or knowledge base is already available, so the main problem becomes how to retrieve from it. For long-term agents, however, writing is itself part of the memory problem. Conversely, retrieval failures provide useful signals about what the memory graph lacks. For example, if the reader repeatedly needs to traverse from \emph{Cornu Ammonis} to \emph{hippocampus-style GraphRAG} and then to the GraphRAG literature, the memory system should gradually add or strengthen useful structural links, such as a more direct edge from \emph{hippocampus-style GraphRAG} to \emph{GraphRAG}. Thus, a true agent memory system should not only optimize retrieval trajectories; it should optimize the memory graph itself through a closed loop in which better reading exposes writing deficiencies, and better writing makes future reading more accurate, selective, and efficient~\citep{chen2025halumem}.

To address these challenges, we propose \method{}, a \textbf{\textit{S}}elf-evolving \textbf{\textit{A}}gentic \textbf{\textit{G}}raph-memory \textbf{\textit{E}}ngine. Unlike GraphRAG systems that mainly use graphs as retrieval middleware, \method{} treats the graph as a dynamic long-term memory object. It couples two mutually reinforcing components: a memory writer incrementally constructs and revises graph memory; a Graph Foundation Model-based memory reader to perform retrieval and provide feedback to the memory writer. \method{} directly targets the three challenges above: it recovers long reasoning chains from fragmented cues, learns how to use structural roles rather than propagate uniformly, and continuously improves the graph memory itself for future queries.

\section{Related Work}
\label{sec:related_work}

\paragraph{Retrieval-Augmented Generation and GraphRAG.}
Retrieval-Augmented Generation (RAG) provides a non-parametric interface for language models by retrieving external evidence before generation~\citep{lewis2020retrieval}. Many variants further improve retrieval timing, reasoning interaction, adaptive policies, and hierarchical organization~\citep{jiang2023active,trivedi2023interleaving,asai2024selfrag,jeong2024adaptive,sarthi2024raptor}. GraphRAG enables structured retrieval over cross-document dependencies and multi-hop evidence paths~\citep{edge2024local,he2024g,guo2024lightrag,li2025simple,wang2025proprag,xu2025noderag,zhao2025e2graphrag,zhang2025survey,gutierrez2024hipporag,gutierrez2025rag,luo2025gfm}. Another line of work improves retrieval by optimizing retrieval trajectories, including interleaved retrieval and reasoning, self-reflective retrieval, adaptive retrieval, multi-agent RAG, and reinforcement-learning-based query rewriting~\citep{trivedi2023interleaving,asai2024selfrag,jeong2024adaptive,chen2025marlrag,cha2025rlqr,tsang2025autograph}.

\paragraph{Agent Memory.}
Agent memory studies how LLM-based agents store, update, retrieve, and use past experiences~\citep{park2023generative,zhong2024memorybank,packer2024memgpt,chhikara2025mem0,xu2025amem,rasmussen2025zep,kang2025memoryos,zhang2025gmemory,wu2025sgmem,zhang2025assomem,huang2025licomemory,yue2026hypermem}. Recent surveys also highlight the importance of human-inspired and graph-based memory mechanisms for LLM agents~\citep{wu2025humanmemorysurvey,yang2026graphmemorysurvey}. Meanwhile, memory benchmarks evaluate long-term consistency, event reasoning, multi-session reasoning, temporal reasoning, knowledge updating, selective forgetting, abstention, and hallucination control~\citep{maharana2024locomo,wu2024longmemeval,hu2025memoryagentbench,chen2025halumem,li2026locomoplus}. 

\paragraph{Graph Foundation Models.}
Graph Foundation Models (GFMs) aim to learn transferable graph representations through large-scale pretraining, allowing models to reuse structural priors and semantic patterns across graphs, tasks, and domains~\citep{liu2025graphfoundation}. Representative early works include GCC, GPT-GNN, and GraphCL, which learn transferable graph representations through cross-network contrast, generative graph pretraining, and graph augmentation based contrastive learning~\citep{qiu2020gcc,hu2020gptgnn,you2020graphcl,yu2025samgpt}.

\section{Preliminary}
\label{sec:preliminary}

Given a knowledge-intensive memory sample \(x=(q,\mathcal{D},\mathcal{D}^{+},y)\), where \(q\) denotes the query, \(\mathcal{D}=\{d_i\}_{i=1}^{N}\) denotes the set of candidate historical memory fragments, \(\mathcal{D}^{+}\subseteq\mathcal{D}\) denotes the gold evidence set that supports the answer, and \(y\) denotes the ground-truth answer. The writer is viewed as a structured policy model: at step \(h\), given state \(s_h\), the policy samples a writing action \(a_h\sim\pi_{\theta}(\cdot\mid s_h)\) and updates the partial graph as \(\mathcal{G}_{h+1}=\mathcal{G}_{h}\oplus a_h\). The memory reader \(\mathcal{R}_{\phi}\) performs query-conditioned propagation over the graph, obtains entity relevance scores \(\mathbf{s}_{E}=f_{\phi}(q,\mathcal{G})\in\mathbb{R}^{|\mathcal{V}_{E}|}\), and then projects them into memory-fragment scores. The reader finally outputs \((\widehat{\mathcal{D}}_{k},\widehat{\mathcal{G}}_{q},\Pi_q)=\mathcal{R}_{\phi}(q,\mathcal{G},\mathbf{M})\), where \(\widehat{\mathcal{D}}_{k}=\mathrm{TopK}_{d\in\mathcal{D}}(\mathbf{s}_{D}(d))\), \(\widehat{\mathcal{G}}_{q}\) is the query-activated subgraph, and \(\Pi_q\) denotes optional relational paths. The generation model then produces the answer \(\widehat{y}=\mathrm{LLM}(q,\widehat{\mathcal{D}}_{k},\Pi_q)\).


\definecolor{SAGEBlue}{HTML}{FFB6C1}
\definecolor{SAGEBlueLight}{HTML}{FFF5F8}
\definecolor{SAGEViolet}{HTML}{FFB6C1}
\definecolor{SAGEVioletLight}{HTML}{FFF5F8}

\section{Method}
\label{sec:method}

\begin{figure}[t]
    \centering
    \includegraphics[width=\textwidth]{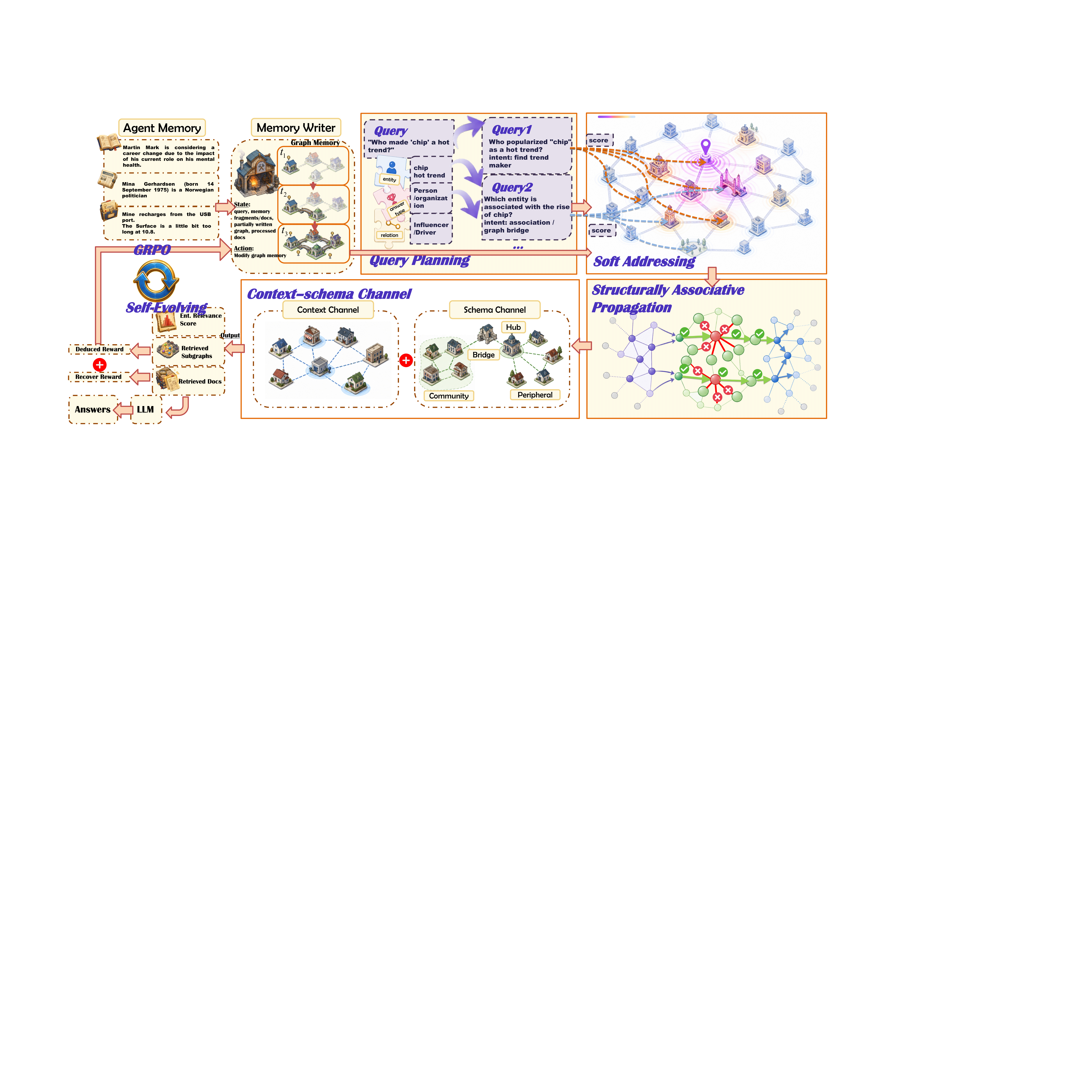}
    \vspace{-1.2em}
    \caption{
    Overall pipeline of the proposed \textbf{SAGE}.
    The memory writer incrementally constructs and updates graph memory from observations through state-conditioned writing actions, and receives rewards from downstream memory use.
    The resulting retrieval feedback closes the loop between writing and reading, enabling graph memory to improve over time.
    }
    \label{fig:memory_writer_pipeline}
    \vspace{-1.2em}
\end{figure}


At a high level, our method builds a self-evolving graph memory pipeline (Figure~\ref{fig:memory_writer_pipeline}). The memory writer \(\mathcal{W}_{\theta}\) first transforms the query and candidate historical memory fragments into a heterogeneous graph memory \(\mathcal{G}\). The memory reader \(\mathcal{R}_{\phi}\) then performs query-conditioned activation over \(\mathcal{G}\): it softly locates query-relevant entities, propagates evidence signals through relational structures, and projects the activated entity-level information back to memory fragments.

\subsection{Memory Writer: Graph Memory Writing via Reading Feedback}
\label{subsec:memory_writer}

\paragraph{Policy-based writing.}
The writer is modeled as a sequential decision-making policy. At step $t$, the state is defined as
\(s_t=\big(q,\contextset,\kg_{t-1},\mathcal{D}^{\mathrm{proc}}_{t-1}\big)\), where $\kg_{t-1}$ is the partially written graph, and $\mathcal{D}^{\mathrm{proc}}_{t-1}$ denotes the set of processed documents. The action $a_t$ contains entity-relation triples $(u,r,v)$ together with their source anchors $(u,\texttt{source},d)$. Detailed implementation information is provided in Appendix~\ref{app:writer_details}.

\paragraph{Reader-aware Writing Reward.}
The writer's reward stems from the task utility of its written graph after being accessed by the memory reader. Given the current graph $\kg$, the frozen reader returns the evidence $P_k(q,\kg)$. Inspired by~\citep{tsang2025autograph}, we employ two complementary types of rewards. The first category measures whether the graph is sufficient as a knowledge carrier to support the derivation of the answer: \(r_{\mathrm{ded}}(q,y,\kg)
    =
    \mathbb{I}
    \left[
        \operatorname{Judge}
        \big(q,y\mid P_k(q,\kg)\big)
        =
        \texttt{Yes}
    \right]\). The second category measures whether the graph can serve as a knowledge index to recover the supporting text: \(r_{\mathrm{rec}}(q,\support,\kg)
    =
    \frac{|P_k(q,\kg)\cap\support|}{|\support|},
    r_{\mathrm{pre}}(q,\support,\kg)
    =
    \frac{|P_k(q,\kg)\cap\support|}{|P_k(q,\kg)|}\). Where $r_{\mathrm{rec}}$ encourages the coverage of necessary evidence, while $r_{\mathrm{pre}}$ penalizes the expansion of irrelevant evidence. To align with end-to-end question answering, we also use an answer-level auxiliary reward \(r_{\mathrm{ans}}(q,y,\kg)
    =
    \max_{y'\in\mathcal{Y}(y)}
    \mathrm{F1}\big(\hat y,y'\big),
    \hat y=\operatorname{LLM}\big(q,P_k(q,\kg)\big)\), where $\mathcal{Y}(y)$ is the set of answer aliases. In practice, we adopt a hybrid task reward \(r_{\mathrm{task}}
    =
    \frac{
        \alpha r_{\mathrm{rec}}
        +
        \beta r_{\mathrm{pre}}
        +
        \gamma r_{\mathrm{ded}}
    }{
        \alpha+\beta+\gamma
    }\).
   
Furthermore, to prevent the policy from inflating the graph size by stacking duplicate triples, we define a repetition rate: \(\rho_{\mathrm{rep}}(\kg)
    =
    \dfrac{
        |\triples(\kg)|-
        |\operatorname{uniq}(\triples(\kg))|
    }{
        |\triples(\kg)|
    }\)
and derive the trajectory return \(R(\tau)
    =
    r_{\mathrm{task}}(\tau)
    -
    \lambda_{\mathrm{rep}}\rho_{\mathrm{rep}}(\kg_\tau)
    +
    \lambda_{\mathrm{fmt}}
    \sum_{t=1}^{|\tau|}
    r^{\mathrm{fmt}}_t\) . This directly addresses the issue revealed by works such as HaluMem: errors in memory systems often do not emerge only at the answering stage, but are already written during the extraction and updating phases~\citep{chen2025halumem}. We employ standard clipped GRPO to update the writer.

\subsection{Memory Reader: Memory Retrieval Based on Graph Foundation Model}
\label{subsec:memory_reader}

The memory reader must operate stably over graph memory that is continuously updated by the writer. Dense retrievers mainly learn query--document semantic matching and thus struggle to exploit entity roles, bridge paths, and cross-community dependencies, while conventional GNN retrievers are often tied to fixed graph distributions and generalize poorly across domains, users, and evolution stages. We therefore adopt a Graph Foundation Model (GFM) as the memory reader, whose multi-graph pre-training enables transferable structural priors and lightweight calibration on new graphs~\citep{luo2025gfm,zhang2025survey}. Formally, the memory reader outputs an entity distribution, a document distribution, and an optional retrieval subgraph \(\retriever(q,\kg,\contextset)
    =
    \big(
        p_\phi(e\mid q,\kg),
        p_\phi(d\mid q,\kg,\contextset),
        \kg_q
    \big)\). Where $p_\phi(e\mid q,\kg)$ represents the entity memory activated by the query, $p_\phi(d\mid q,\kg,\contextset)$ denotes the final retrieved textual evidence, and $\kg_q$ provides an interpretable retrieval path. To obtain a compact and query-aligned activated subgraph, we further introduce a lightweight query-conditioned subgraph selector; implementation details are provided in Appendix~\ref{app:selector_regularizer}.

\paragraph{Cognition-inspired Structured Query Planning.} 
When humans extract long-term memories, the brain often automatically generates multi-dimensional retrieval cues to anchor the target based on only a vague final intention. Inspired by this, we no longer treat the natural language query as a single retrieval command. Instead, we introduce a planning function $\mathcal{P}_\omega$ to simulate the cue reconstruction process of the human brain before awakening memory, decomposing the initial query into a set of rich associative probes: \(\mathcal{P}_\omega(q)
    =
    \Big(
        \mathcal{E}_{\mathrm{exp}},
        \mathcal{A},
        \mathcal{C}_{\mathrm{rel}},
        \mathcal{C}_{\mathrm{hard}},
        \tau,
        \{(\tilde q_m,\alpha_m,t_m)\}_{m=1}^{M}
    \Big)\). Detailed definitions of the notation, additional information, and the concrete prompt templates and output schema are provided in Appendix~\ref{app:rewriter_details}. This multi-path concurrent awakening method effectively overcomes the "tip-of-the-tongue phenomenon" (i.e., difficulties in alias alignment or missing bridging entities) and naturally stitches together forgotten implicit relationships~\citep{trivedi2023interleaving,asai2024selfrag,wu2025sgmem,zhang2025assomem}.

\paragraph{Soft Addressing and Pre-activation of Memory Fragments.}
Cognitive neuroscience reveals that human memory retrieval involves not only the extraction of perfectly matching information but also the instinctive awakening of peripherally related memories through \textit{Semantic Priming}. And to address the first challenge, we treat the calculation of the query-conditioned entry score $s_e(q)$ as a comprehensive assessment of the stimulus intensity across different \textit{Memory Engrams}:
\begin{align}
    s_e(q)
    =
    &\lambda_1\,\operatorname{Exact}(e,\mathcal{E}_{\mathrm{exp}})
    +\lambda_2\,\operatorname{Alias}(e,\mathcal{A})
    +\lambda_3\max_{m\le M}
    \cos\big(
        \operatorname{Emb}(\operatorname{desc}(e)),
        \operatorname{Emb}(\tilde q_m)
    \big)
    \nonumber\\
    &+
    \lambda_4\,\operatorname{Type}(e,\tau)
    +\lambda_5\,\operatorname{Cons}(e,\mathcal{C}_{\mathrm{hard}})
    +\lambda_6
    \sum_{\xi\in\operatorname{NER}(q)}
    \operatorname{EL}(e\mid \xi).
    \label{eq:soft_address}
\end{align}
Subsequently, the system employs a Softmax function with a temperature coefficient $T_0$ to simulate the brain's limited Attention Allocation mechanism during retrieval. This normalizes the multi-dimensional stimulus signals to form the initial activation distribution of the memory atlas \(p_0(e\mid q)
    =
    \frac{\exp(s_e(q)/T_0)}
    {\sum_{v\in\V_E}\exp(s_v(q)/T_0)}\). Based on this distribution, we define the initial state of the memory nodes as \(\hh_e^{(0)}
    =
    \big(p_0(e\mid q)\big)^\eta W_q\operatorname{Emb}(q)
    +
    W_x \xx_e\). In this process, $\xx_e$ acts as the solidified \textbf{long-term memory} (static representation of entities) in the brain, while the query vector adjusted by the cognitive recall degree $p_0(e\mid q)$ represents the current \textbf{working memory} (task context).

\paragraph{Synapse-inspired Structurally Conditioned Associative Propagation.}
To address the second challenge while avoiding indiscriminate diffusion, we introduce edge-level vector structural gating in the GFM. The node-level structural features, edge-pair structural features, and graph-level summary are defined as:
\begin{align}
    \phi(v)
    &=
    \big[
        \log(1+d_v),
        c_v,
        \kappa_v,
        \bar d_{\mathcal{N}(v)}
    \big],
    \label{eq:node_struct}\\
    \psi(u,v)
    &=
    \big[
        |d_u-d_v|,
        |\mathcal{N}(u)\cap\mathcal{N}(v)|,
        \operatorname{Jaccard}(\mathcal{N}(u),\mathcal{N}(v))
    \big],
    \label{eq:edge_struct}\\
    \rr_{\kg}
    &=
    \big[
        \operatorname{mean}_{v\in\V_E}\phi(v);
        \operatorname{std}_{v\in\V_E}\phi(v);
        \operatorname{dens}(\kg)
    \big].
    \label{eq:graph_struct}
\end{align}
Detailed definitions and normalization procedures are provided in Appendix~\ref{app:structure_features_detailed}. The edge structural context for the $l$-th layer is \(\zz_{uv}^{(l)}
    =
    \big[
        E_n^{(l)}(\phi(u));
        E_n^{(l)}(\phi(v));
        E_p^{(l)}(\psi(u,v));
        E_g^{(l)}(\rr_\kg)
    \big]\), which generates the vector gating \(\gv_{uv}^{(l)}
    =
    \one
    +
    \delta
    \tanh
    \big(
        \operatorname{MLP}_g^{(l)}(\zz_{uv}^{(l)})
    \big)\). Let $\eta_{uv}$ be the normalized adjacency weight with self-loops; the message and node updates are \(\mv_{u\to v}^{(l)}=
    \eta_{uv}
    \,
    \gv_{uv}^{(l)}
    \odot
    W_m^{(l)}
    \hh_u^{(l-1)}, \hh_v^{(l)}
    =
    \operatorname{LayerNorm}
    \left(
        \hh_v^{(l-1)}
        +
        \operatorname{PReLU}
        \left(
            \bv^{(l)}
            +
            \sum_{u\in\mathcal{N}(v)}
            \mv_{u\to v}^{(l)}
        \right)
    \right)\). Unlike traditional heuristic path expansion, PPR walks, or community summarization~\citep{edge2024local,guo2024lightrag,wang2025proprag}, the system here can actively perform \textbf{Inhibition} of non-specific generalized memories (suppressing hub edges), keenly capture and preserve \textbf{long-distance associations} across different cognitive clusters (lateral thinking/bridge edge preservation), and undergo \textbf{Habituation} (weakening redundant edges) toward highly repetitive local information, much like the human brain.

Traditional query-dependent GNNs or PPR-style expansion can perform multi-hop propagation along graph structures. But the key issue is not simply to expand the propagation range, but to preserve the advantage of query-relevant evidence signal over distractor noise under a limited top-$k$ budget. Proposition~\ref{prop:method_reader_theory}(i) summarizes this signal--budget view: soft addressing improves the initial evidence activation, structural gating preserves bridge/evidence paths while suppressing noisy neighborhoods, and controlled entity-to-document projection converts the entity-level advantage into more efficient document-level retrieval. Complete definitions, assumptions, and proofs are provided in Appendix~\ref{sec:theory_signal_noise_budget}.

\paragraph{Target Graph Calibration and Cross-graph Structural Priors.}
Human memory, on one hand, reorganizes cues based on the current context, while on the other, it retains relatively stable structured recall habits. In our self-evolving graph memory, each $\kg$ generated by the writer per round alters the local topology and noise distribution; therefore, the reader cannot rely solely on propagation patterns from a fixed graph. 

Since the writer continuously changes the memory graph, the reader must simultaneously adapt to the current target graph and preserve cross-graph structural priors. This is precisely why we introduce the context--schema decomposition. As summarized in Proposition~\ref{prop:method_reader_theory}(ii), the schema channel provides a transferable structural prior, while the context channel corrects the target-graph residual induced by the current writer, current domain, entity granularity, and local noise. The complete theoretical motivation is provided in Appendix~\ref{app:theory_schema_prior} and Appendix~\ref{app:theory_graph_shift}.


First, a feature prompt vector $\pv_f$ is used for a lightweight calibration of the query-activated input \(\tilde\hh_e^{(0)}
    =
    \pv_f\odot \hh_e^{(0)}\). The contextual calibration channel performs gated propagation on the current graph $\kg$: \(\Hh_{\mathrm{ctx}}
    =
    F_{\mathrm{gate}}
    (\tilde\Hh^{(0)},\kg;\Theta_{\mathrm{gate}})\). Where $\Hh_{\mathrm{ctx}}$ captures the immediate structural state within the current memory graph. Simultaneously, the schema prior channel maintains a set of cross-graph structural prompt bases $\{\Pv_j^{(l)}\}_{j=1}^{K}$, which are used to encode stable reading habits formed during multi-graph training: \(\omega_j^{(l)}
    =
    \operatorname{softmax}_j(\av^{(l)}/T_p),
    \Pv_{\mathrm{schema}}^{(l)}
    =
    \sum_{j=1}^{K}
    \omega_j^{(l)}\Pv_j^{(l)}\). Propagation is executed based on these schema prompts to obtain: \(\Hh_{\mathrm{sch}}
    =
    F_{\mathrm{prompt}}
    \big(
        \tilde\Hh^{(0)},
        \kg;
        \{\Pv_{\mathrm{schema}}^{(l)}\}_{l=1}^{L}
    \big)\). The final entity representation is jointly determined by the current context and the long-term schema: \(\Hh(q,\kg)
    =
    \Hh_{\mathrm{ctx}}
    +
    \beta_{\mathrm{sch}}\Hh_{\mathrm{sch}}\). Here, $\Hh_{\mathrm{ctx}}$ is analogous to a context-dependent immediate recall state, responsible for adapting to the specific graph structure generated by the current writer; $\Hh_{\mathrm{sch}}$ is akin to a memory schema formed across experiences, retaining the ability to recognize stable patterns such as bridge nodes, community boundaries, core--periphery structures, and noise short-circuits.

\paragraph{Reader Training.}
Reader training aims to learn cross-graph transferable retrieval biases through a two-stage procedure. First, we perform structural contrastive pre-training on multiple augmented graph views. Then, in the supervised fine-tuning stage, we align these transferable capabilities with question-driven evidence retrieval by training the reader to identify and rank supporting entities for each query using weighted classification and multi-positive ranking objectives. Implementation details are provided in Appendices~\ref{app:pretraining_details} and~\ref{app:finetune_objective_detailed}.

\paragraph{Writer--Reader Self-evolution.}
\label{subsec:self_evolution_algorithm}
 To address the third challenge, we propose a self-evolution framework. Each of our self-evolution iterations consists of two phases. First, we fix the reader and train the writer using its retrieval results as rewards. Subsequently, we use the updated writer to generate new graphs and continue training the reader. The overall procedure is detailed in Algorithm~\ref{alg:self_evolving_memory}.

\begin{sagepropbox}
{Theoretical consequences of SAGE}
{prop:method_reader_theory}
{sagesnrbox}

\begin{enumerate}
\item[\textnormal{(i)}] \textbf{Signal--budget efficiency.}
    Soft addressing, structural gating, and controlled entity-to-document projection jointly improve evidence signal over distractor noise, thereby reducing the top-$k$ budget needed for evidence coverage.
    \par\hfill
    \hyperref[thm:realistic_snr]{\textnormal{\sffamily\small[$\rhd$ SNR proof]}}\;
    \hyperref[thm:realistic_signal_noise_budget]{\textnormal{\sffamily\small[$\rhd$ Budget proof]}}

\item[\textnormal{(ii)}] \textbf{Context--schema decomposition.}
    The reader combines transferable structural priors with target-graph calibration, so adaptation only needs to correct graph-specific residuals.
    \par\hfill
    \hyperref[app:theory_schema_prior]{\textnormal{\sffamily\small[$\rhd$ Proof]}}

\item[\textnormal{(iii)}] \textbf{Evolution stability.}
    Under bounded graph drift, consecutive writer updates induce bounded document-score changes.
    \par\hfill
    \hyperref[app:theory_reader_stability]{\textnormal{\sffamily\small[$\rhd$ Proof]}}
\end{enumerate}
\end{sagepropbox}


From a theoretical perspective, this process can be interpreted as approximate coordinate improvement over a joint memory utility: the writer update improves the readability of the graph memory, while the reader update reduces writer-induced graph distribution shift and reward bias. We provide the full coordinate-improvement result, the surrogate reward bias bound, and the analysis of single-sided update bottlenecks in Appendix~\ref{app:theory_self_evolution}. In addition, Proposition~\ref{prop:method_reader_theory}(iii) shows that although each writer update changes the graph structure in self-evolving memory, the reader output does not oscillate arbitrarily with graph evolution. We provide detailed training, inference, memory, and selector-regularizer complexity analyses in Appendix~\ref{app:complexity}.

\begin{table}
\renewcommand{\arraystretch}{0.86}
\setlength{\tabcolsep}{3.2pt}
  \centering
  \caption{Open-domain retrieval results on \texttt{NQ} and \texttt{PopQA}. We report passage/document-level Recall (\%) at top-2 and top-5 when comparable numbers are available in original papers or later works that reproduce/cite these methods. Best available results are in \textbf{bold} and runner-ups are \underline{underlined}. \textbf{\textcolor{red}{Only rows marked with \zsmark{} are our zero-shot transfer results; baseline rows are not marked as zero-shot.}}}
  \vspace{0.2cm}
  \resizebox{\linewidth}{!}{
    \begin{tabular}{lcccc}
    \toprule
    \multicolumn{5}{c}{\cellcolor{red!8}\textbf{\textcolor{red}{Zero-shot setting applies only to \method{} on \texttt{NQ} and \texttt{PopQA}.}}} \\
    \midrule
    \textbf{Dataset} & \multicolumn{2}{c}{\texttt{NQ}} & \multicolumn{2}{c}{\texttt{PopQA}} \\
    \cmidrule(r{1mm}){1-1} \cmidrule(l{0.5mm}r{1mm}){2-3} \cmidrule(l{0.5mm}){4-5}
    \textbf{Method} & R@2$_\textsf{D}$ & R@5$_\textsf{D}$ & R@2$_\textsf{D}$ & R@5$_\textsf{D}$ \\
    \cmidrule(r{1mm}){1-1} \cmidrule(l{0.5mm}r{1mm}){2-3} \cmidrule(l{0.5mm}){4-5}
    \texttt{BM25} \scalebox{0.68}{(\mycite{robertson1994some}~\textit{SIGIR'94})} & 28.\scalebox{0.75}{2}\litA & 56.\scalebox{0.75}{1}\litA & 24.\scalebox{0.75}{0}\litA & 35.\scalebox{0.75}{7}\litA \\
    \texttt{Contriever} \scalebox{0.68}{(\mycite{izacard2022unsupervised}~\textit{TMLR'22})} & 29.\scalebox{0.75}{1}\litA & 54.\scalebox{0.75}{6}\litA & 27.\scalebox{0.75}{0}\litA & 43.\scalebox{0.75}{2}\litA \\
    \texttt{GTR} \scalebox{0.68}{(\mycite{ni2022large}~\textit{EMNLP'22})} & 35.\scalebox{0.75}{0}\litA & 63.\scalebox{0.75}{4}\litA & 40.\scalebox{0.75}{1}\litA & 49.\scalebox{0.75}{4}\litA \\
    \texttt{ColBERTv2} \scalebox{0.68}{(\mycite{santhanam2022colbertv2}~\textit{NAACL'22})} & 36.\scalebox{0.75}{8}\litC & 64.\scalebox{0.75}{3}\litC & \NA & \NA \\
    \texttt{RAPTOR} \scalebox{0.68}{(\mycite{sarthi2024raptor}~\textit{ICLR'24})} & 40.\scalebox{0.75}{3}\litA & 68.\scalebox{0.75}{3}\litA & 40.\scalebox{0.75}{2}\litA & 48.\scalebox{0.75}{7}\litA \\
    \texttt{Proposition} \scalebox{0.68}{(\mycite{chen2024dense}~\textit{EMNLP'24})} & 33.\scalebox{0.75}{1}\litC & 62.\scalebox{0.75}{2}\litC & \NA & \NA \\
    \cmidrule(r{1mm}){1-1} \cmidrule(l{0.5mm}r{1mm}){2-3} \cmidrule(l{0.5mm}){4-5}
    \texttt{HippoRAG} \scalebox{0.68}{(\mycite{gutierrez2024hipporag}~\textit{NeurIPS'24})} & 21.\scalebox{0.75}{3}\litA & 44.\scalebox{0.75}{4}\litA & 40.\scalebox{0.75}{0}\litA & \underline{~53.\scalebox{0.75}{8}\litA~} \\
    \texttt{HippoRAG} \texttt{2} \scalebox{0.68}{(\mycite{gutierrez2025rag}~\textit{ICML'25})} & \underline{~45.\scalebox{0.75}{6}\litA~} & \underline{~78.\scalebox{0.75}{0}\litA~} & \textbf{43.\scalebox{0.75}{9}\litA} & 51.\scalebox{0.75}{7}\litA \\
    \texttt{PropRAG} \scalebox{0.68}{(\mycite{wang2025proprag}~\textit{EMNLP'25})} & \NA & 77.\scalebox{0.75}{9}\litB & \NA & \textbf{56.\scalebox{0.75}{2}\litB} \\
    \cmidrule(r{1mm}){1-1} \cmidrule(l{0.5mm}r{1mm}){2-3} \cmidrule(l{0.5mm}){4-5}
    \textbf{\method{} (ours)\zsmark} & \textbf{82.\scalebox{0.75}{5}\zsmark} & \textbf{91.\scalebox{0.75}{6}\zsmark} & \underline{~41.\scalebox{0.75}{5}\zsmark~} & 52.\scalebox{0.75}{3}\zsmark \\
    \bottomrule
    \end{tabular}%
  }

  {\footnotesize\noindent\textsuperscript{\ensuremath{\dagger}} Values are from the reproduced passage Recall@2/5 evaluation in \mycite{gutierrez2025rag}. \textsuperscript{\ensuremath{\ddagger}} Values are from the Recall@5 table in \mycite{wang2025proprag}; Recall@2 is not reported there. \textsuperscript{\ensuremath{\star}} Values are from the reproduced single-step retrieval table in \mycite{cooprag2025}; PopQA is not reported there.}
  \label{tab:res_retrieve_nq_popqa}%
\end{table}

\section{Experiments}
\label{sec:experiments}

This section presents an experimental evaluation centered around four research questions (RQs).
\textit{\textbf{RQ1}}: whether \method{} can bring consistent benefits in tasks such as multi-hop QA and open-domain transfer;

\textit{\textbf{RQ2}}: whether \method{} is an agent memory system capable of handling long-term conversation history, knowledge updates, and memory hallucination;

\textit{\textbf{RQ3}}: whether the writer--reader closed loop truly yields self-evolution benefits;

\textit{\textbf{RQ4}}: further analysis of where and how the performance gains come from specific designs.

\paragraph{Datasets.} \label{subsec:exp_setup}We evaluate \method{} on five complementary scenarios. The first category consists of general QA benchmarks and three multi-hop QA benchmarks, including NQ, PopQA, HotpotQA, 2WikiMultiHopQA, and MuSiQue, used to examine whether the system can recover bridge entities across documents and combine evidence and reasoning paths. The second category focuses on a practical e-commerce application scenario, using a Review-Based Question Answering Task: AmazonQA, to assess its value in real e-commerce applications with real noisy reviews. The third category comprises long-term agent memory datasets, including LongMemEval and HaluMem, used to test information extraction from long interaction histories, multi-session reasoning, temporal reasoning, knowledge updating, abstention, and operation-level hallucination. Table~\ref{tab:dataset_grid} summarizes the details of each dataset. Further details on baselines and metrics can be found in Appendix~\ref{subsec:Baselines_Metrics}.

\begin{figure}
    \centering
    \vspace{0.2cm}
    \includegraphics[width=0.60\textwidth]{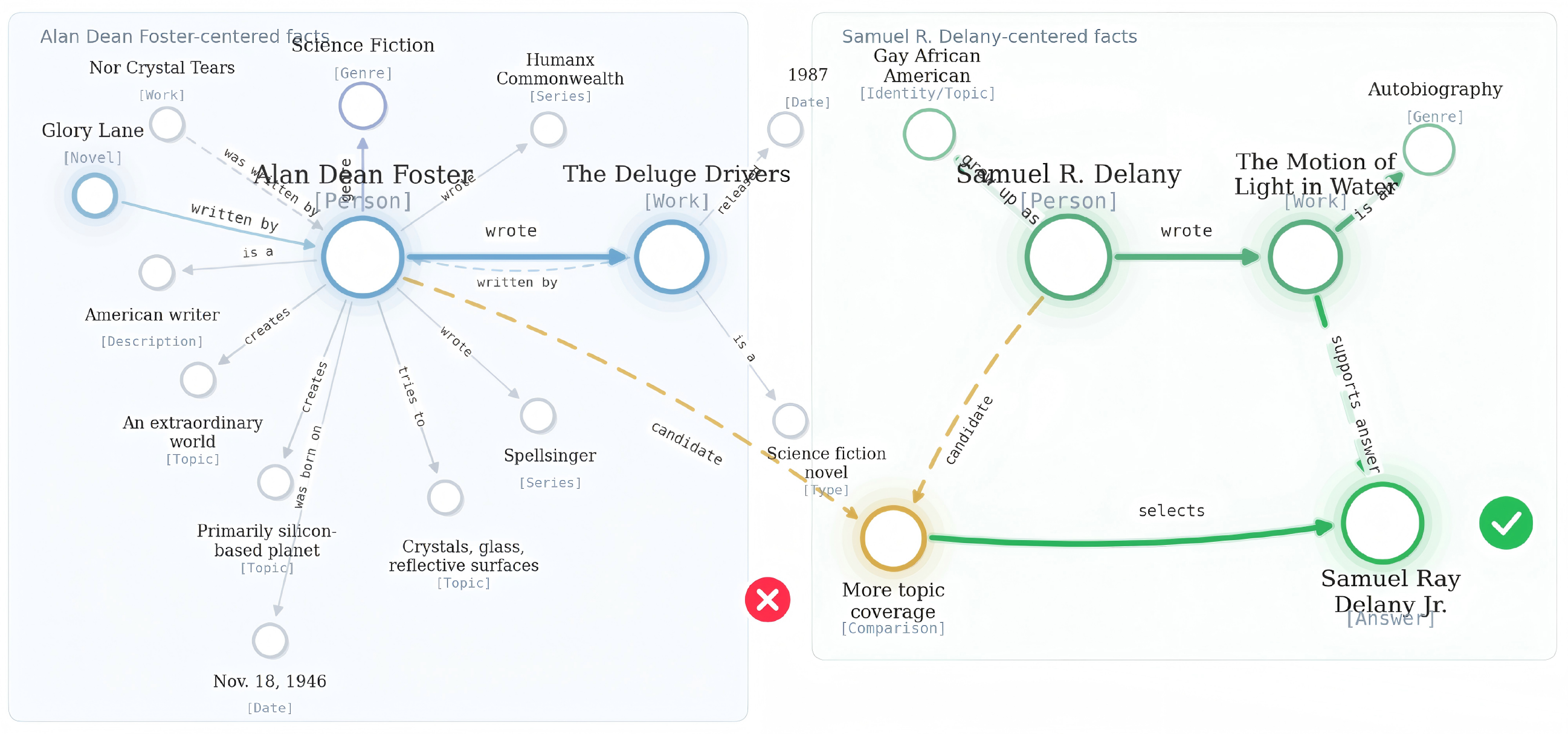}
    \vspace{-0.5cm}
    \caption{Visualization of the retrieved results.}
    \label{fig:retrieved_subgraph}
    \vspace{0.2cm}
\end{figure}
\label{subsubsec:multihop_results}

\subsection{End-to-End Effectiveness}
\label{subsec:end_to_end}

\paragraph{Multi-hop Question Answering}
\definecolor{myblue}{RGB}{56,132,255}
\providecommand{\placeholdercell}{\cellcolor{myblue!2}{--}}
\providecommand{\placeholderrank}{\cellcolor{myred!2}{--}}

\begin{table}
\renewcommand{\arraystretch}{0.85}
\setlength{\tabcolsep}{3.4pt}
  \centering
  \caption{Performance of representative memory systems on \texttt{LongMemEval}. We report accuracy (\%) on six task categories: single-session user (SS-U), single-session assistant (SS-A), merged single-session recall (SSR), single-session preference (SS-P), knowledge update (KU), temporal reasoning (TR), and multi-session reasoning (MS). SSR is computed as the weighted average of SS-U and SS-A when both are available; if a source reports only merged single-session recall, SS-U/SS-A are left blank. Best results are in \textbf{bold} and runner-ups are \underline{underlined}. The darker the cell, the better. Results are grouped by reporting protocol and should not be treated as a single strict leaderboard. \textbf{\textcolor{red}{Only rows marked with \zsmark{} are our zero-shot transfer results; baseline rows and trained variants are not marked as zero-shot.}}}
  \label{tab:longmemeval_baselines}%
  \vspace{0.2cm}
  \resizebox{\linewidth}{!}{
    \begin{tabular}{lcccccccc}
    \toprule
    \multicolumn{9}{c}{\cellcolor{red!8}\textbf{\textcolor{red}{Zero-shot setting applies only to \texttt{Ours} rows marked with \zsmark{} on \texttt{LongMemEval}.}}} \\
    \midrule
    \textbf{Dataset} & \multicolumn{8}{c}{\texttt{LongMemEval-S / LongMemEval}} \\
    \cmidrule(r{1mm}){1-1}
    \cmidrule(l{0.5mm}r{1mm}){2-9}
    \textbf{Method} & SS-U & SS-A & SSR & SS-P & KU & TR & MS & Overall \\
    \cmidrule(r{1mm}){1-1}
    \cmidrule(l{0.5mm}r{1mm}){2-9}
    \cmidrule(r{1mm}){1-1} \cmidrule(l{0.5mm}r{1mm}){2-9}
    \multicolumn{9}{l}{\textit{Unified protocol in TiMem (GPT-4o-mini, LLJ accuracy)}} \\
    \texttt{MemoryBank} \scalebox{0.68}{(\mycite{zhong2024memorybank})} & \cellcolor{myblue!2.0}{50.\scalebox{0.75}{0}} & \cellcolor{myblue!2.0}{9.\scalebox{0.75}{8}} & \cellcolor{myblue!2.0}{32.\scalebox{0.75}{1}} & \cellcolor{myblue!2.0}{0.\scalebox{0.75}{0}} & \cellcolor{myblue!2.0}{21.\scalebox{0.75}{8}} & \cellcolor{myblue!2.0}{0.\scalebox{0.75}{0}} & \cellcolor{myblue!2.0}{0.\scalebox{0.75}{0}} & \cellcolor{myblue!2.0}{11.\scalebox{0.75}{5}} \\
    \texttt{A-MEM} \scalebox{0.68}{(\mycite{xu2025amem})} & \cellcolor{myblue!51.5}{82.\scalebox{0.75}{9}} & \cellcolor{myblue!70.0}{\textbf{87.\scalebox{0.75}{5}}} & \cellcolor{myblue!63.6}{\underline{~84.\scalebox{0.75}{9}~}} & \cellcolor{myblue!31.7}{39.\scalebox{0.75}{3}} & \cellcolor{myblue!53.6}{72.\scalebox{0.75}{8}} & \cellcolor{myblue!34.4}{36.\scalebox{0.75}{1}} & \cellcolor{myblue!40.7}{40.\scalebox{0.75}{3}} & \cellcolor{myblue!48.4}{55.\scalebox{0.75}{4}} \\
    \texttt{Mem0} \scalebox{0.68}{(\mycite{chhikara2025mem0})} & \cellcolor{myblue!68.2}{\underline{~94.\scalebox{0.75}{3}~}} & \cellcolor{myblue!35.2}{51.\scalebox{0.75}{8}} & \cellcolor{myblue!52.6}{75.\scalebox{0.75}{4}} & \cellcolor{myblue!39.8}{50.\scalebox{0.75}{0}} & \cellcolor{myblue!58.6}{78.\scalebox{0.75}{7}} & \cellcolor{myblue!46.1}{49.\scalebox{0.75}{2}} & \cellcolor{myblue!65.5}{66.\scalebox{0.75}{2}} & \cellcolor{myblue!58.6}{65.\scalebox{0.75}{0}} \\
    \texttt{MemoryOS} \scalebox{0.68}{(\mycite{kang2025memoryos})} & \cellcolor{myblue!48.8}{81.\scalebox{0.75}{1}} & \cellcolor{myblue!62.0}{78.\scalebox{0.75}{2}} & \cellcolor{myblue!57.6}{79.\scalebox{0.75}{8}} & \cellcolor{myblue!40.8}{51.\scalebox{0.75}{3}} & \cellcolor{myblue!39.9}{56.\scalebox{0.75}{1}} & \cellcolor{myblue!49.9}{53.\scalebox{0.75}{4}} & \cellcolor{myblue!44.9}{44.\scalebox{0.75}{8}} & \cellcolor{myblue!51.1}{58.\scalebox{0.75}{1}} \\
    \texttt{MemOS} \scalebox{0.68}{(\mycite{li2025memos})} & \cellcolor{myblue!67.3}{93.\scalebox{0.75}{7}} & \cellcolor{myblue!53.8}{67.\scalebox{0.75}{9}} & \cellcolor{myblue!60.5}{82.\scalebox{0.75}{2}} & \cellcolor{myblue!40.3}{50.\scalebox{0.75}{7}} & \cellcolor{myblue!57.0}{76.\scalebox{0.75}{7}} & \cellcolor{myblue!60.3}{65.\scalebox{0.75}{1}} & \cellcolor{myblue!58.4}{58.\scalebox{0.75}{8}} & \cellcolor{myblue!62.9}{68.\scalebox{0.75}{7}} \\
    \texttt{TiMem} \scalebox{0.68}{(\mycite{li2026timem})} & \cellcolor{myblue!70.0}{\textbf{95.\scalebox{0.75}{7}}} & \cellcolor{myblue!65.3}{\underline{~82.\scalebox{0.75}{1}~}} & \cellcolor{myblue!70.0}{\textbf{89.\scalebox{0.75}{7}}} & \cellcolor{myblue!49.8}{63.\scalebox{0.75}{3}} & \cellcolor{myblue!65.0}{\underline{~86.\scalebox{0.75}{2}~}} & \cellcolor{myblue!63.2}{68.\scalebox{0.75}{4}} & \cellcolor{myblue!70.0}{\textbf{70.\scalebox{0.75}{8}}} & \cellcolor{myblue!70.0}{\textbf{76.\scalebox{0.75}{9}}} \\
    \cmidrule(r{1mm}){1-1} \cmidrule(l{0.5mm}r{1mm}){2-9}
    \multicolumn{9}{l}{\textit{MemOS evaluation suite (short-answer prompt)}} \\
    \texttt{MIRIX} \scalebox{0.68}{(\mycite{wang2025mirix})} & \cellcolor{myblue!36.3}{72.\scalebox{0.75}{8}} & \cellcolor{myblue!50.3}{63.\scalebox{0.75}{6}} & \cellcolor{myblue!43.4}{68.\scalebox{0.75}{8}} & \cellcolor{myblue!42.3}{53.\scalebox{0.75}{3}} & \cellcolor{myblue!37.1}{52.\scalebox{0.75}{6}} & \cellcolor{myblue!24.9}{25.\scalebox{0.75}{6}} & \cellcolor{myblue!30.9}{30.\scalebox{0.75}{1}} & \cellcolor{myblue!35.6}{43.\scalebox{0.75}{5}} \\
    \texttt{Mem0} \scalebox{0.68}{(\mycite{chhikara2025mem0})} & \cellcolor{myblue!51.5}{82.\scalebox{0.75}{9}} & \cellcolor{myblue!16.4}{26.\scalebox{0.75}{8}} & \cellcolor{myblue!30.7}{57.\scalebox{0.75}{9}} & \cellcolor{myblue!70.0}{\textbf{90.\scalebox{0.75}{0}}} & \cellcolor{myblue!48.4}{66.\scalebox{0.75}{7}} & \cellcolor{myblue!66.7}{\underline{~72.\scalebox{0.75}{2}~}} & \cellcolor{myblue!62.5}{63.\scalebox{0.75}{1}} & \cellcolor{myblue!60.2}{66.\scalebox{0.75}{4}} \\
    \texttt{Zep} \scalebox{0.68}{(\mycite{rasmussen2025zep})} & \cellcolor{myblue!66.1}{92.\scalebox{0.75}{9}} & \cellcolor{myblue!59.4}{75.\scalebox{0.75}{0}} & \cellcolor{myblue!63.6}{\underline{~84.\scalebox{0.75}{9}~}} & \cellcolor{myblue!42.3}{53.\scalebox{0.75}{3}} & \cellcolor{myblue!54.9}{74.\scalebox{0.75}{4}} & \cellcolor{myblue!50.5}{54.\scalebox{0.75}{1}} & \cellcolor{myblue!47.4}{47.\scalebox{0.75}{4}} & \cellcolor{myblue!56.5}{63.\scalebox{0.75}{2}} \\
    \texttt{memobase}\footnotemark[1] & \cellcolor{myblue!66.0}{92.\scalebox{0.75}{8}} & \cellcolor{myblue!13.3}{23.\scalebox{0.75}{2}} & \cellcolor{myblue!35.6}{61.\scalebox{0.75}{9}} & \cellcolor{myblue!62.4}{\underline{~80.\scalebox{0.75}{0}~}} & \cellcolor{myblue!70.0}{\textbf{89.\scalebox{0.75}{7}}} & \cellcolor{myblue!70.0}{\textbf{75.\scalebox{0.75}{9}}} & \cellcolor{myblue!66.2}{\underline{~66.\scalebox{0.75}{9}~}} & \cellcolor{myblue!66.7}{\underline{~72.\scalebox{0.75}{4}~}} \\
    \texttt{Supermemory}\footnotemark[2] & \cellcolor{myblue!55.5}{85.\scalebox{0.75}{7}} & \cellcolor{myblue!41.6}{58.\scalebox{0.75}{9}} & \cellcolor{myblue!50.6}{73.\scalebox{0.75}{8}} & \cellcolor{myblue!70.0}{\textbf{90.\scalebox{0.75}{0}}} & \cellcolor{myblue!39.0}{55.\scalebox{0.75}{1}} & \cellcolor{myblue!41.7}{44.\scalebox{0.75}{4}} & \cellcolor{myblue!52.7}{52.\scalebox{0.75}{6}} & \cellcolor{myblue!51.4}{58.\scalebox{0.75}{4}} \\
    \texttt{MemU}\footnotemark[3] & \cellcolor{myblue!27.8}{67.\scalebox{0.75}{1}} & \cellcolor{myblue!10.3}{19.\scalebox{0.75}{6}} & \cellcolor{myblue!18.0}{46.\scalebox{0.75}{0}} & \cellcolor{myblue!59.9}{76.\scalebox{0.75}{7}} & \cellcolor{myblue!27.1}{41.\scalebox{0.75}{0}} & \cellcolor{myblue!17.5}{17.\scalebox{0.75}{3}} & \cellcolor{myblue!42.3}{42.\scalebox{0.75}{1}} & \cellcolor{myblue!30.0}{38.\scalebox{0.75}{4}} \\
    \cmidrule(r{1mm}){1-1} \cmidrule(l{0.5mm}r{1mm}){2-9}
    \multicolumn{9}{l}{\textit{Our method}} \\
    \textbf{\texttt{Ours} (0-shot)\zsmark} & \cellcolor{myblue!17.1}{60.\scalebox{0.75}{3}} & \cellcolor{myblue!64.7}{81.\scalebox{0.75}{4}} & \cellcolor{myblue!42.4}{68.\scalebox{0.75}{0}} & \cellcolor{myblue!14.6}{16.\scalebox{0.75}{7}} & \cellcolor{myblue!3.3}{23.\scalebox{0.75}{5}} & \cellcolor{myblue!14.4}{13.\scalebox{0.75}{8}} & \cellcolor{myblue!11.0}{9.\scalebox{0.75}{4}} & \cellcolor{myblue!18.7}{28.\scalebox{0.75}{4}} \\
    \textbf{\texttt{Ours} (trained)} & \cellcolor{myblue!37.0}{73.\scalebox{0.75}{3}} & \cellcolor{myblue!63.5}{80.\scalebox{0.75}{0}} & \cellcolor{myblue!53.4}{76.\scalebox{0.75}{0}} & \cellcolor{myblue!19.5}{23.\scalebox{0.75}{1}} & \cellcolor{myblue!6.9}{27.\scalebox{0.75}{8}} & \cellcolor{myblue!22.4}{22.\scalebox{0.75}{8}} & \cellcolor{myblue!14.0}{12.\scalebox{0.75}{5}} & \cellcolor{myblue!25.0}{34.\scalebox{0.75}{3}} \\
    \textbf{\texttt{Ours} +1 round} & \cellcolor{myblue!46.2}{79.\scalebox{0.75}{4}} & \cellcolor{myblue!63.9}{80.\scalebox{0.75}{5}} & \cellcolor{myblue!57.3}{79.\scalebox{0.75}{6}} & \cellcolor{myblue!14.0}{15.\scalebox{0.75}{8}} & \cellcolor{myblue!5.5}{26.\scalebox{0.75}{2}} & \cellcolor{myblue!22.4}{22.\scalebox{0.75}{8}} & \cellcolor{myblue!12.2}{10.\scalebox{0.75}{7}} & \cellcolor{myblue!24.8}{34.\scalebox{0.75}{1}} \\
    \bottomrule
    \end{tabular}%

  }
\end{table}

\footnotetext[1]{\texttt{memobase}: \url{https://github.com/memodb-io/memobase}.}
\footnotetext[2]{\texttt{Supermemory}: \url{https://github.com/supermemoryai/supermemory}.}
\footnotetext[3]{\texttt{MemU}: \url{https://github.com/NevaMind-AI/memU}.}Table~\ref{tab:res_qa} reports the main results on general QA benchmarks and three multi-hop QA benchmarks. Table~\ref{tab:res_retrieve_multihop} reports the results of retrieval performance on multi-hop QA benchmarks. It is worth mentioning that even when we directly test on NQ and PopQA using a model trained only on MuSiQue, HotpotQA, and 2WikiMultiHopQA, we still achieve very strong performance, especially on NQ; see Table~\ref{tab:res_retrieve_nq_popqa} for the detailed results.

\definecolor{myblue}{RGB}{56,132,255}
\providecommand{\placeholdercell}{}
\renewcommand{\placeholdercell}{\cellcolor{myblue!2}{--}}
\providecommand{\placeholderrank}{}
\renewcommand{\placeholderrank}{\cellcolor{myred!2}{--}}

\begin{table*}[!t]
\renewcommand{\arraystretch}{0.9}
\setlength{\tabcolsep}{4pt}
  \centering
  \caption{Results of multi-hop question answering (QA) performance. We report Exact Match (EM) and F1 score, both reported as percentages (\%). Best results are in \textbf{bold} and runner-ups are \underline{underlined}. The darker the cell, the better.}
  \vspace{0.1cm}
  \resizebox{\textwidth}{!}{
    \begin{tabular}{lccccccr}
    \toprule
    \textbf{Dataset} & \multicolumn{2}{c}{\texttt{HotpotQA}}  & \multicolumn{2}{c}{\texttt{MuSiQue}}   & \multicolumn{2}{c}{\texttt{2WikiMultiHopQA}} & \multirow{2}[2]{*}{\textbf{Avg. Rank}} \\
\cmidrule(r{1mm}){1-1}  \cmidrule(l{0.5mm}r{1mm}){2-3} \cmidrule(l{0.5mm}r{1mm}){4-5} \cmidrule(l{0.5mm}r{1mm}){6-7} \textbf{Method} & ~~~EM~~ & ~~F1~~ & ~~~EM~~ & ~~F1~~ & ~~~EM~~ & ~~F1~~ & \\
    \cmidrule(r{1mm}){1-1}  \cmidrule(l{0.5mm}r{1mm}){2-3} \cmidrule(l{0.5mm}r{1mm}){4-5} \cmidrule(l{0.5mm}r{1mm}){6-7} \cmidrule(l{0.5mm}){8-8}
    \texttt{BM25} \scalebox{0.68}{(\mycite{robertson1994some}~\textit{arXiv'24})} & \cellcolor{myblue!19}{40.\scalebox{0.75}{0}} & \cellcolor{myblue!12.25}{53.\scalebox{0.75}{2}} & \cellcolor{myblue!18.27}{19.\scalebox{0.75}{5}} & \cellcolor{myblue!6.32}{23.\scalebox{0.75}{6}} & \cellcolor{myblue!32.07}{46.\scalebox{0.75}{9}} & \cellcolor{myblue!35.31}{57.\scalebox{0.75}{9}} & \cellcolor{myred!20.89}{15.\scalebox{0.75}{5}} \\
    \cmidrule(r{1mm}){1-1}  \cmidrule(l{0.5mm}r{1mm}){2-3} \cmidrule(l{0.5mm}r{1mm}){4-5} \cmidrule(l{0.5mm}r{1mm}){6-7} \cmidrule(l{0.5mm}){8-8}
    \texttt{Contriever} \scalebox{0.68}{(\mycite{izacard2022unsupervised}~\textit{TMLR'22})} & \cellcolor{myblue!6.25}{34.\scalebox{0.75}{9}} & \cellcolor{myblue!8.07}{51.\scalebox{0.75}{2}} & \cellcolor{myblue!9.73}{16.\scalebox{0.75}{3}} & \cellcolor{myblue!2}{21.\scalebox{0.75}{6}} & \cellcolor{myblue!2}{24.\scalebox{0.75}{3}} & \cellcolor{myblue!2}{33.\scalebox{0.75}{9}} & \cellcolor{myred!2}{20.\scalebox{0.75}{5}} \\
 \texttt{GTR} \scalebox{0.68}{(\mycite{ni2022large}~\textit{EMNLP'22})} & \cellcolor{myblue!3.5}{33.\scalebox{0.75}{8}} & \cellcolor{myblue!9.53}{51.\scalebox{0.75}{9}} & \cellcolor{myblue!6.53}{15.\scalebox{0.75}{1}} & \cellcolor{myblue!9.77}{25.\scalebox{0.75}{2}} & \cellcolor{myblue!14.51}{33.\scalebox{0.75}{7}} & \cellcolor{myblue!13.93}{42.\scalebox{0.75}{5}} & \cellcolor{myred!5.15}{19.\scalebox{0.75}{7}} \\
 \texttt{ColBERTv2} \scalebox{0.68}{(\mycite{santhanam2022colbertv2}~\textit{NAACL'22})} & \cellcolor{myblue!27.5}{43.\scalebox{0.75}{4}} & \cellcolor{myblue!21.67}{57.\scalebox{0.75}{7}} & \cellcolor{myblue!7.6}{15.\scalebox{0.75}{5}} & \cellcolor{myblue!12.36}{26.\scalebox{0.75}{4}} & \cellcolor{myblue!14.11}{33.\scalebox{0.75}{4}} & \cellcolor{myblue!15.04}{43.\scalebox{0.75}{3}} & \cellcolor{myred!13.96}{17.\scalebox{0.75}{3}} \\
 \texttt{RAPTOR} \scalebox{0.68}{(\mycite{sarthi2024raptor}~\textit{ICLR'24})} & \cellcolor{myblue!39.5}{48.\scalebox{0.75}{2}} & \cellcolor{myblue!24.81}{59.\scalebox{0.75}{2}} & \cellcolor{myblue!13.2}{17.\scalebox{0.75}{6}} & \cellcolor{myblue!17.76}{28.\scalebox{0.75}{9}} & \cellcolor{myblue!10.38}{30.\scalebox{0.75}{6}} & \cellcolor{myblue!13.24}{42.\scalebox{0.75}{0}} & \cellcolor{myred!19.94}{15.\scalebox{0.75}{8}} \\
    \cmidrule(r{1mm}){1-1}  \cmidrule(l{0.5mm}r{1mm}){2-3} \cmidrule(l{0.5mm}r{1mm}){4-5} \cmidrule(l{0.5mm}r{1mm}){6-7} \cmidrule(l{0.5mm}){8-8}
    \texttt{GraphRAG} \scalebox{0.68}{(\mycite{edge2024local}~\textit{arXiv'24})} & \cellcolor{myblue!7.25}{35.\scalebox{0.75}{3}} & \cellcolor{myblue!15.18}{54.\scalebox{0.75}{6}} & \cellcolor{myblue!2}{13.\scalebox{0.75}{4}} & \cellcolor{myblue!19.05}{29.\scalebox{0.75}{5}} & \cellcolor{myblue!7.32}{28.\scalebox{0.75}{3}} & \cellcolor{myblue!20.04}{46.\scalebox{0.75}{9}} & \cellcolor{myred!11.44}{18.\scalebox{0.75}{0}} \\
 \texttt{G-Retriever} \scalebox{0.68}{(\mycite{he2024g}~\textit{NeurIPS'24})} & \cellcolor{myblue!2}{33.\scalebox{0.75}{2}} & \cellcolor{myblue!6.18}{50.\scalebox{0.75}{3}} & \cellcolor{myblue!14.27}{18.\scalebox{0.75}{0}} & \cellcolor{myblue!11.28}{25.\scalebox{0.75}{9}} & \cellcolor{myblue!25.95}{42.\scalebox{0.75}{3}} & \cellcolor{myblue!18.24}{45.\scalebox{0.75}{6}} & \cellcolor{myred!10.81}{18.\scalebox{0.75}{2}} \\
 \texttt{LightRAG} \scalebox{0.68}{(\mycite{guo2024lightrag}~\textit{arXiv'24})} & \cellcolor{myblue!11}{36.\scalebox{0.75}{8}} & \cellcolor{myblue!2}{48.\scalebox{0.75}{3}} & \cellcolor{myblue!14.53}{18.\scalebox{0.75}{1}} & \cellcolor{myblue!14.74}{27.\scalebox{0.75}{5}} & \cellcolor{myblue!29.68}{45.\scalebox{0.75}{1}} & \cellcolor{myblue!23.65}{49.\scalebox{0.75}{5}} & \cellcolor{myred!16.48}{16.\scalebox{0.75}{7}} \\
 \texttt{HippoRAG} \scalebox{0.68}{(\mycite{gutierrez2024hipporag}~\textit{NeurIPS'24})} & \cellcolor{myblue!23.5}{41.\scalebox{0.75}{8}} & \cellcolor{myblue!16.02}{55.\scalebox{0.75}{0}} & \cellcolor{myblue!17.47}{19.\scalebox{0.75}{2}} & \cellcolor{myblue!19.7}{29.\scalebox{0.75}{8}} & \cellcolor{myblue!31.68}{46.\scalebox{0.75}{6}} & \cellcolor{myblue!37.53}{59.\scalebox{0.75}{5}} & \cellcolor{myred!27.19}{13.\scalebox{0.75}{8}} \\
 \texttt{HippoRAG} \texttt{2} \scalebox{0.68}{(\mycite{gutierrez2025rag}~\textit{ICML'25})} & \cellcolor{myblue!62.25}{\underline{~57.\scalebox{0.75}{3}~}} & \cellcolor{myblue!46.57}{69.\scalebox{0.75}{6}} & \cellcolor{myblue!56.67}{33.\scalebox{0.75}{9}} & \cellcolor{myblue!42.8}{40.\scalebox{0.75}{5}} & \cellcolor{myblue!70}{\textbf{75.\scalebox{0.75}{4}}} & \cellcolor{myblue!63.48}{78.\scalebox{0.75}{2}} & \cellcolor{myred!61.81}{4.\scalebox{0.75}{7}} \\
 \texttt{SubgraphRAG} \scalebox{0.68}{(\mycite{li2025simple}~\textit{ICLR'25})} & \cellcolor{myblue!50.5}{52.\scalebox{0.75}{6}} & \cellcolor{myblue!38.62}{65.\scalebox{0.75}{8}} & \cellcolor{myblue!54.27}{33.\scalebox{0.75}{0}} & \cellcolor{myblue!39.99}{39.\scalebox{0.75}{2}} & \cellcolor{myblue!64.14}{71.\scalebox{0.75}{0}} & \cellcolor{myblue!60.7}{76.\scalebox{0.75}{2}} & \cellcolor{myred!51.11}{7.\scalebox{0.75}{5}} \\
 \texttt{PropRAG} \scalebox{0.68}{(\mycite{wang2025proprag}~\textit{EMNLP'25})} & \cellcolor{myblue!62}{57.\scalebox{0.75}{2}} & \cellcolor{myblue!47.61}{70.\scalebox{0.75}{1}} & \cellcolor{myblue!70}{\textbf{38.\scalebox{0.75}{9}}} & \cellcolor{myblue!44.53}{41.\scalebox{0.75}{3}} & \cellcolor{myblue!66.81}{73.\scalebox{0.75}{0}} & \cellcolor{myblue!65.84}{79.\scalebox{0.75}{9}} & \cellcolor{myred!64.96}{3.\scalebox{0.75}{8}} \\
 \texttt{GFM-RAG} \scalebox{0.68}{(\mycite{luo2025gfm}~\textit{NeurIPS'25})} & \cellcolor{myblue!48}{51.\scalebox{0.75}{6}} & \cellcolor{myblue!40.92}{66.\scalebox{0.75}{9}} & \cellcolor{myblue!46.8}{30.\scalebox{0.75}{2}} & \cellcolor{myblue!42.58}{40.\scalebox{0.75}{4}} & \cellcolor{myblue!62.55}{69.\scalebox{0.75}{8}} & \cellcolor{myblue!62.78}{77.\scalebox{0.75}{7}} & \cellcolor{myred!49.85}{7.\scalebox{0.75}{8}} \\
    \cmidrule(r{1mm}){1-1}  \cmidrule(l{0.5mm}r{1mm}){2-3} \cmidrule(l{0.5mm}r{1mm}){4-5} \cmidrule(l{0.5mm}r{1mm}){6-7} \cmidrule(l{0.5mm}){8-8}
    \texttt{FLARE} \scalebox{0.68}{(\mycite{jiang2023active}~\textit{EMNLP'23})} & \cellcolor{myblue!40.75}{48.\scalebox{0.75}{7}} & \cellcolor{myblue!27.74}{60.\scalebox{0.75}{6}} & \cellcolor{myblue!9.47}{16.\scalebox{0.75}{2}} & \cellcolor{myblue!16.68}{28.\scalebox{0.75}{4}} & \cellcolor{myblue!31.81}{46.\scalebox{0.75}{7}} & \cellcolor{myblue!45.71}{65.\scalebox{0.75}{4}} & \cellcolor{myred!31.59}{12.\scalebox{0.75}{7}} \\
 \texttt{Adaptive-RAG} \scalebox{0.68}{(\mycite{jeong2024adaptive}~\textit{NAACL'24})} & \cellcolor{myblue!32.75}{45.\scalebox{0.75}{5}} & \cellcolor{myblue!25.64}{59.\scalebox{0.75}{6}} & \cellcolor{myblue!3.07}{13.\scalebox{0.75}{8}} & \cellcolor{myblue!10.63}{25.\scalebox{0.75}{6}} & \cellcolor{myblue!34.74}{48.\scalebox{0.75}{9}} & \cellcolor{myblue!42.11}{62.\scalebox{0.75}{8}} & \cellcolor{myred!25.61}{14.\scalebox{0.75}{3}} \\
 \texttt{ColBERTv2} + \texttt{IRCoT} \scalebox{0.68}{(\mycite{trivedi2023interleaving}~\textit{ACL'23})} & \cellcolor{myblue!32.75}{45.\scalebox{0.75}{5}} & \cellcolor{myblue!23.13}{58.\scalebox{0.75}{4}} & \cellcolor{myblue!17.2}{19.\scalebox{0.75}{1}} & \cellcolor{myblue!21.21}{30.\scalebox{0.75}{5}} & \cellcolor{myblue!16.77}{35.\scalebox{0.75}{4}} & \cellcolor{myblue!17.54}{45.\scalebox{0.75}{1}} & \cellcolor{myred!24.98}{14.\scalebox{0.75}{4}} \\
 \texttt{HippoRAG} + \texttt{IRCoT} \scalebox{0.68}{(\mycite{trivedi2023interleaving}~\textit{ACL'23})} & \cellcolor{myblue!33.25}{45.\scalebox{0.75}{7}} & \cellcolor{myblue!24.81}{59.\scalebox{0.75}{2}} & \cellcolor{myblue!24.67}{21.\scalebox{0.75}{9}} & \cellcolor{myblue!27.26}{33.\scalebox{0.75}{3}} & \cellcolor{myblue!33.14}{47.\scalebox{0.75}{7}} & \cellcolor{myblue!41.97}{62.\scalebox{0.75}{7}} & \cellcolor{myred!36.94}{11.\scalebox{0.75}{3}} \\
 \texttt{GFM-RAG} + \texttt{IRCoT} \scalebox{0.68}{(\mycite{trivedi2023interleaving}~\textit{ACL'23})} & \cellcolor{myblue!59}{56.\scalebox{0.75}{0}} & \cellcolor{myblue!51.17}{71.\scalebox{0.75}{8}} & \cellcolor{myblue!63.87}{\underline{~36.\scalebox{0.75}{6}~}} & \cellcolor{myblue!61.58}{49.\scalebox{0.75}{2}} & \cellcolor{myblue!66.14}{72.\scalebox{0.75}{5}} & \cellcolor{myblue!67.09}{\underline{~80.\scalebox{0.75}{8}~}} & \cellcolor{myred!65.59}{3.\scalebox{0.75}{7}} \\
    \cmidrule(r{1mm}){1-1}  \cmidrule(l{0.5mm}r{1mm}){2-3} \cmidrule(l{0.5mm}r{1mm}){4-5} \cmidrule(l{0.5mm}r{1mm}){6-7} \cmidrule(l{0.5mm}){8-8}
    \textbf{\method{}} (ours) & \cellcolor{myblue!47.25}{51.\scalebox{0.75}{3}} & \cellcolor{myblue!53.05}{72.\scalebox{0.75}{7}} & \cellcolor{myblue!40.93}{28.\scalebox{0.75}{0}} & \cellcolor{myblue!57.48}{47.\scalebox{0.75}{3}} & \cellcolor{myblue!55.5}{64.\scalebox{0.75}{5}} & \cellcolor{myblue!59.45}{75.\scalebox{0.75}{3}} & \cellcolor{myred!51.11}{7.\scalebox{0.75}{5}} \\
    \textbf{ours +1 round } & \cellcolor{myblue!70}{\textbf{60.\scalebox{0.75}{4}}} & \cellcolor{myblue!54.52}{73.\scalebox{0.75}{4}} & \cellcolor{myblue!52.13}{32.\scalebox{0.75}{2}} & \cellcolor{myblue!68.27}{52.\scalebox{0.75}{3}} & \cellcolor{myblue!65.08}{71.\scalebox{0.75}{7}} & \cellcolor{myblue!65.28}{79.\scalebox{0.75}{5}} & \cellcolor{myred!63.7}{4.\scalebox{0.75}{2}} \\
    \textbf{ours +2 round } & \cellcolor{myblue!59.25}{56.\scalebox{0.75}{1}} & \cellcolor{myblue!70}{\textbf{80.\scalebox{0.75}{8}}} & \cellcolor{myblue!57.47}{34.\scalebox{0.75}{2}} & \cellcolor{myblue!70}{\textbf{53.\scalebox{0.75}{1}}} & \cellcolor{myblue!68.14}{\underline{~74.\scalebox{0.75}{0}~}} & \cellcolor{myblue!66.53}{80.\scalebox{0.75}{4}} & \cellcolor{myred!70}{\textbf{2.\scalebox{0.75}{5}}} \\
 \textbf{ours + \texttt{IRCoT} } & \cellcolor{myblue!49.75}{52.\scalebox{0.75}{3}} & \cellcolor{myblue!57.86}{\underline{~75.\scalebox{0.75}{0}~}} & \cellcolor{myblue!60.13}{35.\scalebox{0.75}{2}} & \cellcolor{myblue!68.92}{\underline{~52.\scalebox{0.75}{6}~}} & \cellcolor{myblue!65.74}{72.\scalebox{0.75}{2}} & \cellcolor{myblue!70}{\textbf{82.\scalebox{0.75}{9}}} & \cellcolor{myred!66.85}{\underline{~3.\scalebox{0.75}{3}~}} \\
    \bottomrule
    \end{tabular}%
  }
  
  \label{tab:res_qa}%
\end{table*}%

\paragraph{Domain-specific Memory}
\label{subsubsec:domain_results}

\begin{wraptable}{r}{0.4\textwidth}
\renewcommand{\arraystretch}{0.86}
\setlength{\tabcolsep}{2.6pt}
\centering
\vspace{-0.5cm}
\caption{Retrieval efficiency comparison. We report retrieval time in seconds on \texttt{HotpotQA}, \texttt{MuSiQue}, and \texttt{2Wiki}. For Time, lower is better. Best results are in \textbf{bold} and runner-ups are \underline{underlined}. The darker the cell, the better.}
\vspace{0.2cm}
\resizebox{\linewidth}{!}{%
\begin{tabular}{lccc}
\toprule
\textbf{Dataset}
  & \texttt{HotpotQA}
  & \texttt{MuSiQue}
  & \texttt{2Wiki} \\
\cmidrule(r{1mm}){1-1}
\cmidrule(l{0.5mm}r{1mm}){2-2}
\cmidrule(l{0.5mm}r{1mm}){3-3}
\cmidrule(l{0.5mm}){4-4}
\textbf{Method}
  & Time$\downarrow$
  & Time$\downarrow$
  & Time$\downarrow$ \\
\cmidrule(r{1mm}){1-1}
\cmidrule(l{0.5mm}r{1mm}){2-2}
\cmidrule(l{0.5mm}r{1mm}){3-3}
\cmidrule(l{0.5mm}){4-4}

\multicolumn{4}{l}{\textit{Single-step retrieval methods}} \\

\texttt{ColBERTv2}
  & \cellcolor{myred!68}{\underline{0.035}}
  & \cellcolor{myred!70}{\textbf{0.030}}
  & \cellcolor{myred!68}{\underline{0.029}} \\

\texttt{HippoRAG}
  & \cellcolor{myred!54}{0.255}
  & \cellcolor{myred!54}{0.251}
  & \cellcolor{myred!56}{0.158} \\

\texttt{LightRAG}
  & \cellcolor{myred!25}{0.861}
  & \cellcolor{myred!23}{1.109}
  & \cellcolor{myred!28}{0.911} \\

\texttt{GraphRAG} \scalebox{0.72}{(MS)}
  & \cellcolor{myred!5}{2.759}
  & \cellcolor{myred!3}{3.037}
  & \cellcolor{myred!20}{1.204} \\

\texttt{GFM-RAG}
  & \cellcolor{myred!63}{0.107}
  & \cellcolor{myred!63}{0.124}
  & \cellcolor{myred!64}{0.060} \\

\cmidrule(r{1mm}){1-1}
\cmidrule(l{0.5mm}r{1mm}){2-2}
\cmidrule(l{0.5mm}r{1mm}){3-3}
\cmidrule(l{0.5mm}){4-4}

\multicolumn{4}{l}{\textit{Iterative retrieval methods}} \\

\texttt{IRCoT} + \texttt{ColBERTv2}
  & \cellcolor{myred!18}{1.146}
  & \cellcolor{myred!20}{1.152}
  & \cellcolor{myred!10}{2.095} \\

\texttt{IRCoT} + \texttt{HippoRAG}
  & \cellcolor{myred!2}{3.162}
  & \cellcolor{myred!2}{3.104}
  & \cellcolor{myred!2}{3.441} \\

\cmidrule(r{1mm}){1-1}
\cmidrule(l{0.5mm}r{1mm}){2-2}
\cmidrule(l{0.5mm}r{1mm}){3-3}
\cmidrule(l{0.5mm}){4-4}

\textbf{\method{}}
  & \cellcolor{myred!70}{\textbf{0.032}}
  & \cellcolor{myred!68}{\underline{0.034}}
  & \cellcolor{myred!70}{\textbf{0.019}} \\

\bottomrule
\end{tabular}%
}
\label{tab:efficiency}
\vspace{-1cm}
\end{wraptable}Table~\ref{tab:amazonqa_fulltest_baselines} reports the results on AmazonQA. \method{} consistently outperforms the neural baseline R-Net across all metrics, indicating strong cross-task generalization. After training on AmazonQA, Ours achieves substantial gains. Overall, training and interaction rounds steadily enhance performance, while the zero-shot results demonstrate promising transfer ability.

\subsection{Long-term Agent Memory Evaluation}
\label{subsec:agent_memory}

The LongMemEval results are shown in Table~\ref{tab:longmemeval_baselines}. 
The HaluMem results are shown in Table~\ref{tab:halumem_medium_baselines}. \method{} is compared against highly specialized long-term memory systems, making this a challenging evaluation setting. Although \method{} does not yet surpass the strongest system-level baselines. Notably, \method{} +1 round already outperforms Memobase on several metrics, suggesting that it is competitive despite being less system-engineered. The remaining gap mainly lies in memory updating and high-coverage extraction, indicating clear potential for further gains with stronger memory management and update mechanisms.

\vspace{-0.2cm}

\subsection{Further Analysis}\label{sec:efficiency}
\vspace{-0.2cm}
As shown in \Cref{tab:efficiency}, \method{} demonstrates a strong speed advantage. It achieves the fastest retrieval time, indicating strong potential for practical and large-scale deployment. To further analyze the interpretability of \method{}, we visualize the retrieved subgraph for a representative case, as shown in Figure. A detailed case study can be found in~\ref{sec:interpretation}. The detailed ablation study design, analysis, and results for the Memory Writer and Reader can be found in Appendix~\ref{subsec:reader_ablation} and Appendix~\ref{subsec:writer_analysis}, respectively.

\vspace{-0.2cm}

\section{Conclusion}
We presented SAGE, a self-evolving agentic graph-memory engine that treats memory as a dynamic substrate for writing, reading, and continual improvement. Experiments show that SAGE improves evidence recovery, grounding, and retrieval efficiency, suggesting that self-evolving graph memory is a promising foundation for long-horizon language agents.

\newpage

\bibliographystyle{plainnat}
\bibliography{references}

\newpage
\appendix
\begin{algorithm}[t]
\caption{Writer--Reader Self-evolution Training for \method{}}
\label{alg:self_evolving_memory}
\KwIn{
Training set $\mathcal{D}_{\mathrm{train}}$, writer $\pi_{\theta_0}$, GFM reader $f_{\phi_0}$, self-evolution iterations $T$
}
\KwOut{Trained writer $\pi_{\theta_T}$ and reader $f_{\phi_T}$}
\For{$t=0,\dots,T-1$}{
    \textcolor{gray}{\tcp{Writer update: fixed GFM reader as reward environment}}
    \For{each sample $x=(q,\contextset,\support,y) \in \mathcal{D}_{\mathrm{train}}$}{
        Sample $G$ graph construction trajectories $\{\tau_i\}_{i=1}^{G}$ from $\pi_{\theta_t}$\;
        \For{$i=1,\dots,G$}{
            Obtain graph $\kg_i$ and retrieve $P_k(q,\kg_i)$ using $f_{\phi_t}$\;
            Calculate return $R_i$\;
        }
        Update writer $\pi_{\theta_t}$\;
    }
    \textcolor{gray}{\tcp{Reader update: improved graphs as memory substrate}}
    Construct a set of graph memories $\{\kg_x\}$ for the training corpus using $\pi_{\theta_{t+1}}$\;
    Update GFM reader $f_{\phi_t}$ on $\{\kg_x\}$\;
}
\end{algorithm}
\section{Additional Analysis of the Memory Writer}
\label{app:memory_writer_analysis}

This appendix provides a detailed analysis of the memory writer experiments in the main text.

\subsection{Reward Design and Writer Behavior}
\label{app:writer_reward_analysis}

In Table~\ref{tab:writer_main_results}, different RL rewards induce different writer behaviors. GFM-pretrained-only achieves Precision/Recall/Deducible of $0.838/0.818/0.510$, while GFM-finetuned achieves $0.824/0.813/0.512$, indicating that relying solely on supervised finetuning cannot stably improve the utility of graph memory for a frozen reader. This result is also consistent with our setup: the goal of the memory writer is not to reproduce a static graph format.

RL-Recall improves Precision and Recall to $0.889/0.835$, but Deducible drops to $0.502$. This shows that rewarding only supporting context coverage encourages the writer to store more locally relevant evidence, but does not necessarily lead to a complete multi-hop reasoning chain. RL-F1 further raises Recall to $0.881$, but Deducible is only $0.497$, again indicating a gap between retrieval matching quality and answer deducibility: the reader hitting the supporting contexts does not guarantee that these contexts are organized in a way sufficient to support answer reasoning. In contrast, RL-Deduce achieves $0.861/0.892/0.517$, showing that using answer deducibility directly as feedback can encourage the writer to focus more on bridging entities, cross-document relations, and answer-relevant causal or attribute paths.

RL-Hybrid achieves Precision and Recall of $\mathbf{0.902}$ and $\mathbf{0.917}$, respectively, representing improvements of $+0.064$ and $+0.099$ over pretrained-only, while Deducible reaches $0.522$. This indicates that hybrid rewards can mitigate the bias of a single reward: they both avoid the introduction of too much weakly relevant evidence caused by a pure recall reward and prevent a pure deducibility reward from overfavoring short paths or local answer clues. Hybrid + frozen answer API achieves the highest Deducible, at $\mathbf{0.526}$, but Precision and Recall drop to $0.832/0.874$. This suggests that stronger answer-side feedback can further improve reasoning usability, but it may also make the writer more conservative, writing only evidence directly related to the final answer and thereby sacrificing some supporting context coverage.

\subsection{Cross-domain Transfer}
\label{app:writer_transfer_analysis}

Table~\ref{tab:domain_transfer} shows that the base writer trained on HotpotQA/MuSiQue has a certain degree of transferability to new domains, but training on the target domain remains very important. On GRBench, Base$\rightarrow$GRBench achieves Precision/Recall/Deducible of $0.575/0.609/0.411$, while GRBench train$\rightarrow$val improves to $0.794/0.833/0.596$. This improvement indicates that the writing strategy learned for multi-hop QA can transfer to structured product or domain graph memory tasks, but the entity types, attribute relations, and evidence granularity in the target domain still need to be re-adapted.

On HaluMem and LongMemEval, cross-domain differences are even more pronounced. The Base$\rightarrow$HaluMem results are $0.230/0.448/0.299$, which improve to $0.312/0.708/0.438$ after training on the target domain; the Base$\rightarrow$LongMemEval results are $0.232/0.376/0.475$, which improve to $0.377/0.439/0.531$ after training on the target domain. These results indicate that memory writing in agent memory tasks requires not only extracting explicit facts, but also maintaining user preferences, temporal order, state updates, and long-term consistency. In traditional multi-hop QA, supporting contexts often form a relatively static set of evidence centered around a single question, whereas information in long-term memory tasks changes over time and involves personalization, conflict updates, and context dependence. Therefore, although reader-aware RL feedback can provide transferable writing principles, interaction feedback from the target domain remains crucial for achieving stable performance.

\subsection{Writing Protocol and Interaction Budget}
\label{app:writer_protocol_analysis}

Table~\ref{tab:protocol_ablation} shows that the writing protocol significantly changes the trade-off among Precision, Recall, and Deducible. The results for Tight=True are $0.836/0.806/0.515$; the results for Tight=False are $0.845/0.851/0.506$. After relaxing the protocol, Recall improves noticeably, indicating that the writer can write more potentially relevant evidence; however, Deducible declines, suggesting that the additional evidence also contains more noise, redundant facts, or weakly related local information. Although this content may increase the coverage of supporting context, it can dilute the reasoning chain that truly supports the answer.

Iterative writing further highlights the role of the interaction budget. For Iterative, 12 turns, tight, Precision/Recall/Deducible are $0.852/0.829/0.516$; after increasing to 20 turns, Recall reaches the highest value of $\mathbf{0.881}$, indicating that multi-turn reader feedback helps the writer complete cross-document bridging paths. For Iterative, 24 turns, loose, Precision and Deducible reach $\mathbf{0.863}$ and $\mathbf{0.531}$, respectively, but Recall falls back to $0.826$. This shows that more rounds of interaction are not simply “the longer, the better”: the benefit comes from the writer revising the graph structure based on reader feedback, whereas when the protocol is too loose or the writing space becomes too large, the additional content may alter the reader’s ranking, causing some gold supporting contexts to be pushed out of the top results.

\subsection{Reader-side Sensitivity}
\label{app:reader_side_sensitivity}

\begin{figure*}[t]
    \centering
    \includegraphics[width=0.96\textwidth]{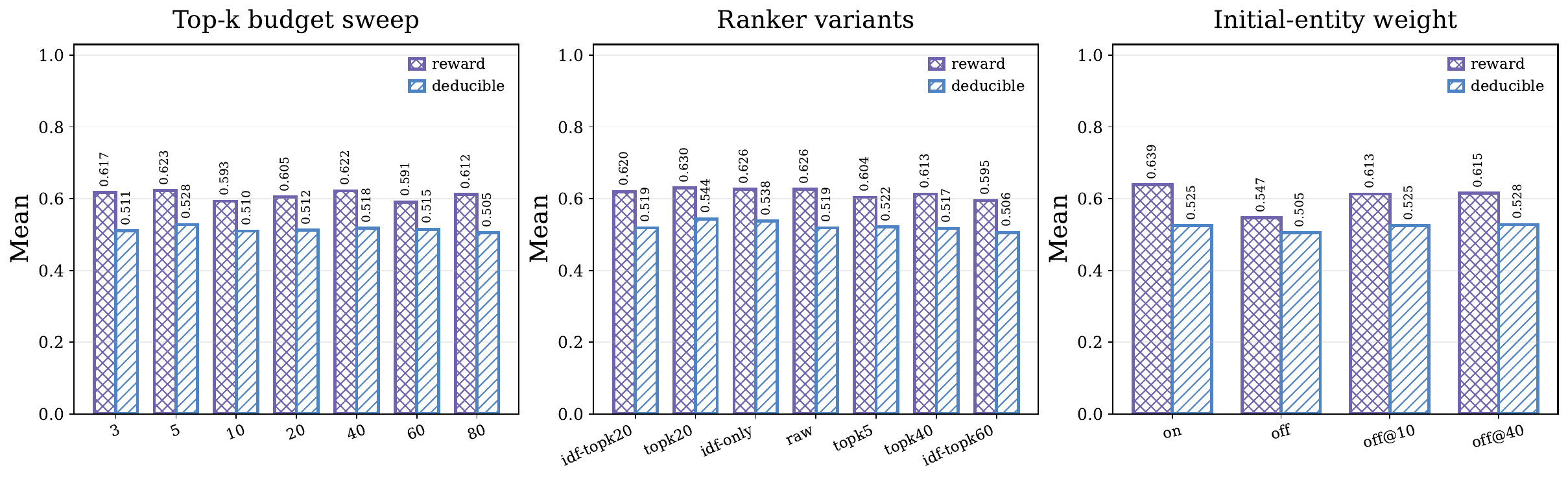}
    \caption{Freeze the sensitivity analysis on the reader side. The impact of the initial-entity weight on reward and Deducible is the most stable; the top-$k$ and ranker variants exhibit a non-monotonic budget–noise trade-off.}
    \label{fig:retriever_ablation}
\end{figure*}

Figure~\ref{fig:retriever_ablation} analyzes the impact of the frozen reader setting on writer training results. First, the top-$k$ budget sweep shows a non-monotonic trend: at $k=5$, reward and Deducible are $0.623/0.528$, the better setting in this group; at $k=40$, reward remains at $0.622$, but Deducible drops to $0.518$; at $k=60$, reward further declines to $0.591$. This indicates that expanding the retrieval budget does not necessarily lead to better reader feedback. Although a larger top-$k$ improves potential coverage, it also introduces more weakly related or redundant evidence, diluting the reasoning chain that truly supports the answer.

The ranker variants also reflect a similar coverage--noise trade-off. topk20 achieves the highest reward and Deducible, at $0.630/0.544$, respectively; idf-only and raw both have a reward of $0.626$, but their Deducible scores are $0.538$ and $0.519$, respectively; idf-topk60 declines to $0.595/0.506$. This shows that the reader ranker cannot rely solely on entity overlap or on expanding the candidate set, but instead must strike a balance among entity matching, semantic relevance, and contextual compactness. For the writer, an overly weak ranker makes it difficult for effective graph structure to be read out, while an overly broad candidate space amplifies the negative impact of noisy writes.

Initial-entity weight is the most stable factor among the three groups of reader-side settings. When initial-entity weight is enabled, reward reaches $0.639$ and Deducible is $0.525$; when it is disabled, reward drops significantly to $0.547$ and Deducible falls to $0.505$. Even when the budget is increased after disabling it, the rewards for off@10 and off@40 recover only to $0.613$ and $0.615$. This indicates that the initial entity anchor is crucial in multi-hop graph retrieval: it helps the reader enter the correct local subgraph from the question entity and expand along the bridging relations written by the writer to the evidence supporting the answer. Without this anchor, simply increasing the retrieval budget cannot fully compensate for the deviation in graph traversal direction.

\subsection{Training Stability and Regularization}
\label{app:writer_regularization_analysis}

\begin{figure*}[t]
    \centering
    \includegraphics[width=0.96\textwidth]{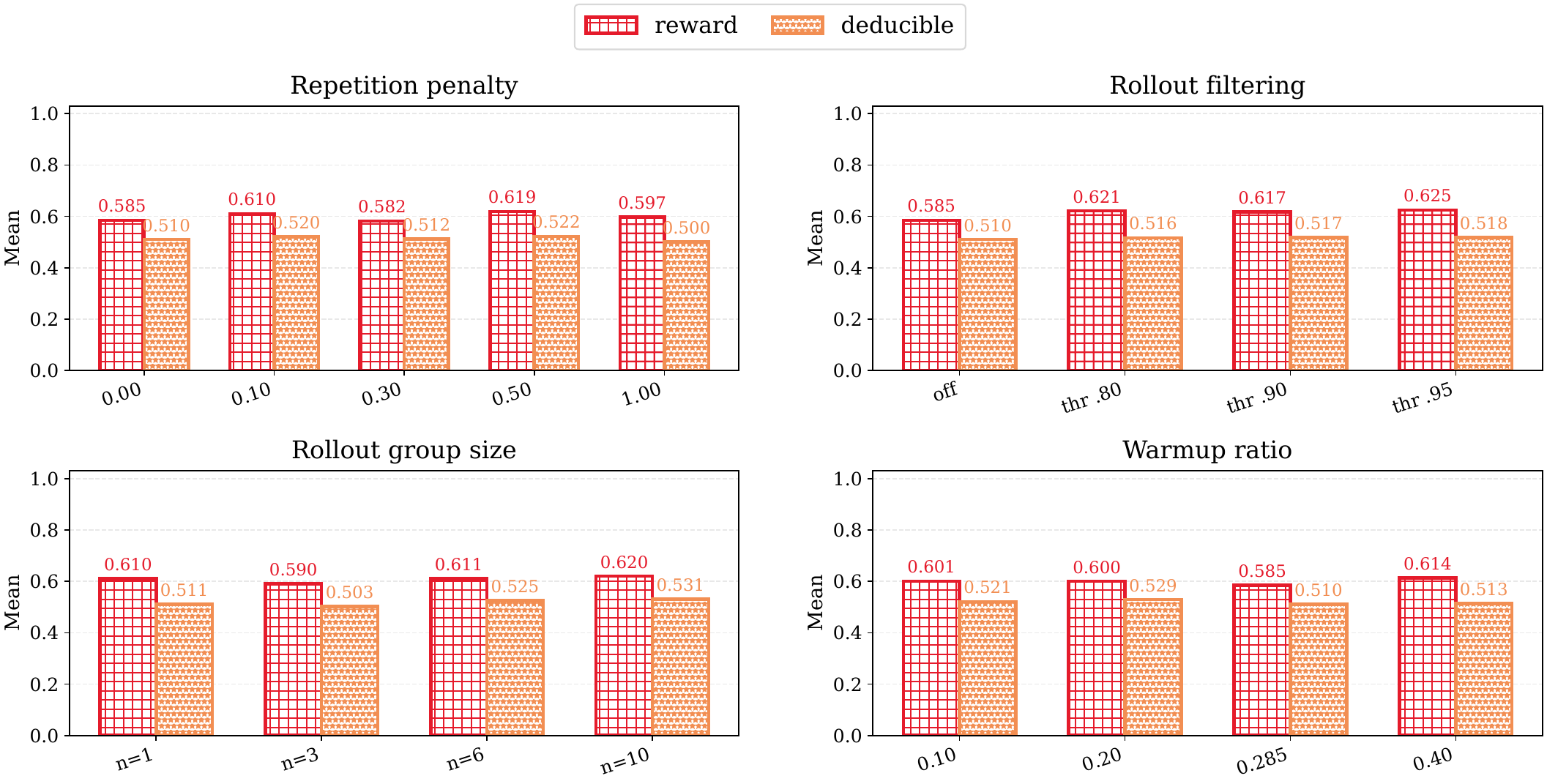}
    \caption{Training regularization and scaling analysis. A larger rollout group size and a moderate warmup ratio show a slight advantage in this batch of results, but the gains are smaller than the effects of reward design, the reader-side initial-entity weight, and the writing protocol.}
    \label{fig:regularization_scaling}
\end{figure*}

Figure~\ref{fig:regularization_scaling} shows the effects of training regularization and rollout settings on the writer. First, the repetition penalty affects both reward and Deducible, but the trend is not monotonic. Without any penalty, the result is $0.585/0.510$; with a penalty of $0.10$, it improves to $0.610/0.520$; with a penalty of $0.50$, it reaches the best result in this group at $0.619/0.522$; after further increasing it to $1.00$, it drops to $0.597/0.500$. This indicates that a moderate penalty on repeated triples can suppress redundant edges and cyclic expressions, but an overly strong penalty may limit the writer’s necessary restatement of key facts. Especially in multi-hop reasoning, the same bridging entity often needs to appear in multiple relational paths, so repetition is not always meaningless noise.

Rollout filtering brings consistent but limited gains. When filtering is disabled, reward/Deducible is $0.585/0.510$; after applying \texttt{thr\_80}, \texttt{thr\_90}, and \texttt{thr\_95}, reward increases to $0.621$, $0.617$, and $0.625$, respectively, while Deducible remains stable at $0.516$--$0.518$. This suggests that filtering out low-quality rollouts can reduce the interference of noisy trajectories with policy updates, allowing the writer to learn effective writing strategies more stably. However, the differences between thresholds are small, indicating that the main role of filtering is to remove obviously negative samples rather than determine the final performance ceiling.

Rollout group size and warmup ratio further affect training stability. As group size increases from $n=1$ to $n=10$, reward/Deducible rises from $0.610/0.511$ to $0.620/0.531$, indicating that a larger group size can provide more reliable relative preference estimates and help RL distinguish more accurately between effective and ineffective writing. The optimal warmup ratio appears at $0.20$, where Deducible reaches $0.529$; too little warmup may lead to unstable early policy updates, while too much warmup may delay the effect of the RL signal. Overall, these regularization and training scale settings can improve stability, but their gains are smaller than those from reward design, reader-side initial-entity weight, and the interaction protocol itself. This shows that the core improvement of the memory writer comes from reader-aware RL feedback: it forces the graph constructor to learn to preserve bridging entities, cross-document relations, and evidence chains that support answer derivation, while also reducing repetitive structures and irrelevant local facts.
\section{Signal-to-Noise Ratio and Retrieval Budget of Structurally Gated Propagation}
\label{sec:theory_signal_noise_budget}

This section analyzes the structural capability of the GFM memory reader in \method{} from the perspective of signal propagation and retrieval budget. Unlike graph-isomorphism expressivity analyses centered on $k$-WL, we focus on the following question: on noisy graph memories dynamically written by the memory writer, how do soft addressing, structurally gated propagation, context--schema dual-channel calibration, and entity-to-document projection jointly improve the ratio of query-relevant evidence signal to distractor noise, thereby reducing the top-$k$ retrieval budget required to achieve a given level of evidence coverage?

\subsection{Review of the SAGE-GFM Reader Formalization}
\label{subsec:theory_reader_formalization}

Given a sample $x=(q,D,D^+,y)$, where $q$ denotes the query, $D=\{d_i\}_{i=1}^N$ denotes the candidate memory fragments, and $D^+\subseteq D$ denotes the gold evidence set supporting the answer $y$, the memory writer constructs a heterogeneous graph
\begin{equation}
    G = W_\theta(q,D)
    = (V_E\cup V_D, E_{EE}\cup E_{ED}),
\end{equation}
where $V_E$ is the set of entity nodes, $V_D$ is the set of memory-fragment nodes, $E_{EE}$ denotes entity--entity relation edges, and $E_{ED}$ denotes entity--text-fragment anchoring edges. The GFM memory reader outputs an entity distribution, a document distribution, and an optional activated subgraph:
\begin{equation}
    f_\phi(q,G,D)
    = \big(p_\phi(e\mid q,G),\,p_\phi(d\mid q,G,D),\,G_q\big).
\end{equation}

The reader first uses query planning and soft addressing to generate query-conditioned initial activation for entities. Let $s_e(q)$ denote the entry score of entity $e$, which integrates multiple cues such as explicit entities, aliases, pseudo-query similarity, answer type, hard constraints, and entity linking. The initial activation distribution is then given by
\begin{equation}
    p_0(e\mid q)
    =
    \frac{\exp(s_e(q)/T_0)}{\sum_{v\in V_E}\exp(s_v(q)/T_0)} .
    \label{eq:theory_softmax_seed}
\end{equation}
The initial entity representation is written as
\begin{equation}
    h_e^{(0)}
    =
    \big(p_0(e\mid q)\big)^\eta W_q\operatorname{Emb}(q)
    + W_x x_e,
    \qquad 0\le \eta\le 1 .
    \label{eq:theory_initial_rep}
\end{equation}
The reader then constructs structural gates using node-level structural features, edge-pair structural features, and graph-level structural summaries. Specifically, let
\begin{align}
    \varphi(v)
    &= \big(\log(1+d_v),\,c_v,\,\kappa_v,\,\bar d_{\calN(v)}\big),\label{eq:theory_node_struct}\\
    \psi(u,v)
    &= \big(|d_u-d_v|,\,|\calN(u)\cap\calN(v)|,\,
    \operatorname{Jaccard}(\calN(u),\calN(v))\big),\label{eq:theory_edge_struct}\\
    r_G
    &= \big(\operatorname{mean}_{v\in V_E}\varphi(v);\,
    \operatorname{std}_{v\in V_E}\varphi(v);\,
    \operatorname{dens}(G)\big).
    \label{eq:theory_graph_struct}
\end{align}
The edge structural context at layer $l$ is
\begin{equation}
    z_{uv}^{(l)}
    =
    \big(E_n^{(l)}(\varphi(u));\,E_n^{(l)}(\varphi(v));\,
    E_p^{(l)}(\psi(u,v));\,E_g^{(l)}(r_G)\big),
\end{equation}
which generates the vector-valued gate
\begin{equation}
    g_{uv}^{(l)}
    =
    1+\delta\tanh\big(\operatorname{MLP}_g^{(l)}(z_{uv}^{(l)})\big).
    \label{eq:theory_vector_gate}
\end{equation}
Let $\eta_{uv}\ge 0$ denote the normalized adjacency weight with self-loops. The message and node update are
\begin{align}
    m_{u\to v}^{(l)}
    &=
    \eta_{uv}g_{uv}^{(l)}\odot W_m^{(l)}h_u^{(l-1)},
    \label{eq:theory_message}\\
    h_v^{(l)}
    &=
    \operatorname{LayerNorm}
    \left(
    h_v^{(l-1)}+
    \operatorname{PReLU}
    \left(
    b^{(l)}+\sum_{u\in\calN(v)}m_{u\to v}^{(l)}
    \right)
    \right).
    \label{eq:theory_update}
\end{align}
In addition, the reader combines a contextual calibration channel on the current graph with a cross-graph schema prior channel:
\begin{equation}
    H(q,G)=H_{\ctx}+\beta_{\sch}H_{\sch}.
    \label{eq:theory_ctx_schema}
\end{equation}

\subsection{Recoverable Evidence Region and Effective Signal-to-Noise Ratio}
\label{subsec:recoverable_region_snr}

\begin{definition}[Recoverable Evidence Region]
\label{def:recoverable_region}
Fix a query $q$ and the current memory graph $G$. Let $R_q\subseteq V_E$ denote the recoverable evidence region under query $q$, namely the set of entities jointly determined by the current graph structure, anchoring edges, and reader-reachable paths. This set contains nodes that support the answer, connect supporting documents, or serve as bridge entities. If the entity anchor set of document $d$ is denoted by $A(d)\subseteq V_E$, then the anchor coverage of the current graph over the gold evidence is defined as
\begin{equation}
    \rho_A
    =
    \frac{\left|\{d\in D^+: A(d)\cap R_q\neq\varnothing\}\right|}{|D^+|}.
    \label{eq:anchor_coverage}
\end{equation}
\end{definition}

\begin{definition}[Query-Relevant Scalar Activation]
\label{def:scalar_activation}
Let $r_q$ be the direction induced by the query representation or the final scoring head. The nonnegative query-relevant activation of node $v$ at layer $l$ is defined as
\begin{equation}
    a_v^{(l)}
    =
    \big[\langle r_q,h_v^{(l)}\rangle\big]_+,
    \label{eq:scalar_activation}
\end{equation}
where $[t]_+=\max\{t,0\}$. The evidence signal mass, noise mass, and effective signal-to-noise ratio at layer $l$ are respectively defined as
\begin{align}
    S_l &= \sum_{v\in R_q}a_v^{(l)},
    \label{eq:S_l}\\
    N_l &= \sum_{v\in V_E\setminus R_q}a_v^{(l)},
    \label{eq:N_l}\\
    \SNR_l &= \frac{S_l}{N_l},
    \label{eq:SNR_l}
\end{align}
with the convention that $\SNR_l=+\infty$ when $N_l=0$.
\end{definition}

\begin{remark}[Scope of the Scalar-Channel Analysis]
\label{rem:scalar_channel}
Equation~\eqref{eq:scalar_activation} does not assume that the full vector update in Eq.~\eqref{eq:theory_update} is a nonnegative linear recurrence in every coordinate. LayerNorm, PReLU, residual connections, and linear transformations may all change representation directions. We only analyze the query-relevant channel on which the final retrieval score depends, and absorb the additional effects caused by nonlinearities and directional shifts into a perturbation term. This avoids an overly strong coordinate-wise monotonicity assumption.
\end{remark}

\subsection{Aggregate Propagation Assumptions and Structural Gating Coefficients}
\label{subsec:aggregate_assumption}

Idealized analyses often assume that every evidence edge has a uniform lower gate bound $g_+$ and every noisy edge has a uniform upper gate bound $g_-$. However, in graph memories dynamically constructed by an LLM writer, edges may be missing, erroneous, or repeated; some evidence edges may be underestimated, while some distractor edges may receive high gates. We therefore adopt an aggregate propagation assumption.

\begin{assumption}[Query-Relevant Effective Propagation Operator]
\label{ass:effective_operator}
Fix a query $q$, the current graph memory $G$, and the reader representation at layer $l$. For each layer $l\in\{1,\ldots,L\}$, there exists a nonnegative matrix $T_l\in\R_{\ge 0}^{|V_E|\times |V_E|}$ and a nonnegative perturbation vector $\epsilon_l\in\R_{\ge 0}^{|V_E|}$ such that the query-relevant activation vector at layer $l$,
\[
    a^{(l)}
    =
    \big(a_v^{(l)}\big)_{v\in V_E},
\]
is controlled by the previous-layer activation $a^{(l-1)}$ in the following coordinate-wise sense:
\begin{equation}
    a^{(l)}
    \preceq
    T_l a^{(l-1)}+\epsilon_l .
    \label{eq:effective_operator_upper}
\end{equation}
Here, $\preceq$ denotes coordinate-wise inequality. The operator $T_l$ denotes the effective propagation operator induced by the $l$-th layer on the query-relevant scalar channel. It absorbs the combined effects of normalized adjacency weights $\eta_{uv}$, structural gates $g_{uv}^{(l)}$, message projection $W_m^{(l)}$, context--schema representation composition, and final scoring-channel projection into a single nonnegative propagation kernel. In other words, $T_l(u,v)$ is the effective nonnegative contribution strength of the query-relevant activation of node $v$ at the previous layer to node $u$ at layer $l$.

The perturbation term $\epsilon_l$ absorbs residual effects that are difficult to exactly characterize by nonnegative linear propagation, including LayerNorm, PReLU, residual connections, vector-direction rotation, scoring-channel mismatch, and finite-parameter approximation error.

Furthermore, we only require the propagation process to preserve effective signal in the evidence region in an aggregate sense. We do not require the structural gate to perfectly distinguish every evidence edge from every noisy edge. The operator $T_l$ may allow some evidence edges to be underestimated and some distractor edges to be overestimated; the aggregate propagation coefficients defined below only characterize the overall effect of these local errors on the evidence and noise regions.
\end{assumption}

Let
\[
    \bar R_q = V_E\setminus R_q .
\]
Partition $T_l$ according to the node sets $R_q$ and $\bar R_q$:
\begin{equation}
    T_l
    =
    \begin{pmatrix}
        T_{RR}^{(l)} & T_{R\bar R}^{(l)}\\
        T_{\bar R R}^{(l)} & T_{\bar R\bar R}^{(l)}
    \end{pmatrix}.
    \label{eq:T_l_block}
\end{equation}
Here, $T_{RR}^{(l)}$ denotes effective propagation within the evidence region, $T_{\bar R\bar R}^{(l)}$ denotes effective propagation within the noise region, $T_{\bar R R}^{(l)}$ denotes leakage propagation from the evidence region to the noise region, and $T_{R\bar R}^{(l)}$ denotes propagation from the noise region to the evidence region.

\begin{definition}[Aggregate Propagation Coefficients]
\label{def:aggregate_coefficients}
Given the effective propagation operator $T_l$ at layer $l$ and its block decomposition in Eq.~\eqref{eq:T_l_block}, define the three aggregate propagation coefficients $A_l$, $B_l$, and $C_l$ as follows.

$A_l$ is the evidence-retention coefficient. It characterizes the minimum fraction of total mass that remains inside the evidence region $R_q$ after any nonnegative evidence signal $x$ propagates one layer within $R_q$. Formally, $A_l$ is any nonnegative constant satisfying
\begin{equation}
    A_l
    \le
    \inf_{x\in\R_{\ge 0}^{|R_q|},\ \bbone^\top x>0}
    \frac{\bbone^\top T_{RR}^{(l)}x}{\bbone^\top x}.
    \label{eq:A_l_def}
\end{equation}
Equivalently, for any nonnegative evidence signal $x\in\R_{\ge 0}^{|R_q|}$ with $\bbone^\top x>0$,
\[
    \bbone^\top T_{RR}^{(l)}x
    \ge
    A_l\,\bbone^\top x .
\]

$B_l$ is the noise self-propagation coefficient. It characterizes the maximum extent to which any nonnegative noise signal $y$ can be retained or expanded after one layer of propagation inside the noise region $\bar R_q$. Formally, $B_l$ is any nonnegative constant satisfying
\begin{equation}
    B_l
    \ge
    \sup_{y\in\R_{\ge 0}^{|\bar R_q|},\ \bbone^\top y>0}
    \frac{\bbone^\top T_{\bar R\bar R}^{(l)}y}{\bbone^\top y}.
    \label{eq:B_l_def}
\end{equation}
Equivalently, for any nonnegative noise signal $y\in\R_{\ge 0}^{|\bar R_q|}$ with $\bbone^\top y>0$,
\[
    \bbone^\top T_{\bar R\bar R}^{(l)}y
    \le
    B_l\,\bbone^\top y .
\]

$C_l$ is the evidence-to-noise leakage coefficient. It characterizes the maximum fraction of an arbitrary nonnegative signal in the evidence region that can leak into the non-evidence region $\bar R_q$ after one layer of propagation. Formally, $C_l$ is any nonnegative constant satisfying
\begin{equation}
    C_l
    \ge
    \sup_{x\in\R_{\ge 0}^{|R_q|},\ \bbone^\top x>0}
    \frac{\bbone^\top T_{\bar R R}^{(l)}x}{\bbone^\top x}.
    \label{eq:C_l_def}
\end{equation}
Equivalently, for any nonnegative evidence signal $x\in\R_{\ge 0}^{|R_q|}$ with $\bbone^\top x>0$,
\[
    \bbone^\top T_{\bar R R}^{(l)}x
    \le
    C_l\,\bbone^\top x .
\]

Finally, let
\begin{equation}
    \xi_l
    =
    \bbone^\top \epsilon_{l,\bar R}
    \label{eq:xi_l_def}
\end{equation}
denote the total perturbation mass injected into the noise region $\bar R_q$ at layer $l$ by nonlinearities, normalization, representation-direction shifts, and approximation errors. Here, $\epsilon_{l,\bar R}$ denotes the restriction of the perturbation vector $\epsilon_l$ to $\bar R_q$.
\end{definition}

\begin{lemma}[Aggregate Propagation Recurrence]
\label{lem:aggregate_recursion}
Under Assumption~\ref{ass:effective_operator} and Definition~\ref{def:aggregate_coefficients}, if $S_{l-1}>0$ and $N_{l-1}\ge 0$, then layer $l$ satisfies
\begin{align}
    S_l &\ge A_l S_{l-1},
    \label{eq:signal_recursion}\\
    N_l &\le B_l N_{l-1}+C_l S_{l-1}+\xi_l.
    \label{eq:noise_recursion}
\end{align}
\end{lemma}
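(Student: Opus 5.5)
The plan is to restrict the coordinate-wise control of Assumption~\ref{ass:effective_operator} to the two blocks $R_q$ and $\bar R_q$. I will then sum each block against $\bbone^\top$, and finally apply the three aggregate coefficients of Definition~\ref{def:aggregate_coefficients} to the resulting block-vector products. Write $a^{(l)}_R$ and $a^{(l)}_{\bar R}$ for the restrictions of $a^{(l)}$ to $R_q$ and $\bar R_q$. By Eqs.~\eqref{eq:S_l}--\eqref{eq:N_l} we have $S_l=\bbone^\top a^{(l)}_R$ and $N_l=\bbone^\top a^{(l)}_{\bar R}$. Every quantity involved ($T_l$, $a^{(l)}$, $\epsilon_l$) is nonnegative, so summing a coordinate-wise inequality preserves its direction.

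\emph{Noise recursion.} Take the $\bar R_q$ rows of Eq.~\eqref{eq:effective_operator_upper} and use the block form Eq.~\eqref{eq:T_l_block}:
\[
a^{(l)}_{\bar R}\preceq T_{\bar R R}^{(l)}a^{(l-1)}_R+T_{\bar R\bar R}^{(l)}a^{(l-1)}_{\bar R}+\epsilon_{l,\bar R}.
\]
Applying $\bbone^\top$ gives $N_l\le \bbone^\top T_{\bar R R}^{(l)}a^{(l-1)}_R+\bbone^\top T_{\bar R\bar R}^{(l)}a^{(l-1)}_{\bar R}+\xi_l$, using Eq.~\eqref{eq:xi_l_def}. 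Since $S_{l-1}>0$, the vector $a^{(l-1)}_R$ is admissible in Eq.~\eqref{eq:C_l_def}, and the first term is at most $C_lS_{l-1}$. For the second term I split into two cases. If $N_{l-1}>0$, Eq.~\eqref{eq:B_l_def} bounds it by $B_lN_{l-1}$. If $N_{l-1}=0$, nonnegativity forces $a^{(l-1)}_{\bar R}=0$, so the term vanishes and the bound $B_l\cdot 0$ holds trivially. This case split is the only reason the hypothesis $N_{l-1}\ge0$ appears, and it yields Eq.~\eqref{eq:noise_recursion}.

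\emph{Signal recursion; main obstacle.} Eq.~\eqref{eq:signal_recursion} is a \emph{lower} bound, whereas Eq.~\eqref{eq:effective_operator_upper} only controls $a^{(l)}$ from above, so the upper inequality cannot be summed over $R_q$ to get it. The assumption's stated requirement that propagation ``preserve effective signal in the evidence region in an aggregate sense'' has to be made operational. I would do this with the companion lower control on the evidence block,
\[
a^{(l)}_R\succeq T_{RR}^{(l)}a^{(l-1)}_R,
\]
read as part of Assumption~\ref{ass:effective_operator}: $T_l$ is the effective kernel, and perturbations are charged only to the noise region through $\xi_l$. Under this reading, $S_l\ge \bbone^\top T_{RR}^{(l)}a^{(l-1)}_R$. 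Dropping the nonnegative cross term $T_{R\bar R}^{(l)}a^{(l-1)}_{\bar R}$ is harmless, since it can only add signal.

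Because $S_{l-1}>0$, the vector $a^{(l-1)}_R$ is admissible in Eq.~\eqref{eq:A_l_def}, so $\bbone^\top T_{RR}^{(l)}a^{(l-1)}_R\ge A_lS_{l-1}$, which is Eq.~\eqref{eq:signal_recursion}. I expect the main difficulty to be stating cleanly which lower-side control is being invoked. If the authors intend only the upper bound, the alternative is to let $A_l$ absorb the worst-case evidence-side perturbation, so that $A_l$ is a retention coefficient net of the perturbation. Apart from this interpretive point, both inequalities follow from routine block summation.
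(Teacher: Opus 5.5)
Your argument is essentially the paper's proof: restrict Assumption~\ref{ass:effective_operator} to the $R_q$/$\bar R_q$ blocks, sum with $\bbone^\top$, and apply the definitions of $C_l$, $B_l$, $\xi_l$ for the noise bound and of $A_l$ for the signal bound, with your separate treatment of $N_{l-1}=0$ (needed because the supremum defining $B_l$ only ranges over $\bbone^\top y>0$) a small piece of care the paper omits. The obstacle you flag is genuine and the paper does not resolve it either, since its proof simply asserts $S_l\ge\bbone^\top T_{RR}^{(l)}a_R^{(l-1)}$, which cannot follow from the one-sided bound $a^{(l)}\preceq T_la^{(l-1)}+\epsilon_l$ (e.g.\ $a^{(l)}=0$ satisfies it), so your explicit lower control $a_R^{(l)}\succeq T_{RR}^{(l)}a_R^{(l-1)}$ is precisely the unstated hypothesis the paper's argument relies on; your fallback of folding a perturbation into $A_l$ would not help, because no redefinition of $A_l$ extracts a lower bound on $S_l$ from an upper-bound assumption short of assuming $S_l\ge A_lS_{l-1}$ outright.
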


\begin{proof}
Let $a_R^{(l-1)}$ and $a_{\bar R}^{(l-1)}$ be the restrictions of $a^{(l-1)}$ to $R_q$ and $\bar R_q$, respectively. By the definition of the evidence-retention coefficient in Eq.~\eqref{eq:A_l_def}, the total mass retained within the evidence region through $R_q\to R_q$ propagation is at least
\begin{equation}
    \bbone^\top T_{RR}^{(l)}a_R^{(l-1)}
    \ge A_l \bbone^\top a_R^{(l-1)}
    = A_l S_{l-1},
\end{equation}
and thus $S_l\ge A_lS_{l-1}$.

On the other hand, the layer-$l$ mass in the noise region can be upper-bounded by three terms: noise self-propagation, evidence leakage, and perturbation:
\begin{equation}
    N_l
    \le
    \bbone^\top T_{\bar R\bar R}^{(l)}a_{\bar R}^{(l-1)}
    +
    \bbone^\top T_{\bar R R}^{(l)}a_R^{(l-1)}
    +
    \bbone^\top \epsilon_{l,\bar R}.
\end{equation}
Using Eqs.~\eqref{eq:B_l_def}, \eqref{eq:C_l_def}, and \eqref{eq:xi_l_def}, we obtain
\begin{equation}
    N_l
    \le
    B_l N_{l-1}+C_lS_{l-1}+\xi_l.
\end{equation}
This proves the lemma.
\end{proof}

\subsection{Realistic Aggregate Signal-to-Noise Ratio Bound}
\label{subsec:aggregate_snr_bound}

\begin{theorem}[Realistic Aggregate SNR Bound]
\label{thm:realistic_snr}
Assume that for all $l=1,\ldots,L$, there exist $A_l>0$, $B_l\ge 0$, $C_l\ge 0$, and $\xi_l\ge 0$ such that the recurrences in Eqs.~\eqref{eq:signal_recursion}--\eqref{eq:noise_recursion} hold. Let
\begin{equation}
    Q_l = \SNR_l^{-1}=\frac{N_l}{S_l}.
\end{equation}
Then
\begin{equation}
    Q_L
    \le
    \left(\prod_{l=1}^{L}\frac{B_l}{A_l}\right)Q_0
    +
    \sum_{i=1}^{L}
    \left(
    \frac{C_i}{A_i}
    +
    \frac{\xi_i}{A_iS_{i-1}}
    \right)
    \prod_{t=i+1}^{L}\frac{B_t}{A_t} .
\label{eq:Q_bound}
\end{equation}
Equivalently, if the right-hand side is finite, then
\begin{equation}
    \SNR_L
    \ge
    \left[
    \left(\prod_{l=1}^{L}\frac{B_l}{A_l}\right)\SNR_0^{-1}
    +
    \sum_{i=1}^{L}
    \left(
    \frac{C_i}{A_i}
    +
    \frac{\xi_i}{A_iS_{i-1}}
    \right)
    \prod_{t=i+1}^{L}\frac{B_t}{A_t}
    \right]^{-1} .
\label{eq:SNR_bound_realistic}
\end{equation}
The empty product is defined as $1$.
\end{theorem}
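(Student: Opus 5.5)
The plan is to turn the two recurrences of Lemma~\ref{lem:aggregate_recursion} into a single scalar recurrence for the inverse ratio $Q_l=N_l/S_l$, and then unroll it.

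\emph{Step 1: positivity.} I first check that every $S_l$ is strictly positive, so that each $Q_l$ is well defined. The paper's convention $\SNR_l=+\infty$ when $N_l=0$ corresponds to $Q_l=0$, which is harmless. I take $S_0>0$ as part of the hypotheses; it is implicit in the lemma, which requires $S_{l-1}>0$. Since every $A_l>0$, Eq.~\eqref{eq:signal_recursion} gives $S_l\ge A_lS_{l-1}>0$, and induction yields $S_l>0$ for all $l$.

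\emph{Step 2: one-step recurrence for $Q_l$.} Divide Eq.~\eqref{eq:noise_recursion} by $S_l$. Because $N_l$ and the right-hand side of \eqref{eq:noise_recursion} are nonnegative, I may replace $1/S_l$ by the larger quantity $1/(A_lS_{l-1})$. This gives
\[
Q_l\le \frac{B_l N_{l-1}+C_lS_{l-1}+\xi_l}{A_lS_{l-1}}
=\frac{B_l}{A_l}Q_{l-1}+\frac{C_l}{A_l}+\frac{\xi_l}{A_lS_{l-1}} .
\]
This is an affine recurrence of the form $Q_l\le \beta_lQ_{l-1}+\gamma_l$, where $\beta_l=B_l/A_l\ge0$ and $\gamma_l=C_l/A_l+\xi_l/(A_lS_{l-1})\ge 0$.

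\emph{Step 3: unrolling.} I prove by induction on $L$ that
\[
Q_L\le \Big(\prod_{l=1}^L\beta_l\Big)Q_0+\sum_{i=1}^L\gamma_i\prod_{t=i+1}^L\beta_t ,
\]
with the empty product equal to $1$. The base case $L=1$ is Step~2 itself. For the inductive step, substitute the bound for $Q_{L-1}$ into $Q_L\le\beta_LQ_{L-1}+\gamma_L$. This substitution is monotone because $\beta_L\ge0$, which is why the nonnegativity of $B_L$ matters. Redistributing the factor $\beta_L$ into the products gives exactly Eq.~\eqref{eq:Q_bound}.

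\emph{Step 4: the SNR form.} If the right-hand side of \eqref{eq:Q_bound} is finite, invert both sides. Since $\SNR_L=1/Q_L$ and $t\mapsto 1/t$ is decreasing on $(0,\infty]$, inversion reverses the inequality and gives \eqref{eq:SNR_bound_realistic}.

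\emph{Main obstacle.} There is no deep difficulty here; the work is bookkeeping in the edge cases. The first is the direction of the inequality in Step~2, where I lower-bound $S_l$ in a denominator. This is valid only because the numerator is nonnegative. The second is the degenerate values $Q_0=0$, $Q_L=0$, and $\SNR_0^{-1}=0$. For these I would note that the inversion in Step~4 is read with the conventions $1/0=+\infty$ and $1/\infty=0$. If the whole right-hand side is $0$, the claim is $\SNR_L\ge+\infty$. That holds because $Q_L\le 0$ forces $N_L=0$, so $\SNR_L=+\infty$ by convention.
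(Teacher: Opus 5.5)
Your proposal is correct and follows the paper's proof: you divide the noise recurrence by $S_l\ge A_lS_{l-1}$ to get $Q_l\le (B_l/A_l)Q_{l-1}+C_l/A_l+\xi_l/(A_lS_{l-1})$, unroll this affine recurrence, and invert to obtain the $\SNR$ form. Your extra bookkeeping is consistent with that argument and makes explicit what the paper leaves implicit: $S_l>0$ by induction from $S_0>0$, the monotonicity of the substitution via $B_l\ge 0$, and the $1/0=+\infty$ conventions in the degenerate cases.
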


\begin{proof}
By Lemma~\ref{lem:aggregate_recursion}, for any $l$,
\begin{equation}
    S_l\ge A_lS_{l-1},
    \qquad
    N_l\le B_lN_{l-1}+C_lS_{l-1}+\xi_l.
\end{equation}
Therefore,
\begin{align}
    Q_l
    =\frac{N_l}{S_l}
    &\le
    \frac{B_lN_{l-1}+C_lS_{l-1}+\xi_l}{A_lS_{l-1}} \\
    &=
    \frac{B_l}{A_l}Q_{l-1}
    +
    \frac{C_l}{A_l}
    +
    \frac{\xi_l}{A_lS_{l-1}}.
\end{align}
Let
\begin{equation}
    r_l=\frac{B_l}{A_l},
    \qquad
    d_l=\frac{C_l}{A_l}+\frac{\xi_l}{A_lS_{l-1}}.
\end{equation}
Then
\begin{equation}
    Q_l\le r_lQ_{l-1}+d_l.
\end{equation}
Expanding this first-order nonhomogeneous recurrence yields
\begin{equation}
    Q_L
    \le
    \left(\prod_{l=1}^{L}r_l\right)Q_0
    +
    \sum_{i=1}^{L}d_i\prod_{t=i+1}^{L}r_t.
\end{equation}
Substituting back $r_l$ and $d_l$ proves Eq.~\eqref{eq:Q_bound}. Since $\SNR_L=Q_L^{-1}$, Eq.~\eqref{eq:SNR_bound_realistic} follows.
\end{proof}

\begin{corollary}[Layer-Homogeneous Case]
\label{cor:homogeneous_snr}
If $A_l=A>0$, $B_l=B\ge 0$, $C_l=C\ge 0$, and $\xi_l=0$, then
\begin{equation}
\SNR_L
    \ge
    \left[
    \left(\frac BA\right)^L\SNR_0^{-1}
    +
    \frac CA\sum_{i=0}^{L-1}\left(\frac BA\right)^i
    \right]^{-1}.
\label{eq:homogeneous_snr_bound}
\end{equation}
If further $C=0$, then
\begin{equation}
    \SNR_L\ge \left(\frac AB\right)^L\SNR_0 .
\label{eq:no_leak_exp_snr}
\end{equation}
\end{corollary}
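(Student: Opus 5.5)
The plan is to specialize Theorem~\ref{thm:realistic_snr} directly; no new recurrence argument is needed. Set $A_l=A$, $B_l=B$, $C_l=C$, and $\xi_l=0$ for all $l$. The perturbation term $\xi_i/(A_iS_{i-1})$ in Eq.~\eqref{eq:Q_bound} then vanishes identically. This matters because that term is the only place where the signal masses $S_{i-1}$ enter, and without it the bound no longer depends on any intermediate quantity.

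The products collapse as follows. The leading product becomes $\prod_{l=1}^L B/A=(B/A)^L$. The tail product in the $i$-th summand becomes $\prod_{t=i+1}^{L}B/A=(B/A)^{L-i}$, with the empty product equal to $1$ when $i=L$. The sum is therefore
\[
\frac CA\sum_{i=1}^{L}\left(\frac BA\right)^{L-i}.
\]
Reindexing with $j=L-i$, which runs from $0$ to $L-1$, gives exactly the geometric sum appearing in Eq.~\eqref{eq:homogeneous_snr_bound}. This yields
\[
Q_L\le (B/A)^LQ_0+\frac CA\sum_{j=0}^{L-1}(B/A)^j.
\]

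To pass from $Q_L$ to $\SNR_L$, invert as in the last step of Theorem~\ref{thm:realistic_snr}. Since $Q_L\ge0$ and the right-hand side is nonnegative, $t\mapsto t^{-1}$ is order-reversing on $(0,\infty]$, so $\SNR_L=Q_L^{-1}\ge[\text{RHS}]^{-1}$.

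For the case $C=0$, the bound becomes $Q_L\le (B/A)^LQ_0$, and inverting gives $\SNR_L\ge (A/B)^L\SNR_0$.

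The main point needing care is the degenerate cases under the conventions $1/0=+\infty$ and $\SNR_l=+\infty$ when $N_l=0$. There are two of them:
\begin{itemize}
\item If $B=0$ or $Q_0=0$, the inequality $Q_L\le 0$ forces $N_L=0$, and the claim holds with $\SNR_L=+\infty$.
\item If $B=0$ with $C=0$, read $(A/B)^L$ as $+\infty$; this is consistent with $N_L=0$.
\end{itemize}
With these conventions stated, the corollary follows verbatim from the theorem.
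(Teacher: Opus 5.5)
Your proposal is correct and follows the same route as the paper: the paper's proof likewise substitutes the homogeneous constants into Theorem~\ref{thm:realistic_snr}, collapses the products to powers of $B/A$, and sums the resulting geometric series. Your treatment of degenerate cases goes beyond what the paper states, but the first bullet also needs $C=0$. With $B=0$ and $C>0$, the surviving $j=0$ term $(B/A)^0=1$ gives only $Q_L\le C/A$, and $Q_0=0$ with $C>0$ gives a strictly positive bound, so $N_L=0$ is not forced in either case. In those cases the right-hand side is already finite and positive, so no special convention is needed.
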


\begin{proof}
Substituting $A_l=A$, $B_l=B$, $C_l=C$, and $\xi_l=0$ into Theorem~\ref{thm:realistic_snr} and simplifying the resulting geometric series gives the result.
\end{proof}

\begin{corollary}[Ideal Edge-Wise Gating as a Special Case]
\label{cor:ideal_gate_special_case}
Suppose there exist constants $g_+>g_->0$, $\alpha_+>0$, $\alpha_->0$, $g_0\ge0$, and $\lambda_{\leak}\ge0$ such that the effective retention inside the evidence region is $A=g_+\alpha_+$, the self-propagation inside the noise region is $B=g_-\alpha_-$, the evidence-to-noise leakage is $C=g_0\lambda_{\leak}$, and $\xi_l=0$. Then Theorem~\ref{thm:realistic_snr} reduces to
\begin{equation}
    \SNR_L
    \ge
    \left[
    \left(\frac{g_-\alpha_-}{g_+\alpha_+}\right)^L\SNR_0^{-1}
    +
    \frac{g_0\lambda_{\leak}}{g_+\alpha_+}
    \sum_{i=0}^{L-1}
    \left(\frac{g_-\alpha_-}{g_+\alpha_+}\right)^i
    \right]^{-1}.
\end{equation}
If $\lambda_{\leak}=0$, then
\begin{equation}
    \SNR_L
    \ge
    \left(\frac{g_+\alpha_+}{g_-\alpha_-}\right)^L\SNR_0.
\end{equation}
\end{corollary}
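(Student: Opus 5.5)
The plan is to obtain Corollary~\ref{cor:ideal_gate_special_case} as a direct specialization of Corollary~\ref{cor:homogeneous_snr}. No new argument is needed beyond identifying the layer-homogeneous constants. Set $A=g_+\alpha_+$, $B=g_-\alpha_-$, $C=g_0\lambda_{\leak}$ and $\xi_l=0$ for all $l$. Check the sign conditions required there: $A>0$ because $g_+>0$ and $\alpha_+>0$; $B\ge 0$ and $C\ge 0$ because all factors are nonnegative. Then verify that these constants are admissible aggregate coefficients in the sense of Definition~\ref{def:aggregate_coefficients}. The hypothesis of the corollary is read as asserting exactly this, i.e.\ that $g_+\alpha_+$ lower-bounds the column-sum retention of $T_{RR}^{(l)}$, while $g_-\alpha_-$ and $g_0\lambda_{\leak}$ upper-bound the column sums of $T_{\bar R\bar R}^{(l)}$ and $T_{\bar R R}^{(l)}$.

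To make that reading concrete, I would note the following. If every evidence edge carries gate at least $g_+$ and the column sums of the ungated within-region kernel are at least $\alpha_+$, then for nonnegative $x$,
\[
\bbone^\top T_{RR}^{(l)}x=\sum_j x_j\sum_i T_{RR}^{(l)}(i,j)\ge g_+\alpha_+\,\bbone^\top x .
\]
The same calculation with reversed inequalities gives the bounds for $B$ and $C$. With the constants admissible, Lemma~\ref{lem:aggregate_recursion} applies, and hence so does Theorem~\ref{thm:realistic_snr}.

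Next I substitute into Eq.~\eqref{eq:homogeneous_snr_bound}. Here $B/A=(g_-\alpha_-)/(g_+\alpha_+)$ and $C/A=(g_0\lambda_{\leak})/(g_+\alpha_+)$, which reproduces the displayed bound verbatim. For the second claim, set $\lambda_{\leak}=0$, so $C=0$, and invoke Eq.~\eqref{eq:no_leak_exp_snr}. This gives $\SNR_L\ge (A/B)^L\SNR_0=\bigl(g_+\alpha_+/(g_-\alpha_-)\bigr)^L\SNR_0$. The division by $B$ is legitimate since $g_-,\alpha_->0$.

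The only step needing care is that admissibility identification: the corollary states $A$, $B$, $C$ as given equalities rather than as bounds. I would resolve this by remarking that Definition~\ref{def:aggregate_coefficients} allows any constants satisfying the inequalities. It therefore suffices that these products satisfy them, and the equalities in the statement are an idealization of that.

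I would also handle the degenerate cases $\SNR_0=+\infty$ and $S_{l-1}>0$ with the conventions already fixed in Definition~\ref{def:scalar_activation}. $S_{l-1}>0$ propagates from $S_0>0$ because $A>0$. With $\SNR_0^{-1}=0$, the bound remains meaningful.
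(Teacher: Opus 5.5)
Your proposal is correct and follows the route the paper intends. The paper gives no separate proof: the corollary is just Corollary~\ref{cor:homogeneous_snr} with $A=g_+\alpha_+$, $B=g_-\alpha_-$, $C=g_0\lambda_{\leak}$ and $\xi_l=0$ substituted, and $g_-\alpha_->0$ justifies the division in the leak-free case, as you note. Your extra remarks on admissibility under Definition~\ref{def:aggregate_coefficients} and on the degenerate case $\SNR_0=+\infty$ are not needed, since the paper takes these equalities as hypotheses, but they are harmless.
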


\subsection{Document Retrieval Budget}
\label{subsec:from_snr_to_budget}

The final retrieval targets of the SAGE-GFM reader are memory fragments or documents. Therefore, we need to convert the entity-level SNR bound into a document-level top-$k$ budget bound. Let the final document score be $S_D(d)$, and let the top-$k$ retrieval result be
\begin{equation}
    P_k(q,G)=\TopK_{d\in D}S_D(d).
\end{equation}

\begin{definition}[$\rho$-Coverage Retrieval Budget]
\label{def:rho_budget}
Given $0<\rho\le \rho_A$, let
\begin{equation}
    m_\rho=\lceil \rho |D^+|\rceil .
\end{equation}
The minimum top-$k$ budget required to achieve $\rho$-level gold evidence coverage is defined as
\begin{equation}
    \calB_\rho(q,G)
    =
    \min\left\{k:\ |P_k(q,G)\cap D^+|\ge m_\rho\right\}.
    \label{eq:rho_budget_def}
\end{equation}
Let $\tau_\rho^+$ denote the $m_\rho$-th largest score among gold evidence documents, namely the gold score threshold required to achieve $\rho$-coverage.
\end{definition}

\begin{lemma}[Quantile Retrieval Budget Bound]
\label{lem:quantile_budget}
Let the total score mass of distractor documents be
\begin{equation}
    M_L^- = \sum_{d\in D\setminus D^+}S_D(d).
\end{equation}
If $\tau_\rho^+>0$, then
\begin{equation}
\calB_\rho(q,G)
    \le
    m_\rho+\frac{M_L^-}{\tau_\rho^+}.
\label{eq:quantile_budget_bound}
\end{equation}
\end{lemma}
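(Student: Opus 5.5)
The plan is a Markov-type counting argument over the distractor scores. Throughout I will assume, as the score-mass interpretation of $M_L^-$ implicitly requires, that document scores are nonnegative, $S_D(d)\ge 0$ for every $d\in D$. Otherwise $M_L^-$ would not control the number of high-scoring distractors, and I would state this explicitly as a standing convention of Definition~\ref{def:rho_budget}.

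\textbf{Step 1: counting distractors above the threshold.} Let $G^+_\rho\subseteq D^+$ be the $m_\rho$ gold documents with the largest scores, so every $d\in G^+_\rho$ has $S_D(d)\ge \tau_\rho^+$. Define
\[
n_\rho=\bigl|\{d\in D\setminus D^+:\ S_D(d)\ge \tau_\rho^+\}\bigr| .
\]
By nonnegativity of the scores and $\tau_\rho^+>0$,
\[
n_\rho\,\tau_\rho^+\le \sum_{d\in D\setminus D^+,\ S_D(d)\ge\tau_\rho^+}S_D(d)\le M_L^- ,
\]
and hence $n_\rho\le M_L^-/\tau_\rho^+$.

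\textbf{Step 2: exhibiting a sufficient budget.} Set $k=m_\rho+n_\rho$. This is at most $|D|$, since $m_\rho\le|D^+|$ and $n_\rho\le|D\setminus D^+|$. I claim $|P_k(q,G)\cap D^+|\ge m_\rho$. Suppose instead that fewer than $m_\rho$ gold documents lie in $P_k$. Then $P_k$ contains more than $n_\rho$ distractors, so at least one distractor $d^-\in P_k$ has $S_D(d^-)<\tau_\rho^+$. Moreover, some $d^+\in G^+_\rho$ is not in $P_k$, and it satisfies $S_D(d^+)\ge\tau_\rho^+>S_D(d^-)$. A top-$k$ set cannot contain $d^-$ while excluding a document with strictly larger score, which is a contradiction. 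Because this inequality is strict, the argument does not depend on how ties are broken in $\TopK$.

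\textbf{Step 3: conclusion.} By the minimality in Eq.~\eqref{eq:rho_budget_def}, we get $\calB_\rho(q,G)\le m_\rho+n_\rho\le m_\rho+M_L^-/\tau_\rho^+$, which is Eq.~\eqref{eq:quantile_budget_bound}.

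The only delicate points are the tie-handling in Step 2, which the strict inequality resolves, and the need for nonnegative scores in Step 1. If the paper's scores could be negative, I would instead apply the argument to $[S_D(d)]_+$ and note that this leaves the ranking above the threshold $\tau_\rho^+>0$ unchanged.
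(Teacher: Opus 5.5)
Your proof is correct and is essentially the paper's argument: bound the number of distractors scoring at least $\tau_\rho^+$ by $M_L^-/\tau_\rho^+$ via the Markov-type count, then observe that $m_\rho$ plus this count suffices. Your explicit nonnegativity hypothesis and your strict-inequality tie argument make rigorous what the paper leaves implicit.

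One caveat concerns your fallback to $[S_D(d)]_+$. It only yields the bound with $\sum_{d\in D\setminus D^+}[S_D(d)]_+$ in place of $M_L^-$. The lemma as stated genuinely fails for signed scores: distractor scores $10$ and $-10$ give $M_L^-=0$, yet a single gold document of score $1$ needs $k=2$. So nonnegativity must remain a hypothesis rather than a removable convention.
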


\begin{proof}
Define the set of distractor documents whose scores are not lower than $\tau_\rho^+$ as
\begin{equation}
    \calN_\rho
    =
    \{d\in D\setminus D^+: S_D(d)\ge \tau_\rho^+\}.
\end{equation}
For any $d\in\calN_\rho$, we have $S_D(d)\ge\tau_\rho^+$, and hence
\begin{equation}
    |\calN_\rho|\tau_\rho^+
    \le
    \sum_{d\in\calN_\rho}S_D(d)
    \le
    M_L^-.
\end{equation}
Thus $|\calN_\rho|\le M_L^-/\tau_\rho^+$. To ensure that the top-$k$ results contain at least $m_\rho$ gold evidence documents, it suffices to include these $m_\rho$ gold documents and all distractors whose scores are not lower than the threshold $\tau_\rho^+$. Therefore,
\begin{equation}
    \calB_\rho(q,G)
    \le
    m_\rho+|\calN_\rho|
    \le
    m_\rho+\frac{M_L^-}{\tau_\rho^+}.
\end{equation}
This proves the lemma.
\end{proof}

To use entity-level SNR for document-level retrieval, we need to control the noise expansion introduced by entity-to-document projection.

\begin{assumption}[Projection Noise and Gold Score Concentration]
\label{ass:projection}
There exist constants $K_A\ge0$, $\zeta_A\ge0$, and $c_\rho\in(0,1]$ such that the final document scores satisfy
\begin{align}
    M_L^- &\le K_A N_L+\zeta_A,
    \label{eq:projection_noise}\\
    \tau_\rho^+ &\ge \frac{c_\rho}{m_\rho}S_L.
    \label{eq:gold_concentration}
\end{align}
Here, $K_A$ is the noise expansion factor of entity-to-document projection, $\zeta_A$ denotes the projection residual caused by incorrect anchors, missing anchors, or additional text-similarity terms, and $c_\rho$ measures whether the evidence signal is effectively distributed over at least $m_\rho$ gold documents.
\end{assumption}

\begin{theorem}[Realistic Signal--Noise--Budget Bound]
\label{thm:realistic_signal_noise_budget}
Under the conditions of Theorem~\ref{thm:realistic_snr}, further assume that Assumption~\ref{ass:projection} holds. Then
\begin{equation}
    \calB_\rho(q,G)
    \le
    m_\rho
    +
    \frac{m_\rho K_A}{c_\rho}\SNR_L^{-1}
    +
    \frac{m_\rho\zeta_A}{c_\rho S_L}.
\label{eq:budget_snr_bound}
\end{equation}
Substituting Theorem~\ref{thm:realistic_snr} further yields the explicit upper bound
\begin{equation}
\begin{aligned}
    \calB_\rho(q,G)
    \le
    &\ m_\rho
    +
    \frac{m_\rho K_A}{c_\rho}
    \Bigg[
    \left(\prod_{l=1}^{L}\frac{B_l}{A_l}\right)\SNR_0^{-1} \\
    &\quad+
    \sum_{i=1}^{L}
    \left(
    \frac{C_i}{A_i}
    +
    \frac{\xi_i}{A_iS_{i-1}}
    \right)
    \prod_{t=i+1}^{L}\frac{B_t}{A_t}
    \Bigg]
    +
    \frac{m_\rho\zeta_A}{c_\rho S_L} .
\end{aligned}
\label{eq:budget_explicit_bound}
\end{equation}
\end{theorem}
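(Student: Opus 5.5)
The proof is a direct chaining of Lemma~\ref{lem:quantile_budget}, Assumption~\ref{ass:projection}, and Theorem~\ref{thm:realistic_snr}; no new propagation argument is needed.

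First, note that Eq.~\eqref{eq:gold_concentration} together with the standing hypothesis $S_L>0$ gives $\tau_\rho^+\ge c_\rho S_L/m_\rho>0$. The standing hypothesis comes from the conditions of Theorem~\ref{thm:realistic_snr}: $A_l>0$ and $S_{l-1}>0$ propagate positivity through $S_l\ge A_lS_{l-1}$, provided $S_0>0$, which is implicit since $\SNR_0$ appears in the bound. Hence the hypothesis of Lemma~\ref{lem:quantile_budget} is satisfied, and we obtain $\calB_\rho(q,G)\le m_\rho+M_L^-/\tau_\rho^+$.

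Next, bound the ratio $M_L^-/\tau_\rho^+$. Use the numerator bound $M_L^-\le K_AN_L+\zeta_A$ from Eq.~\eqref{eq:projection_noise}. Use the reciprocal of the gold-concentration bound, $1/\tau_\rho^+\le m_\rho/(c_\rho S_L)$, which is legitimate because both sides are positive and $M_L^-\ge0$. Multiplying gives
\[
\frac{M_L^-}{\tau_\rho^+}\le \frac{m_\rho}{c_\rho S_L}\bigl(K_AN_L+\zeta_A\bigr)=\frac{m_\rho K_A}{c_\rho}\frac{N_L}{S_L}+\frac{m_\rho\zeta_A}{c_\rho S_L}.
\]
Since $N_L/S_L=Q_L=\SNR_L^{-1}$, this yields Eq.~\eqref{eq:budget_snr_bound}. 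The convention $\SNR_L=+\infty$ when $N_L=0$ is consistent here, because the middle term then vanishes.

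Finally, substitute the bound on $Q_L$ from Eq.~\eqref{eq:Q_bound} into the middle term. Its coefficient $m_\rho K_A/c_\rho$ is nonnegative, so the inequality direction is preserved, and we recover Eq.~\eqref{eq:budget_explicit_bound} with $Q_0=\SNR_0^{-1}$.

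The only delicate point is bookkeeping on positivity and signs. We must ensure that $S_L>0$, so that dividing by it and invoking the lemma are both valid, and that every multiplier used in the substitution is nonnegative. I would state these explicitly at the outset; the rest is algebra.
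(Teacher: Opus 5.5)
Your proposal is correct and follows the paper's proof exactly: apply Lemma~\ref{lem:quantile_budget}, use Assumption~\ref{ass:projection} to bound $M_L^-/\tau_\rho^+$ by $\frac{m_\rho K_A}{c_\rho}\frac{N_L}{S_L}+\frac{m_\rho\zeta_A}{c_\rho S_L}$, identify $N_L/S_L=\SNR_L^{-1}$, and substitute Eq.~\eqref{eq:Q_bound}. Your explicit positivity bookkeeping is a useful addition that the paper leaves implicit. It shows $S_L>0$ via $S_l\ge A_lS_{l-1}$, hence $\tau_\rho^+>0$, so the lemma's hypothesis actually holds.
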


\begin{proof}
By Lemma~\ref{lem:quantile_budget},
\begin{equation}
    \calB_\rho(q,G)
    \le
    m_\rho+\frac{M_L^-}{\tau_\rho^+}.
\end{equation}
By Assumption~\ref{ass:projection},
\begin{equation}
    \frac{M_L^-}{\tau_\rho^+}
    \le
    \frac{K_A N_L+\zeta_A}{(c_\rho/m_\rho)S_L}
    =
    \frac{m_\rho K_A}{c_\rho}\frac{N_L}{S_L}
    +
    \frac{m_\rho\zeta_A}{c_\rho S_L}.
\end{equation}
Since $N_L/S_L=\SNR_L^{-1}$, Eq.~\eqref{eq:budget_snr_bound} follows. Substituting the upper bound on $\SNR_L^{-1}$ from Theorem~\ref{thm:realistic_snr} into Eq.~\eqref{eq:budget_snr_bound} gives Eq.~\eqref{eq:budget_explicit_bound}.
\end{proof}

\begin{corollary}[Full Evidence Recovery Budget]
\label{cor:full_recovery_budget}
If $\rho=1$ and $\rho_A=1$, then $m_\rho=|D^+|$, and $\calB_\rho(q,G)$ reduces to the full evidence recovery budget. In this case, Theorem~\ref{thm:realistic_signal_noise_budget} provides an upper bound on the top-$k$ budget required to recover all gold evidence.
\end{corollary}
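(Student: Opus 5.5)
This corollary follows by specializing Theorem~\ref{thm:realistic_signal_noise_budget} to $\rho=1$; there are three steps. First, I check that $\rho=1$ is an admissible choice in Definition~\ref{def:rho_budget}. That definition requires $0<\rho\le\rho_A$, and the hypothesis $\rho_A=1$ gives exactly this. Second, I compute $m_\rho=\lceil 1\cdot|D^+|\rceil=|D^+|$, since $|D^+|$ is an integer. The budget then reads
\[
\calB_1(q,G)=\min\{k: |P_k(q,G)\cap D^+|\ge |D^+|\}.
\]
Because $P_k(q,G)\cap D^+\subseteq D^+$, the condition $|P_k(q,G)\cap D^+|\ge|D^+|$ is equivalent to $D^+\subseteq P_k(q,G)$. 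So $\calB_1$ is the smallest top-$k$ budget that retrieves every gold document, which is the full evidence recovery budget.

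Third, I invoke Theorem~\ref{thm:realistic_signal_noise_budget} with $\rho=1$. Here $\tau_1^+$ is the $|D^+|$-th largest gold score, i.e.\ the minimum gold score, and Assumption~\ref{ass:projection} is read with $m_\rho=|D^+|$ and some $c_1\in(0,1]$. Substituting into Eq.~\eqref{eq:budget_snr_bound} gives
\[
\calB_1(q,G)\le |D^+|+\frac{|D^+|K_A}{c_1}\SNR_L^{-1}+\frac{|D^+|\zeta_A}{c_1S_L}.
\]
The explicit form follows from Eq.~\eqref{eq:budget_explicit_bound} in the same way.

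The only point needing care is that the theorem's hypotheses remain meaningful at $\rho=1$. In particular, Lemma~\ref{lem:quantile_budget} needs $\tau_1^+>0$. This follows from Eq.~\eqref{eq:gold_concentration} together with $S_L>0$, and $S_L>0$ holds because $S_L\ge\prod_l A_l\,S_0>0$ by Lemma~\ref{lem:aggregate_recursion}, given $S_0>0$. Apart from this, the argument is a direct substitution.
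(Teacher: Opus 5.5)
Your proposal is correct and follows the same route as the paper, which gives no separate argument and treats the corollary as the direct specialization of Theorem~\ref{thm:realistic_signal_noise_budget} to $\rho=\rho_A=1$: then $m_\rho=|D^+|$, and $\calB_1(q,G)$ is the smallest $k$ with $D^+\subseteq P_k(q,G)$. Your extra check that $\tau_1^+>0$ is a worthwhile addition, but justify $S_L\ge\bigl(\prod_l A_l\bigr)S_0>0$ by noting that the recurrence $S_l\ge A_lS_{l-1}$ is a standing hypothesis of Theorem~\ref{thm:realistic_snr}, not by citing Lemma~\ref{lem:aggregate_recursion}, because the one-sided bound in Assumption~\ref{ass:effective_operator} only upper-bounds activations and cannot by itself give that lower bound.
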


\subsection{Interpretation of the Theoretical Bound for the SAGE Design}
\label{subsec:design_interpretation}

Theorem~\ref{thm:realistic_snr} and Theorem~\ref{thm:realistic_signal_noise_budget} unify four reader design factors under the same retrieval-budget upper bound. To avoid relying only on intuitive discussion, we provide several direct monotonicity propositions.

\begin{proposition}[Monotonicity of the Budget Bound]
\label{prop:monotonicity}
Fix $m_\rho$, $K_A$, $c_\rho$, $\zeta_A$, $S_L$, and $\SNR_0$, and define
\begin{equation}
    \Gamma_L
    =
    \left(\prod_{l=1}^{L}\frac{B_l}{A_l}\right)\SNR_0^{-1}
    +
    \sum_{i=1}^{L}
    \left(
    \frac{C_i}{A_i}
    +
    \frac{\xi_i}{A_iS_{i-1}}
    \right)
    \prod_{t=i+1}^{L}\frac{B_t}{A_t}.
\end{equation}
Then the budget upper bound
\begin{equation}
    U_\rho
    =
    m_\rho+
    \frac{m_\rho K_A}{c_\rho}\Gamma_L+
    \frac{m_\rho\zeta_A}{c_\rho S_L}
\end{equation}
is monotonically nondecreasing in $\Gamma_L$, $K_A$, and $\zeta_A$, and monotonically nonincreasing in $c_\rho$ and $S_L$. If all other terms in the products are fixed, decreasing any of $B_l/A_l$, $C_l/A_l$, or $\xi_l/(A_lS_{l-1})$ cannot increase $U_\rho$.
\end{proposition}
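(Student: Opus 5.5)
The plan is to read $U_\rho$ as an explicit algebraic expression and check the sign of its dependence on each quantity directly. All fixed quantities are nonnegative: $m_\rho\ge 1$, $K_A\ge0$, $\zeta_A\ge0$, $c_\rho\in(0,1]$, and $S_L>0$. The bound is only used when $S_L>0$ (so that $\SNR_L$ and the last term of Theorem~\ref{thm:realistic_signal_noise_budget} are defined). $\Gamma_L\ge0$ because it is a sum of products of nonnegative terms $A_l>0$, $B_l,C_l,\xi_l\ge0$, $S_{i-1}>0$.

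The first part is termwise. $U_\rho$ is affine in $\Gamma_L$ with slope $m_\rho K_A/c_\rho\ge0$. It is affine in $K_A$ with slope $m_\rho\Gamma_L/c_\rho\ge0$, and affine in $\zeta_A$ with slope $m_\rho/(c_\rho S_L)\ge0$, so it is nondecreasing in all three. For $c_\rho$, write
\[
U_\rho=m_\rho+\frac{1}{c_\rho}\Bigl(m_\rho K_A\Gamma_L+\frac{m_\rho\zeta_A}{S_L}\Bigr),
\]
where the bracket is nonnegative and $t\mapsto 1/t$ is decreasing on $(0,1]$. For $S_L$, only the last term depends on it, and $m_\rho\zeta_A/(c_\rho S_L)$ is nonincreasing in $S_L>0$ because $\zeta_A\ge0$. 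Since $\Gamma_L$ is treated as an independent argument, I do not need to account for the implicit dependence of $\Gamma_L$ on $S_L$ through $S_{i-1}$.

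For the second part, I set $r_l=B_l/A_l$, $u_l=C_l/A_l$, and $v_l=\xi_l/(A_lS_{l-1})$, all nonnegative, so that
\[
\Gamma_L=\Bigl(\prod_{l=1}^{L}r_l\Bigr)\SNR_0^{-1}+\sum_{i=1}^{L}(u_i+v_i)\prod_{t=i+1}^{L}r_t.
\]
This is a polynomial in the variables $(r_l,u_l,v_l)$ with nonnegative coefficients; in particular $\SNR_0^{-1}\ge0$. A polynomial with nonnegative coefficients is coordinatewise nondecreasing on the nonnegative orthant, because each monomial is a product of nonnegative factors. More concretely, $\Gamma_L$ is affine in each single variable with nonnegative slope:
\[
\partial\Gamma_L/\partial u_i=\partial\Gamma_L/\partial v_i=\prod_{t>i}r_t\ge0,
\]
and the coefficient of $r_l$ is
\[
\Bigl(\prod_{t\neq l}r_t\Bigr)\SNR_0^{-1}+\sum_{i<l}(u_i+v_i)\prod_{t>i,\,t\neq l}r_t\ge0 .
\]
Hence decreasing any one of these ratios with the others fixed does not increase $\Gamma_L$. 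By the first part, with $K_A/c_\rho\ge0$, it also does not increase $U_\rho$.

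The only step that needs care is the interpretation of ``all other terms fixed''. The ratios $r_l$, $u_l$ and $v_l$ share the same $A_l$, and $v_l$ involves $S_{l-1}$. I will therefore state explicitly that the monotonicity is with respect to the ratios treated as free variables, not with respect to $A_l$ alone. This is exactly what the statement asserts, and the nonnegative-coefficient polynomial argument then settles the claim. The case $\SNR_0=+\infty$ (so $\SNR_0^{-1}=0$) is covered by the same argument.
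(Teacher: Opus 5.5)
Your proof is correct and follows essentially the same route as the paper. In both, you sign-check the affine dependence of $U_\rho$ on $\Gamma_L$, $K_A$, $\zeta_A$, $c_\rho$ and $S_L$, and observe that $\Gamma_L$ is a nonnegative-coefficient combination of $B_l/A_l$, $C_l/A_l$ and $\xi_l/(A_lS_{l-1})$; you make this more explicit than the paper by writing out each ratio's coefficient. One small remark: $\Gamma_L$ involves only $S_0,\dots,S_{L-1}$, so there is no implicit dependence on $S_L$ to set aside.
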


\begin{proof}
The partial derivatives of $U_\rho$ with respect to $\Gamma_L$, $K_A$, and $\zeta_A$ are nonnegative, while the partial derivatives with respect to $c_\rho$ and $S_L$ are nonpositive. Moreover, $\Gamma_L$ is a nonnegative linear or multiplicative combination of $B_l/A_l$, $C_l/A_l$, and $\xi_l/(A_lS_{l-1})$. When the other terms are fixed, decreasing any such nonnegative term cannot increase $\Gamma_L$. The proposition follows.
\end{proof}

\begin{proposition}[Effect of Soft Addressing]
\label{prop:soft_addressing}
If soft addressing increases the initial evidence signal $S_0$ and decreases the initial noise mass $N_0$, thereby increasing $\SNR_0$, then, with all other coefficients fixed, the budget upper bound in Theorem~\ref{thm:realistic_signal_noise_budget} does not increase. In particular, explicit entities, aliases, pseudo-queries, type constraints, hard constraints, and entity-linking signals in query planning improve the final budget bound whenever they increase $S_0/N_0$ in an aggregate sense.
\end{proposition}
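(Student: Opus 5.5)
The plan is to reduce the statement to the monotonicity already established in Proposition~\ref{prop:monotonicity}, using the explicit bound of Theorem~\ref{thm:realistic_signal_noise_budget}. Write the budget upper bound as
\[
U_\rho=m_\rho+\frac{m_\rho K_A}{c_\rho}\Gamma_L+\frac{m_\rho\zeta_A}{c_\rho S_L},
\]
where $\SNR_0$ enters $\Gamma_L$ only through the first summand $\bigl(\prod_{l=1}^{L}B_l/A_l\bigr)\SNR_0^{-1}$. Since $A_l>0$ and $B_l\ge 0$, this product is a nonnegative constant once the coefficients are fixed. Hence $\Gamma_L$ is a nonincreasing affine function of $\SNR_0^{-1}$. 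By Proposition~\ref{prop:monotonicity}, $U_\rho$ is nondecreasing in $\Gamma_L$ because $m_\rho K_A/c_\rho\ge 0$. Composing the two facts, replacing $\SNR_0$ by a larger value $\SNR_0'\ge\SNR_0$ gives $U_\rho'\le U_\rho$.

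The one point requiring care is the phrase ``all other coefficients fixed.'' Raising $S_0$ also changes $S_{i-1}$ for $i\ge 1$, which appear in the perturbation terms $\xi_i/(A_iS_{i-1})$, and changes $S_L$, which appears in the projection residual term. I will handle this as follows.

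First, I will interpret the hypothesis as fixing $A_l,B_l,C_l,\xi_l,K_A,\zeta_A,c_\rho,m_\rho$. I will then note that the statement assumes $S_0$ increases, so the only extra dependence is through $S_{i-1}$ and $S_L$. For this I will use the signal recurrence of Lemma~\ref{lem:aggregate_recursion}. If one works with the lower-bound trajectory $\underline S_l=\prod_{t\le l}A_t\,S_0$, then every $\underline S_{i-1}$ and $\underline S_L$ is increasing in $S_0$. Each term $\xi_i/(A_i\underline S_{i-1})$ and the term $m_\rho\zeta_A/(c_\rho\underline S_L)$ is therefore nonincreasing. Proposition~\ref{prop:monotonicity} then covers these as well: $U_\rho$ is nonincreasing in $S_L$ and nondecreasing in each $\xi_i/(A_iS_{i-1})$. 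Under either reading, literally fixed $S_{i-1}$ and $S_L$, or these quantities rising with $S_0$, every term of $U_\rho$ moves weakly downward.

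Finally, I will remark that the qualitative claim about planning cues is a restatement of the same inequality. Whenever explicit entities, aliases, pseudo-queries, type constraints, hard constraints and entity-linking signals raise $S_0/N_0=\SNR_0$ in aggregate, the argument above applies verbatim. The main obstacle is only this bookkeeping of the implicit dependence of $S_{i-1}$ and $S_L$ on $S_0$; the rest is immediate from Proposition~\ref{prop:monotonicity}.
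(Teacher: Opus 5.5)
Your proposal is correct and matches the paper's route. The paper gives no separate proof and treats the statement as an immediate consequence of Proposition~\ref{prop:monotonicity}, since $\Gamma_L$ depends on $\SNR_0$ only through the term $\bigl(\prod_{l} B_l/A_l\bigr)\SNR_0^{-1}$, whose coefficient is nonnegative. Your bookkeeping of $S_{i-1}$ and $S_L$ goes beyond the paper, which simply holds them fixed; note that the lower-bound-trajectory version is a looser bound than the one in Theorem~\ref{thm:realistic_signal_noise_budget}, so only the fixed-$S$ reading is literally about that theorem's bound.
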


\begin{proposition}[Aggregate Advantage of Structural Gating]
\label{prop:structural_gating_advantage}
Compared with a reader without structural gating, suppose the structurally gated reader satisfies
\begin{equation}
    \frac{B_l^{\mathrm{gate}}}{A_l^{\mathrm{gate}}}
    \le
    \frac{B_l^{\mathrm{plain}}}{A_l^{\mathrm{plain}}},
    \qquad
    \frac{C_l^{\mathrm{gate}}}{A_l^{\mathrm{gate}}}
    \le
    \frac{C_l^{\mathrm{plain}}}{A_l^{\mathrm{plain}}},
    \qquad
    \frac{\xi_l^{\mathrm{gate}}}{A_l^{\mathrm{gate}}S_{l-1}^{\mathrm{gate}}}
    \le
    \frac{\xi_l^{\mathrm{plain}}}{A_l^{\mathrm{plain}}S_{l-1}^{\mathrm{plain}}}.
\end{equation}
Then the budget upper bound of the structurally gated reader is no larger than that of the ungated reader.
\end{proposition}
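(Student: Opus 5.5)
The statement is a comparison of the explicit budget bound of Theorem~\ref{thm:realistic_signal_noise_budget}, evaluated once with the gated coefficients and once with the plain coefficients. I will read it in the natural way: both readers share the initial quantities ($\SNR_0$, hence the same soft-addressing front end), the projection constants $m_\rho$, $K_A$, $c_\rho$, $\zeta_A$, and the final signal mass $S_L$ appearing in the residual term. The last assumption is needed because $S_L$ also enters $U_\rho$ outside $\Gamma_L$. If $S_L$ differed between the two readers, I would additionally need $S_L^{\mathrm{gate}}\ge S_L^{\mathrm{plain}}$. I will state this explicitly, or else compare only the $\Gamma_L$-dependent part of the bound. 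With these conventions, the claim reduces to showing $\Gamma_L^{\mathrm{gate}}\le\Gamma_L^{\mathrm{plain}}$ and then invoking Proposition~\ref{prop:monotonicity}.

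To compare the two values of $\Gamma_L$, set $r_l=B_l/A_l$ and $d_l=C_l/A_l+\xi_l/(A_lS_{l-1})$, as in the proof of Theorem~\ref{thm:realistic_snr}. The hypotheses give $r_l^{\mathrm{gate}}\le r_l^{\mathrm{plain}}$ and $d_l^{\mathrm{gate}}\le d_l^{\mathrm{plain}}$ for every $l$; the inequality for $d_l$ follows by adding the last two hypothesized inequalities. All of these quantities are nonnegative, since $A_l>0$, $B_l,C_l,\xi_l\ge 0$, and $S_{l-1}>0$.

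The main step is to establish termwise monotonicity of
\[
\Gamma_L=\Big(\prod_{l=1}^{L} r_l\Big)\SNR_0^{-1}+\sum_{i=1}^{L} d_i\prod_{t=i+1}^{L} r_t .
\]
For the first term: a product of nonnegative factors is nondecreasing in each factor, so $\prod_l r_l^{\mathrm{gate}}\le\prod_l r_l^{\mathrm{plain}}$ follows by replacing the factors one at a time. For each summand: $d_i\prod_{t>i} r_t$ is again a product of nonnegative numbers, each dominated by its plain counterpart, and the empty product is $1$. Summing these inequalities gives $\Gamma_L^{\mathrm{gate}}\le\Gamma_L^{\mathrm{plain}}$.

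A cleaner alternative is induction on the recurrence $\Gamma^{(l)}=r_l\Gamma^{(l-1)}+d_l$, with $\Gamma^{(0)}=\SNR_0^{-1}$. The map $(r,\Gamma,d)\mapsto r\Gamma+d$ is monotone on the nonnegative orthant, which gives the comparison directly.

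Finally, since $U_\rho$ is nondecreasing in $\Gamma_L$ by Proposition~\ref{prop:monotonicity}, and every other argument of $U_\rho$ is held equal (or, if $S_L$ varies, moves in the favorable direction), we obtain $U_\rho^{\mathrm{gate}}\le U_\rho^{\mathrm{plain}}$.

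The only delicate point is the bookkeeping of which quantities are shared between the two readers. The third hypothesis already absorbs the reader-dependent values $S_{l-1}$ inside $\Gamma_L$. The outer residual term $m_\rho\zeta_A/(c_\rho S_L)$, however, still depends on $S_L$, which is why the convention or extra condition on $S_L$ from the first paragraph is required. The remaining arguments are routine positivity bookkeeping.
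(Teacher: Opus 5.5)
Your proof is correct and follows essentially the same route as the paper, which gives no separate argument but obtains the claim from Proposition~\ref{prop:monotonicity}, holding $m_\rho$, $K_A$, $c_\rho$, $\zeta_A$, $S_L$, and $\SNR_0$ fixed across the two readers. Your one-factor-at-a-time comparison of $\Gamma_L$, and your caveat that the residual term $m_\rho\zeta_A/(c_\rho S_L)$ needs $S_L$ to be shared (or $S_L^{\mathrm{gate}}\ge S_L^{\mathrm{plain}}$), make explicit what the paper leaves implicit.
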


\begin{proposition}[Stability Interpretation of the Context--Schema Dual Channel]
\label{prop:ctx_schema_stability}
Let $\delta_l$ denote the failure probability of the aggregate recurrences in Eqs.~\eqref{eq:signal_recursion}--\eqref{eq:noise_recursion} at layer $l$. If the schema prior channel reduces the variance of cross-graph structural-role estimation and the context calibration channel reduces the current-graph adaptation error, so that $\delta_l$ decreases to $\delta_l'$ with $\delta_l'\le \delta_l$, then the probability lower bound under which Theorem~\ref{thm:realistic_snr} and Theorem~\ref{thm:realistic_signal_noise_budget} simultaneously hold improves from \(1-\sum_{l=1}^{L}\delta_l\) to \(1-\sum_{l=1}^{L}\delta_l'\).
\end{proposition}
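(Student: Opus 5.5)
The statement concerns Proposition~\ref{prop:ctx_schema_stability}. I will treat it as a union-bound statement over the layerwise events on which the aggregate recurrences hold. For each layer $l$, let $\mathcal{F}_l$ denote the event that the pair of inequalities in Eqs.~\eqref{eq:signal_recursion}--\eqref{eq:noise_recursion} holds at layer $l$. This requires that the coefficients $A_l,B_l,C_l,\xi_l$ given by Definition~\ref{def:aggregate_coefficients} are valid bounds for the realized effective operator $T_l$ and perturbation $\epsilon_l$. By hypothesis, $\Prob(\mathcal{F}_l^c)\le\delta_l$.

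First, I would check that the deterministic conclusions follow on the intersection $\bigcap_{l=1}^L\mathcal{F}_l$. The proof of Theorem~\ref{thm:realistic_snr} uses only the recurrences of Lemma~\ref{lem:aggregate_recursion} at layers $1,\dots,L$, together with positivity of $S_{l-1}$. Positivity holds inductively from $S_0>0$ and $A_l>0$, via $S_l\ge A_lS_{l-1}$. So on this intersection the bound in Eq.~\eqref{eq:Q_bound} holds. Theorem~\ref{thm:realistic_signal_noise_budget} additionally needs Assumption~\ref{ass:projection}. That assumption is not layer-indexed, so I would treat it as holding deterministically, as the statement implicitly does. Under that convention, both theorems hold simultaneously on $\bigcap_l\mathcal{F}_l$.

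Second, I would apply the union bound:
\[
\Prob\Bigl(\bigcap_{l=1}^L\mathcal{F}_l\Bigr)\ge 1-\sum_{l=1}^L\Prob(\mathcal{F}_l^c)\ge 1-\sum_{l=1}^L\delta_l .
\]
Under the dual-channel reader the failure probabilities become $\delta_l'\le\delta_l$. Repeating the same argument gives $1-\sum_l\delta_l'$, and this is at least $1-\sum_l\delta_l$ by termwise comparison. That is the claimed improvement of the lower bound.

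The main obstacle is interpretive rather than technical. The coefficients $A_l,B_l,C_l,\xi_l$ must be fixed in advance, i.e.\ nonrandom or measurable with respect to the conditioning, so that the events $\mathcal{F}_l$ are well defined. Otherwise the statement that the recurrences fail with probability $\delta_l$ is ambiguous. I would state this convention explicitly. I would also note that the mechanism by which the schema and context channels reduce $\delta_l$ (variance reduction, smaller adaptation error) enters only as the hypothesis $\delta_l'\le\delta_l$, not as something proved here. The conclusion is a comparison of lower bounds, not of actual success probabilities, so no independence across layers is needed.
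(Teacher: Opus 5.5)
Your proposal is correct and matches the paper's intended argument. The paper states this proposition without a written proof. Its implicit justification is your union bound over the per-layer failure events of the recurrences, the same device the paper uses in its high-probability multi-round stability corollary. That bound is then combined with the deterministic conclusions of Theorems~\ref{thm:realistic_snr} and~\ref{thm:realistic_signal_noise_budget} on the intersection of the good events, and with the termwise comparison $\delta_l'\le\delta_l$. Your extra conventions make explicit what the paper leaves implicit: Assumption~\ref{ass:projection} holds deterministically, the coefficients are fixed so the events are well defined, and no cross-layer independence is needed.
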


The core quantities derived above are
\begin{equation}
    \SNR_L^{-1}
    \le
    \left(\prod_{l=1}^{L}\frac{B_l}{A_l}\right)\SNR_0^{-1}
    +
    \sum_{i=1}^{L}
    \left(
    \frac{C_i}{A_i}
    +
    \frac{\xi_i}{A_iS_{i-1}}
    \right)
    \prod_{t=i+1}^{L}\frac{B_t}{A_t},
    \label{eq:final_snr_inverse}
\end{equation}
and
\begin{equation}
    \calB_\rho(q,G)
    \le
    m_\rho
    +
    \frac{m_\rho K_A}{c_\rho}\SNR_L^{-1}
    +
    \frac{m_\rho\zeta_A}{c_\rho S_L}.
    \label{eq:final_budget_bound}
\end{equation}
Equation~\eqref{eq:final_snr_inverse} shows that soft addressing reduces the amount of noise that subsequent propagation must overcome by improving the initial $\SNR_0$; structural gating improves aggregate evidence retention and noise suppression by increasing $A_l$ and decreasing $B_l$ and $C_l$; the context--schema dual channel makes these aggregate inequalities more stable on dynamic graph memories by reducing cross-graph structural-role estimation error; and entity-to-document projection converts entity-level SNR into document-level budget efficiency by decreasing $K_A$ and $\zeta_A$ and increasing $c_\rho$. Equation~\eqref{eq:final_budget_bound} further shows that the advantage of SAGE-GFM does not rely on perfect edge-wise classification or zero-leakage assumptions. As long as evidence-retention dominance is achieved in an aggregate or high-probability sense, i.e., $B_l/A_l$ and $C_l/A_l$ are sufficiently small, the reader can improve query-relevant SNR and reduce the top-$k$ retrieval budget required to achieve a given level of evidence coverage.

\section{Target Graph Calibration and Cross-graph Structural Priors}
\label{app:theory_schema_prior}
\subsection{Structural Role Decomposition Assumption}

\begin{definition}[Structural role mapping]
Given a graph $G$, a mapping
\begin{equation}
    \rho_G:V(G)\to\cR
\end{equation}
is called a structural role mapping, where $\cR$ is the structural role space. $\rho_G(v)$ can be jointly determined by $\varphi_G(v)$, the structural statistics of edges incident to $v$, local community-boundary statistics, and other graph-structure summaries. Typical structural roles include hub, bridge, community core, boundary node, and noisy shortcut.
\end{definition}

\begin{definition}[Target graph reading risk]
Fix a target graph $G$. Let $\cD_G$ be the query--node sampling distribution on the target graph, and let $f_G^\star(q,v)$ be the ideal evidence relevance function. For any measurable function $f$, define the squared risk as
\begin{equation}
    \mathcal R_G(f)=\E_{(q,v)\sim\cD_G}\left[\left(f(q,v,G)-f_G^\star(q,v)\right)^2\right].
\end{equation}
\end{definition}

\begin{assumption}[Context--schema decomposability]
For every target graph $G$, the ideal reading function can be decomposed as
\begin{equation}
    f_G^\star(q,v)=f_{\mathrm{sch}}^\star(q,\rho_G(v))+f_{\mathrm{ctx},G}^\star(q,v),
    \label{eq:true_decomp}
\end{equation}
where $f_{\mathrm{sch}}^\star$ denotes the cross-graph shared structural reading rule, and $f_{\mathrm{ctx},G}^\star$ denotes the target-graph residual induced by the current writer, current domain, current entity naming, local noise, and writing style.
\end{assumption}

Equation \eqref{eq:true_decomp} corresponds exactly to the structural design of \method{}: $H_{\mathrm{sch}}$ is used to approximate $f_{\mathrm{sch}}^\star$, and $H_{\mathrm{ctx}}$ is used to approximate $f_{\mathrm{ctx},G}^\star$. The next subsection gives the risk meaning of this decomposition.

\subsection{Approximation Risk of Context--schema Decomposition}

\begin{theorem}[Context--schema decomposition reduces target-graph approximation risk]
Suppose Assumption 2.3 holds. Let $\cH_{\mathrm{sch}}$ be the schema function class, let $\cH_{\mathrm{ctx},G}$ be the target-graph context function class, and define the sum class
\begin{equation}
    \cH_{\mathrm{sch}}+\cH_{\mathrm{ctx},G}
    =\{f_s+f_c:f_s\in\cH_{\mathrm{sch}},f_c\in\cH_{\mathrm{ctx},G}\}.
\end{equation}
If there exist $\epsilon_{\mathrm{sch}},\epsilon_{\mathrm{ctx}}\ge0$ such that
\begin{equation}
    \inf_{f_s\in\cH_{\mathrm{sch}}}
    \E\left[(f_s(q,\rho_G(v))-f_{\mathrm{sch}}^\star(q,\rho_G(v)))^2\right]
    \le \epsilon_{\mathrm{sch}},
\end{equation}
\begin{equation}
    \inf_{f_c\in\cH_{\mathrm{ctx},G}}
    \E\left[(f_c(q,v,G)-f_{\mathrm{ctx},G}^\star(q,v))^2\right]
    \le \epsilon_{\mathrm{ctx}},
\end{equation}
where both expectations are over $(q,v)\sim\cD_G$, then
\begin{equation}
    \inf_{f\in\cH_{\mathrm{sch}}+\cH_{\mathrm{ctx},G}}\mathcal R_G(f)
    \le 2\epsilon_{\mathrm{sch}}+2\epsilon_{\mathrm{ctx}}.
\end{equation}
\end{theorem}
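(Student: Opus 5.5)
The plan is to bound the infimum over the sum class by the risk of one explicitly chosen element, built from near-optimal components. Fix an arbitrary $\eta>0$. By the two approximation hypotheses, we can pick $f_s\in\cH_{\mathrm{sch}}$ with
\[
\E\big[(f_s(q,\rho_G(v))-f_{\mathrm{sch}}^\star(q,\rho_G(v)))^2\big]\le\epsilon_{\mathrm{sch}}+\eta
\]
and $f_c\in\cH_{\mathrm{ctx},G}$ with
\[
\E\big[(f_c(q,v,G)-f_{\mathrm{ctx},G}^\star(q,v))^2\big]\le\epsilon_{\mathrm{ctx}}+\eta,
\]
both expectations taken over $(q,v)\sim\cD_G$. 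We regard $f_s$ as a function of $(q,v,G)$ through the role map, $f_s(q,v,G):=f_s(q,\rho_G(v))$, so that $f:=f_s+f_c$ is a legitimate element of $\cH_{\mathrm{sch}}+\cH_{\mathrm{ctx},G}$ and $\mathcal R_G(f)$ makes sense. This identification is exactly the implicit convention under which the sum class is defined.

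Next we use the decomposability assumption, Eq.~\eqref{eq:true_decomp}, which we take to be the intended ``Assumption 2.3''. Pointwise,
\[
f(q,v,G)-f_G^\star(q,v)=\big(f_s(q,\rho_G(v))-f_{\mathrm{sch}}^\star(q,\rho_G(v))\big)+\big(f_c(q,v,G)-f_{\mathrm{ctx},G}^\star(q,v)\big).
\]
Applying the elementary inequality $(a+b)^2\le 2a^2+2b^2$ pointwise and taking expectations under $\cD_G$, with linearity and monotonicity of expectation, gives
\[
\mathcal R_G(f)\le 2(\epsilon_{\mathrm{sch}}+\eta)+2(\epsilon_{\mathrm{ctx}}+\eta).
\]

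Finally, since $f$ lies in the sum class, its infimum risk is at most $2\epsilon_{\mathrm{sch}}+2\epsilon_{\mathrm{ctx}}+4\eta$. Letting $\eta\to0$ yields the claim. We use near-minimizers rather than assuming the infima are attained, because the hypotheses only bound the infima.

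There is no real analytic obstacle. The only delicate points are bookkeeping:
\begin{itemize}
\item measurability and square-integrability of the chosen functions, which is implicit in the finiteness of the stated risks;
\item the identification of schema functions, which depend on $(q,\rho_G(v))$, with functions on $(q,v,G)$, which is needed for the sum to make sense.
\end{itemize}
A remark could also note that the constant $2$ can be sharpened to $(1+\lambda)\epsilon_{\mathrm{sch}}+(1+\lambda^{-1})\epsilon_{\mathrm{ctx}}$ for any $\lambda>0$, by Young's inequality in place of $(a+b)^2\le2a^2+2b^2$, or to $(\sqrt{\epsilon_{\mathrm{sch}}}+\sqrt{\epsilon_{\mathrm{ctx}}})^2$ via Minkowski in $L^2(\cD_G)$.
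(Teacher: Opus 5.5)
Your proposal is correct and follows essentially the same route as the paper: choose $\eta$-near-minimizers of the two component risks, add them, apply the decomposition in Eq.~\eqref{eq:true_decomp} together with $(a+b)^2\le 2a^2+2b^2$, and let the slack vanish. Your extra remarks are valid refinements that the paper does not state. These are the explicit identification $f_s(q,v,G):=f_s(q,\rho_G(v))$ that makes the sum class well defined, and the sharper Minkowski constant $(\sqrt{\epsilon_{\mathrm{sch}}}+\sqrt{\epsilon_{\mathrm{ctx}}})^2$.
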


\begin{proof}
Take any $\alpha>0$. By the two approximation error conditions, there exist $\hat f_s\in\cH_{\mathrm{sch}}$ and $\hat f_c\in\cH_{\mathrm{ctx},G}$ such that
\begin{equation}
    \E[(\hat f_s-f_{\mathrm{sch}}^\star)^2]\le \epsilon_{\mathrm{sch}}+\alpha,
    \qquad
    \E[(\hat f_c-f_{\mathrm{ctx},G}^\star)^2]\le \epsilon_{\mathrm{ctx}}+\alpha.
\end{equation}
Let $\hat f=\hat f_s+\hat f_c$. By the decomposition in Eq.~\eqref{eq:true_decomp}, we have
\begin{equation}
    \hat f-f_G^\star
    =(\hat f_s-f_{\mathrm{sch}}^\star)+(\hat f_c-f_{\mathrm{ctx},G}^\star).
\end{equation}
Using $(a+b)^2\le 2a^2+2b^2$, we obtain
\begin{align}
    \mathcal R_G(\hat f)
    &=\E[(\hat f-f_G^\star)^2]\notag\\
    &\le 2\E[(\hat f_s-f_{\mathrm{sch}}^\star)^2]
       +2\E[(\hat f_c-f_{\mathrm{ctx},G}^\star)^2]\notag\\
    &\le 2\epsilon_{\mathrm{sch}}+2\epsilon_{\mathrm{ctx}}+4\alpha.
\end{align}
Since $\alpha>0$ is arbitrary, taking the infimum yields the conclusion.
\end{proof}

\begin{proposition}[Residual bias of schema-only models]
Further assume that $L_2(\cD_G)$ is a Hilbert space, $\cH_{\mathrm{sch}}$ is a closed linear subspace of it, and $f_{\mathrm{sch}}^\star\in\cH_{\mathrm{sch}}$. If only a schema-only model $f_s\in\cH_{\mathrm{sch}}$ is used, then
\begin{equation}
    \inf_{f_s\in\cH_{\mathrm{sch}}}\mathcal R_G(f_s)
    =\dist_{L_2(\cD_G)}^2(f_{\mathrm{ctx},G}^\star,\cH_{\mathrm{sch}}).
    \label{eq:schema_only_bias}
\end{equation}
Therefore, as long as the target-graph residual $f_{\mathrm{ctx},G}^\star$ does not belong to $\cH_{\mathrm{sch}}$, a schema-only reader has an irreducible target-graph bias.
\end{proposition}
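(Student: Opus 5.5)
The plan is to reduce the schema-only risk to an orthogonal projection problem in the Hilbert space $L_2(\cD_G)$. I will view every function as an element of $L_2(\cD_G)$: $f_s$ is identified with $(q,v)\mapsto f_s(q,\rho_G(v))$, and similarly for $f_{\mathrm{sch}}^\star$ and $f_{\mathrm{ctx},G}^\star$. The squared risk then becomes a squared norm, $\mathcal R_G(f_s)=\norm{f_s-f_G^\star}_{L_2(\cD_G)}^2$.

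Next I substitute the decomposition \eqref{eq:true_decomp}, which gives
\begin{equation}
\mathcal R_G(f_s)=\norm{(f_s-f_{\mathrm{sch}}^\star)-f_{\mathrm{ctx},G}^\star}^2 .
\end{equation}
Because $\cH_{\mathrm{sch}}$ is a linear subspace and $f_{\mathrm{sch}}^\star\in\cH_{\mathrm{sch}}$, the map $f_s\mapsto g:=f_s-f_{\mathrm{sch}}^\star$ is a bijection of $\cH_{\mathrm{sch}}$ onto itself. Hence
\begin{equation}
\inf_{f_s\in\cH_{\mathrm{sch}}}\mathcal R_G(f_s)=\inf_{g\in\cH_{\mathrm{sch}}}\norm{g-f_{\mathrm{ctx},G}^\star}^2=\dist^2_{L_2(\cD_G)}(f_{\mathrm{ctx},G}^\star,\cH_{\mathrm{sch}}),
\end{equation}
which is exactly \eqref{eq:schema_only_bias}.

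Closedness of $\cH_{\mathrm{sch}}$ gives, via the projection theorem, that this infimum is attained at $g=P_{\cH_{\mathrm{sch}}}f_{\mathrm{ctx},G}^\star$. For the irreducibility claim, I use that a point has zero distance to a closed set if and only if it lies in that set. So if $f_{\mathrm{ctx},G}^\star\notin\cH_{\mathrm{sch}}$, the distance is strictly positive, and no schema-only reader can achieve zero target-graph risk.

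The main obstacle is the measure-theoretic bookkeeping rather than the algebra. Two points need care. First, $f_{\mathrm{ctx},G}^\star$ and $f_G^\star$ must be square-integrable under $\cD_G$; I will assume this implicitly, as the statement does. Second, membership must be interpreted up to $\cD_G$-a.e. equality, i.e., in equivalence classes, so that "not belonging to $\cH_{\mathrm{sch}}$" is well-defined.
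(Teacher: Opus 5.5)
Your proposal is correct and follows essentially the same route as the paper: write $f_s=f_{\mathrm{sch}}^\star+g$ with $g$ ranging over $\cH_{\mathrm{sch}}$ (using linearity and $f_{\mathrm{sch}}^\star\in\cH_{\mathrm{sch}}$), rewrite the risk as $\norm{g-f_{\mathrm{ctx},G}^\star}^2$, and identify the infimum with the squared distance. Your use of closedness to show the distance is strictly positive when $f_{\mathrm{ctx},G}^\star\notin\cH_{\mathrm{sch}}$ makes explicit the final ``therefore'' clause, which the paper states without argument.
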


\begin{proof}
By $f_{\mathrm{sch}}^\star\in\cH_{\mathrm{sch}}$ and the linearity of $\cH_{\mathrm{sch}}$, any $f_s\in\cH_{\mathrm{sch}}$ can be written as $f_s=f_{\mathrm{sch}}^\star+g$, where $g\in\cH_{\mathrm{sch}}$. Thus,
\begin{align}
    \mathcal R_G(f_s)
    &=\norm{f_s-f_{\mathrm{sch}}^\star-f_{\mathrm{ctx},G}^\star}_{L_2(\cD_G)}^2\notag\\
    &=\norm{g-f_{\mathrm{ctx},G}^\star}_{L_2(\cD_G)}^2.
\end{align}
Taking the infimum over $f_s\in\cH_{\mathrm{sch}}$ is equivalent to taking the infimum over $g\in\cH_{\mathrm{sch}}$, and Eq.~\eqref{eq:schema_only_bias} follows from the definition of distance.
\end{proof}

\begin{remark}
Proposition 2.5 shows that the cross-graph schema prior can only characterize cross-graph shared structural rules and cannot replace target graph calibration. In contrast, the role of the target graph calibration channel $H_{\mathrm{ctx}}$ is to absorb $f_{\mathrm{ctx},G}^\star$, namely the local noise, entity granularity, relation style, and domain residual of the graph generated by the current writer.
\end{remark}

\subsection{Sample Complexity Advantage of Schema Prior}

\begin{lemma}[Uniform convergence for bounded loss classes]
Let $\mathcal F$ be a function class, and let the loss $\ell_f(z)\in[0,1]$. Given $n$ independent samples $S=\{z_i\}_{i=1}^n$, define the true risk $R(f)=\E[\ell_f(z)]$ and the empirical risk $\widehat R_S(f)=n^{-1}\sum_{i=1}^n\ell_f(z_i)$. Then, with probability at least $1-\delta$, for all $f\in\mathcal F$ simultaneously,
\begin{equation}
    R(f)\le \widehat R_S(f)+2\Rad_n(\ell\circ\mathcal F)+3\sqrt{\frac{\log(2/\delta)}{2n}},
\end{equation}
where $\Rad_n(\ell\circ\mathcal F)$ is an upper bound on the empirical Rademacher complexity of the loss-composed class.
\end{lemma}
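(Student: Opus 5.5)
The plan is the standard symmetrization-plus-McDiarmid argument, applied to the loss-composed class $\ell\circ\mathcal F$ with values in $[0,1]$. Define the uniform deviation
\[
\Phi(S)=\sup_{f\in\mathcal F}\bigl(R(f)-\widehat R_S(f)\bigr).
\]
It then suffices to show two things. First, $\Phi(S)$ is close to its expectation. Second, $\E[\Phi(S)]\le 2\,\E_S[\widehat{\Rad}_S(\ell\circ\mathcal F)]$. Finally, the expected Rademacher complexity is replaced by its empirical version, which costs a second concentration step. That second step is the reason the constant is $3$ and the logarithm involves $2/\delta$.

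\textbf{Step 1 (bounded differences).} Replace one sample $z_i$ by $z_i'$. Since $\ell_f\in[0,1]$, each $\widehat R_S(f)$ changes by at most $1/n$, so $\Phi$ changes by at most $1/n$. McDiarmid's inequality then gives, with probability at least $1-\delta/2$,
\[
\Phi(S)\le \E[\Phi(S)]+\sqrt{\frac{\log(2/\delta)}{2n}}.
\]

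\textbf{Step 2 (symmetrization).} Introduce an independent ghost sample $S'$. Write $R(f)=\E_{S'}[\widehat R_{S'}(f)]$ and pull the supremum inside the expectation using Jensen. Then insert Rademacher signs $\sigma_i$; this is legitimate because swapping $z_i$ with $z_i'$ leaves the joint distribution unchanged. Splitting the supremum into the $S$ part and the $S'$ part gives
\[
\E\Phi\le 2\,\E_S\bigl[\widehat{\Rad}_S(\ell\circ\mathcal F)\bigr].
\]

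\textbf{Step 3 (empirical Rademacher).} The empirical Rademacher complexity $\widehat{\Rad}_S$ also has bounded differences $1/n$. Apply McDiarmid to it, at confidence $1-\delta/2$, to get
\[
\E_S\bigl[\widehat{\Rad}_S\bigr]\le \widehat{\Rad}_S+\sqrt{\frac{\log(2/\delta)}{2n}}.
\]
Multiplying by $2$ and combining with Step 1 yields the additive term $3\sqrt{\log(2/\delta)/(2n)}$. A union bound over the two failure events gives total probability at least $1-\delta$. Since $\Rad_n(\ell\circ\mathcal F)$ is stated to upper-bound the empirical complexity, the claim follows by monotonicity.

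\textbf{Main obstacle.} The main difficulty is interpretive rather than technical. The statement calls $\Rad_n$ an upper bound on the \emph{empirical} complexity, and we must use it consistently. If it is read as a deterministic bound valid for all samples, Step 3 is unnecessary and the constant $1$ would already suffice, so the stated $3$ is conservative. I would present the empirical version above, since it covers both readings. Measurability of the supremum would be handled by assuming $\mathcal F$ is countable or separable.
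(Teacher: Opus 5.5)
Your argument is correct. It is the textbook derivation that actually produces the constants $3$ and $\log(2/\delta)$: symmetrization plus two McDiarmid steps at level $\delta/2$, the second passing from $\E_S[\widehat{\Rad}_S]$ to $\widehat{\Rad}_S$.

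The paper instead takes the shortcut you anticipated in your last paragraph. It applies McDiarmid once, to the two-sided deviation $Z(S)=\sup_f|R(f)-\widehat R_S(f)|$, and bounds $\E Z(S)\le 2\Rad_n(\ell\circ\mathcal F)$ directly by symmetrization. It then simply writes the deviation term as $3\sqrt{\log(2/\delta)/(2n)}$. That is a valid but loose overestimate of the $\sqrt{\log(1/\delta)/(2n)}$ a single McDiarmid step gives.

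This is shorter, but it has two costs. It implicitly needs $\Rad_n$ to dominate the \emph{expected} Rademacher complexity, e.g.\ a bound holding for every sample. And with the absolute value inside the supremum, the symmetrization step really yields the absolute-value Rademacher complexity rather than the usual one.

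Your route avoids both issues. The extra concentration step makes the bound valid even when $\Rad_n$ only bounds $\widehat{\Rad}_S$ at the realized sample. Your one-sided $\Phi$ is also exactly what the lemma as stated requires.

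What the paper's formulation buys is two-sidedness, which its subsequent ERM theorem uses via $R(\hat f)-R(f^\circ)\le 2\sup_f|R(f)-\widehat R(f)|$. You can recover this by rerunning your argument on $1-\ell_f$, whose empirical Rademacher complexity equals that of $\ell_f$, and taking one more union bound. The price is a slightly larger logarithmic factor.
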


\begin{proof}
This is a direct result of the standard symmetrization and McDiarmid/Hoeffding concentration inequalities for bounded loss function classes. Specifically, let
\begin{equation}
    Z(S)=\sup_{f\in\mathcal F}\abs{R(f)-\widehat R_S(f)}.
\end{equation}
By symmetrization, $\E Z(S)\le 2\Rad_n(\ell\circ\mathcal F)$. Moreover, since changing one sample can change $Z(S)$ by at most $1/n$, McDiarmid's inequality gives
\begin{equation}
    Z(S)\le \E Z(S)+3\sqrt{\frac{\log(2/\delta)}{2n}}
\end{equation}
with probability at least $1-\delta$. Combining the two inequalities gives the conclusion.
\end{proof}

\begin{theorem}[Schema prior reduces the sample complexity of target-graph adaptation]
Fix a target graph $G$, and suppose the number of supervised samples available for reader calibration on the target graph is $n_G$. Let $\cH_{\mathrm{full}}$ be the full reader class that needs to be learned on the target graph when no schema prior is used; let $\cH_{\mathrm{res}}$ be the residual class that only needs to be learned given a schema prior $f_s$, with the combined model $f=f_s+f_r$, $f_r\in\cH_{\mathrm{res}}$. Let $\hat f_{\mathrm{full}}$ and $\hat f_{\mathrm{res}}$ be the empirical risk minimizers over the two classes, respectively. Then, with probability at least $1-\delta$,
\begin{align}
    \mathcal R_G(\hat f_{\mathrm{full}})
    &\le \inf_{f\in\cH_{\mathrm{full}}}\mathcal R_G(f)
    +4\Rad_{n_G}(\ell\circ\cH_{\mathrm{full}})
    +6\sqrt{\frac{\log(4/\delta)}{2n_G}},\label{eq:full_bound}\\
    \mathcal R_G(f_s+\hat f_{\mathrm{res}})
    &\le \inf_{f_r\in\cH_{\mathrm{res}}}\mathcal R_G(f_s+f_r)
    +4\Rad_{n_G}(\ell\circ(f_s+\cH_{\mathrm{res}}))
    +6\sqrt{\frac{\log(4/\delta)}{2n_G}}.\label{eq:res_bound}
\end{align}
If the complexities satisfy
\begin{equation}
    \Rad_{n_G}(\ell\circ\cH_{\mathrm{full}})=\widetilde O\left(\sqrt{\frac{d_{\mathrm{full}}}{n_G}}\right),
    \qquad
    \Rad_{n_G}(\ell\circ(f_s+\cH_{\mathrm{res}}))=\widetilde O\left(\sqrt{\frac{d_{\mathrm{res}}}{n_G}}\right),
\end{equation}
and $d_{\mathrm{res}}\ll d_{\mathrm{full}}$, then the schema prior reduces target-graph learning from estimating the full reading function to estimating the residual, and lowers the estimation error term for target-graph adaptation.
\end{theorem}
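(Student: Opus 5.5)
The plan is the standard ERM excess-risk argument, applied twice with a union bound; the uniform convergence lemma stated just above does the analytic work. I will work with a bounded loss $\ell_f(z)\in[0,1]$ whose expectation is $\mathcal R_G(f)$, for instance the squared error $(f(q,v,G)-f_G^\star(q,v))^2$ after the scores are clipped or rescaled into a bounded range. The empirical risk $\widehat R_S$ is then its average over the $n_G$ calibration samples.

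\textbf{Step 1: invoke the lemma on both classes.} I apply the lemma separately to $\cH_{\mathrm{full}}$ and to the shifted class $f_s+\cH_{\mathrm{res}}$, each at confidence $\delta/2$. Since $f_s$ is fixed before the target-graph data are seen, the shifted class is a legitimate deterministic function class. A union bound then gives probability at least $1-\delta$ that both uniform deviation bounds
\[
\sup_f\abs{R(f)-\widehat R_S(f)}\le 2\Rad_{n_G}+3\sqrt{\log(4/\delta)/(2n_G)}
\]
hold simultaneously. This is exactly where $\log(2/\delta)$ becomes $\log(4/\delta)$. I will use the two-sided version, which is what the proof of the lemma actually establishes through $Z(S)$.

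\textbf{Step 2: the three-term ERM decomposition.} Fix the full class, take any $\alpha>0$, and pick $f^\alpha\in\cH_{\mathrm{full}}$ with risk within $\alpha$ of the infimum. Then
\[
\mathcal R_G(\hat f_{\mathrm{full}})-\mathcal R_G(f^\alpha)
=[R(\hat f)-\widehat R(\hat f)]+[\widehat R(\hat f)-\widehat R(f^\alpha)]+[\widehat R(f^\alpha)-R(f^\alpha)].
\]
The middle bracket is $\le 0$ by empirical risk minimization. Each outer bracket is bounded by the uniform deviation from Step 1, which gives the factors $4\Rad$ and $6\sqrt{\cdot}$. Letting $\alpha\to0$ yields \eqref{eq:full_bound}. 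I then repeat the argument verbatim for $f_s+\hat f_{\mathrm{res}}$ over $f_s+\cH_{\mathrm{res}}$, which yields \eqref{eq:res_bound}.

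\textbf{Step 3: the qualitative conclusion.} I substitute the assumed rates $\widetilde O(\sqrt{d/n_G})$ into both bounds. With $d_{\mathrm{res}}\ll d_{\mathrm{full}}$, the estimation term of the residual bound is smaller by a factor of roughly $\sqrt{d_{\mathrm{res}}/d_{\mathrm{full}}}$. Equivalently, to reach a target estimation error $\varepsilon$, the residual approach needs $n_G=\widetilde O(d_{\mathrm{res}}/\varepsilon^2)$ samples instead of $\widetilde O(d_{\mathrm{full}}/\varepsilon^2)$.

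\textbf{Main obstacle: boundedness of the loss, not the probability argument.} The squared loss lies in $[0,1]$ only if predictions and targets lie in a bounded interval. I would first state this explicitly as an assumption, for example $\abs{f},\abs{f^\star}\le 1/2$, or else rescale by $(2M)^2$, which would also rescale the concentration term. A secondary point is that the conclusion compares only the estimation terms. The approximation terms $\inf\mathcal R_G$ may differ between the two classes. I would remark that, under the decomposition assumption, the infimum of the residual class is controlled by the context approximation error $\epsilon_{\mathrm{ctx}}$ (as in the preceding theorem) when $f_s$ is close to $f_{\mathrm{sch}}^\star$. This is why the conclusion is phrased as lowering the estimation error term rather than the total risk.
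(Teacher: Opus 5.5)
Your proposal is correct and follows the paper's proof. The paper bounds $R(\hat f)-R(f^\circ)$ by $2\sup_f\abs{R(f)-\widehat R(f)}$ using empirical optimality, applies the uniform-convergence lemma to $\cH_{\mathrm{full}}$ and $f_s+\cH_{\mathrm{res}}$ at level $\delta/2$ each with a union bound, and then substitutes the Rademacher rates; your $\alpha$-near-minimizer (instead of assuming $\arg\inf$ is attained) and your explicit boundedness assumption on the squared loss are minor rigor improvements on points the paper leaves implicit.
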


\begin{proof}
For any function class $\mathcal F$, let $\hat f$ be the empirical risk minimizer and let $f^\circ\in\arg\inf_{f\in\mathcal F}R(f)$ be the true risk minimizer. By empirical optimality, $\widehat R(\hat f)\le \widehat R(f^\circ)$, so
\begin{align}
    R(\hat f)-R(f^\circ)
    &=R(\hat f)-\widehat R(\hat f)+\widehat R(\hat f)-\widehat R(f^\circ)+\widehat R(f^\circ)-R(f^\circ)\notag\\
    &\le 2\sup_{f\in\mathcal F}\abs{R(f)-\widehat R(f)}.
\end{align}
Applying Lemma 2.7 to $\mathcal F=\cH_{\mathrm{full}}$ and $\mathcal F=f_s+\cH_{\mathrm{res}}$, respectively, and combining the two probability events by a union bound, gives Eq.~\eqref{eq:full_bound} and Eq.~\eqref{eq:res_bound}. The complexity-order conclusion follows by substituting the corresponding Rademacher upper bounds.
\end{proof}

\begin{remark}
Theorem 2.8 gives the key theoretical motivation for the schema prior: in self-evolving memory, each new graph generated by the writer usually provides only limited supervised signal. If the reader relearns the full reading function from the target graph in every round, high-variance estimation arises; the schema prior fixes or strongly regularizes the cross-graph shared structural rules, so that target-graph calibration only needs to learn the residual, thereby reducing sample complexity.
\end{remark}

\section{Writer-induced Graph Distribution Shift and Target Graph Calibration}
\label{app:theory_graph_shift}
\subsection{Writer-induced Dynamic Graph Distribution}

The writer parameter $\theta$ induces a graph distribution $P_\theta(G\mid q,D)$ on the sample $(q,D)$. For notational simplicity, denote the joint distribution of $(q,D,D^+,y,G)$ by $\Pi_\theta$. Given the reader parameter $\phi$, define the reader risk as
\begin{equation}
    \mathcal L_R(\phi;\theta)=\E_{(q,D,D^+,y,G)\sim\Pi_\theta}
    \left[\ell_R(R_\phi(q,G,D),D^+,y)\right].
\end{equation}
Here, $\ell_R$ can be supporting-entity BCE, multi-positive ranking loss, document recall loss, or a combination thereof.

\begin{proposition}[Writer updates cause reader distribution shift]
Assume $0\le \ell_R\le 1$. For any fixed $\phi$ and any writer parameters $\theta,\theta'$, we have
\begin{equation}
    \abs{\mathcal L_R(\phi;\theta')-\mathcal L_R(\phi;\theta)}
    \le \TV(\Pi_{\theta'},\Pi_\theta),
    \label{eq:tv_shift}
\end{equation}
where $\TV$ is the total variation distance. If, further, $\ell_R(R_\phi(q,G,D),D^+,y)$ is $L_\ell$-Lipschitz with respect to the graph variable under some graph metric $d_\cG$, then
\begin{equation}
    \abs{\mathcal L_R(\phi;\theta')-\mathcal L_R(\phi;\theta)}
    \le L_\ell W_1(\Pi_{\theta'},\Pi_\theta),
\end{equation}
where $W_1$ is the first-order Wasserstein distance induced by $d_\cG$.
\end{proposition}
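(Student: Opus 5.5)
The plan is to treat the reader loss as a fixed bounded (resp. Lipschitz) test function of the random tuple $Z=(q,D,D^+,y,G)$ and to apply the variational characterizations of total variation and of $W_1$. Fix $\phi$ and write $h(Z)=\ell_R(R_\phi(q,G,D),D^+,y)$. Then $\mathcal L_R(\phi;\theta)=\E_{\Pi_\theta}[h]$ and $\mathcal L_R(\phi;\theta')=\E_{\Pi_{\theta'}}[h]$, so the quantity to bound is $\abs{\E_{\Pi_{\theta'}}h-\E_{\Pi_\theta}h}$ for a single measurable $h$.

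For the total variation bound, I would take $\TV(P,Q)=\sup_{A}\abs{P(A)-Q(A)}$. Let $\mu$ be a dominating measure, e.g. $(\Pi_\theta+\Pi_{\theta'})/2$, with densities $p,p'$. Since $0\le h\le 1$, we have
\[
\E_{\Pi_{\theta'}}h-\E_{\Pi_\theta}h=\int h\,(p'-p)\,d\mu\le\int_{\{p'>p\}}(p'-p)\,d\mu=\Pi_{\theta'}(A^*)-\Pi_\theta(A^*)\le\TV(\Pi_{\theta'},\Pi_\theta),
\]
where $A^*=\{p'>p\}$. The same argument with $\theta,\theta'$ swapped bounds the other direction, which gives Eq.~\eqref{eq:tv_shift}. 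Here the range $[0,1]$ of $h$ matters: for a general bounded $h$ one would only get a bound of $(\sup h-\inf h)\,\TV$.

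For the Wasserstein bound, I would read the hypothesis as saying that $h$ is $L_\ell$-Lipschitz with respect to a metric on the joint space. The assumption is stated only in the graph variable, so I would equip the joint space with the metric $d(Z,Z')=d_\cG(G,G')$ on agreeing non-graph coordinates and $+\infty$ otherwise. Equivalently, the argument is written for couplings that share $(q,D,D^+,y)$ and differ only in $G$; this is natural because the writer changes only $P_\theta(G\mid q,D)$ while the data marginal is common. The plan is then:
\begin{enumerate}
\item Take any coupling $\gamma$ of $\Pi_{\theta'}$ and $\Pi_\theta$.
\item Write $\E_{\Pi_{\theta'}}h-\E_{\Pi_\theta}h=\E_\gamma[h(Z')-h(Z)]$.
\item Bound this by $L_\ell\,\E_\gamma[d(Z',Z)]$ using the Lipschitz property.
\item Take the infimum over couplings to obtain $L_\ell W_1(\Pi_{\theta'},\Pi_\theta)$.
\end{enumerate}
Equivalently, Kantorovich--Rubinstein duality applied to $h/L_\ell$ gives the same result.

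The main obstacle is this measure-theoretic bookkeeping: $W_1$ has to be defined on the joint law with a metric that makes "Lipschitz in the graph variable" meaningful. The $+\infty$-off-diagonal metric, or an explicit restriction to conditional couplings sharing the data marginal, resolves it; the rest is routine.
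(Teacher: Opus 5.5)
Your proposal is correct and takes the paper's route. The paper also writes the loss gap as $\int h_\phi\,d\Pi_{\theta'}-\int h_\phi\,d\Pi_\theta$ for the fixed test function $h_\phi\in[0,1]$, applies the dual (test-function) form of total variation, and obtains the $W_1$ bound from Kantorovich--Rubinstein duality; your explicit density computation and your extended metric on the joint space (infinite cost unless $(q,D,D^+,y)$ agree, which is legitimate because $\Pi_\theta$ and $\Pi_{\theta'}$ share that marginal) spell out what the paper's one-line appeal leaves implicit, namely in what sense a loss that is Lipschitz only in $G$ is Lipschitz on the joint law.
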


\begin{proof}
For the bounded loss case, let $h_\phi(q,D,D^+,y,G)=\ell_R(R_\phi(q,G,D),D^+,y)\in[0,1]$. Then
\begin{align}
    \abs{\mathcal L_R(\phi;\theta')-\mathcal L_R(\phi;\theta)}
    &=\abs{\int h_\phi\,d\Pi_{\theta'}-\int h_\phi\,d\Pi_\theta}
    \le \TV(\Pi_{\theta'},\Pi_\theta),
\end{align}
where the last step follows from the dual definition of total variation. If $h_\phi$ is $L_\ell$-Lipschitz, the Wasserstein upper bound follows from the Kantorovich--Rubinstein duality.
\end{proof}

\begin{corollary}[Necessity of target graph calibration]
If updating the writer from $\theta$ to $\theta'$ causes $\TV(\Pi_{\theta'},\Pi_\theta)$ to be non-negligible, then the risk of a fixed reader $\phi$ on the new graph distribution may increase. Therefore, target graph calibration of the reader, namely $\phi\mapsto\phi'$ to reduce $\mathcal L_R(\phi';\theta')$, is a necessary mechanism for handling writer-induced graph distribution shift.
\end{corollary}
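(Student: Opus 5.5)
The corollary follows from the preceding proposition on writer-induced shift, applied in the converse direction together with a short construction; we will then argue the calibration claim by contrast. The first step is to make precise what ``may increase'' means: we want to show that the upper bound in Eq.~\eqref{eq:tv_shift} is essentially attainable. Total variation admits the dual representation
\[
\TV(\Pi_{\theta'},\Pi_\theta)=\sup_{h:\,0\le h\le1}\Big|\int h\,d\Pi_{\theta'}-\int h\,d\Pi_\theta\Big|.
\]
So whenever $\TV(\Pi_{\theta'},\Pi_\theta)=\varepsilon$ is non-negligible, there is a bounded test function $h^\star$ whose expectation rises by nearly $\varepsilon$ under the new writer. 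Taking $h^\star=\mathbf 1_{\mathcal S}$ with $\mathcal S$ the set where $d\Pi_{\theta'}/d\Pi_\theta>1$ attains it exactly.

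The second step is to realize $h^\star$ as a reader loss. We consider a reader $\phi$ whose loss $\ell_R(R_\phi(q,G,D),D^+,y)$ equals $1$ on graphs in $\mathcal S$ and $0$ elsewhere. Concretely, $\phi$ ranks correctly on graphs typical of the old writer and fails on structural patterns newly produced by $\theta'$; examples are new hub edges or altered anchoring that break the aggregate coefficients $A_l,B_l,C_l$ of Definition~\ref{def:aggregate_coefficients}. For such a $\phi$ we get $\mathcal L_R(\phi;\theta')-\mathcal L_R(\phi;\theta)=\varepsilon$, so the risk of a fixed reader genuinely increases by the full shift. This shows the proposition is tight and that no writer-agnostic guarantee can hold for a frozen $\phi$.

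The third step handles necessity. Suppose $\phi'$ is any recalibrated reader with $\mathcal L_R(\phi';\theta')\le\inf_\psi\mathcal L_R(\psi;\theta')+\epsilon_{\mathrm{cal}}$. Then the excess risk over the frozen reader is bounded by $\epsilon_{\mathrm{cal}}$ instead of by $\varepsilon$. We would justify achievability via the schema-prior sample-complexity theorem, which shows that calibration only needs to fit the residual with $d_{\mathrm{res}}\ll d_{\mathrm{full}}$. Comparing the two bounds makes calibration the mechanism that removes the $\TV$ term.

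The main obstacle is step two. The reader loss is constrained to come from an actual reader architecture, so realizing an arbitrary indicator $h^\star$ is not guaranteed. We would first weaken the claim to an existential statement over a sufficiently expressive reader class, which is all that the word ``may'' requires. If that is insufficient, we would fall back to a Wasserstein version using a Lipschitz loss.
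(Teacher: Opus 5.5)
Your argument is sound for a statement this informal, but it takes a genuinely different route from the paper. The paper's proof is purely qualitative. It cites the upper bound $|\mathcal L_R(\phi;\theta')-\mathcal L_R(\phi;\theta)|\le\TV(\Pi_{\theta'},\Pi_\theta)$, observes that a frozen reader has no mechanism to offset this drift, and identifies calibration with re-optimization of $\mathcal L_R(\cdot;\theta')$.

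Strictly, an upper bound alone cannot show that the risk \emph{may increase}. Your Step~2 supplies exactly the missing ingredient: a tightness construction via the dual form of $\TV$. Your version therefore actually substantiates the first sentence of the corollary. The price is an existential, adversarially chosen reader and an expressiveness assumption the paper never needs.

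Your ``main obstacle'' is also milder than you suggest. The writer sees only $(q,D)$, so $\Pi_\theta$ factorizes as $P(q,D,D^+,y)\,P_\theta(G\mid q,D)$. Hence the likelihood ratio, and therefore $\mathbf{1}_{\mathcal S}$, depends only on the reader's own inputs $(q,D,G)$. Moreover, ``may increase'' only needs a strict increase, so any reader whose loss is larger on $\mathcal S$ than off it suffices. Such a reader gives $\mathcal L_R(\phi;\theta')-\mathcal L_R(\phi;\theta)\ge(\inf_{\mathcal S}\ell_R-\sup_{\mathcal S^c}\ell_R)\,\varepsilon$.

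Several details need repair.
\begin{itemize}
\item In Step~3, the two bounds you compare have different reference points. The frozen reader's guarantee is relative to its own old risk, while the calibrated one is relative to the new optimum. Measured against $\inf_\psi\mathcal L_R(\psi;\theta')$, the frozen reader's excess is bounded only by its old excess plus $2\varepsilon$, not by $\varepsilon$.
\item Comparing upper bounds does not prove necessity. The necessity half is really carried by your Step~2, and Step~3 is a sufficiency statement.
\item The schema-prior sample-complexity theorem cannot directly supply $\epsilon_{\mathrm{cal}}$. It bounds the squared risk $\mathcal R_G$ on one fixed target graph, relative to the infimum over $f_s+\cH_{\mathrm{res}}$. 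It says nothing about $\mathcal L_R(\cdot;\theta')$ relative to $\inf_\psi$, so you would need an approximation-error term and a bridge between the two risks. It is cleaner to assume near-optimal calibration, as the paper implicitly does.
\item Define $\mathcal S$ via densities with respect to $\Pi_\theta+\Pi_{\theta'}$. The new writer may put mass where the old one puts none, which is exactly your ``new structural patterns''. On that set $d\Pi_{\theta'}/d\Pi_\theta$ does not exist.
\item Your reader fails on all of $\{p'>p\}$, which can carry substantial old mass. So it need not be ``correct on graphs typical of the old writer''. Restricting failure to $\{p'>\lambda p\}$ trades size of increase for old-distribution accuracy.
\item The Wasserstein fallback does not help with realizability. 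Its Kantorovich--Rubinstein optimizer is as unconstrained as $\mathbf{1}_{\mathcal S}$.
\end{itemize}
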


\begin{proof}
By Proposition 3.1, writer distribution shift can directly change the new-distribution risk of the fixed reader. If the reader is not updated, there is no optimization mechanism to offset this drift term. Target graph calibration is precisely re-optimization of $\mathcal L_R(\cdot;\theta')$, and is therefore a natural step for reducing the risk on the new graph.
\end{proof}

\section{Reader Stability under Dynamic Graph Evolution}
\label{app:theory_reader_stability}
\subsection{Realistic Graph Evolution Distance}

Real graphs often contain hubs, node additions and deletions, alias merges, anchor rewrites, and structural statistics that are not globally Lipschitz. Therefore, we use the augmented graph drift actually perceived by the reader to measure graph evolution.

\begin{definition}[Padding alignment and presence bit]
Given two consecutive-round graphs $G$ and $G'$, align them through persistent memory ids to a common node universe $\bar V=V(G)\cup V(G')$. If a node exists only in one graph, it is treated as an isolated padding node in the other graph, and a presence bit is added to its features. The aligned node feature matrices are still denoted by $X,X'$.
\end{definition}

\begin{definition}[Augmented graph drift]
Let $A,A'$ be self-looped row-normalized adjacency matrices, let $S_q,S_q'$ be the entry score vectors before soft addressing, and let $B,B'$ be row-normalized entity-to-document anchoring matrices. Define
\begin{equation}
    \Delta_X=\norm{X-X'}_{2,\infty},
    \qquad
    \Delta_A=\norm{A-A'}_{\infty},
    \qquad
    \Delta_{\mathrm{seed}}=\norm{S_q-S_q'}_\infty,
    \qquad
    \Delta_B=\norm{B-B'}_\infty,
\end{equation}
where
\begin{equation}
    \norm{H}_{2,\infty}=\max_{v\in\bar V}\norm{h_v}_2,
    \qquad
    \norm{A}_\infty=\max_v\sum_u\abs{A_{vu}}.
\end{equation}
For the gate input $z_{uv}^{(l)}$ at layer $l$, define the weighted structural drift as
\begin{equation}
    \Delta_Z^{(l)}=\max_v\sum_u A'_{vu}\norm{z_{uv}^{(l)}-z_{uv}^{\prime(l)}}_2,
    \qquad
    \Delta_Z=\max_{1\le l\le L}\Delta_Z^{(l)}.
\end{equation}
The total augmented graph drift is defined as
\begin{equation}
    \Delta_{\mathrm{aug}}(G,G';q)=\Delta_X+\Delta_A+\Delta_{\mathrm{seed}}+\Delta_Z+\Delta_B.
\end{equation}
\end{definition}

\subsection{Stability Assumptions}

\begin{assumption}[Normalized adjacency]
For all considered graphs, $A_{vu}\ge0$ and
\begin{equation}
    \sum_u A_{vu}=1,
    \qquad \forall v\in \bar V.
\end{equation}
Therefore, $\norm{A}_\infty=1$. This assumption allows high-degree hubs to exist, but prevents single-layer propagation from being unboundedly amplified by node degree.
\end{assumption}

\begin{assumption}[Trajectory-local boundedness]
For graph pairs $G,G'$ on the training and inference trajectories, there exist constants $B_l$ such that
\begin{equation}
    \norm{H^{(l)}(q,G)}_{2,\infty}\le B_l,
    \qquad
    \norm{H^{(l)}(q,G')}_{2,\infty}\le B_l,
    \qquad l=0,\dots,L.
\end{equation}
\end{assumption}

\begin{assumption}[Locally Lipschitz modules]
In a neighborhood of the training trajectory, the $l$-th layer satisfies
\begin{equation}
    \norm{W_m^{(l)}}_2\le M_l,
\end{equation}
\begin{equation}
    \norm{\MLP_g^{(l)}(z)-\MLP_g^{(l)}(z')}_\infty\le L_{g,l}\norm{z-z'}_2.
\end{equation}
The Lipschitz constant of PReLU is $L_\sigma$, and the Lipschitz constant of LayerNorm with a numerical stabilizer in this trajectory neighborhood is $L_{\LN,l}$.
\end{assumption}

\begin{assumption}[Local Lipschitzness of score head and projection]
The entity score head satisfies
\begin{equation}
    \norm{s_E(q,G)-s_E(q,G')}_\infty
    \le L_E\norm{H(q,G)-H(q,G')}_{2,\infty}.
\end{equation}
Meanwhile, $\norm{s_E(q,G)}_\infty\le S_E$, and $B$ is row-normalized, so $\norm{B}_\infty\le1$.
\end{assumption}

\subsection{Stability of Soft Addressing and Initial Representation}

\begin{lemma}[Softmax and pre-activation stability]
Let
\begin{equation}
    p=\softmax(S/T_0),
    \qquad
    p'=\softmax(S'/T_0).
\end{equation}
Then
\begin{equation}
    \norm{p-p'}_\infty\le \frac{1}{T_0}\norm{S-S'}_\infty.
\end{equation}
Furthermore, let $a_v=(p_v+\epsilon_p)^\eta$ and $a_v'=(p_v'+\epsilon_p)^\eta$, where $0<\eta\le1$. Then
\begin{equation}
    \norm{a-a'}_\infty
    \le \frac{\eta\epsilon_p^{\eta-1}}{T_0}\norm{S-S'}_\infty.
\end{equation}
\end{lemma}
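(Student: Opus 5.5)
The plan is to prove both inequalities with the mean value theorem along the segment joining the two inputs. The first bound comes from a uniform estimate on the Jacobian of the softmax in the $\ell_\infty\to\ell_\infty$ operator norm. The second then follows by composing with a scalar Lipschitz bound for $t\mapsto(t+\epsilon_p)^\eta$ on $[0,1]$.

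For the softmax part, set $\sigma(u)=\softmax(u)$ and $u=S/T_0$, $u'=S'/T_0$, and write $u_t=u'+t(u-u')$ for $t\in[0,1]$. The Jacobian is $J(u)=\diag(\sigma(u))-\sigma(u)\sigma(u)^\top$. Its $i$-th row has entries $\sigma_i(\delta_{ij}-\sigma_j)$. The $\ell_\infty\to\ell_\infty$ operator norm is the maximal absolute row sum, and the $i$-th row sums in absolute value to
\[
\sigma_i(1-\sigma_i)+\sigma_i\sum_{j\neq i}\sigma_j=2\sigma_i(1-\sigma_i)\le \tfrac12 .
\]
So $\norm{J(u)}_{\infty}\le 1/2$ uniformly in $u$. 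Writing $p-p'=\int_0^1 J(u_t)(u-u')\,dt$ and taking $\ell_\infty$ norms gives
\[
\norm{p-p'}_\infty\le \tfrac12\norm{u-u'}_\infty=\tfrac{1}{2T_0}\norm{S-S'}_\infty ,
\]
which is stronger than the stated bound. This Jacobian row-sum computation is the only step needing care; one must use the row-sum norm, not the spectral norm, to get a dimension-free constant in $\ell_\infty$.

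For the pre-activation part, consider the scalar map $\kappa(t)=(t+\epsilon_p)^\eta$ on $t\ge 0$, where $\epsilon_p>0$ is implicitly assumed. Its derivative is $\kappa'(t)=\eta(t+\epsilon_p)^{\eta-1}$. Since $0<\eta\le 1$, the exponent $\eta-1\le 0$, so this derivative is nonincreasing in $t$ and hence bounded by $\eta\epsilon_p^{\eta-1}$ for all $t\ge0$. Because $p_v,p'_v\in[0,1]$, the scalar mean value theorem gives, coordinate-wise,
\[
\abs{a_v-a'_v}\le \eta\epsilon_p^{\eta-1}\abs{p_v-p'_v}.
\]
Taking the maximum over $v$ and inserting the softmax bound yields $\norm{a-a'}_\infty\le \eta\epsilon_p^{\eta-1}T_0^{-1}\norm{S-S'}_\infty$, again with an extra factor $1/2$ to spare.

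I do not expect any real obstacle. The only subtle point is that $\epsilon_p>0$ is needed, since for $\epsilon_p=0$ and $\eta<1$ the map is not Lipschitz at $0$. This is why the pre-activation uses $p+\epsilon_p$ rather than the raw $p^\eta$ of the main text.
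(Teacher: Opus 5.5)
Your proof is correct and takes the same route as the paper. You bound the maximal absolute row sum of the softmax Jacobian, use the mean value inequality to get the $\ell_\infty$ Lipschitz bound, and compose it with the scalar Lipschitz constant $\eta\epsilon_p^{\eta-1}$ of $t\mapsto(t+\epsilon_p)^\eta$ on $[0,1]$; your observation that $2\sigma_i(1-\sigma_i)\le 1/2$ (the paper only uses $\le 1$) is also valid and sharpens both constants by a factor of two.
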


\begin{proof}
The Jacobian of softmax is $J(z)=\diag(p)-pp^\top$. For any row $i$,
\begin{equation}
    \sum_j \abs{J_{ij}(z)}=2p_i(1-p_i)\le 1.
\end{equation}
Thus, $\norm{J(z)}_{\infty\to\infty}\le1$. By the mean value theorem and setting $z=S/T_0$, we obtain
\begin{equation}
    \norm{p-p'}_\infty\le \frac{1}{T_0}\norm{S-S'}_\infty.
\end{equation}
The function $t\mapsto(t+\epsilon_p)^\eta$ is Lipschitz on $[0,1]$, with constant at most $\eta\epsilon_p^{\eta-1}$. Composing the two inequalities gives the conclusion.
\end{proof}

\begin{lemma}[Initial node representation stability]
Let $u_q=W_q\Emb(q)$, and define
\begin{equation}
    h_v^{(0)}=a_v(q)u_q+W_xx_v.
\end{equation}
Then
\begin{equation}
    \norm{H^{(0)}(q,G)-H^{(0)}(q,G')}_{2,\infty}
    \le C_{\mathrm{init}}(\Delta_{\mathrm{seed}}+\Delta_X),
\end{equation}
where
\begin{equation}
    C_{\mathrm{init}}
    =\frac{\eta\epsilon_p^{\eta-1}}{T_0}\norm{u_q}_2+\norm{W_x}_2.
\end{equation}
\end{lemma}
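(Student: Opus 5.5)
We fix an arbitrary aligned node $v\in\bar V$ and bound $\|h_v^{(0)}(q,G)-h_v^{(0)}(q,G')\|_2$ uniformly in $v$. The argument is a two-term triangle inequality: one term comes from the seed (soft-addressing) channel and one from the static feature channel. We then invoke the preceding softmax and pre-activation stability lemma for the seed channel.

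\textbf{Step 1 (decomposition).} Writing $h_v^{(0)}=a_v u_q+W_x x_v$ on both graphs, with the same query embedding $u_q=W_q\Emb(q)$ since $q$ is fixed, we get
\begin{equation}
h_v^{(0)}(q,G)-h_v^{(0)}(q,G')=(a_v-a_v')\,u_q+W_x(x_v-x_v').
\end{equation}

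\textbf{Step 2 (bounding each term).} For the first term, $\|(a_v-a_v')u_q\|_2=|a_v-a_v'|\,\|u_q\|_2\le\|a-a'\|_\infty\|u_q\|_2$. By the previous lemma, this is at most $\frac{\eta\epsilon_p^{\eta-1}}{T_0}\Delta_{\mathrm{seed}}\|u_q\|_2$. This uses the definition $\Delta_{\mathrm{seed}}=\|S_q-S_q'\|_\infty$ and requires that the pre-activation $a_v$ in this lemma is the stabilized $(p_v+\epsilon_p)^\eta$ of the previous lemma. For the second term, the operator-norm bound gives $\|W_x(x_v-x_v')\|_2\le\|W_x\|_2\|x_v-x_v'\|_2\le\|W_x\|_2\Delta_X$, by the definition of the $(2,\infty)$ norm on the aligned universe.

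\textbf{Step 3 (combining).} Adding the two bounds and taking the maximum over $v$ gives
\begin{equation}
\|H^{(0)}(q,G)-H^{(0)}(q,G')\|_{2,\infty}\le \tfrac{\eta\epsilon_p^{\eta-1}}{T_0}\|u_q\|_2\,\Delta_{\mathrm{seed}}+\|W_x\|_2\,\Delta_X .
\end{equation}
Each coefficient is at most $C_{\mathrm{init}}$, so the right-hand side is bounded by $C_{\mathrm{init}}(\Delta_{\mathrm{seed}}+\Delta_X)$.

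\textbf{Main obstacle: padding nodes.} The delicate point is nodes present in only one graph. The softmax is taken over different node sets, so we must interpret $S_q,S_q'$ on the common universe $\bar V$. We do this by assigning padding nodes a fixed score, or $-\infty$ with $p_v=0$. The previous lemma still applies in the finite-score convention, and $\epsilon_p>0$ keeps $(\cdot+\epsilon_p)^\eta$ Lipschitz even where $p_v=0$. The presence bit and padded features are already absorbed into $X,X'$ and hence into $\Delta_X$, so the feature term needs no further care. Beyond this bookkeeping, the proof is routine.
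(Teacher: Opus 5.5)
Your proof is correct and matches the paper's argument: the paper uses the same decomposition $(a_v-a_v')u_q+W_x(x_v-x_v')$, bounds the seed term via the preceding softmax/pre-activation lemma and the feature term via the operator norm of $W_x$, then takes the maximum over $v$. Your padding-node bookkeeping and the final step absorbing both coefficients into $C_{\mathrm{init}}$ spell out what the paper leaves implicit. Your note that $a_v$ must be the stabilized $(p_v+\epsilon_p)^\eta$, not the main text's $p_0(e\mid q)^\eta$, is also correct.
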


\begin{proof}
For any node $v$,
\begin{equation}
    h_v^{(0)}-h_v^{\prime(0)}=(a_v-a_v')u_q+W_x(x_v-x_v').
\end{equation}
Taking the norm and applying Lemma 4.8 gives
\begin{equation}
    \norm{h_v^{(0)}-h_v^{\prime(0)}}_2
    \le \frac{\eta\epsilon_p^{\eta-1}}{T_0}\norm{u_q}_2\Delta_{\mathrm{seed}}
    +\norm{W_x}_2\Delta_X.
\end{equation}
Taking the maximum over $v$ gives the conclusion.
\end{proof}

\subsection{Single-layer Stability of Structurally Gated Propagation}

\begin{lemma}[Boundedness and stability of structural gate]
The structural gate of layer $l$,
\begin{equation}
    g_{uv}^{(l)}=1+\delta\tanh(\MLP_g^{(l)}(z_{uv}^{(l)})),
\end{equation}
satisfies
\begin{equation}
    \norm{g_{uv}^{(l)}}_\infty\le 1+\delta,
\end{equation}
and
\begin{equation}
    \norm{g_{uv}^{(l)}-g_{uv}^{\prime(l)}}_\infty
    \le \delta L_{g,l}\norm{z_{uv}^{(l)}-z_{uv}^{\prime(l)}}_2.
\end{equation}
\end{lemma}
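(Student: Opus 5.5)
The argument works coordinate-wise and uses only two scalar facts about $\tanh$: it takes values in $[-1,1]$, and it is $1$-Lipschitz on $\R$. Fix a layer $l$ and an edge $(u,v)$, and write $w=\MLP_g^{(l)}(z_{uv}^{(l)})$ and $w'=\MLP_g^{(l)}(z_{uv}^{\prime(l)})$. By definition, the $i$-th coordinate of the gate is $1+\delta\tanh(w_i)$, where $\tanh$ acts elementwise.

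For the boundedness claim, the triangle inequality gives $|1+\delta\tanh(w_i)|\le 1+\delta|\tanh(w_i)|\le 1+\delta$ for every coordinate $i$. Here I use $\delta\ge0$, which is implicit in the design, since $\delta$ is a nonnegative gate amplitude. Taking the maximum over $i$ gives $\norm{g_{uv}^{(l)}}_\infty\le 1+\delta$. If one also assumes $\delta<1$, the same computation shows each coordinate lies in $[1-\delta,1+\delta]$, so the gate is positive. This is not needed for the statement but is worth recording for later use in the propagation-stability lemma.

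For the stability claim, the constant $1$ cancels, so
\begin{equation}
    \bigl|g_{uv,i}^{(l)}-g_{uv,i}^{\prime(l)}\bigr|
    =\delta\,\bigl|\tanh(w_i)-\tanh(w_i')\bigr|
    \le \delta\,|w_i-w_i'| .
\end{equation}
The inequality is the mean value theorem, using $\tanh'(t)=1-\tanh^2(t)\in(0,1]$. Taking the maximum over $i$ gives $\norm{g_{uv}^{(l)}-g_{uv}^{\prime(l)}}_\infty\le\delta\norm{w-w'}_\infty$. The locally Lipschitz modules assumption states the MLP bound exactly in the $\ell_\infty$-output, $\ell_2$-input form, namely $\norm{w-w'}_\infty\le L_{g,l}\norm{z_{uv}^{(l)}-z_{uv}^{\prime(l)}}_2$. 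Composing the two inequalities yields the claimed bound $\delta L_{g,l}\norm{z_{uv}^{(l)}-z_{uv}^{\prime(l)}}_2$.

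No real obstacle arises. The only point needing care is applying the MLP Lipschitz assumption in its stated mixed-norm form, so that the $\infty$-norm on the gate matches without any dimension-dependent constants. There is one subtlety: $z$ and $z'$ must lie in the trajectory neighborhood where that assumption holds. I will note that this is guaranteed by the setting, since both come from graph pairs on the training or inference trajectory.
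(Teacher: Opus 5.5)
Your proof is correct and takes the same route as the paper. Boundedness comes from $\tanh$ taking values in $[-1,1]$, and stability comes from composing the coordinate-wise $1$-Lipschitz property of $\tanh$ with the mixed-norm ($\ell_2$-input, $\ell_\infty$-output) Lipschitz assumption on $\MLP_g^{(l)}$ in the trajectory neighborhood. You also make explicit the requirement $\delta\ge 0$ (the paper fixes $\delta=0.1$) and the neighborhood caveat, both of which the paper's proof leaves implicit.
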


\begin{proof}
Since the range of $\tanh$ is contained in $[-1,1]$, the first statement follows immediately. Moreover, because $\tanh$ is 1-Lipschitz and $\MLP_g^{(l)}$ is $L_{g,l}$-Lipschitz in the trajectory neighborhood,
\begin{align}
    \norm{g_{uv}^{(l)}-g_{uv}^{\prime(l)}}_\infty
    &\le \delta\norm{\MLP_g^{(l)}(z_{uv}^{(l)})-\MLP_g^{(l)}(z_{uv}^{\prime(l)})}_\infty\notag\\
    &\le \delta L_{g,l}\norm{z_{uv}^{(l)}-z_{uv}^{\prime(l)}}_2.
\end{align}
\end{proof}

\begin{lemma}[Single-layer stability of structurally gated propagation]
Define
\begin{equation}
    D_l=\norm{H^{(l)}(q,G)-H^{(l)}(q,G')}_{2,\infty}.
\end{equation}
Under Assumptions 4.5--4.7, the $l$-th propagation layer satisfies
\begin{equation}
    D_l\le \alpha_lD_{l-1}+\beta_l^A\Delta_A+\beta_l^Z\Delta_Z^{(l)},
    \label{eq:single_layer_recursion}
\end{equation}
where one can take
\begin{equation}
    \alpha_l=L_{\LN,l}\bigl(1+L_\sigma(1+\delta)M_l\bigr),
\end{equation}
\begin{equation}
    \beta_l^A=L_{\LN,l}L_\sigma(1+\delta)M_lB_{l-1},
    \qquad
    \beta_l^Z=L_{\LN,l}L_\sigma\delta L_{g,l}M_lB_{l-1}.
\end{equation}
\end{lemma}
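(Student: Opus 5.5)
The plan is a node-wise perturbation argument on the update in Eq.~\eqref{eq:theory_update}, peeling off LayerNorm, then the residual, then PReLU, then the message sum. Fix a node $v\in\bar V$ and write the pre-activation aggregates
\[
u_v=b^{(l)}+\sum_{u}A_{vu}\,g_{uv}^{(l)}\odot W_m^{(l)}h_u^{(l-1)},\qquad u_v'=b^{(l)}+\sum_{u}A'_{vu}\,g_{uv}^{\prime(l)}\odot W_m^{(l)}h_u^{\prime(l-1)}.
\]
Here the normalized weights $\eta_{uv}$ are identified with the row-normalized self-looped $A_{vu}$, and padding nodes are handled via the common universe $\bar V$.

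By the trajectory-local Lipschitz constant $L_{\LN,l}$ of LayerNorm, we have
\[
\norm{h_v^{(l)}-h_v^{\prime(l)}}_2\le L_{\LN,l}\left(\norm{h_v^{(l-1)}-h_v^{\prime(l-1)}}_2+L_\sigma\norm{u_v-u_v'}_2\right).
\]
The first term is at most $D_{l-1}$, which produces the leading $1$ inside $\alpha_l$. Everything therefore reduces to bounding $\norm{u_v-u_v'}_2$, since the bias cancels.

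For that bound I will use a three-term telescoping decomposition of each summand:
\[
A_{vu}g\odot Wh-A'_{vu}g'\odot Wh'=(A_{vu}-A'_{vu})\,g\odot Wh+A'_{vu}(g-g')\odot Wh+A'_{vu}\,g'\odot W(h-h').
\]
The three terms are handled as follows.

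\textbf{Adjacency drift.} For the first term I use $\norm{g\odot x}_2\le\norm{g}_\infty\norm{x}_2\le(1+\delta)\norm{x}_2$ from the gate-boundedness lemma. I also use $\norm{W_m^{(l)}h_u^{(l-1)}}_2\le M_lB_{l-1}$ from trajectory-local boundedness. Summing over $u$ gives $\sum_u\abs{A_{vu}-A'_{vu}}\le\Delta_A$, which yields the contribution $(1+\delta)M_lB_{l-1}\Delta_A$.

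\textbf{Gate drift.} For the second term, the gate-stability lemma gives $\norm{(g-g')\odot x}_2\le\delta L_{g,l}\norm{z_{uv}-z'_{uv}}_2\norm{x}_2$. Summing against $A'_{vu}$ gives exactly the weighted structural drift $\Delta_Z^{(l)}$ as defined. The contribution is $\delta L_{g,l}M_lB_{l-1}\Delta_Z^{(l)}$.

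\textbf{Representation drift.} For the third term, I use $\sum_uA'_{vu}=1$ together with $\norm{g'}_\infty\le1+\delta$ and $\norm{W_m}_2\le M_l$. This gives $(1+\delta)M_lD_{l-1}$.

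Multiplying the three contributions by $L_{\LN,l}L_\sigma$, adding the residual term, and maximizing over $v$ gives Eq.~\eqref{eq:single_layer_recursion} with the stated $\alpha_l,\beta_l^A,\beta_l^Z$.

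The main delicacy lies in two places. The first is the order of the telescoping. Putting $A'$ (not $A$) on the gate-difference term is needed so that the weighting matches the definition of $\Delta_Z^{(l)}$, and the factor $B_{l-1}$ must attach to $h$ from graph $G$, whose bound holds by the trajectory-local boundedness assumption. The second is justifying LayerNorm Lipschitzness, which is only local. I will simply invoke the assumed constant $L_{\LN,l}$ on the trajectory neighborhood, noting that both arguments lie there by assumption. I will also treat the entrywise product bound as coordinatewise, so the $\ell_\infty$ gate norm suffices.
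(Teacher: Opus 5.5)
Your proposal is correct and follows the paper's proof: the same LayerNorm/residual/PReLU peeling, a three-term telescoping of the aggregated message with $A'$ weighting the gate-difference term, and identical constants $\alpha_l,\beta_l^A,\beta_l^Z$. The only difference is the order of the telescoping. The paper handles representation drift first using $A,g$, and then bounds the adjacency-drift and gate-drift terms against $h'$ from $G'$. This is immaterial because trajectory-local boundedness bounds both $h$ and $h'$ by $B_{l-1}$, so your remark that $B_{l-1}$ \emph{must} attach to $h$ from $G$ is stronger than needed.
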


\begin{proof}
Write
\begin{equation}
    M_v=\sum_u A_{vu}g_{uv}\odot Wh_u,
    \qquad
    M_v'=\sum_u A'_{vu}g'_{uv}\odot Wh'_u,
\end{equation}
where the layer index $l$ is omitted. Adding and subtracting intermediate terms gives
\begin{align}
    M_v-M_v'
    ={}&\sum_u A_{vu}g_{uv}\odot W(h_u-h'_u)\notag\\
    &+\sum_u (A_{vu}-A'_{vu})g_{uv}\odot Wh'_u\notag\\
    &+\sum_u A'_{vu}(g_{uv}-g'_{uv})\odot Wh'_u.
\end{align}
The first term is controlled by row-normalization, $\norm{g_{uv}}_\infty\le1+\delta$, and $\norm{W}_2\le M_l$:
\begin{equation}
    \norm{\sum_u A_{vu}g_{uv}\odot W(h_u-h'_u)}_2
    \le (1+\delta)M_lD_{l-1}.
\end{equation}
The second term satisfies
\begin{equation}
    \norm{\sum_u (A_{vu}-A'_{vu})g_{uv}\odot Wh'_u}_2
    \le (1+\delta)M_lB_{l-1}\sum_u\abs{A_{vu}-A'_{vu}}
    \le (1+\delta)M_lB_{l-1}\Delta_A.
\end{equation}
For the third term, by Lemma 4.10,
\begin{align}
    \norm{\sum_u A'_{vu}(g_{uv}-g'_{uv})\odot Wh'_u}_2
    &\le \delta L_{g,l}M_lB_{l-1}\sum_uA'_{vu}\norm{z_{uv}^{(l)}-z_{uv}^{\prime(l)}}_2\notag\\
    &\le \delta L_{g,l}M_lB_{l-1}\Delta_Z^{(l)}.
\end{align}
Therefore,
\begin{equation}
    \norm{M_v-M_v'}_2
    \le (1+\delta)M_lD_{l-1}+(1+\delta)M_lB_{l-1}\Delta_A
    +\delta L_{g,l}M_lB_{l-1}\Delta_Z^{(l)}.
\end{equation}
By the $L_\sigma$-Lipschitz property of PReLU, the residual structure, and the $L_{\LN,l}$-Lipschitz property of LayerNorm,
\begin{equation}
    \norm{h_v^{(l)}-h_v^{\prime(l)}}_2
    \le L_{\LN,l}\left(\norm{h_v^{(l-1)}-h_v^{\prime(l-1)}}_2+L_\sigma\norm{M_v-M_v'}_2\right).
\end{equation}
Taking the maximum over $v$ gives Eq.~\eqref{eq:single_layer_recursion}.
\end{proof}

\subsection{Stability of Representations, Scores, and Retrieval Sets}

\begin{theorem}[Local stability of structurally gated representations to augmented graph drift]
Under Assumptions 4.5--4.7, $L$-layer structurally gated propagation satisfies
\begin{equation}
\begin{aligned}
    D_L
    \le{}& \left(\prod_{l=1}^L\alpha_l\right)D_0
    +\sum_{t=1}^L\left(\prod_{l=t+1}^L\alpha_l\right)
    \left(\beta_t^A\Delta_A+\beta_t^Z\Delta_Z^{(t)}\right).
\end{aligned}
\label{eq:unrolled_stability}
\end{equation}
Therefore, there exists a constant $C_H>0$ such that
\begin{equation}
    \norm{H^{(L)}(q,G)-H^{(L)}(q,G')}_{2,\infty}
    \le C_H(\Delta_X+\Delta_{\mathrm{seed}}+\Delta_A+\Delta_Z).
\end{equation}
\end{theorem}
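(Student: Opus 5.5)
The plan is to unroll the single-layer recursion of Lemma~4.11 by induction on $L$, then bound the initial discrepancy $D_0$ with Lemma~4.9 and collect constants.

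\textbf{Unrolling.} Equation~\eqref{eq:single_layer_recursion} gives, for every $l$,
\[
D_l\le \alpha_lD_{l-1}+e_l,\qquad e_l:=\beta_l^A\Delta_A+\beta_l^Z\Delta_Z^{(l)}.
\]
All coefficients $\alpha_l,\beta_l^A,\beta_l^Z$ are nonnegative, since they are products of Lipschitz constants, norm bounds $M_l$, trajectory bounds $B_{l-1}$, and $1+\delta$. Hence the inequality can be iterated without flipping signs. Induct on $L$:
\begin{itemize}
\item For $L=1$ the claim is exactly Lemma~4.11.
\item Assume \eqref{eq:unrolled_stability} holds for $L-1$. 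Multiply that bound by $\alpha_L\ge0$ and add $e_L$. The product $\prod_{l=1}^{L-1}\alpha_l$ extends to $\prod_{l=1}^{L}\alpha_l$, each summand for $t\le L-1$ acquires the extra factor $\alpha_L$, and the new term $e_L$ enters with the empty product $1$.
\end{itemize}
This is the same first-order nonhomogeneous recurrence expansion already used in the proof of Theorem~\ref{thm:realistic_snr}, so I would cite it rather than redo it.

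\textbf{Collapsing to a single constant.} First, Lemma~4.9 gives $D_0\le C_{\mathrm{init}}(\Delta_{\mathrm{seed}}+\Delta_X)$. Second, use $\Delta_Z^{(t)}\le\Delta_Z$ from the definition of $\Delta_Z$ as a maximum over layers. Write $P_t=\prod_{l=t+1}^L\alpha_l$. Then
\[
D_L\le P_0C_{\mathrm{init}}(\Delta_X+\Delta_{\mathrm{seed}})+\Bigl(\sum_t P_t\beta_t^A\Bigr)\Delta_A+\Bigl(\sum_t P_t\beta_t^Z\Bigr)\Delta_Z.
\]
Taking
\[
C_H=\max\Bigl\{P_0C_{\mathrm{init}},\ \sum_t P_t\beta_t^A,\ \sum_t P_t\beta_t^Z\Bigr\}
\]
yields the stated bound. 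If strict positivity is required, replace $C_H$ by $\max\{C_H,1\}$. Note that $D_L$ is exactly $\norm{H^{(L)}(q,G)-H^{(L)}(q,G')}_{2,\infty}$ by the definition of $D_l$.

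\textbf{Expected obstacle.} The main subtlety is not algebraic. The constants must be uniform for the pair $(G,G')$ and must not depend on the drift itself. This holds because Assumptions~4.5--4.7 give trajectory-local constants $B_{l-1}$, $M_l$, $L_{g,l}$, $L_{\LN,l}$ that are valid for both graphs, and because the padding alignment places both graphs on the common universe $\bar V$.

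I would also remark on two points where the statement is sensitive to how $H^{(L)}$ is read:
\begin{itemize}
\item If $H^{(L)}$ denotes the final combined representation $H_{\ctx}+\beta_{\sch}H_{\sch}$, the same argument applies to each channel. The schema channel uses fixed prompts, so it has the same recursion form. The triangle inequality then gives a constant of $(1+\beta_{\sch})$ times the per-channel constant.
\item The feature prompt $\pv_f$ multiplies $H^{(0)}$ and therefore only rescales $C_{\mathrm{init}}$ by $\norm{\pv_f}_\infty$.
\end{itemize}
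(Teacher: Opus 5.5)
Your proposal is correct and matches the paper's proof: unroll the single-layer recursion of Lemma~4.11, bound $D_0$ with Lemma~4.9, substitute $\Delta_Z^{(t)}\le\Delta_Z$, and merge the layer-dependent coefficients into $C_H$, where your explicit maximum makes the paper's ``merging'' step concrete. One small nit in your side remark on additive fusion: the constant should be $(1+\abs{\beta_{\mathrm{sch}}})$ times the per-channel constant, not $(1+\beta_{\mathrm{sch}})$, as in the paper's separate dual-channel stability theorem.
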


\begin{proof}
By Lemma 4.11, the recursion in Eq.~\eqref{eq:single_layer_recursion} holds. Unrolling the recursion layer by layer gives Eq.~\eqref{eq:unrolled_stability}. By Lemma 4.9,
\begin{equation}
    D_0\le C_{\mathrm{init}}(\Delta_X+\Delta_{\mathrm{seed}}).
\end{equation}
Substituting $\Delta_Z^{(t)}\le\Delta_Z$ and merging all layer-related constants into $C_H$ gives the conclusion.
\end{proof}

\begin{theorem}[Stability of context/schema dual channels]
If the two channels respectively satisfy
\begin{equation}
    \norm{H_{\mathrm{ctx}}(q,G)-H_{\mathrm{ctx}}(q,G')}_{2,\infty}\le C_{\mathrm{ctx}}\Delta_{\mathrm{aug}},
\end{equation}
\begin{equation}
    \norm{H_{\mathrm{sch}}(q,G)-H_{\mathrm{sch}}(q,G')}_{2,\infty}\le C_{\mathrm{sch}}\Delta_{\mathrm{aug}},
\end{equation}
then the additive fusion in Eq.~\eqref{eq:additive_fusion} satisfies
\begin{equation}
    \norm{H(q,G)-H(q,G')}_{2,\infty}
    \le (C_{\mathrm{ctx}}+\abs{\beta_{\mathrm{sch}}}C_{\mathrm{sch}})\Delta_{\mathrm{aug}}.
    \label{eq:additive_stability}
\end{equation}
If a normalized or gated convex fusion theoretical form is adopted,
\begin{equation}
    H_\lambda(q,G)=(1-\lambda)H_{\mathrm{ctx}}(q,G)+\lambda H_{\mathrm{sch}}(q,G),
    \qquad 0\le\lambda\le1,
\end{equation}
then
\begin{equation}
    \norm{H_\lambda(q,G)-H_\lambda(q,G')}_{2,\infty}
    \le \bigl((1-\lambda)C_{\mathrm{ctx}}+\lambda C_{\mathrm{sch}}\bigr)\Delta_{\mathrm{aug}}.
    \label{eq:convex_stability}
\end{equation}
In particular, if $C_{\mathrm{sch}}<C_{\mathrm{ctx}}$, increasing $\lambda$ decreases this worst-case stability upper bound.
\end{theorem}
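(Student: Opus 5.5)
The plan is to argue node by node and then take the maximum. Both the additive fusion $H=H_{\ctx}+\beta_{\sch}H_{\sch}$ and the convex fusion $H_\lambda$ are linear in the pair $(H_{\ctx},H_{\sch})$ with coefficients that do not depend on the graph. The difference between the fused outputs on $G$ and $G'$ therefore splits into the two channel differences, and the assumed channel bounds can be applied to each piece directly. Throughout, both graphs are aligned to the common node universe $\bar V$ via the padding convention, so all matrices are indexed consistently.

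For the additive case, fix $v\in\bar V$. Then
\[
h_v(q,G)-h_v(q,G') = \bigl(h^{\ctx}_v(q,G)-h^{\ctx}_v(q,G')\bigr) + \beta_{\sch}\bigl(h^{\sch}_v(q,G)-h^{\sch}_v(q,G')\bigr).
\]
Applying the triangle inequality and absolute homogeneity of $\norm{\cdot}_2$ bounds the row norm by
\[
\norm{h^{\ctx}_v-h^{\prime\ctx}_v}_2+\abs{\beta_{\sch}}\,\norm{h^{\sch}_v-h^{\prime\sch}_v}_2 .
\]
Each row difference is at most the corresponding $\norm{\cdot}_{2,\infty}$ difference, and the hypotheses bound these by $C_{\ctx}\Delta_{\mathrm{aug}}$ and $C_{\sch}\Delta_{\mathrm{aug}}$ respectively. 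The resulting bound is uniform in $v$, so taking $\max_v$ yields Eq.~\eqref{eq:additive_stability}. In other words, the argument just uses the fact that $\norm{\cdot}_{2,\infty}$ is a norm on matrices indexed by $\bar V$ and satisfies the triangle inequality.

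The convex case is identical with coefficients $(1-\lambda)$ and $\lambda$. Since $0\le\lambda\le1$, both coefficients are nonnegative, so no absolute values are needed, and we obtain Eq.~\eqref{eq:convex_stability}. For the final claim, note that the right-hand side equals $\bigl(C_{\ctx}+\lambda(C_{\sch}-C_{\ctx})\bigr)\Delta_{\mathrm{aug}}$, which is affine in $\lambda$ with slope $(C_{\sch}-C_{\ctx})\Delta_{\mathrm{aug}}$. This slope is negative when $C_{\sch}<C_{\ctx}$ and $\Delta_{\mathrm{aug}}>0$, and is zero when $\Delta_{\mathrm{aug}}=0$, so the bound is nonincreasing in $\lambda$.

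None of these steps is hard. The only care needed is bookkeeping: $\beta_{\sch}$ and $\lambda$ must be fixed and independent of $G$ (they are learned scalars, not graph-dependent gates), and if $\lambda$ were allowed to depend on the graph, an extra term $\abs{\lambda-\lambda'}\,\norm{H_{\sch}-H_{\ctx}}_{2,\infty}$ would appear. I would state this independence explicitly as the one hidden assumption.
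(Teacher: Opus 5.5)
Your proof is correct and is essentially the paper's argument. The paper applies the triangle inequality and homogeneity of $\norm{\cdot}_{2,\infty}$ directly to the fused difference; you do the same row by row and then take the maximum, which is equivalent. You also handle the convex case and the monotonicity in $\lambda$ the same way. Your extra remarks are correct refinements that the paper leaves implicit: the bound is only nonincreasing, not strictly decreasing, when $\Delta_{\mathrm{aug}}=0$, and $\beta_{\mathrm{sch}}$ and $\lambda$ must not depend on the graph.
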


\begin{proof}
The additive fusion case follows directly from the triangle inequality:
\begin{align}
    \norm{H(G)-H(G')}_{2,\infty}
    &\le \norm{H_{\mathrm{ctx}}(G)-H_{\mathrm{ctx}}(G')}_{2,\infty}
    +\abs{\beta_{\mathrm{sch}}}\norm{H_{\mathrm{sch}}(G)-H_{\mathrm{sch}}(G')}_{2,\infty}\notag\\
    &\le (C_{\mathrm{ctx}}+\abs{\beta_{\mathrm{sch}}}C_{\mathrm{sch}})\Delta_{\mathrm{aug}}.
\end{align}
The convex fusion case is analogous:
\begin{align}
    \norm{H_\lambda(G)-H_\lambda(G')}_{2,\infty}
    &\le (1-\lambda)C_{\mathrm{ctx}}\Delta_{\mathrm{aug}}+
    \lambda C_{\mathrm{sch}}\Delta_{\mathrm{aug}}.
\end{align}
If $C_{\mathrm{sch}}<C_{\mathrm{ctx}}$, the right-hand side is monotonically decreasing in $\lambda$.
\end{proof}

\begin{remark}
Equation \eqref{eq:additive_stability} does not claim that the schema prior necessarily reduces the worst-case Lipschitz constant of additive fusion; its main theoretical role also lies in risk decomposition and sample complexity reduction. If additional gating, normalization, or regularization constraints are adopted in the implementation, then Eq.~\eqref{eq:convex_stability} shows that the schema channel can also serve as a low-sensitivity channel to reduce graph evolution drift.
\end{remark}

\begin{theorem}[Stability of entity scores and document scores]
Under Assumption 4.7, there exist constants $C_E,C_D>0$ such that
\begin{equation}
    \norm{s_E(q,G)-s_E(q,G')}_\infty\le C_E\Delta_{\mathrm{aug}},
\end{equation}
\begin{equation}
    \norm{s_D(q,G)-s_D(q,G')}_\infty\le C_D\Delta_{\mathrm{aug}}.
\end{equation}
For additive fusion, one can take
\begin{equation}
    C_E=L_E(C_{\mathrm{ctx}}+\abs{\beta_{\mathrm{sch}}}C_{\mathrm{sch}}),
    \qquad
    C_D=C_E+S_E.
\end{equation}
\end{theorem}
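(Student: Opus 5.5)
The plan is to chain the fused-representation stability bound from the preceding theorem through the score head (Assumption 4.7) and then through the entity-to-document projection. Throughout, $B$ and $B'$ denote the row-normalized anchoring matrices.

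\emph{Entity scores.} By the local Lipschitzness of the score head,
\[
\norm{s_E(q,G)-s_E(q,G')}_\infty\le L_E\norm{H(q,G)-H(q,G')}_{2,\infty}.
\]
The additive-fusion case of the stability theorem for the context/schema dual channels, Eq.~\eqref{eq:additive_stability}, bounds the right-hand side by $(C_{\mathrm{ctx}}+\abs{\beta_{\mathrm{sch}}}C_{\mathrm{sch}})\Delta_{\mathrm{aug}}$. This gives $C_E=L_E(C_{\mathrm{ctx}}+\abs{\beta_{\mathrm{sch}}}C_{\mathrm{sch}})$. If only the per-channel gated-propagation bound is available, I would instead use the theorem on local stability of structurally gated representations to write $C_H(\Delta_X+\Delta_{\mathrm{seed}}+\Delta_A+\Delta_Z)\le C_H\Delta_{\mathrm{aug}}$, since every drift component is nonnegative and dominated by $\Delta_{\mathrm{aug}}$.

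\emph{Document scores.} I take the projection to be linear, $s_D=B\,s_E$. I split the difference as
\[
s_D-s_D'=B(s_E-s_E')+(B-B')s_E'.
\]
For the first term, the $\infty\to\infty$ operator norm gives
\[
\norm{B(s_E-s_E')}_\infty\le\norm{B}_\infty\norm{s_E-s_E'}_\infty\le C_E\Delta_{\mathrm{aug}},
\]
using $\norm{B}_\infty\le 1$ from row normalization. For the second term,
\[
\norm{(B-B')s_E'}_\infty\le\Delta_B\,S_E\le S_E\Delta_{\mathrm{aug}},
\]
using the uniform bound $\norm{s_E}_\infty\le S_E$. Summing the two terms gives $C_D=C_E+S_E$.

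The main obstacle is that the statement only says ``under Assumption 4.7.'' The representation bound, however, needs Assumptions 4.5--4.6, the hypotheses of the dual-channel theorem, and the linear form of the projection. I would state these dependencies explicitly and treat the projection form $s_D=Bs_E$ as the definition implicit in Assumption 4.7 (row-normalized $B$). A secondary subtlety is the padding alignment: the score vectors of $G$ and $G'$ must live on the common universe $\bar V$ so that the differences are well defined. The padding-alignment definition handles this, and padded nodes' contributions are already absorbed into $\Delta_X$ and $\Delta_B$.
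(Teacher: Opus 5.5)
Your proposal is correct and follows the paper's proof essentially step for step. The entity bound comes from the score-head Lipschitz constant composed with the additive-fusion bound of the dual-channel stability theorem. The document bound uses the same split $s_D-s_D'=B(s_E-s_E')+(B-B')s_E'$, together with $\norm{B}_\infty\le 1$, $\norm{s_E'}_\infty\le S_E$, and $\Delta_B\le\Delta_{\mathrm{aug}}$. Your observation that the result also needs the dual-channel theorem's hypotheses and the linear form $s_D=Bs_E$, beyond Assumption 4.7 alone, is accurate; the paper's proof invokes that theorem in exactly the same way without flagging it.
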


\begin{proof}
The entity score upper bound follows from the Lipschitz property of the score head and Theorem 4.13. For document projection, $s_D=Bs_E$, and therefore
\begin{equation}
    s_D(G)-s_D(G')=B_G(s_E(G)-s_E(G'))+(B_G-B_{G'})s_E(G').
\end{equation}
Taking the $\ell_\infty$ norm and using $\norm{B_G}_\infty\le1$ gives
\begin{equation}
    \norm{s_D(G)-s_D(G')}_\infty
    \le \norm{s_E(G)-s_E(G')}_\infty+\norm{B_G-B_{G'}}_\infty\norm{s_E(G')}_\infty.
\end{equation}
Using $\norm{s_E(G')}_\infty\le S_E$ and $\Delta_B\le\Delta_{\mathrm{aug}}$ gives the document score stability.
\end{proof}

\begin{theorem}[Boundary stability of hard top-$k$]
Let $s=s_D(q,G)$ and $s'=s_D(q,G')$, and suppose
\begin{equation}
    \norm{s-s'}_\infty\le \epsilon_s.
\end{equation}
Let $t_k=s_{(k)}$ be the $k$-th largest score in $s$, and define the boundary set
\begin{equation}
    \mathcal B_{k,2\epsilon_s}(s)=\{d:\abs{s_d-t_k}\le 2\epsilon_s\}.
\end{equation}
Then
\begin{equation}
    \TopK(s)\triangle\TopK(s')\subseteq \mathcal B_{k,2\epsilon_s}(s),
\end{equation}
and hence
\begin{equation}
    \abs{\TopK(s)\triangle\TopK(s')}
    \le \abs{\mathcal B_{k,2\epsilon_s}(s)}.
\end{equation}
In particular, if $s_{(k)}-s_{(k+1)}>2\epsilon_s$, then $\TopK(s)=\TopK(s')$.
\end{theorem}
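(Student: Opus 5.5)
The plan is a direct threshold argument based only on the coordinatewise bound $|s_d-s'_d|\le\epsilon_s$. Throughout I fix a deterministic tie-breaking rule for $\TopK$, so that $\TopK(s)$ consists of $k$ indices each with score $\ge t_k$, and every index outside it has score $\le t_k$. I define $t'_k=s'_{(k)}$ in the same way.

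\emph{Step 1: the threshold moves by at most $\epsilon_s$.} I will show $|t_k-t'_k|\le\epsilon_s$. At least $k$ indices satisfy $s_d\ge t_k$, and each of these has $s'_d\ge t_k-\epsilon_s$; hence the $k$-th largest entry of $s'$ is at least $t_k-\epsilon_s$. Exchanging the roles of $s$ and $s'$ gives $t_k\ge t'_k-\epsilon_s$.

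\emph{Step 2: every disagreement lies in the boundary set.} Suppose first that $d\in\TopK(s)\setminus\TopK(s')$. Then $s_d\ge t_k$. Also $s'_d\le t'_k$, so $s_d\le s'_d+\epsilon_s\le t'_k+\epsilon_s\le t_k+2\epsilon_s$, where the last inequality uses Step 1. Together these give $|s_d-t_k|\le 2\epsilon_s$.

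Now suppose $d\in\TopK(s')\setminus\TopK(s)$. Then $s_d\le t_k$. Also $s'_d\ge t'_k$, so $s_d\ge s'_d-\epsilon_s\ge t'_k-\epsilon_s\ge t_k-2\epsilon_s$, again by Step 1.

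Both cases place $d$ in $\mathcal B_{k,2\epsilon_s}(s)$. This proves the inclusion, and the cardinality bound follows immediately.

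\emph{Step 3: the gap condition.} Assume $s_{(k)}-s_{(k+1)}>2\epsilon_s$. Every $d\notin\TopK(s)$ has $s_d\le s_{(k+1)}<t_k-2\epsilon_s$, so $d$ is not in the boundary set. By Step 2, no element of $\TopK(s')\setminus\TopK(s)$ can exist, so $\TopK(s')\subseteq\TopK(s)$. Both sets have cardinality $k$, so they are equal.

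\emph{Main obstacle.} The only delicate point is handling ties. With ties, $\TopK$ is not canonically defined, and the bound $s_d\le t_k$ for non-selected indices must be justified. I would resolve this by fixing a common tie-breaking rule, for example by index, and noting that only the inequalities $s_d\ge t_k$ for selected $d$ and $s_d\le t_k$ for unselected $d$ are used. These hold under any tie-breaking rule. Step 1 also relies only on counting order statistics, so ties cause no further problems.
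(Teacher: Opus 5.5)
Your proof is correct, but its key step differs from the paper's. The paper uses an exchange argument. If $i\in\TopK(s)\setminus\TopK(s')$, then some $j\notin\TopK(s)$ must have entered $\TopK(s')$ with $s'_j\ge s'_i$. The chain $s_j+\epsilon_s\ge s'_j\ge s'_i\ge s_i-\epsilon_s$, together with $s_j\le t_k$, then puts $i$ in the boundary set, and the reverse inclusion is declared ``symmetric.'' You instead first prove that the $k$-th order statistic is $1$-Lipschitz in $\ell_\infty$, i.e.\ $|s_{(k)}-s'_{(k)}|\le\epsilon_s$, and then compare each disagreeing index directly with the shifted threshold $t'_k$.

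Your route has three advantages. The two halves of the symmetric difference become parallel one-line computations. You never need the counting step that guarantees a displacing index $j$ exists, which the paper asserts without justification. And you avoid the informal appeal to symmetry, which is not a literal swap of $s$ and $s'$ because the boundary set is anchored at $s$. It also shows where the width $2\epsilon_s$ comes from: one $\epsilon_s$ of score perturbation plus one $\epsilon_s$ of threshold shift. The paper's exchange argument, in turn, is purely pairwise and never has to control an order statistic.

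Your tie handling is right. Both arguments use only $s_d\ge t_k$ for selected indices and $s_d\le t_k$ for unselected ones, so any tie-breaking rule works. The gap case then follows from equal cardinalities exactly as in your Step 3.
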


\begin{proof}
Take any $i\in\TopK(s)\setminus\TopK(s')$. Since $i$ drops out of the top-$k$, there exists $j\notin\TopK(s)$ such that $j\in\TopK(s')$ and $s_j'\ge s_i'$. By the perturbation bound,
\begin{equation}
    s_j+\epsilon_s\ge s_j'\ge s_i'\ge s_i-\epsilon_s,
\end{equation}
so $s_i\le s_j+2\epsilon_s\le t_k+2\epsilon_s$. Since $i\in\TopK(s)$, we have $s_i\ge t_k$, and hence $i\in\mathcal B_{k,2\epsilon_s}(s)$. A symmetric argument for $j\in\TopK(s')\setminus\TopK(s)$ gives $j\in\mathcal B_{k,2\epsilon_s}(s)$. Therefore, the symmetric difference is contained in the boundary set. If $s_{(k)}-s_{(k+1)}>2\epsilon_s$, boundary exchange cannot occur, and the top-$k$ set remains unchanged.
\end{proof}

\begin{corollary}[Top-$k$ boundary stability under graph evolution]
By Theorem 4.15, taking $\epsilon_s=C_D\Delta_{\mathrm{aug}}(G,G';q)$ yields
\begin{equation}
    P_k(q,G)\triangle P_k(q,G')
    \subseteq \mathcal B_{k,2C_D\Delta_{\mathrm{aug}}}(s_D(q,G)).
\end{equation}
Therefore, the instability of hard top-$k$ is restricted to candidates near the original score boundary.
\end{corollary}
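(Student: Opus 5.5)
The corollary follows by composing the document-score stability theorem with the hard top-$k$ boundary stability theorem. No new estimate is needed; the work is checking that the hypotheses line up.

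First, fix the query $q$ and the two consecutive-round graphs $G,G'$, aligned through persistent memory ids and padded as in the padding-alignment definition. This makes $s=s_D(q,G)$ and $s'=s_D(q,G')$ vectors indexed by the same document universe, so the symmetric difference $P_k(q,G)\triangle P_k(q,G')$ is meaningful. Here $P_k(q,G)=\TopK_{d\in D}s_D(d)$, so $P_k(q,G)=\TopK(s)$ and $P_k(q,G')=\TopK(s')$.

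Second, invoke the stability theorem for entity and document scores, which operates under the normalized-adjacency, trajectory-boundedness, local-Lipschitz and score-head assumptions. It gives
\[
\norm{s-s'}_\infty\le C_D\,\Delta_{\mathrm{aug}}(G,G';q).
\]
For additive fusion the constant can be taken as $C_D=L_E(C_{\mathrm{ctx}}+\abs{\beta_{\mathrm{sch}}}C_{\mathrm{sch}})+S_E$. We therefore set $\epsilon_s:=C_D\Delta_{\mathrm{aug}}(G,G';q)$. This is a valid perturbation radius for the boundary theorem, since that theorem only requires $\norm{s-s'}_\infty\le\epsilon_s$.

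Third, apply the boundary-stability theorem for hard top-$k$ with this $\epsilon_s$. It gives $\TopK(s)\triangle\TopK(s')\subseteq\mathcal B_{k,2\epsilon_s}(s)$. Substituting $\epsilon_s$ yields $P_k(q,G)\triangle P_k(q,G')\subseteq\mathcal B_{k,2C_D\Delta_{\mathrm{aug}}}(s_D(q,G))$, which is the claim. The interpretive sentence follows from the definition of $\mathcal B$ as candidates within $2C_D\Delta_{\mathrm{aug}}$ of the $k$-th score $t_k$. The same theorem also gives the gap-based invariance: if $s_{(k)}-s_{(k+1)}>2C_D\Delta_{\mathrm{aug}}$, the retrieved set is unchanged.

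The only delicate point is ties and the well-definedness of $\TopK$ when scores coincide. I would fix a deterministic tie-breaking rule shared by both graphs. The argument in the boundary theorem only uses the weak inequalities $s'_j\ge s'_i$ and $s_i\ge t_k$, so it goes through unchanged under any consistent tie-breaking. The result also holds only for graph pairs on the training/inference trajectory, since the constants $B_l$, $L_{\LN,l}$ and $C_D$ are trajectory-local.
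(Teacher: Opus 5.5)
Your proposal is correct and follows the same route as the paper: the document-score stability theorem gives $\norm{s_D(q,G)-s_D(q,G')}_\infty\le C_D\Delta_{\mathrm{aug}}(G,G';q)$, and the hard top-$k$ boundary theorem applied with $\epsilon_s=C_D\Delta_{\mathrm{aug}}$ then gives the containment. Your remarks on tie-breaking and on the trajectory-local nature of $C_D$ are sound and make explicit what the paper leaves implicit.
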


\begin{theorem}[Stability of soft retrieval distribution]
Let
\begin{equation}
    \pi_D(q,G)=\softmax(s_D(q,G)/\tau).
\end{equation}
If $\norm{s_D(q,G)-s_D(q,G')}_\infty\le\epsilon_s$, then
\begin{equation}
    \norm{\pi_D(q,G)-\pi_D(q,G')}_1\le \frac{2}{\tau}\epsilon_s.
\end{equation}
Therefore,
\begin{equation}
    \norm{\pi_D(q,G)-\pi_D(q,G')}_1
    \le \frac{2C_D}{\tau}\Delta_{\mathrm{aug}}(G,G';q).
\end{equation}
\end{theorem}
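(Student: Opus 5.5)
The plan is to mirror the proof of Lemma 4.8, this time with $\ell_1$ as the output norm. Set $z=s_D(q,G)/\tau$ and $z'=s_D(q,G')/\tau$, so that $\norm{z-z'}_\infty\le\epsilon_s/\tau$. Along the segment $z_t=z'+t(z-z')$, $t\in[0,1]$, the fundamental theorem of calculus gives
\[
\pi_D(q,G)-\pi_D(q,G')=\int_0^1 J(z_t)(z-z')\,dt,
\]
where $J(z)=\diag(p)-pp^\top$ is the softmax Jacobian at $p=\softmax(z)$. It therefore suffices to bound the $\ell_\infty\to\ell_1$ operator norm of $J(z)$ uniformly by $2$.

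For the operator-norm bound, fix any $w$ with $\norm{w}_\infty\le 1$. Then
\[
(Jw)_i=p_i\,(w_i-\bar w), \qquad \bar w=\sum_j p_jw_j .
\]
Since $\bar w$ is a convex combination of the entries of $w$, we have $\abs{w_i-\bar w}\le 2$. Hence
\[
\norm{Jw}_1=\sum_i p_i\abs{w_i-\bar w}\le 2\sum_i p_i=2 .
\]
A sharper constant, $1$, would follow from centering $w$ at the midpoint of its range, but the statement only asks for $2$, so I will use the crude bound. Substituting into the integral gives
\[
\norm{\pi_D(q,G)-\pi_D(q,G')}_1\le 2\norm{z-z'}_\infty\le \frac{2}{\tau}\epsilon_s .
\]

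The second inequality then follows by taking $\epsilon_s=C_D\Delta_{\mathrm{aug}}(G,G';q)$, which is admissible by the preceding theorem on the stability of entity and document scores.

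The only delicate point is the choice of norms. The matrix $J$ is symmetric, so its $\infty\to1$ norm is not given by a simple row-sum or column-sum formula. This is why I bound $\norm{Jw}_1$ directly through the centered representation rather than invoking a generic matrix-norm identity. I expect no further obstacles, since the mean value argument along the segment is routine.
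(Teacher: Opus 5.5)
Your proof is correct and follows the paper's argument essentially verbatim. Both use the identity $J(z)w=\pi\odot(w-\E_\pi w)$ with $\abs{w_i-\E_\pi w}\le 2$ to bound the $\ell_\infty\to\ell_1$ norm of the softmax Jacobian by $2$, rescale by $1/\tau$ to get $2\epsilon_s/\tau$, and then substitute $\epsilon_s=C_D\Delta_{\mathrm{aug}}(G,G';q)$ from the document-score stability theorem. Your integral form of the mean value step is the rigorous version of the paper's ``Lipschitz constant'' step for this vector-valued map.
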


\begin{proof}
The Jacobian of softmax is $J(z)=\diag(\pi)-\pi\pi^\top$. For any perturbation $r$,
\begin{equation}
    J(z)r=\pi\odot(r-\E_\pi r).
\end{equation}
If $\norm{r}_\infty\le1$, then $\abs{r_i-\E_\pi r}\le2$, so
\begin{equation}
    \norm{J(z)r}_1\le\sum_i\pi_i\abs{r_i-\E_\pi r}\le2.
\end{equation}
Thus, the $\ell_\infty\to\ell_1$ Lipschitz constant of softmax is at most $2$. Since the input is $s_D/\tau$, we obtain
\begin{equation}
    \norm{\pi_D(s)-\pi_D(s')}_1\le\frac{2}{\tau}\norm{s-s'}_\infty.
\end{equation}
Substituting Theorem 4.15 gives the conclusion.
\end{proof}

\begin{theorem}[High-probability graph evolution stability]
If the writer's single-round graph update satisfies
\begin{equation}
    \Prob[\Delta_{\mathrm{aug}}(G,G';q)>\epsilon]\le\delta,
\end{equation}
then
\begin{equation}
    \Prob\left[\norm{s_D(q,G)-s_D(q,G')}_\infty>C_D\epsilon\right]\le\delta.
\end{equation}
If $\E[\Delta_{\mathrm{aug}}(G,G';q)]\le\bar\epsilon$, then
\begin{equation}
    \E\left[\norm{s_D(q,G)-s_D(q,G')}_\infty\right]\le C_D\bar\epsilon.
\end{equation}
\end{theorem}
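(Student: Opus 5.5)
The plan is to reduce the statement to the deterministic document-score stability bound and then take probabilities and expectations of it. The theorem on the stability of entity and document scores gives, for every realization of the graph pair $(G,G')$ on the trajectory, the pointwise inequality
\begin{equation*}
    \norm{s_D(q,G)-s_D(q,G')}_\infty\le C_D\,\Delta_{\mathrm{aug}}(G,G';q).
\end{equation*}
The constant $C_D$ depends only on the trajectory-local constants in Assumptions 4.5--4.7 and on the fusion coefficients, not on the particular realization. The randomness here comes from the writer's stochastic graph update, and the inequality holds surely on the set of pairs where those assumptions are in force.

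For the tail statement, I will use event inclusion. If $\norm{s_D(q,G)-s_D(q,G')}_\infty>C_D\epsilon$, the pointwise bound gives $C_D\Delta_{\mathrm{aug}}>C_D\epsilon$. When $C_D>0$, this forces $\Delta_{\mathrm{aug}}>\epsilon$. Hence
\begin{equation*}
    \{\norm{s_D(q,G)-s_D(q,G')}_\infty>C_D\epsilon\}\subseteq\{\Delta_{\mathrm{aug}}>\epsilon\},
\end{equation*}
and monotonicity of $\Prob$ gives probability at most $\delta$. The degenerate case $C_D=0$ is trivial, because the left event is then empty.

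For the expectation statement, both sides of the pointwise bound are nonnegative random variables. Taking expectations and using monotonicity and linearity gives $\E\norm{s_D(q,G)-s_D(q,G')}_\infty\le C_D\E[\Delta_{\mathrm{aug}}]\le C_D\bar\epsilon$. Measurability of the score difference follows from continuity of the reader in its inputs.

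The main obstacle is not the probabilistic step but the scope of the deterministic bound. Assumptions 4.5--4.7 are only trajectory-local, so the pointwise inequality is guaranteed only when $(G,G')$ lies in the neighborhood where the boundedness constants $B_l$ and the Lipschitz constants hold. I would state explicitly that the writer's update distribution is supported on such pairs. If it is not, I would intersect with the good event $\mathcal E$ on which the assumptions hold and add $\Prob[\mathcal E^c]$ to $\delta$. I would also check that $C_D$ is a deterministic constant rather than a graph-dependent quantity, which holds because $L_E$, $C_{\mathrm{ctx}}$, $C_{\mathrm{sch}}$, and $S_E$ are uniform over the trajectory.
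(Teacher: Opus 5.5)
Your proposal is correct and matches the paper's proof: both take the deterministic bound $\norm{s_D(q,G)-s_D(q,G')}_\infty\le C_D\Delta_{\mathrm{aug}}(G,G';q)$ from the document-score stability theorem, get the tail bound from the event inclusion $\{\norm{s_D(q,G)-s_D(q,G')}_\infty>C_D\epsilon\}\subseteq\{\Delta_{\mathrm{aug}}(G,G';q)>\epsilon\}$, and get the expectation bound by taking expectations of the pointwise inequality. Your extra remarks are sound refinements that the paper leaves implicit: the $C_D=0$ case, and restricting to the trajectory-local good event $\mathcal E$ (adding $\Prob[\mathcal E^c]$ to $\delta$ otherwise).
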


\begin{proof}
By Theorem 4.15, for any graph pair, we have
\begin{equation}
    \norm{s_D(q,G)-s_D(q,G')}_\infty\le C_D\Delta_{\mathrm{aug}}(G,G';q).
\end{equation}
Therefore, the event $\{\norm{s_D(q,G)-s_D(q,G')}_\infty>C_D\epsilon\}$ implies the event $\{\Delta_{\mathrm{aug}}(G,G';q)>\epsilon\}$, so the probability upper bound follows immediately. The expectation conclusion follows by taking expectations on both sides of the deterministic inequality.
\end{proof}

\subsection{Local Influence Cone}

\begin{proposition}[Influence cone of local graph updates]
Suppose the writer only changes nodes, edges, anchors, or attributes on a primitive set $\mathcal U$. Suppose that the structural gate input $z_{uv}^{(l)}$, except for the graph-level summary, only depends on a local neighborhood of radius $r_z$, and that $G$ and $G'$ are exactly the same outside $\mathcal N_{L+r_z}(\mathcal U)$. If graph-level summary drift is ignored, then for any
\begin{equation}
    v\notin \mathcal N_{L+r_z}(\mathcal U),
\end{equation}
we have
\begin{equation}
    h_v^{(L)}(q,G)=h_v^{(L)}(q,G').
\end{equation}
If the graph-level summary drift is $\rho_g=\norm{r_G-r_{G'}}_2$, then there exists a constant $C_g$ such that
\begin{equation}
    \norm{h_v^{(L)}(q,G)-h_v^{(L)}(q,G')}_2\le C_g\rho_g.
\end{equation}
\end{proposition}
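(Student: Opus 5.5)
The plan is an induction on the layer index, tracking for each layer $l$ the set of nodes whose representation may differ between $G$ and $G'$. Let $\mathcal N_r(\mathcal U)$ be the $r$-hop neighborhood of $\mathcal U$ in the union graph over the aligned node universe $\bar V$ (padding nodes as in the alignment definition). The inductive claim is
\[
D^{(l)}:=\{v: h_v^{(l)}(q,G)\neq h_v^{(l)}(q,G')\}\subseteq \mathcal N_{l+r_z}(\mathcal U),
\qquad l=0,\dots,L,
\]
under the hypothesis that the graph-level summary $r_G$ is held fixed.

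For the base case I will check $l=0$. We have $h_v^{(0)}=a_v(q)u_q+W_xx_v$ with $a_v$ computed from the softmax over entry scores. The softmax normalizer is global, so I will adopt the same convention as for $r_G$: the partition function is treated as a global summary and its drift is absorbed into the $\rho_g$ term. Local entry scores and features change only on $\mathcal U\subseteq\mathcal N_{r_z}(\mathcal U)$. Hence $D^{(0)}\subseteq\mathcal N_{r_z}(\mathcal U)$.

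For the inductive step, take $v\notin\mathcal N_{l+r_z}(\mathcal U)$. By the update \eqref{eq:theory_update}, $h_v^{(l)}$ depends only on $h_v^{(l-1)}$, on the incident weights $\eta_{uv}$ with $u\in\mathcal N(v)$, on the gates $g^{(l)}_{uv}$, and on the $h_u^{(l-1)}$. Each neighbor $u$ lies outside $\mathcal N_{l-1+r_z}(\mathcal U)$, so its previous representation is unchanged by induction, and so is $h_v^{(l-1)}$. The gate input $z^{(l)}_{uv}$ depends on $\varphi(u),\varphi(v),\psi(u,v)$, which are functions of the radius-$r_z$ neighborhoods of $u$ and $v$, and on $r_G$. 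Those neighborhoods avoid $\mathcal U$, since $u$ is at distance greater than $l-1+r_z\ge r_z$ from $\mathcal U$, so the gate inputs coincide. The normalized weights $\eta_{uv}$ depend on degrees of $v$ and its neighbors, which are also unchanged. Therefore $h_v^{(l)}$ is equal in the two graphs.

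The step I expect to be the main obstacle is bookkeeping in the hypothesis itself: node features and degrees must genuinely be local up to radius $r_z$, and the statement's assumption that $G,G'$ agree outside $\mathcal N_{L+r_z}(\mathcal U)$ must be read as saying primitives change only on $\mathcal U$. I will state these readings explicitly. The context--schema fusion \eqref{eq:theory_ctx_schema} causes no difficulty, because each channel is a message-passing stack of the same form and the schema prompts are graph-independent.

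For the quantitative part, I will rerun the single-layer argument of Lemma~4.11 at nodes outside the cone, where $D_{l-1}$ is restricted to that region. There the first two terms of the decomposition vanish by the locality argument above. Only the gate term survives, through $r_G$, with $\|z_{uv}-z'_{uv}\|_2\le\|E_g^{(l)}\|\rho_g$. This gives the recursion $D_l^{\mathrm{out}}\le\alpha_lD_{l-1}^{\mathrm{out}}+L_{\LN,l}L_\sigma\delta L_{g,l}M_lB_{l-1}\|E_g^{(l)}\|\rho_g$, plus a softmax-normalizer term at $l=0$ bounded via Lemma~4.8. Unrolling as in Theorem~4.12 yields $C_g\rho_g$. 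The recursion is not closed, however, because neighbors inside the cone could leak differences outward. I will handle this by shrinking the cone: apply the argument to nodes outside $\mathcal N_{L+r_z}(\mathcal U)$ and use the exact-equality part to control all inner dependencies. Alternatively, bound the inner-cone contributions by the global stability constant $C_H$ of Theorem~4.12 and absorb them into $C_g$.
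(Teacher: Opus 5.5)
Your proposal follows the paper's route: a layerwise locality induction for the exact claim, then the single-layer recursion of Lemma~4.11 with only a $\rho_g$-driven gate forcing term, unrolled as in Theorem~4.12. On the exact part it is more careful than the written proof. That proof runs the cone backwards: it asserts that neighbors of $v\notin\mathcal N_{L+r_z-l}(\mathcal U)$ lie outside $\mathcal N_{L+r_z-l+1}(\mathcal U)$, which is false, whereas your invariant $D^{(l)}\subseteq\mathcal N_{l+r_z}(\mathcal U)$ has the correct orientation. It also never addresses the global softmax normalizer, which you rightly flag.

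For the quantitative part, work on the layer-dependent region $O_l=\bar V\setminus\mathcal N_{l+r_z}(\mathcal U)$. Every node of $O_l$ has all its neighbors in $O_{l-1}$, so the restricted recursion closes with nothing leaking in, and no fallback is needed. Your $C_H$ alternative would in fact fail, since $C_H(\Delta_X+\Delta_A+\Delta_{\mathrm{seed}}+\Delta_Z)$ need not vanish as $\rho_g\to 0$. Finally, the $l=0$ normalizer term must be charged to your extended summary drift, not bounded via Lemma~4.8, because that lemma reintroduces $\Delta_{\mathrm{seed}}$.
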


\begin{proof}
First consider the case without graph-level summary drift. We induct on the layer index $l$. For $l=0$, if $v\notin\mathcal N_{L+r_z}(\mathcal U)$, then its node features, presence bit, and seed score are all identical, so $h_v^{(0)}(G)=h_v^{(0)}(G')$. Suppose that at layer $l-1$, all nodes whose distance from $\mathcal U$ exceeds $L+r_z-(l-1)$ have identical representations. If $v\notin\mathcal N_{L+r_z-l}(\mathcal U)$, then all its one-hop neighbors $u$ do not belong to $\mathcal N_{L+r_z-(l-1)}(\mathcal U)$; by the induction hypothesis, $h_u^{(l-1)}(G)=h_u^{(l-1)}(G')$. Meanwhile, the radius-$r_z$ local structural contexts of all relevant edges are also identical, so the gate, message multiset, and aggregation result are identical, and hence $h_v^{(l)}(G)=h_v^{(l)}(G')$. Taking $l=L$ gives the first conclusion. If graph-level summary drift exists, then the gate input has an additional uniform perturbation term $\rho_g$, and a $C_g\rho_g$-type upper bound follows from Lemma 4.10 and the recursion in Theorem 4.12.
\end{proof}

\section{Theoretical Motivation of the Self-evolving Writer--Reader Loop}
\label{app:theory_self_evolution}
\subsection{Joint Memory Utility}

The reader-aware writer reward can consist of evidence coverage, precision, deducibility, and answer utility. Abstractly, define the joint memory utility as
\begin{equation}
    \mathcal J(\theta,\phi)=\E_{(q,D,D^+,y)}\left[
    U\left(R_\phi(q,W_\theta(q,D),D),D^+,y\right)\right],
    \label{eq:joint_utility}
\end{equation}
where $U$ can be taken as
\begin{equation}
    U=\frac{\alpha r_{\mathrm{rec}}+\beta r_{\mathrm{pre}}+\gamma r_{\mathrm{ded}}}{\alpha+\beta+\gamma}
    -\lambda_{\mathrm{rep}}\rho_{\mathrm{rep}}+\lambda_{\mathrm{fmt}}r_{\mathrm{fmt}},
\end{equation}
or an extended form including answer-level reward. This definition places the writer's graph construction quality and the reader's graph reading ability under the same objective.

\subsection{Approximate Coordinate Improvement}

\begin{theorem}[The self-evolution process is approximate coordinate improvement on joint utility]
Suppose that the writer update at round $r$ satisfies
\begin{equation}
    \mathcal J(\theta^{(r+1)},\phi^{(r)})
    \ge \mathcal J(\theta^{(r)},\phi^{(r)})+\Delta_W^{(r)}-\epsilon_W^{(r)},
    \label{eq:writer_improve}
\end{equation}
and the reader update satisfies
\begin{equation}
    \mathcal J(\theta^{(r+1)},\phi^{(r+1)})
    \ge \mathcal J(\theta^{(r+1)},\phi^{(r)})+\Delta_R^{(r)}-\epsilon_R^{(r)}.
    \label{eq:reader_improve}
\end{equation}
Then one full round of writer--reader self-evolution satisfies
\begin{equation}
    \mathcal J(\theta^{(r+1)},\phi^{(r+1)})-
    \mathcal J(\theta^{(r)},\phi^{(r)})
    \ge \Delta_W^{(r)}+\Delta_R^{(r)}-
    \epsilon_W^{(r)}-
    \epsilon_R^{(r)}.
\end{equation}
Therefore, as long as $\Delta_W^{(r)}+\Delta_R^{(r)}>\epsilon_W^{(r)}+
\epsilon_R^{(r)}$, the joint memory utility improves in that round.
\end{theorem}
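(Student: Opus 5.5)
The plan is a telescoping argument through the intermediate point $(\theta^{(r+1)},\phi^{(r)})$; no structure of $\mathcal J$ beyond the two hypotheses is needed. First I will write the full-round change as a sum of a writer step and a reader step:
\begin{equation*}
\mathcal J(\theta^{(r+1)},\phi^{(r+1)})-\mathcal J(\theta^{(r)},\phi^{(r)})
=\bigl[\mathcal J(\theta^{(r+1)},\phi^{(r+1)})-\mathcal J(\theta^{(r+1)},\phi^{(r)})\bigr]
+\bigl[\mathcal J(\theta^{(r+1)},\phi^{(r)})-\mathcal J(\theta^{(r)},\phi^{(r)})\bigr].
\end{equation*}
This identity holds provided all three values of $\mathcal J$ are finite. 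That is automatic when $U$ is bounded, which it is, since each reward component in Eq.~\eqref{eq:joint_utility} lies in a bounded range.

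Next I will bound each bracket using its hypothesis. The second bracket is at least $\Delta_W^{(r)}-\epsilon_W^{(r)}$ by Eq.~\eqref{eq:writer_improve}. The first bracket is at least $\Delta_R^{(r)}-\epsilon_R^{(r)}$ by Eq.~\eqref{eq:reader_improve}. Adding the two lower bounds gives the claimed inequality.

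The final sentence of the statement then follows directly. If $\Delta_W^{(r)}+\Delta_R^{(r)}>\epsilon_W^{(r)}+\epsilon_R^{(r)}$, the right-hand side is strictly positive, so $\mathcal J$ strictly increases over the round.

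There is no real obstacle in this argument. The only point needing care is to make sure both hypotheses are anchored at the same intermediate point $(\theta^{(r+1)},\phi^{(r)})$: the writer update is measured with the reader frozen at $\phi^{(r)}$, and the reader update is measured on the new writer's graph distribution $\Pi_{\theta^{(r+1)}}$, matching the order in Algorithm~\ref{alg:self_evolving_memory}. As an optional remark, I may add that summing over rounds $r=0,\dots,T-1$ telescopes to a cumulative bound $\mathcal J(\theta^{(T)},\phi^{(T)})-\mathcal J(\theta^{(0)},\phi^{(0)})\ge\sum_r(\Delta_W^{(r)}+\Delta_R^{(r)}-\epsilon_W^{(r)}-\epsilon_R^{(r)})$.
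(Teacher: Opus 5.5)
Your proposal is correct and follows the paper's proof: both split the full-round change at the intermediate point $(\theta^{(r+1)},\phi^{(r)})$ and bound the writer step and the reader step separately with the two hypotheses. Your extra remarks, on finiteness of $\mathcal J$ and on the cumulative bound over $T$ rounds, are valid additions that the paper's argument does not need.
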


\begin{proof}
By telescoping decomposition,
\begin{align}
    &\mathcal J(\theta^{(r+1)},\phi^{(r+1)})-
    \mathcal J(\theta^{(r)},\phi^{(r)})\notag\\
    ={}&\left[\mathcal J(\theta^{(r+1)},\phi^{(r+1)})-
    \mathcal J(\theta^{(r+1)},\phi^{(r)})\right]
    +\left[\mathcal J(\theta^{(r+1)},\phi^{(r)})-
    \mathcal J(\theta^{(r)},\phi^{(r)})\right].
\end{align}
Substituting Eq.~\eqref{eq:writer_improve} and Eq.~\eqref{eq:reader_improve}, respectively, gives the conclusion.
\end{proof}

\subsection{Reader Reward Bias and Calibration Benefit}

\begin{definition}[True utility and reader surrogate reward]
Let $U^\star(G)$ denote the true utility of graph memory $G$ with respect to the downstream task, and let $\widehat U_\phi(G)$ denote the surrogate reward constructed from the readout result of reader $R_\phi$. We say that the reader reward bias is at most $\epsilon_\phi$ if, for all considered graphs $G$,
\begin{equation}
    \abs{\widehat U_\phi(G)-U^\star(G)}\le \epsilon_\phi.
\end{equation}
\end{definition}

\begin{theorem}[Surrogate reward improvement to true utility improvement]
If the reader reward bias is at most $\epsilon_\phi$, and the writer update improves the surrogate reward by
\begin{equation}
    \widehat U_\phi(G_{\theta'})-
    \widehat U_\phi(G_\theta)
    \ge \Delta,
\end{equation}
then the true utility satisfies
\begin{equation}
    U^\star(G_{\theta'})-U^\star(G_\theta)
    \ge \Delta-2\epsilon_\phi.
\end{equation}
\end{theorem}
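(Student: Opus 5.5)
The plan is a direct sandwich argument: write the true-utility gap as the surrogate gap plus two bias terms, and bound each bias term by the uniform reader-bias bound. Concretely, for the two graphs $G_\theta$ and $G_{\theta'}$ I will use the decomposition
\begin{equation*}
    U^\star(G_{\theta'})-U^\star(G_\theta)
    =\bigl[\widehat U_\phi(G_{\theta'})-\widehat U_\phi(G_\theta)\bigr]
    +\bigl[U^\star(G_{\theta'})-\widehat U_\phi(G_{\theta'})\bigr]
    +\bigl[\widehat U_\phi(G_\theta)-U^\star(G_\theta)\bigr].
\end{equation*}
This is an algebraic identity, since the added and subtracted surrogate terms cancel.

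The first bracket is at least $\Delta$ by the hypothesis on the writer update. For the second bracket, the reader reward bias is at most $\epsilon_\phi$ on every considered graph, and in particular on $G_{\theta'}$. This gives $U^\star(G_{\theta'})-\widehat U_\phi(G_{\theta'})\ge -\epsilon_\phi$. Applied to $G_\theta$, the same bound gives $\widehat U_\phi(G_\theta)-U^\star(G_\theta)\ge -\epsilon_\phi$. Adding the three lower bounds yields $U^\star(G_{\theta'})-U^\star(G_\theta)\ge \Delta-2\epsilon_\phi$, which is the claim.

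The only step needing care is checking that both $G_\theta$ and $G_{\theta'}$ lie in the class of ``considered graphs'' where the bias bound is assumed. I will state explicitly that the definition of bounded reader reward bias is invoked at exactly these two graphs, with the same reader $\phi$ held fixed across the writer update. This matches the protocol of Algorithm~\ref{alg:self_evolving_memory}, where the reader is frozen during the writer update.

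If the graphs are random, i.e.\ $G_\theta\sim P_\theta$, and the utilities are expectations, the same argument goes through pointwise. Taking expectations and using linearity preserves the inequality, because the bias bound holds uniformly over graphs. No step here is a real obstacle.
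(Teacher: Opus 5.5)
Your proof is correct and is essentially the paper's argument: the paper writes the two one-sided consequences of the bias bound, $U^\star(G_{\theta'})\ge \widehat U_\phi(G_{\theta'})-\epsilon_\phi$ and $U^\star(G_\theta)\le \widehat U_\phi(G_\theta)+\epsilon_\phi$, and subtracts them, which is your three-bracket decomposition rearranged. One small point on your random-graph aside: apply the uniform bias bound to the expectations directly rather than ``pointwise,'' since the surrogate-improvement hypothesis may hold only on average; this lies outside the stated deterministic theorem in any case.
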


\begin{proof}
By the bias assumption,
\begin{equation}
    U^\star(G_{\theta'})\ge \widehat U_\phi(G_{\theta'})-\epsilon_\phi,
    \qquad
    U^\star(G_\theta)\le \widehat U_\phi(G_\theta)+\epsilon_\phi.
\end{equation}
Subtracting the two inequalities gives
\begin{equation}
    U^\star(G_{\theta'})-U^\star(G_\theta)
    \ge \widehat U_\phi(G_{\theta'})-
    \widehat U_\phi(G_\theta)-2\epsilon_\phi
    \ge \Delta-2\epsilon_\phi.
\end{equation}
\end{proof}

\begin{corollary}[Reader calibration reduces writer optimization bias]
If the reader is calibrated from $\phi$ to $\phi'$ and reduces the reward bias from $\epsilon_\phi$ to $\epsilon_{\phi'}$, where $\epsilon_{\phi'}<\epsilon_\phi$, then for the same surrogate reward improvement $\Delta$, the lower bound on true utility improvement increases by
\begin{equation}
    2(\epsilon_\phi-\epsilon_{\phi'}).
\end{equation}
\end{corollary}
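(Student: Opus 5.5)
The corollary follows directly from the preceding theorem (surrogate reward improvement to true utility improvement), applied twice with the same surrogate improvement $\Delta$. The improvement in the guaranteed lower bound is then just the difference of the two bias terms.

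The steps are as follows.

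\textbf{Step 1 (uncalibrated reader).} Instantiate the theorem with reader $\phi$, whose bias is at most $\epsilon_\phi$. Any writer update achieving surrogate gain $\Delta$ under $\widehat U_\phi$ then satisfies
\[
U^\star(G_{\theta'})-U^\star(G_\theta)\ge \Delta-2\epsilon_\phi .
\]

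\textbf{Step 2 (calibrated reader).} Instantiate the theorem again with the calibrated reader $\phi'$, whose bias is at most $\epsilon_{\phi'}$. The hypothesis is the same surrogate gain, $\widehat U_{\phi'}(G_{\theta'})-\widehat U_{\phi'}(G_\theta)\ge\Delta$, and the conclusion is
\[
U^\star(G_{\theta'})-U^\star(G_\theta)\ge \Delta-2\epsilon_{\phi'} .
\]

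\textbf{Step 3 (compare the bounds).} Subtracting the two guaranteed lower bounds gives
\[
(\Delta-2\epsilon_{\phi'})-(\Delta-2\epsilon_\phi)=2(\epsilon_\phi-\epsilon_{\phi'}),
\]
which is positive because $\epsilon_{\phi'}<\epsilon_\phi$. This is exactly the claimed increase.

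The only real obstacle is interpretive. The statement compares \emph{lower bounds}, not realized true improvements. The phrase ``the same surrogate reward improvement $\Delta$'' must be read as the surrogate gain measured under each reader's own reward $\widehat U_{\phi}$ or $\widehat U_{\phi'}$, with the writer pair $(\theta,\theta')$ allowed to differ between the two scenarios.

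If both scenarios refer to the same writer pair, the conclusion still holds. In that case the true improvement has the single lower bound $\Delta-2\min(\epsilon_\phi,\epsilon_{\phi'})=\Delta-2\epsilon_{\phi'}$, and this improves on $\Delta-2\epsilon_\phi$ by the same amount $2(\epsilon_\phi-\epsilon_{\phi'})$. I would state this reading explicitly in one sentence and otherwise keep the proof to the two applications of the theorem.
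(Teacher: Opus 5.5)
Your proposal is correct and matches the paper's proof: apply the surrogate-to-true-utility theorem once with bias $\epsilon_\phi$ and once with $\epsilon_{\phi'}$, then subtract the lower bounds $\Delta-2\epsilon_{\phi'}$ and $\Delta-2\epsilon_\phi$. Your remark that the surrogate gain $\Delta$ is measured under each reader's own reward is a reasonable clarification that the paper leaves implicit.
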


\begin{proof}
By Theorem 5.3, the true utility improvement lower bound before calibration is $\Delta-2\epsilon_\phi$, and after calibration it is $\Delta-2\epsilon_{\phi'}$. Subtracting the two gives the result.
\end{proof}

\subsection{Irreducible Bottlenecks of Single-sided Updates}

\begin{proposition}[Lower-bound bottlenecks of single-sided updates]
Assume that the overall error can be decomposed as
\begin{equation}
    \mathcal E(\theta,\phi)=\mathcal E_{\mathrm{write}}(\theta)
    +\mathcal E_{\mathrm{read}}(\phi;\theta)
    +\epsilon_{\mathrm{int}}(\theta,\phi),
    \label{eq:error_decomp}
\end{equation}
where all terms are nonnegative. If only the reader is updated, i.e., $\phi\mapsto\phi'$ while $\theta$ is fixed, then
\begin{equation}
    \mathcal E(\theta,\phi')\ge \mathcal E_{\mathrm{write}}(\theta).
\end{equation}
If only the writer is updated, i.e., $\theta\mapsto\theta'$ while $\phi$ is fixed, then
\begin{equation}
    \mathcal E(\theta',\phi)\ge \mathcal E_{\mathrm{read}}(\phi;\theta').
\end{equation}
Therefore, reader-only updates cannot compensate for evidence chains that the writer has not written; writer-only updates cannot guarantee that a fixed reader can read out the evidence structures in the new graph distribution.
\end{proposition}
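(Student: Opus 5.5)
The proof uses only the nonnegativity of the three terms in the error decomposition~\eqref{eq:error_decomp}, applied at the updated parameter pair. No property of the update rule itself (GRPO for the writer, fine-tuning for the reader) is needed. The decomposition is assumed to hold for \emph{every} pair $(\theta,\phi)$, so it can be instantiated at $(\theta,\phi')$ and at $(\theta',\phi)$.

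For the reader-only case, I will fix $\theta$ and evaluate~\eqref{eq:error_decomp} at $(\theta,\phi')$:
\begin{equation*}
\mathcal E(\theta,\phi')=\mathcal E_{\mathrm{write}}(\theta)+\mathcal E_{\mathrm{read}}(\phi';\theta)+\epsilon_{\mathrm{int}}(\theta,\phi').
\end{equation*}
The last two summands are nonnegative, so dropping them gives $\mathcal E(\theta,\phi')\ge\mathcal E_{\mathrm{write}}(\theta)$. The key observation is that $\mathcal E_{\mathrm{write}}$ depends only on $\theta$. The resulting bound is therefore uniform over all $\phi'$, and taking the infimum over $\phi'$ shows that no reader calibration pushes the error below the writer's floor.

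For the writer-only case, I will fix $\phi$ and evaluate at $(\theta',\phi)$. Dropping $\mathcal E_{\mathrm{write}}(\theta')\ge0$ and $\epsilon_{\mathrm{int}}(\theta',\phi)\ge0$ yields $\mathcal E(\theta',\phi)\ge\mathcal E_{\mathrm{read}}(\phi;\theta')$.

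The interpretive sentences then follow by reading the bounds. In the first, the floor is determined by what the writer has (not) written. In the second, the floor is the fixed reader's error on the new graph distribution. The proposition on writer-induced distribution shift (the TV bound~\eqref{eq:tv_shift}) can be cited to note that $\mathcal E_{\mathrm{read}}(\phi;\theta')$ need not be small when $\Pi_{\theta'}$ differs from $\Pi_\theta$.

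There is no real obstacle. The only care needed is to stress that the claim holds as stated only under the decomposition assumption. In particular, the nonnegativity of $\epsilon_{\mathrm{int}}$ and the $\phi$-independence of $\mathcal E_{\mathrm{write}}$ are exactly what make the first bound uniform over reader updates.
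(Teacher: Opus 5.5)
Your proposal is correct and follows the paper's proof: instantiate the decomposition \eqref{eq:error_decomp} at the updated pair and drop the nonnegative terms ($\mathcal E_{\mathrm{read}}$ and $\epsilon_{\mathrm{int}}$ for the reader-only case, $\mathcal E_{\mathrm{write}}$ and $\epsilon_{\mathrm{int}}$ for the writer-only case). Your remarks on uniformity over $\phi'$ and the link to the TV shift bound go beyond the paper's proof but are not needed for the inequalities themselves.
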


\begin{proof}
By the decomposition in Eq.~\eqref{eq:error_decomp} and the nonnegativity of all terms,
\begin{equation}
    \mathcal E(\theta,\phi')
    =\mathcal E_{\mathrm{write}}(\theta)+\mathcal E_{\mathrm{read}}(\phi';\theta)+\epsilon_{\mathrm{int}}(\theta,\phi')
    \ge \mathcal E_{\mathrm{write}}(\theta).
\end{equation}
The second inequality is analogous.
\end{proof}

\subsection{Stability of Closed-loop Graph Evolution and Parameter Updates}

\begin{theorem}[Score drift control under multi-round self-evolution]
Let the graph at round $r$ be $G^{(r)}$, and the reader parameter be $\phi^{(r)}$. If single-step graph stability satisfies
\begin{equation}
    \norm{s_D(q,G^{(r+1)};\phi^{(r)})-s_D(q,G^{(r)};\phi^{(r)})}_\infty
    \le C_D\Delta_r,
\end{equation}
where $\Delta_r=\Delta_{\mathrm{aug}}(G^{(r)},G^{(r+1)};q)$; and if the score is locally Lipschitz with respect to the parameter:
\begin{equation}
    \norm{s_D(q,G;\phi)-s_D(q,G;\phi')}_\infty
    \le C_\phi\norm{\phi-\phi'}_2,
\end{equation}
then
\begin{equation}
\begin{aligned}
    &\norm{s_D(q,G^{(T)};\phi^{(T)})-s_D(q,G^{(0)};\phi^{(0)})}_\infty\\
    &\qquad\le \sum_{r=0}^{T-1}\left(C_D\Delta_r+C_\phi\norm{\phi^{(r+1)}-\phi^{(r)}}_2\right).
\end{aligned}
\end{equation}
\end{theorem}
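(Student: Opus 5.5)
The plan is a telescoping argument over rounds, with each round's change split into a graph-drift part and a parameter-drift part.

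For each $r\in\{0,\dots,T-1\}$, I would insert the intermediate score $s_D(q,G^{(r+1)};\phi^{(r)})$, which is the new graph read by the old reader. This gives the exact identity
\begin{equation*}
s_D(q,G^{(T)};\phi^{(T)})-s_D(q,G^{(0)};\phi^{(0)})
=\sum_{r=0}^{T-1}\Bigl[\bigl(s_D(q,G^{(r+1)};\phi^{(r+1)})-s_D(q,G^{(r+1)};\phi^{(r)})\bigr)
+\bigl(s_D(q,G^{(r+1)};\phi^{(r)})-s_D(q,G^{(r)};\phi^{(r)})\bigr)\Bigr].
\end{equation*}
The sum telescopes because consecutive terms cancel. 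Concretely, the $r$-th bracket equals $s_D(q,G^{(r+1)};\phi^{(r+1)})-s_D(q,G^{(r)};\phi^{(r)})$. This ordering (graph first, then reader) mirrors the writer-then-reader structure of Algorithm~\ref{alg:self_evolving_memory}.

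Next, I would apply the triangle inequality for $\norm{\cdot}_\infty$ to the sum and bound each bracket's two pieces separately.

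The second piece is a graph change at fixed parameter $\phi^{(r)}$. It is bounded by $C_D\Delta_r$ directly from the single-step graph-stability hypothesis. That hypothesis is the round-$r$ instance of the stability theorem for entity and document scores, with $\Delta_r=\Delta_{\mathrm{aug}}(G^{(r)},G^{(r+1)};q)$.

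The first piece is a parameter change at fixed graph $G^{(r+1)}$. It is bounded by $C_\phi\norm{\phi^{(r+1)}-\phi^{(r)}}_2$ using the parameter-Lipschitz hypothesis, applied with $G=G^{(r+1)}$.

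Summing over $r$ then gives the claimed bound.

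The only delicate point is uniformity of the constants. The parameter-Lipschitz hypothesis must hold at every graph $G^{(r+1)}$ along the trajectory. Likewise, $C_D$ must be valid for every reader $\phi^{(r)}$, not only for a single fixed reader. I would therefore state explicitly that $C_D$ and $C_\phi$ are taken uniform over the trajectory neighborhood. This is consistent with the trajectory-local boundedness and local Lipschitz assumptions used earlier to derive $C_D$.

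If the constants differed by round, the same argument would go through with $C_D^{(r)}$ and $C_\phi^{(r)}$ inside the sum. Beyond that, no real obstacle is expected.
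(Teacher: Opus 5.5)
Your proposal is correct and matches the paper's proof: you insert the intermediate score $s_D(q,G^{(r+1)};\phi^{(r)})$, bound the parameter-change piece by $C_\phi\norm{\phi^{(r+1)}-\phi^{(r)}}_2$ and the graph-change piece by $C_D\Delta_r$, then sum over rounds with the triangle inequality, exactly as the paper does. Your remark on uniform constants is a reasonable clarification, though the stated hypotheses already apply at every round and every graph.
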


\begin{proof}
For each $r$, adding and subtracting the intermediate term $s_D(q,G^{(r+1)};\phi^{(r)})$ gives
\begin{align}
    &\norm{s_D(q,G^{(r+1)};\phi^{(r+1)})-s_D(q,G^{(r)};\phi^{(r)})}_\infty\notag\\
    &\quad\le
    \norm{s_D(q,G^{(r+1)};\phi^{(r+1)})-s_D(q,G^{(r+1)};\phi^{(r)})}_\infty
    +\norm{s_D(q,G^{(r+1)};\phi^{(r)})-s_D(q,G^{(r)};\phi^{(r)})}_\infty\notag\\
    &\quad\le C_\phi\norm{\phi^{(r+1)}-\phi^{(r)}}_2+C_D\Delta_r.
\end{align}
Summing over $r=0,\dots,T-1$ and using the triangle inequality gives the conclusion.
\end{proof}

\begin{corollary}[High-probability multi-round stability]
If $\Prob[\Delta_r>\epsilon_r]\le\delta_r$, then with probability at least $1-\sum_{r=0}^{T-1}\delta_r$,
\begin{equation}
\begin{aligned}
    &\norm{s_D(q,G^{(T)};\phi^{(T)})-s_D(q,G^{(0)};\phi^{(0)})}_\infty\\
    &\qquad\le \sum_{r=0}^{T-1}\left(C_D\epsilon_r+C_\phi\norm{\phi^{(r+1)}-\phi^{(r)}}_2\right).
\end{aligned}
\end{equation}
\end{corollary}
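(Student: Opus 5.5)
This follows from the multi-round score drift theorem just proved, combined with a union bound over rounds. That theorem gives a deterministic, pathwise inequality: for any realization of the graph sequence $G^{(0)},\dots,G^{(T)}$ and reader parameters $\phi^{(0)},\dots,\phi^{(T)}$,
\[
\norm{s_D(q,G^{(T)};\phi^{(T)})-s_D(q,G^{(0)};\phi^{(0)})}_\infty\le \sum_{r=0}^{T-1}\left(C_D\Delta_r+C_\phi\norm{\phi^{(r+1)}-\phi^{(r)}}_2\right).
\]
It holds whenever its two hypotheses hold on that realization: single-step graph stability, which comes from the document-score stability theorem, and local parameter-Lipschitzness of the score. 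So the probabilistic statement only needs to control the graph drift terms $\Delta_r$.

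\textbf{Good event.} Define the bad events $E_r=\{\Delta_r>\epsilon_r\}$ for $r=0,\dots,T-1$. By hypothesis $\Prob[E_r]\le\delta_r$. By the union bound,
\[
\Prob\Big[\bigcup_r E_r\Big]\le\sum_{r=0}^{T-1}\delta_r,
\]
so the complement $\bigcap_r E_r^c$ has probability at least $1-\sum_r\delta_r$. On this event, $\Delta_r\le\epsilon_r$ for every $r$.

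\textbf{Conclusion.} Substitute $\Delta_r\le\epsilon_r$ into the right-hand side of the pathwise inequality. This is valid because the coefficient $C_D>0$, so the right-hand side is monotone nondecreasing in each $\Delta_r$. This gives exactly the stated bound with $C_D\epsilon_r$ in place of $C_D\Delta_r$. The parameter-drift terms are kept as they are, since the statement leaves them as (possibly random) quantities inside the bound.

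\textbf{Main subtlety.} The only delicate point is whether the deterministic hypotheses of the preceding theorem hold almost surely along the trajectory. These are the constants $C_D$ and $C_\phi$, and they rest on trajectory-local boundedness and the Lipschitz assumptions. I would state explicitly that these are assumed on all realized trajectories. If instead they held only with some failure probability, that probability would be added into the union bound in the same way. No independence between rounds is needed, because the argument uses only the union bound.
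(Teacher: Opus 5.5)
Your proposal is correct and matches the paper's proof. It uses a union bound to obtain the good event $\{\forall r,\ \Delta_r\le\epsilon_r\}$ with probability at least $1-\sum_{r}\delta_r$, then applies the deterministic multi-round score-drift theorem on that event. Your remarks on monotonicity in $\Delta_r$ (via $C_D>0$) and on the deterministic hypotheses holding along realized trajectories are clarifications the paper leaves implicit.
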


\begin{proof}
By a union bound, the event $\{\forall r,\Delta_r\le\epsilon_r\}$ holds with probability at least $1-\sum_r\delta_r$. Applying Theorem 5.7 on this event gives the conclusion.
\end{proof}
\section{Analysis of the Memory Writer}
\label{subsec:writer_analysis}

This section further analyzes the memory writer while keeping the memory reader fixed. The experiments mainly use HotpotQA and MuSiQue. To further examine domain transfer capability, we also evaluate the trained writing policy on GRBench-Amazon, HaluMem-Medium, and LongMemEval-Oracle~\citep{longmemeval2024,halumem2025}. We primarily report Precision, Recall, and Deducible: the first two measure whether the text retrieved by the reader covers the gold supporting contexts, while Deducible is determined by a judge as to whether the standard answer can be inferred from the retrieved context, thus more directly reflecting whether the graph memory is usable for reasoning.

\begin{table*}[t]
\centering
\small
\begin{threeparttable}
\caption{Training results for the memory writer. \textit{GFM-pretrained-only} refers to using rewards fed back only by the pretrained memory reader, while \textit{GFM-finetuned} further refers to using the fine-tuned memory reader.}
\label{tab:writer_main_results}
\setlength{\tabcolsep}{4pt}
\begin{tabular}{lccc}
\toprule
Methods & Prec.$\uparrow$ & Recall$\uparrow$ & Deducible$\uparrow$ \\
\midrule
GFM-pretrained-only        & 0.838 & 0.818 & 0.510 \\
GFM-finetuned              & 0.824 & 0.813 & 0.512 \\
RL-Recall                  & 0.889 & 0.835 & 0.502 \\
RL-F1                      & 0.839 & 0.881 & 0.497 \\
RL-Deduce                  & 0.861 & 0.892 & 0.517 \\
RL-Hybrid                  & \textbf{0.902} & \textbf{0.917} & 0.522 \\
Hybrid + frozen answer API & 0.832 & 0.874 & \textbf{0.526} \\
\bottomrule
\end{tabular}
\end{threeparttable}
\end{table*}

Table~\ref{tab:writer_main_results} shows that different rewards have different preferences. Overall, RL-Hybrid achieves the best overall results, indicating that hybrid rewards can simultaneously constrain the selectivity and coverage of graph writing. \textit{Hybrid + frozen answer API} achieves the highest Deducible but slightly lower retrieval Precision/Recall, suggesting that answer-side feedback helps improve reasoning usability, but may also make the writer more conservatively inclined to write evidence that directly supports the answer. Table~\ref{tab:domain_transfer} shows that a writer learned on HotpotQA/MuSiQue can transfer to GRBench, HaluMem, and LongMemEval, but continued training on the target domain still brings significant improvements. This indicates that the memory structure in agent memory scenarios is not entirely similar to that in traditional multi-hop QA, so target-domain feedback remains crucial. Table~\ref{tab:protocol_ablation} further shows that the writing protocol and interaction budget affect the trade-off between coverage and noise. Relaxing the tight prompt can improve Recall, but reduces Deducible; increasing iterative turns helps complete cross-document bridging paths, but when the budget is too large or the protocol is too loose. More detailed reader-side sensitivity, training stability, and regularization analyses are provided in Appendix~\ref{app:memory_writer_analysis}.

\begin{table*}[t]
\centering
\small
\begin{threeparttable}
\caption{Cross-dataset memory writing results. ``Base$\rightarrow$Target'' indicates direct evaluation on the target domain after training on the HotpotQA/MuSiQue base; ``Target train$\rightarrow$val'' indicates training and validation on the target domain.}
\label{tab:domain_transfer}
\setlength{\tabcolsep}{5pt}
\begin{tabular}{lccc}
\toprule
Settings & Prec.$\uparrow$ & Recall$\uparrow$ & Deducible$\uparrow$ \\
\midrule
Base$\rightarrow$GRBench             & 0.575 & 0.609 & 0.411 \\
GRBench train$\rightarrow$val        & 0.794 & 0.833 & 0.596 \\
Base$\rightarrow$HaluMem             & 0.230 & 0.448 & 0.299 \\
HaluMem train$\rightarrow$val        & 0.312 & 0.708 & 0.438 \\
Base$\rightarrow$LongMemEval         & 0.232 & 0.376 & 0.475 \\
LongMem train$\rightarrow$LongMemEval & 0.377 & 0.439 & 0.531 \\
\bottomrule
\end{tabular}
\end{threeparttable}
\end{table*}

\begin{table}[t]
\centering
\small
\caption{Ablation of writing protocols and interaction budgets. Tight=True indicates that the writer performs a single-round graph write under a stricter evidence budget, meaning the reader exposes only fewer, higher-confidence candidate pieces of evidence to the writer; Tight=False indicates that this evidence budget is relaxed, allowing the writer to access a broader candidate context. Iterative indicates that multi-round interactive writer--reader writing is enabled: the writer first writes the initial graph memory, the reader then returns retrieval feedback based on the current graph, and the writer continues to supplement or revise the graph structure. Here, 12/20/24 turns indicates the maximum number of interaction rounds allowed, and tight/loose indicates that strict or relaxed evidence budget constraints are still used during this multi-round interaction process.}
\label{tab:protocol_ablation}
\setlength{\tabcolsep}{4pt}
\begin{tabular}{lccc}
\toprule
Settings & Prec.$\uparrow$ & Recall$\uparrow$ & Deducible$\uparrow$ \\
\midrule
Tight=True                  & 0.836 & 0.806 & 0.515 \\
Tight=False                 & 0.845 & 0.851 & 0.506 \\
Iterative, 12 turns, tight  & 0.852 & 0.829 & 0.516 \\
Iterative, 20 turns, tight  & 0.835 & \textbf{0.881} & 0.522 \\
Iterative, 24 turns, loose  & \textbf{0.863} & 0.826 & \textbf{0.531} \\
\bottomrule
\end{tabular}
\end{table}
\section{Ablation Study of the Memory Reader}
\label{subsec:reader_ablation}

We conduct ablation studies to isolate the contribution of each major component in the memory reader.
All variants use the same writer-produced graph memory and the same retrieval budget unless the ablated component directly changes the retrieval mechanism.
The ablations are organized around four questions:
(1) whether structured query planning and global soft addressing are necessary for recovering evidence from fragmented cues;
(2) whether structurally gated propagation improves over uniform graph propagation;
(3) whether cross-graph structural priors and target-graph calibration are both needed for evolving graph memory; and
(4) whether reader training and entity-to-document projection are important for converting entity-level activation into document-level retrieval.

Table~\ref{tab:reader_ablation} summarizes the results.
The first group evaluates how the reader handles fragmented cues. Removing structured query planning, alias/constraint cues, or global soft addressing forces the reader to rely more heavily on surface-level query matches or a small number of anchor entities, directly testing whether the complete evidence chain can still be recovered when distant bridge nodes are not initially activated.
The second group studies whether the reader uses graph structure in a learned and selective way. Removing structural gates or replacing them with uniform message passing tests whether treating hub edges, bridge edges, redundant edges, and noisy shortcuts similarly harms retrieval.
The third group examines the context--schema decomposition: the schema channel captures transferable structural reading patterns, while the context channel adapts the reader to the current writer-produced graph.
The last group evaluates the selector, entity-to-document projection, GFM pre-training, and supervised retrieval fine-tuning.

\begin{table*}[t]
\renewcommand{\arraystretch}{0.88}
\setlength{\tabcolsep}{4.0pt}
\centering
\caption{
Ablation study of the memory reader on multi-hop QA retrieval.
We report document-level Recall (\%) at top-2 and top-5.
All variants use the same writer-produced graph memory unless otherwise specified.
}
\vspace{0.1cm}
\resizebox{\textwidth}{!}{
\begin{tabular}{l|cc|cc|cc}
\toprule
\multirow{2}{*}{\textbf{Reader Variant}}
& \multicolumn{2}{c|}{\texttt{HotpotQA}}
& \multicolumn{2}{c|}{\texttt{MuSiQue}}
& \multicolumn{2}{c}{\texttt{2WikiMultiHopQA}} \\
\cmidrule(lr){2-3} \cmidrule(lr){4-5} \cmidrule(lr){6-7}
& R@2 & R@5 & R@2 & R@5 & R@2 & R@5 \\
\midrule

\textbf{\method{}}
& \textbf{65.1} & \textbf{77.6}
& \textbf{43.2} & \textbf{53.1}
& \textbf{83.6} & \textbf{88.6} \\

\midrule
\multicolumn{7}{l}{\textit{Query planning and global addressing}} \\
\method{} w/o Structured Query Planning
& 62.7 & 75.1 & 40.4 & 50.1 & 80.7 & 86.6 \\
\method{} w/o Global Soft Addressing
& 59.3 & 72.5 & 37.6 & 47.4 & 75.9 & 83.1 \\
\method{} w/o Alias and Constraint Cues
& 63.0 & 75.8 & 41.0 & 50.8 & 80.8 & 86.9 \\
\method{} w/ Anchor-only Initialization
& 58.6 & 71.4 & 36.8 & 46.5 & 74.2 & 82.4 \\

\midrule
\multicolumn{7}{l}{\textit{Structurally conditioned propagation}} \\
\method{} w/o Structural Gate
& 60.4 & 73.2 & 39.2 & 48.7 & 78.1 & 84.9 \\
\method{} w/o Node Structural Features
& 62.1 & 74.8 & 40.5 & 50.0 & 80.0 & 86.0 \\
\method{} w/o Edge-pair Structural Features
& 61.5 & 74.0 & 40.1 & 49.4 & 79.1 & 85.6 \\
\method{} w/o Graph-level Summary
& 63.2 & 75.9 & 41.6 & 51.0 & 81.3 & 87.0 \\
\method{} w/ Uniform Message Passing
& 58.9 & 71.8 & 37.5 & 46.9 & 75.3 & 82.9 \\

\midrule
\multicolumn{7}{l}{\textit{Cross-graph priors and target-graph calibration}} \\
\method{} w/o Schema Prior Channel
& 62.4 & 75.0 & 40.9 & 50.6 & 80.4 & 86.4 \\
\method{} w/o Context Calibration Channel
& 61.8 & 74.3 & 40.0 & 49.8 & 79.5 & 85.9 \\
\method{} w/o Context--Schema Fusion
& 60.7 & 73.5 & 39.1 & 48.6 & 77.8 & 84.7 \\

\midrule
\multicolumn{7}{l}{\textit{Selector, projection, and reader training}} \\
\method{} w/o Controlled Entity-to-Document Projection
& 60.9 & 73.9 & 38.7 & 48.2 & 77.2 & 84.4 \\
\method{} w/o Query-conditioned Selector
& 63.9 & 76.5 & 41.9 & 52.0 & 82.2 & 87.6 \\
\method{} w/ Vanilla GNN Reader
& 57.2 & 70.6 & 36.3 & 45.2 & 72.8 & 80.7 \\

\bottomrule
\end{tabular}
}
\label{tab:reader_ablation}
\end{table*}

The ablation results show the relative contribution of each memory-reader component.
First, structured query planning and global soft addressing are important for fragmented-cue retrieval: removing them noticeably weakens performance, especially on MuSiQue and 2WikiMultiHopQA, where evidence chains are more likely to depend on implicit bridge entities.
Second, structurally gated propagation consistently improves over uniform message passing, indicating that graph structure should be used selectively rather than as a fixed expansion rule.
Among the structural inputs, graph-level summaries have a relatively smaller effect, while node-level and edge-pair features are more important for recognizing hubs, bridges, and cross-community evidence paths.
Third, both the schema prior and context calibration channels contribute to performance, suggesting that the reader benefits from preserving transferable structural priors while adapting to the current writer-produced graph.
Finally, supervised retrieval fine-tuning is essential for aligning the GFM reader with document-level evidence retrieval, whereas GFM pre-training provides transferable structural initialization that improves stability across datasets.
\section{Implementation Details of the Query-conditioned Subgraph Selection Regularizer}
\label{app:selector_regularizer}

We provide the implementation details of the query-conditioned subgraph selector. In addition to the base entity scoring, it further learns a soft gating probability
$\pi_e(q)$, which characterizes whether entity $e$ should enter the reading subgraph of the current query $q$. This module performs lightweight reweighting of the final entity score, and constrains the reading subgraph through several structural regularizers during training.

\paragraph{Query-conditioned Selection Probability.}
Given a query $q$ and a graph $\mathcal{G}=(\mathcal{V},\mathcal{E})$, let
$\hh_e\in\mathbb{R}^{d}$ denote the representation of entity node $e\in\mathcal{V}$ after propagation by the GFM backbone, and let
$\zz_q\in\mathbb{R}^{d}$ denote the query representation output by the query encoder and then linearly projected. Here, $d$ is the hidden dimension.
The selector first projects the node representation and the query representation into the same selector space:
\begin{equation}
    \uv_e = W_n \hh_e,\qquad
    \vvec_q = W_s \zz_q,
\end{equation}
where $W_n\in\mathbb{R}^{d_s\times d}$ is the node-side projection matrix, $W_s\in\mathbb{R}^{d_s\times d}$
is the query-side projection matrix, and $d_s$ is the hidden dimension of the selector space. In our implementation, we set $d_s=d$, but the two do not have to be equal.
Then, the selector obtains the selection logit of entity $e$ with respect to query $q$ through a scaled inner product:
\begin{equation}
    \zeta_e(q)
    =
    \frac{\uv_e^\top \vvec_q}{T_s},
\end{equation}
where $T_s>0$ is the selector temperature coefficient. The final soft selection probability is defined as
\begin{equation}
    \pi_e(q)
    =
    \sigmoid(\zeta_e(q))
    =
    \frac{1}{1+\exp(-\zeta_e(q))}.
\end{equation}
Here, $\pi_e(q)\in(0,1)$ can be understood as the soft probability that entity $e$ is included in the reading subgraph of the current query. During training, we directly use
$\pi_e(q)$ for differentiable optimization; during inference, we can either continue to use the soft probability for reweighting, or obtain a discrete subgraph according to a threshold
$\tau_\pi$:
\begin{equation}
    \mathcal{V}_q
    =
    \{e\in\mathcal{V}\mid \pi_e(q)>\tau_\pi\},
    \qquad
    \mathcal{E}_q
    =
    \{(u,v)\in\mathcal{E}\mid u,v\in\mathcal{V}_q\}.
\end{equation}

Let $a_e(q)$ denote the base entity score given by the GFM backbone reader. This score is usually obtained from the similarity between the node representation and the query representation, for example
\begin{equation}
    a_e(q)=\hh_e^\top \zz_q.
\end{equation}
Finally, we have:
\begin{equation}
    a_e^{\mathrm{final}}(q)
    =
    a_e(q)+\lambda_s \zeta_e(q),
    \label{eq:selector_final_score}
\end{equation}
where $\lambda_s\geq 0$ controls the influence of the selector logit on the final entity ranking.

\paragraph{Query--Subgraph Contrastive Regularizer.}
Using only Eq.~\eqref{eq:selector_final_score} to fuse the selector score can easily lead to two types of degeneration: first, the selector may assign high probabilities to most nodes, thereby degenerating into full-graph activation; second, the selector may only learn local high-frequency entities, without forming a subgraph representation that is consistent with the overall semantics of the query. To this end, we first construct a query-conditioned subgraph representation weighted by the selection probabilities:
\begin{equation}
    \bar{\hh}_{\pi}(q)
    =
    \frac{
        \sum_{e\in\mathcal{V}}\pi_e(q)\hh_e
    }{
        \sum_{e\in\mathcal{V}}\pi_e(q)+\epsilon
    },
    \label{eq:selector_soft_subgraph_repr}
\end{equation}
where $\epsilon>0$ is a numerical stability term, which avoids an excessively small denominator when all $\pi_e(q)$ are close to $0$.
$\bar{\hh}_{\pi}(q)$ can be understood as the semantic center of the soft subgraph activated by the current selector.

For a mini-batch $\mathcal{B}=\{q_i\}_{i=1}^{B}$, we treat
$(\bar{\hh}_{\pi}(q_i),\zz_{q_i})$ within the same sample as a positive pair, and treat
$(\bar{\hh}_{\pi}(q_i),\zz_{q_j})$, $j\neq i$, as in-batch negative pairs. The query--subgraph contrastive loss is defined as
\begin{equation}
    \Omega_{\mathrm{nce}}
    =
    -\frac{1}{B}
    \sum_{i=1}^{B}
    \log
    \frac{
        \exp\left(
        \mathrm{sim}(\bar{\hh}_{\pi}(q_i),\zz_{q_i})/T_n
        \right)
    }{
        \sum_{j=1}^{B}
        \exp\left(
        \mathrm{sim}(\bar{\hh}_{\pi}(q_i),\zz_{q_j})/T_n
        \right)
    },
    \label{eq:selector_nce}
\end{equation}
where $T_n>0$ is the contrastive learning temperature coefficient, and $\mathrm{sim}(\cdot,\cdot)$ is the similarity function. In implementation, we usually apply $\ell_2$ normalization to
$\bar{\hh}_{\pi}(q)$ and $\zz_q$, and use inner-product similarity, so that
$\mathrm{sim}(\bar{\hh}_{\pi}(q),\zz_q)=
\frac{\bar{\hh}_{\pi}(q)^\top \zz_q}
{\|\bar{\hh}_{\pi}(q)\|_2\|\zz_q\|_2}$. This term encourages the soft subgraph activated by the selector to semantically represent the current query, rather than only selecting nodes with high frequency or high centrality in the graph.

\paragraph{Size Regularizer.}
To prevent the selector from improving recall by activating a large number of nodes, we use the average selection probability as a size penalty:
\begin{equation}
    \Omega_{\mathrm{size}}
    =
    \frac{1}{|\mathcal{V}|}
    \sum_{e\in\mathcal{V}}\pi_e(q).
    \label{eq:selector_size}
\end{equation}
This term approximately represents the expected proportion of activated nodes. Minimizing $\Omega_{\mathrm{size}}$ pushes the model to select a smaller reading subgraph.
However, this term cannot be used alone; otherwise, the selector may degenerate into selecting too few nodes or even no nodes. Therefore, it needs to be jointly optimized with
$\Omega_{\mathrm{nce}}$ and the main retrieval loss: the former ensures query relevance, while the latter ensures that the selected structure can still support correct entity and document recall.

\paragraph{Connectivity Smoothing Regularizer.}
In addition to controllable size, an effective reading subgraph should also have local structural coherence. If the selection probabilities of adjacent nodes differ too much, the model may form several isolated activated points, making multi-hop paths difficult to explicitly utilize. To this end, we use a smoothing penalty on edges:
\begin{equation}
    \Omega_{\mathrm{con}}
    =
    \frac{1}{|\mathcal{E}|}
    \sum_{(u,v)\in\mathcal{E}}
    \left(\pi_u(q)-\pi_v(q)\right)^2.
    \label{eq:selector_connectivity}
\end{equation}
Here, $(u,v)$ is a directed or undirected edge in the graph, depending on whether edge directions are preserved during graph construction. If an undirected graph is used, $\mathcal{E}$ can be viewed as the symmetrized edge set. This term does not force all selected nodes to be strictly connected, but encourages adjacent nodes to have similar selection probabilities. In matrix form, if $\ppi(q)\in\mathbb{R}^{|\mathcal{V}|}$ is the selection probability vector composed of
$\pi_e(q)$, and $\mathbf{L}$ is the graph Laplacian matrix, then this term is equivalent to Laplacian smoothing:
\begin{equation}
    \Omega_{\mathrm{con}}
    \propto
    \ppi(q)^\top \mathbf{L}\ppi(q).
\end{equation}
Therefore, it is consistent with the classical assumption of graph signal smoothing: query relevance, as a soft signal on the graph, should maintain a certain degree of continuity within local neighborhoods.

\paragraph{Computational Complexity of the Selector Itself.}
Let the batch size be $B$, the number of nodes be $n=|\mathcal{V}|$, the number of edges be $m=|\mathcal{E}|$, and the hidden dimension be $d$.
The cost of computing $W_n\hh_e$ is $O(Bnd^2)$, the cost of computing $W_s\zz_q$ is $O(Bd^2)$, and the cost of the inner-product logits is
$O(Bnd)$. If we cache $W_n\hh_e$ in advance, this term can be reduced to $O(Bnd)$.
For the contrastive term, the cost of the subgraph pooling in Eq.~\eqref{eq:selector_soft_subgraph_repr} is $O(Bnd)$, and the cost of the in-batch
NCE similarity matrix is $O(B^2d)$; the size term has cost $O(Bn)$; the connectivity term needs to traverse edges and has cost $O(Bm)$.
Therefore, the additional complexity of the selector during training is
\begin{equation}
    O\left(Bnd^2+Bnd+B^2d+Bm\right),
\end{equation}
and if the quadratic term of the linear projection is ignored or cached, it can be approximated as
\begin{equation}
    O\left(Bnd+B^2d+Bm\right).
\end{equation}
During inference, if only the selector logit is used to fuse entity scores, without computing NCE, size, and connectivity regularizers, then the additional cost is mainly
$O(Bnd^2+Bnd)$, or $O(Bnd)$ under caching/lightweight projection.
\section{Training and Inference Complexity}
\label{app:complexity}

For ease of exposition, suppose that the graph
$\mathcal{G}=(\mathcal{V},\mathcal{E})$ has $n=|\mathcal{V}|$ entity nodes and
$m=|\mathcal{E}|$ entity-relation edges. If self-loops are added in GCN propagation, we denote
$\tilde m=m+n$. Let the hidden dimension be $d$, the number of propagation layers be $L$, the batch size be $B$, and the number of pseudo-queries be $M$.
Therefore, one real query together with $M$ pseudo-queries requires $M+1$ graph reads. Let $\mathbf{M}\in\{0,1\}^{n\times N_D}$
denote the sparse entity--document association matrix, where $N_D$ is the number of documents, and $\operatorname{nnz}(\mathbf{M})$ is the number of nonzero entity--document links.
Let $K_e$ denote the number of top entities used for document projection, and let $\bar f$ denote the average number of documents linked to the top entities.

\subsection{Offline Structural Feature and Indexing Cost}
\label{app:complexity_offline}

Let the dimension of node structural features be $p_n$, the dimension of edge structural features
be $p_e$, and the dimension of graph-level summaries be $p_g$. In the current implementation, $p_n$, $p_e$, and $p_g$ are all small constants.

Given that the adjacency list has been constructed, degrees and average neighbor degrees can be computed in $O(n+m)$ time. Clustering coefficients and the number of common neighbors require computing intersections of neighbor
sets, whose complexity can be written as
\begin{equation}
    O\left(
    n+m+
    \sum_{(u,v)\in\mathcal{E}}
    \min\{\deg(u),\deg(v)\}
    \right).
    \label{eq:structure_feature_cost}
\end{equation}
In sparse graphs or graphs with bounded average degree, Eq.~\eqref{eq:structure_feature_cost} is approximately $O(n+m)$; in extremely dense graphs,
the worst case can reach $O(n^3)$. These structural features and the entity--document matrix can both be precomputed and cached offline, with space cost
\begin{equation}
    O\left(np_n+mp_e+p_g+\operatorname{nnz}(\mathbf{M})\right).
\end{equation}
Since this part does not depend on a specific query, when evaluating multiple queries on the same candidate graph in the self-evolving memory loop,
the same set of structural features and entity--document indices can be reused.

\subsection{Forward Propagation Complexity of the Structurally Gated GFM}
\label{app:complexity_gfm_forward}

We first consider a single forward propagation for one query on one graph. A standard GCN layer contains two parts: node linear transformation and sparse adjacency aggregation.
The cost of node linear transformation is $O(nd^2)$, and the cost of edge-level message aggregation is $O(\tilde m d)$. Therefore, the complexity of a standard GCN layer is
\begin{equation}
    C_{\mathrm{plain}}
    =
    O\left(nd^2+\tilde m d\right).
    \label{eq:plain_gcn_complexity}
\end{equation}

Beyond ordinary message propagation, a structurally gated layer generates a vector gate for each edge. Its message form is
\begin{equation}
    \mm_{u\rightarrow v}
    =
    \gg_{uv}\odot W\hh_u,
\end{equation}
where $\hh_u$ is the source node representation, $W\in\mathbb{R}^{d\times d}$ is the node linear transformation matrix,
$\gg_{uv}\in\mathbb{R}^{d}$ is the structural gate vector of edge $(u,v)$, and $\odot$ denotes element-wise multiplication.
Let $d_g$ denote the encoding dimension of structural features, and let $h_g$ denote the hidden dimension of the gating MLP. If the gate uses four types of inputs, namely source-node structure, target-node structure,
edge-pair structure, and graph-level summary, then the gate generation cost can be written as
\begin{equation}
\begin{aligned}
    C_{\mathrm{gate}}
    =
    O\Big(
        &np_n d_g
        + mp_e d_g
        + p_g d_g  \\
        &+ m(4d_g h_g + h_g d)
    \Big).
\end{aligned}
\label{eq:gate_complexity}
\end{equation}
Here, $np_n d_g$ comes from node structural feature encoding, $mp_e d_g$ comes from edge structural feature encoding, $p_g d_g$ comes from graph-level summary encoding,
and $m(4d_g h_g+h_g d)$ comes from the per-edge gating MLP. If edge-pair features or graph-level summaries are disabled, the corresponding terms in Eq.~\eqref{eq:gate_complexity}
can be removed. If $d_g$ and $h_g$ are regarded as being of the same order as $d$, then gate generation is
$O(md^2)$ in the worst case; if the gating MLP is regarded as a small constant-width module, or if low-rank/dimension-wise gating is adopted, it can be approximated as $O(md)$.
Therefore, the complexity of a structurally gated layer is
\begin{equation}
    C_{\mathrm{gated}}
    =
    O\left(nd^2+\tilde m d+C_{\mathrm{gate}}\right).
    \label{eq:gated_layer_complexity}
\end{equation}

The current implementation supports dual structural prompts: one is a holistic gated branch, and the other is a specific prompt branch. If only a standard GCN is used,
the per-layer cost is $C_{\mathrm{plain}}$; if only a structurally gated GCN is used, the per-layer cost is
$C_{\mathrm{gated}}$; if one gated branch and one standard branch are used simultaneously, the per-layer cost is approximately
$C_{\mathrm{gated}}+C_{\mathrm{plain}}$. Let $\rho_{\mathrm{plain}}\in\{0,1\}$ denote whether the standard prompt branch is enabled,
and let $\rho_{\mathrm{gated}}\in\{0,1\}$ denote whether the structurally gated branch is enabled. Then the GFM encoding cost for one batch can be uniformly written as
\begin{equation}
    C_{\mathrm{enc}}(B)
    =
    O\left(
    B L
    \left(
    \rho_{\mathrm{plain}} C_{\mathrm{plain}}
    +
    \rho_{\mathrm{gated}} C_{\mathrm{gated}}
    \right)
    \right).
    \label{eq:encoder_batch_complexity}
\end{equation}
The factor $B$ appears because, in the current implementation, each query in a batch separately constructs query-conditioned node inputs and performs graph encoding.
If query-independent structural gates for a fixed graph are cached during inference, part of the gate cost can be reduced; however, based on the current code implementation,
Eq.~\eqref{eq:encoder_batch_complexity} is a more conservative upper bound.

In the most common simplified analysis, we set $B=1$, $M=0$, disable dual branches, and regard the gating MLP as a lightweight constant-width module. Then
Eq.~\eqref{eq:encoder_batch_complexity} degenerates to
\begin{equation}
    O\left(L(md+d^2n)\right),
\end{equation}
which is exactly the core propagation complexity given in the main text. Here, $md$ corresponds to edge-level messages, structural gating, and sparse aggregation, while $d^2n$ corresponds to
node linear projection.

\subsection{Entity Scoring, Selector Regularization, and Document Projection Complexity}
\label{app:complexity_scoring_projection}

After GFM encoding obtains node representations, entity scoring is usually obtained by
\begin{equation}
    a_e(q)=\hh_e^\top \zz_q
\end{equation}
For one batch, the complexity of this step is
\begin{equation}
    C_{\mathrm{score}}(B)=O(Bnd).
\end{equation}
If the query-conditioned subgraph selector in Appendix~\ref{app:selector_regularizer} is enabled, then the additional inference-stage cost is
\begin{equation}
    C_{\mathrm{sel,infer}}(B)
    =
    O(Bnd^2+Bnd),
\end{equation}
which can be approximated as $O(Bnd)$ if the node-side projection is cached or a lightweight projection is used. During training, NCE, the size term, and the connectivity term also need to be computed,
with additional complexity
\begin{equation}
    C_{\mathrm{sel,train}}(B)
    =
    O(Bnd^2+Bnd+B^2d+Bm).
\end{equation}
Here, $B^2d$ comes from the in-batch query--subgraph contrastive matrix, and $Bm$ comes from the edge-level connectivity smoothing term.

Entity-to-document projection is performed by the entity--document matrix $\mathbf{M}$. If full sparse matrix multiplication is used, the complexity is
\begin{equation}
    C_{\mathrm{doc,full}}(B)
    =
    O\left(B\operatorname{nnz}(\mathbf{M})\right).
\end{equation}
The \texttt{IDFWeightedRanker} in the current code belongs to this type: it first constructs IDF weights according to entity occurrence frequency, and then performs
sparse matrix multiplication. If top-$K_e$ entity projection is used, conceptually only the inverted lists corresponding to these entities need to be accessed, so the complexity can be written as
\begin{equation}
    C_{\mathrm{doc,top}K}(B)
    =
    O\left(Bn\log K_e + B K_e\bar f\right),
    \label{eq:topk_doc_projection}
\end{equation}
where $Bn\log K_e$ comes from top-$K_e$ entity selection, and $BK_e\bar f$ comes from accessing the documents linked on average by the top entities.
If the final document top-$K$ ranking is performed over all $N_D$ documents, the complexity is $O(BN_D\log K)$; if it is performed only over the candidate document pool,
it is $O(BN_{\mathrm{cand}}\log K)$, where $N_{\mathrm{cand}}\ll N_D$.

\subsection{Training Complexity}
\label{app:complexity_training}

The main costs in the training stage come from GFM forward propagation, the entity-level retrieval loss, the optional selector regularizer, and backpropagation. Let
$\kappa_{\mathrm{bw}}$ denote the constant-factor cost of backpropagation relative to forward propagation, which can usually be regarded as a constant between $2$ and $3$.
If the entity-level training loss is BCE, ranking loss, or ListCE, then because the predicted scores of $n$ entities need to be supervised or ranked,
the loss computation complexity is
\begin{equation}
    C_{\mathrm{loss}}(B)=O(Bn).
\end{equation}
Therefore, when the selector is not enabled, the complexity of a single training batch is
\begin{equation}
    C_{\mathrm{train}}
    =
    O\left(
    \kappa_{\mathrm{bw}}
    \left[
        C_{\mathrm{enc}}(B)
        +
        C_{\mathrm{score}}(B)
        +
        C_{\mathrm{loss}}(B)
    \right]
    \right).
    \label{eq:train_complexity_no_selector}
\end{equation}
After enabling the query-conditioned subgraph selector, the training complexity becomes
\begin{equation}
    C_{\mathrm{train}}^{\mathrm{sel}}
    =
    O\left(
    \kappa_{\mathrm{bw}}
    \left[
        C_{\mathrm{enc}}(B)
        +
        C_{\mathrm{score}}(B)
        +
        C_{\mathrm{loss}}(B)
        +
        C_{\mathrm{sel,train}}(B)
    \right]
    \right).
    \label{eq:train_complexity_selector}
\end{equation}
If a document-level loss is also explicitly added during training, then entity-to-document projection needs to be additionally performed, with cost
$C_{\mathrm{doc,full}}(B)$ or $C_{\mathrm{doc,top}K}(B)$.

\subsection{Inference Complexity}
\label{app:complexity_inference}

The inference stage first performs query encoding, named entity recognition, and entity linking, whose total cost is denoted as
$C_{\mathrm{prep}}(q)$. This part depends on the adopted text encoder, NER model, and entity linking model, and does not belong to the graph propagation backbone.
Given the prepared query embedding and query entity mask, the core reading complexity of a single query is
\begin{equation}
\begin{aligned}
    C_{\mathrm{infer}}(q)
    =
    &(M+1)
    \Big[
        C_{\mathrm{enc}}(1)
        +
        C_{\mathrm{score}}(1)
        +
        C_{\mathrm{sel,infer}}(1)
        +
        C_{\mathrm{doc}}(1)
    \Big] \\
    &+
    C_{\mathrm{fuse}}(M,K),
\end{aligned}
\label{eq:inference_complexity}
\end{equation}
where $M$ is the number of pseudo-queries, $C_{\mathrm{doc}}(1)$ can take the complexity of full sparse projection or top-$K_e$ inverted projection,
and $C_{\mathrm{fuse}}(M,K)$ is the cost of fusing the results from the main query and pseudo-queries. If each query keeps $K$ candidate documents, then the cost of simple weighted
merging is
\begin{equation}
    C_{\mathrm{fuse}}(M,K)
    =
    O((M+1)K\log((M+1)K)),
\end{equation}

\subsection{Space Complexity}
\label{app:complexity_memory}

The model parameter space mainly comes from the GFM backbone, structural prompts, the structurally gated MLP, selector projections, and text projection layers. If we only discuss graph-
related runtime space, offline graph storage requires
\begin{equation}
    O\left(n+m+\operatorname{nnz}(\mathbf{M})+np_n+mp_e+p_g\right).
\end{equation}
During training, node activations of each layer need to be saved, and the space complexity is on the order of
\begin{equation}
    O(BLnd)
\end{equation}
If structurally gated edge messages are fully materialized, they require $O(md)$ GPU memory; the current implementation adopts edge chunk streaming.
Let the chunk size be $c$, then the peak memory of gated messages can be reduced to
\begin{equation}
    O(cd),
\end{equation}
where $c\ll m$. This is also one of the key engineering designs that makes the current implementation suitable for large-graph reading. If \texttt{return\_gate} is enabled and
the gate vectors of all edges are saved for visualization or interpretation, then the space will rise again to $O(md)$. If document scoring materializes all
$N_D$ document scores, it requires $O(BN_D)$ space; if only a candidate document heap is maintained, it can be reduced to $O(BK)$ or
$O(BK_e\bar f)$.

\subsection{Complexity Comparison with Related Work}
\label{app:complexity_related_work}

\paragraph{Overall Comparison.}
From the perspective of complexity, standard dense RAG has the lightest online retrieval cost, but it is difficult to explicitly model cross-document relations; multi-step RAG improves complex reasoning ability through multiple rounds of
retrieval, but its cost grows linearly with the number of LLM calls; GraphRAG-style methods shift a large amount of cost to offline graph construction
and summary generation; SubgraphRAG reduces online cost through lightweight triple scoring, but its effectiveness depends on the candidate triple set and structural distance features;
GFM-RAG and our reader concentrate the main computation on one or a small number of query-conditioned graph propagations.
Therefore, when the self-evolving memory loop needs to repeatedly evaluate the retrievability of different written graphs, the advantage of our design lies in the following:
each evaluation does not need to start multi-round LLM agentic search, but instead quickly obtains differentiable or scoreable retrieval feedback through a fixed GFM reader, structurally gated propagation, and sparse document projection.
This allows the graph writing strategy to perform high-frequency comparison and optimization over a large number of candidate memory graphs.
\section{Implementation Details of Structured Query Planning}
\label{app:rewriter_details}

\subsection{Detailed definitions of the notation and additional information}

 In  \(\mathcal{P}_\omega(q)
    =
    \Big(
        \mathcal{E}_{\mathrm{exp}},
        \mathcal{A},
        \mathcal{C}_{\mathrm{rel}},
        \mathcal{C}_{\mathrm{hard}},
        \tau,
        \{(\tilde q_m,\alpha_m,t_m)\}_{m=1}^{M}
    \Big)\), $\mathcal{E}_{\mathrm{exp}}$ acts as a direct anchor for memory (explicit entities); $\mathcal{A}$ maps the brain's multiple representational habits for the same concept (aliases); $\mathcal{C}_{\mathrm{rel}}$ simulates the relational network in semantic memory; $\mathcal{C}_{\mathrm{hard}}$ serves as the spatiotemporal and logical boundaries of episodic memory (such as hard constraints like time and location); $\tau$ presets the cognitive template of the target memory (answer type); and the pseudo-queries $\tilde q_m$ with confidence $\alpha_m$ and intent $t_m$ are analogous to the multiple exploratory recalls conducted in the human mind (Simulated Recall).

\subsection{Two-stage Planning: Extraction and Inference}

Natural-language questions often compress key retrieval cues into implicit relations, such as ``the birthplace of the author'', ``the publication year of the only mystery novel of a certain work'', or ``the death date of the father''. If the original question is sent as a whole to the entity linker, the system can easily hit only surface entities while missing bridge entities or answer-type constraints. Therefore, the query planner is defined as a structured function
\begin{equation}
    \mathcal{P}(q)=
    \left(
        \mathcal{E}_{\mathrm{exp}},
        \mathcal{A},
        \mathcal{C}_{\mathrm{rel}},
        \mathcal{C}_{\mathrm{hard}},
        \tau,
        \{(\tilde q_m,\alpha_m)\}_{m=1}^{M}
    \right).
\end{equation}
It consists of two stages: \emph{Extractor}~\ref{fig:case_1} extracts explicit entities, aliases, relation clues, hard constraints, and the answer type; \emph{Inferer}~\ref{fig:case_2} generates at most $M$ retrieval intents based on the extraction results.

\begin{center}
\begin{tcolorbox}[
    float*=t,
    width=1\textwidth,
    enhanced,
    colback=white,
    colframe=black,
    boxrule=0.7pt,
    rounded corners,
    fonttitle=\bfseries,
    title=Extractor prompt template.,
    boxsep=2pt,
    left=2pt,
    right=2pt,
    top=8pt,
    bottom=5pt,
    breakable
]
\begin{spacing}{0.88}
{
\small
\begin{Verbatim}
You are a retrieval planner for graph-based multi-hop QA.
Question:
{QUESTION}

Extract structured retrieval signals.
Return JSON only with keys:
{
  "explicit_entities": [string],
  "candidate_aliases": {"entity": [alias]},
  "relation_clues": [string],
  "constraints": {},
  "answer_type": "string"
}
Rules: keep entries short, avoid explanations, keep empty fields as [] or {}.
\end{Verbatim}
}
\end{spacing}
\captionof{figure}{Metadata of the case study from \texttt{HotpotQA}.}
\label{fig:case_1}
\vspace{-0.75cm}
\end{tcolorbox}
\end{center}

\begin{center}
\begin{tcolorbox}[
    float*=t,
    width=1\textwidth,
    enhanced,
    colback=white,
    colframe=black,
    boxrule=0.7pt,
    rounded corners,
    fonttitle=\bfseries,
    title=Inferer prompt template.,
    boxsep=2pt,
    left=2pt,
    right=2pt,
    top=8pt,
    bottom=5pt,
    breakable
]
\begin{spacing}{0.88}
{
\small
\begin{Verbatim}
You are a retrieval planner for graph-based multi-hop QA.
Question:
{QUESTION}

Structured extraction:
{EXTRACTOR_JSON}

Generate at most M retrieval intents that help locate:
- evidence directly supporting the target relation;
- bridge entities required for multi-hop reasoning;
- documents likely to contain the target attribute;
- evidence satisfying temporal, spatial, type, comparison or negation constraints;
- evidence using aliases or alternative mentions.
Return JSON only with keys:
{
  "pseudo_queries": [string],
  "rewriter_confidence": [number]
}
\end{Verbatim}
}
\end{spacing}
\captionof{figure}{Metadata of the case study from \texttt{HotpotQA}.}
\label{fig:case_2}
\vspace{-0.75cm}
\end{tcolorbox}
\end{center}

\section{Computation Details of Topological Structural Features}
\label{app:structure_features_detailed}

\subsection{Normalized Structural Graph}

Structural features are computed on an undirected, self-loop-free, binarized adjacency matrix $\A_s$:
\begin{equation}
    \A_s=\I[(\A+\A^\top)>0],
    \qquad
    \diag(\A_s)=0.
\end{equation}
This avoids drastic fluctuations in topological statistics caused by unstable relation-extraction directions. Message propagation can still use the original bidirectional edges or relation-aware graph; structural statistics are only used as gating conditions.

\subsection{Node-level Structural Features}

For node $v$, let $\mathcal{N}(v)=\{u:\A_{s,uv}=1\}$ and $d_v=|\mathcal{N}(v)|$. The node-level features are
\begin{equation}
    \phi(v)=
    \big[
    \log(1+d_v),
    c_v,
    \kappa_v,
    \bar d_{\mathcal{N}(v)}
    \big].
\end{equation}
The local clustering coefficient is
\begin{equation}
    c_v=
    \begin{cases}
    \dfrac{2T_v}{d_v(d_v-1)},&d_v\ge 2,\\[0.4em]
    0,&d_v<2,
    \end{cases}
\end{equation}
where $T_v$ is the number of undirected edges inside the neighborhood of $v$; $\kappa_v$ is the core number; and the average neighbor degree is
\begin{equation}
    \bar d_{\mathcal{N}(v)}=
    \begin{cases}
    \dfrac{1}{d_v}\sum_{u\in\mathcal{N}(v)}d_u,&d_v>0,\\[0.4em]
    0,&d_v=0.
    \end{cases}
\end{equation}
These quantities respectively characterize node frequency, local clustering, core/peripheral position, and neighborhood density. For RAG memory, they correspond to four common structural risks: over-propagation by high-frequency hubs, redundant diffusion inside clustered regions, ignored peripheral bridge entities, and scale mismatch between sparse and dense regions.

\subsection{Edge-pair Structural Features}

For an undirected structural edge $(u,v)$, the pairwise features are
\begin{equation}
    \psi(u,v)=
    \big[
    |d_u-d_v|,
    \operatorname{CN}(u,v),
    \operatorname{Jac}(u,v)
    \big],
\end{equation}
where
\begin{equation}
    \operatorname{CN}(u,v)=|\mathcal{N}(u)\cap\mathcal{N}(v)|,
    \qquad
    \operatorname{Jac}(u,v)=
    \frac{|\mathcal{N}(u)\cap\mathcal{N}(v)|}
    {|\mathcal{N}(u)\cup\mathcal{N}(v)|+\eps}.
\end{equation}
Degree difference reflects cross-level connections, while common neighbors and Jaccard reflect local community overlap. Based on these features, the gate can distinguish intra-community evidence aggregation edges from cross-community bridge edges.

\subsection{Graph-level Summary and Normalization}

The graph-level summary concatenates the mean, standard deviation, and density of node features:
\begin{equation}
    \rr_\kg=
    \big[
    \mean_{v\in\V}\phi(v);
    \std_{v\in\V}\phi(v);
    \dens(\kg)
    \big],
\end{equation}
where
\begin{equation}
    \dens(\kg)=
    \begin{cases}
    \dfrac{2m_s}{n(n-1)},&n\ge2,\\[0.4em]
    0,&n<2,
    \end{cases}
\end{equation}
$n=|\V|$, and $m_s$ is the number of undirected structural edges. To remove graph-size differences, node and edge features are z-scored within each graph, and the graph-level summary computes global mean and standard deviation over the set of training graphs:
\begin{equation}
    \bar\rr_\kg=\frac{\rr_\kg-\mu_r}{\sigma_r+\eps}.
\end{equation}
If the standard deviation of a certain dimension is close to zero, we only perform centering to avoid division by an unstable small value.

\subsection{Gating Input Encoding}

For each message edge $u\to v$, the structural gate reads the source node, target node, pairwise features, and graph-level summary:
\begin{align}
    \bar\phi(u)&=\operatorname{NormNode}(\phi(u)),
    &\bar\psi(u,v)&=\operatorname{NormPair}(\psi(u,v)),\\
    u_u^{(l)}&=E_n^{(l)}(\bar\phi(u)),
    &u_v^{(l)}&=E_n^{(l)}(\bar\phi(v)),\\
    v_{uv}^{(l)}&=E_p^{(l)}(\bar\psi(u,v)),
    &r_{\kg}^{(l)}&=E_g^{(l)}(\bar\rr_\kg).
\end{align}
The encoders $E_n,E_p,E_g$ are all two-layer MLPs. The concatenated gating input is
\begin{equation}
    \zz_{uv}^{(l)}=
    [\uu_u^{(l)};\uu_v^{(l)};\vv_{uv}^{(l)};\rr_\kg^{(l)}].
\end{equation}
The gate itself is a vector rather than a scalar:
\begin{equation}
    \gv_{uv}^{(l)}=
    \one+\delta\tanh\big(\MLP_g^{(l)}(\zz_{uv}^{(l)})\big),
    \qquad \delta=0.1.
    \label{eq:app_residual_gate}
\end{equation}
The last layer of the gating MLP is initialized to zero, so initially $\gv_{uv}^{(l)}=\one$. At the beginning of training, the model does not destroy the original propagation scale; the learned structural bias gradually emerges in a residual manner.

\subsection{Message Propagation with Normalized Weights}

Let $\tilde\E$ be the edge set after adding self-loops. Structural gates are used for non-self-loop edges, and unit gates are used for self-loops. The GCN normalization coefficient is
\begin{equation}
    \eta_{uv}=\frac{w_{uv}}{\sqrt{\tilde d_u\tilde d_v}},
    \qquad
    \tilde d_v=\sum_{u:(u,v)\in\tilde\E}w_{uv},
\end{equation}
where $w_{uv}$ defaults to $1$, but can also come from edge weights. The propagation at layer $l$ is
\begin{align}
    \mv_{u\to v}^{(l)}&=
    \eta_{uv}\,\gv_{uv}^{(l)}\odot W^{(l)}\hh_u^{(l-1)},\\
    \hh_v^{(l)}&=
    \sigma\left(\bv^{(l)}+
    \sum_{u:(u,v)\in\tilde\E}\mv_{u\to v}^{(l)}
    \right).
\end{align}
The multi-layer wrapper also contains inter-layer residuals: when $l>1$ and the dimensions are consistent,
\begin{equation}
    \Hh^{(l)}\leftarrow \Hh^{(l)}+\Hh^{(l-1)}.
\end{equation}
This residual and Eq.~\eqref{eq:app_residual_gate} form a dual stability mechanism: the former stabilizes deep propagation, while the latter stabilizes structural modulation.

\subsection{Chunked Gating and GPU Memory Complexity}

Explicitly storing all gates requires $O(|\E|d)$ GPU memory. For large graphs, gates are computed by edge chunks:
\begin{equation}
    \E=\bigcup_{b=1}^{B_e}\E_b,
    \qquad
    |\E_b|\le C_e.
\end{equation}
Each edge chunk sequentially executes
\begin{equation}
    \gv_b\rightarrow \mv_b\rightarrow \operatorname{scatter\_add}(\mv_b),
\end{equation}
and immediately releases the intermediate gate tensor. Online GPU memory is reduced from $O(|\E|d)$ to $O(C_e d)$, while the time complexity remains linear, $O(|\E|d)$. This is especially important for self-evolving memory, because the same reader needs to repeatedly evaluate candidate graphs produced by different writers.

\section{Pretraining Objective and Augmented Views}
\label{app:pretraining_details}

\subsection{GraphCL View Construction}

The goal of the pretraining stage is to learn cross-graph transferable structural--semantic propagation, rather than fitting specific question-answering labels. Given the original graph view $(\kg_0,X_0)$, we construct two augmented views $(\kg_1,X_1)$, $(\kg_2,X_2)$ and one negative feature view $(\kg_0,X^-)$. The augmentation types include edge perturbation, feature masking, node perturbation, and subgraph sampling; let the augmentation operators be $\mathcal{A}_1,\mathcal{A}_2$, then
\begin{equation}
    (\kg_j,X_j)=\mathcal{A}_j(\kg_0,X_0),\qquad j\in\{1,2\}.
\end{equation}
If structural gating is enabled, each view precomputes its own node structural features, edge-pair features, and graph-level summary; the negative feature view shares the base graph structure, but its node features are shuffled or replaced.

\subsection{Graph-level Contrastive Objective}

The encoder outputs four sets of node representations:
\begin{equation}
    H_0=f_\theta(X_0,\kg_0),\quad
    H_1=f_\theta(X_1,\kg_1),\quad
    H_2=f_\theta(X_2,\kg_2),\quad
    H^-=f_\theta(X^-,\kg_0).
\end{equation}
The graph readout of each augmented view is
\begin{equation}
    c_j=\sigmoid\left(\frac{1}{|\V_j|}\sum_{v\in\V_j}H_{j,v}\right),
    \qquad j\in\{1,2\}.
\end{equation}
The bilinear discriminator
\begin{equation}
    D(c,h)=h^\top W_Dc
\end{equation}
determines whether the node representation comes from the same graph semantics. The pretraining loss is
\begin{equation}
    \mathcal{L}_{\mathrm{GCL}}=
    \frac12\sum_{j=1}^{2}
    \left[
    \BCE\big(D(c_j,H_0),\one\big)
    +
    \BCE\big(D(c_j,H^-),\mathbf{0}\big)
    \right].
\end{equation}
When edge-level gating is enabled, traditional static structural prompts are neutralized into identity mappings to avoid scale confusion caused by two sets of structural modulations acting simultaneously; the structural bias is mainly carried by the target edge's $\gv_{uv}^{(l)}$.

\subsection{Feature Alignment Layer}

When the input dimensions produced by different graphs or different text encoders are consistent but their distributions have large shifts, the feature alignment layer can be enabled:
\begin{equation}
    \operatorname{Align}(x)=
    \Dropout\left(
    \LayerNorm\left(
    \PReLU(W_ax+b_a)
    \right)
    \right).
\end{equation}
$W_a$ is initialized as the identity matrix, and $b_a$ is initialized as zero. Therefore, this layer is initially an approximately identity transformation; after training, it absorbs inter-graph feature-scale differences without changing the core structure of the graph propagator.

\section{Supervised Fine-tuning Objective}
\label{app:finetune_objective_detailed}

\subsection{Entity-level Supervision}

For each question $q_b$, the data provide a supporting-entity mask $y_{b,e}\in\{0,1\}$. The model outputs entity logits $a_{b,e}$. The weighted BCE is defined as
\begin{equation}
    \mathcal{L}_{\mathrm{bce}}
    =\frac{1}{B}\sum_{b=1}^{B}
    \frac{\sum_{e}w_{b,e}\,\BCEWithLogits(a_{b,e},y_{b,e})}
    {\sum_e w_{b,e}+\eps}.
\end{equation}
Positive weights are uniformly normalized within the positive set; if the adversarial temperature $T_a$ is enabled for negative weights, they are computed by applying softmax to the current model scores:
\begin{equation}
    w_{b,e}^{-}=\frac{\exp(a_{b,e}/T_a)}{\sum_{v:y_{b,v}=0}\exp(a_{b,v}/T_a)},
    \qquad y_{b,e}=0.
\end{equation}
If $T_a=0$, the negative weights degenerate into a uniform distribution. This design makes training focus more on high-scoring hard negatives, rather than being dominated by a large number of obviously irrelevant entities.

\subsection{Multi-positive List Cross-Entropy}

Using only BCE treats each entity as an independent binary classification problem, lacking the constraint that ``supporting entities should collectively rank near the top of the same candidate list''. To this end, we introduce a multi-positive list loss. Let
\begin{equation}
    p_{b,e}=\frac{\sigmoid(a_{b,e})}{\sum_{v}\sigmoid(a_{b,v})+\eps}.
\end{equation}
If sample $b$ has at least one supporting entity, the list loss is
\begin{equation}
    \mathcal{L}_{\mathrm{list}}
    =-\frac{1}{|\mathcal{B}_+|}\sum_{b\in\mathcal{B}_+}
    \frac{1}{|Y_E(q_b)|}
    \sum_{e\in Y_E(q_b)}\log(p_{b,e}+\eps).
\end{equation}
Samples with empty supporting-entity sets are skipped. The final entity fine-tuning objective is
\begin{equation}
    \mathcal{L}_{\mathrm{ent}}
    =\lambda_{\mathrm{bce}}\mathcal{L}_{\mathrm{bce}}
    +\lambda_{\mathrm{list}}\mathcal{L}_{\mathrm{list}},
    \qquad
    (\lambda_{\mathrm{bce}},\lambda_{\mathrm{list}})=(0.3,0.7).
\end{equation}

\subsection{Optional Document-level Supervision}

If the training configuration provides a document-level loss, entity logits are first projected into document logits:
\begin{equation}
    \tilde S_b=a_b^\top\M,
\end{equation}
and then the same type of BCE or list loss is computed with the supporting-document mask $z_{b,i}$. This term is suitable for tasks where entity annotations are noisy but the document support set is reliable; if it is not enabled, training is entirely driven by the entity-level support set, and document ranking is obtained through projection only during inference or validation.
\section{Memory Writer Implementation Details}
\label{app:writer_details}

\subsection{The Markov Decision Process for Multi-turn Graph Construction}

Specifically, the training of our graph constructor is implemented through VeRL's multi-turn GRPO loop. The state machine of the interactor can be abstracted as a finite-horizon MDP:
\begin{equation}
    \mathcal{M}=(\Sset,\A,P,\reward,\rho_0,H).
\end{equation}
Given a sample $x$, at round $t$, the state can be written as
\begin{equation}
    \state_t=(q,\kg_t,\mathcal{D}^{\mathrm{proc}}_t,\mathcal{D}^{\mathrm{rem}}_t,\zeta_t),
\end{equation}
where $\kg_t$ is the current partially written graph, $\mathcal{D}^{\mathrm{proc}}_t$ and $\mathcal{D}^{\mathrm{rem}}_t$ denote the processed and remaining documents, respectively, and $\zeta_t$ is an interaction control flag, such as whether the process is still in the graph-construction stage or has already switched to the RAG stage. The action is generated by the language model in JSON format:
\begin{equation}
    \act_t \sim \constructor(\cdot\mid \state_t),
\end{equation}
and is restricted to two types of legal actions:
\begin{enumerate}[leftmargin=18pt]
    \item \textbf{Triple action}: output a JSON array, where each element is of the form $\{\texttt{subject},\texttt{relation},\texttt{object}\}$, representing the set of facts $\triples_t$ written in the current round;
    \item \textbf{Termination action}: after graph construction is completed, output a JSON object carrying the terminal fields required by the reader side, such as \texttt{answer}, \texttt{recall}, \texttt{precision}, \texttt{deducible}, and so on.
\end{enumerate}

In implementation, the environment first checks whether the action can be parsed by \texttt{json\_repair}, and strictly cleans the triples: items with missing keys, empty strings, or non-dictionary entries are all removed. If illegal JSON is output during the graph-construction stage, the interaction terminates immediately and returns zero reward; if legal triples are output, the environment proceeds to the next round and returns a round-level format reward. The corresponding environment transition can be written as
\begin{equation}
    \state_{t+1} = P(\state_t, \act_t) = 
    \begin{cases}
        (q,\kg_t\oplus\triples_t,\mathcal{D}^{\mathrm{proc}}_t\cup\{d_t\},\mathcal{D}^{\mathrm{rem}}_t\setminus\{d_t\},\zeta_{t+1}), & \act_t \text{ is legal },\\
        (q,\kg_t,\mathcal{D}^{\mathrm{proc}}_t,\mathcal{D}^{\mathrm{rem}}_t,\mathrm{STOP}), & \act_t \text{ is illegal},\\
        (q,\kg_t,\mathcal{D}^{\mathrm{proc}}_t,\mathcal{D}^{\mathrm{rem}}_t,\mathrm{RAG}), & \act_t \text{ triggers the reading stage}.
    \end{cases}
\end{equation}

\paragraph{Iterative and non-iterative writing.}
Two strategies are supported. In the non-iterative mode, the model reads the entire context $\contextset$ at once and outputs all triples. In the iterative mode, the environment reads the documents segment by segment in document order, and in each round the model is only allowed to write triples for the current document. After all documents have been processed, the environment then switches to the RAG stage. If $\triples_i$ denotes the set of triples output for document $d_i$, then the final graph constructed in the iterative mode is \(\kg = \bigoplus_{i=1}^{m} \triples_i\), where $\oplus$ denotes edge-set union and node deduplication. We adopt the iterative strategy by default, because it decomposes the long-context problem into a sequence of local writing decisions, significantly reducing the difficulty of performing global planning in advance. At the same time, it also allows the source document of each triple to be precisely recorded, providing explicit source edges for subsequent text-graph retrieval.

\paragraph{Constructing text-graph memory from output triples.}
To enable the frozen retriever to operate under the \textbf{graph-guided text retrieval} setting, the environment does not directly pass the raw triple strings to the retriever. Instead, it first constructs a text graph with document nodes:
\begin{equation}
    \kg=(\entityset \cup \docset, \edgeset_{ee} \cup \edgeset_{ed}),
\end{equation}
where the entity node set $\entityset$ comes from the subjects and objects in the triples, and the document node set $\docset=\{d_1,\dots,d_m\}$ corresponds to the original documents in the context. The entity-entity edges are defined as
\begin{equation}
    \edgeset_{ee}=\{(u,r,v)\mid (u,r,v)\in \triples\},
\end{equation}
and the entity-document source edges are defined as
\begin{equation}
    \edgeset_{ed}=\{(u,\texttt{source},d_i),(v,\texttt{source},d_i)\mid (u,r,v)\in \triples_i\}.
\end{equation}
In the iterative mode, the source edges are explicit, because the environment already knows that the triples in each round come from the current document. In the non-iterative mode, we use a heuristic alignment method based on tokenizer token overlap to map each triple to the most similar document. The significance of this design is that, after separating writing from reading, the graph constructor is only responsible for deciding ``what to write into memory''; as for how the reader aggregates entities on the graph and retrieves documents, this is entirely determined by the frozen $\retriever$.

\paragraph{Frozen GFM retrieval environment.}
When training the graph constructor, the reader $\retriever$ is fixed as the already trained GFM retriever. Let the entity set be $\entityset=\{e_1,\dots,e_n\}$ and the document set be $\docset=\{d_1,\dots,d_M\}$. We then construct:
\begin{enumerate}[leftmargin=18pt]
    \item the relation-edge index $\mathbf{E}$ with both forward and reverse directions, together with the relation types $\mathbf{r}$;
    \item the sparse entity-document matrix $\mathbf{M}\in\{0,1\}^{n\times M}$, where $M_{ij}=1$ if entity $e_i$ appears in document $d_j$;
    \item the question-related entity mask $\mathbf{m}_q\in\{0,1\}^n$, which is obtained preferentially through lexical matching with the question; if lexical matching fails, it falls back to a heuristic seed set ranked by entity degree.
\end{enumerate}

After encoding the question as a vector $\mathbf{q}$ and the relation names as a matrix $\mathbf{R}$, the frozen GFM forward pass computes the entity relevance scores:
\begin{equation}
    \mathbf{s}_e = \retriever(\kg, \mathbf{q}, \mathbf{m}_q; \phi) \in \R^n.
\end{equation}
The entity scores are then projected into document scores. Let $\mathbf{M}_{\TopK}(\mathbf{s}_e)$ denote the masking operation that retains only the top-$K$ entity scores, and let $\mathbf{w}_{\mathrm{idf}}$ denote the inverse-frequency weights defined according to the document frequency of each entity. The four document-scoring modes can be written uniformly as
\begin{equation}
    \tilde{\mathbf{s}}_e=
    \begin{cases}
        \mathbf{s}_e, & \texttt{raw},\\
        \mathbf{M}_{\TopK}(\mathbf{s}_e), & \texttt{topk},\\
        \mathbf{w}_{\mathrm{idf}}\odot \mathbf{s}_e, & \texttt{idf},\\
        \mathbf{w}_{\mathrm{idf}}\odot \mathbf{M}_{\TopK}(\mathbf{s}_e), & \texttt{idf\_topk},
    \end{cases}
\end{equation}
and the document scores are obtained by
\begin{equation}
    \mathbf{s}_d = \mathbf{M}^{\top}\tilde{\mathbf{s}}_e.
\end{equation}
We then take $\TopK(\mathbf{s}_d)$ as the retrieval result. In actual use, we also enable \texttt{init\_entities\_weight}, that is, during the GFM forward pass, a $1/f(e)$ weight is applied to high-frequency entities to suppress the dominance of entities connected to too many documents in the retrieval results.
\section{Additional Detailed Experimental Results}
\label{app:additional_detailed_experimental_results}

The results of retrieval performance on multi-hop QA benchmarks are in Table~\ref{tab:res_retrieve_multihop}.

\begin{table*}[t]
\renewcommand{\arraystretch}{0.85}
\setlength{\tabcolsep}{3.8pt}
  \centering
  \caption{Results of retrieval performance on multi-hop QA benchmarks. We report document-level Recall (\%) at top-2 and top-5. Best results are in \textbf{bold} and runner-ups are \underline{underlined}. The darker the cell, the better.}
  \vspace{0.1cm}
  \resizebox{\textwidth}{!}{
    \begin{tabular}{lccccccr}
    \toprule
    \textbf{Dataset} & \multicolumn{2}{c}{\texttt{HotpotQA}}  & \multicolumn{2}{c}{\texttt{MuSiQue}}   & \multicolumn{2}{c}{\texttt{2WikiMultiHopQA}} & \multirow{2}[2]{*}{\textbf{Avg. Rank}} \\
    \cmidrule(r{1mm}){1-1}  \cmidrule(l{0.5mm}r{1mm}){2-3} \cmidrule(l{0.5mm}r{1mm}){4-5} \cmidrule(l{0.5mm}r{1mm}){6-7} 
    \textbf{Method} & R@2 & R@5 & R@2 & R@5 & R@2 & R@5 & \\
    \cmidrule(r{1mm}){1-1}  \cmidrule(l{0.5mm}r{1mm}){2-3} \cmidrule(l{0.5mm}r{1mm}){4-5} \cmidrule(l{0.5mm}r{1mm}){6-7} \cmidrule(l{0.5mm}){8-8}

    \texttt{BM25} \scalebox{0.68}{(\mycite{robertson1994some}~\textit{SIGIR'94})} & \cellcolor{myblue!28.19}{55.\scalebox{0.75}{4}} & \cellcolor{myblue!36.10}{72.\scalebox{0.75}{2}} & \cellcolor{myblue!24.27}{32.\scalebox{0.75}{3}} & \cellcolor{myblue!21.05}{41.\scalebox{0.75}{2}} & \cellcolor{myblue!12.92}{51.\scalebox{0.75}{8}} & \cellcolor{myblue!15.74}{61.\scalebox{0.75}{9}} & \cellcolor{myred!10.54}{18.\scalebox{0.75}{2}} \\
    \texttt{Contriever} \scalebox{0.68}{(\mycite{izacard2022unsupervised}~\textit{TMLR'22})} & \cellcolor{myblue!31.03}{57.\scalebox{0.75}{2}} & \cellcolor{myblue!42.53}{75.\scalebox{0.75}{5}} & \cellcolor{myblue!31.69}{34.\scalebox{0.75}{8}} & \cellcolor{myblue!36.88}{46.\scalebox{0.75}{6}} & \cellcolor{myblue!4.92}{46.\scalebox{0.75}{6}} & \cellcolor{myblue!8.27}{57.\scalebox{0.75}{5}} & \cellcolor{myred!14.20}{17.\scalebox{0.75}{2}} \\
    \texttt{GTR} \scalebox{0.68}{(\mycite{ni2022large}~\textit{EMNLP'22})} & \cellcolor{myblue!34.50}{59.\scalebox{0.75}{4}} & \cellcolor{myblue!38.24}{73.\scalebox{0.75}{3}} & \cellcolor{myblue!39.41}{37.\scalebox{0.75}{4}} & \cellcolor{myblue!44.21}{49.\scalebox{0.75}{1}} & \cellcolor{myblue!25.85}{60.\scalebox{0.75}{2}} & \cellcolor{myblue!25.91}{67.\scalebox{0.75}{9}} & \cellcolor{myred!26.39}{13.\scalebox{0.75}{8}} \\
    \texttt{ColBERTv2} \scalebox{0.68}{(\mycite{santhanam2022colbertv2}~\textit{NAACL'22})} & \cellcolor{myblue!42.86}{64.\scalebox{0.75}{7}} & \cellcolor{myblue!49.93}{79.\scalebox{0.75}{3}} & \cellcolor{myblue!40.90}{37.\scalebox{0.75}{9}} & \cellcolor{myblue!44.50}{49.\scalebox{0.75}{2}} & \cellcolor{myblue!24.31}{59.\scalebox{0.75}{2}} & \cellcolor{myblue!26.42}{68.\scalebox{0.75}{2}} & \cellcolor{myred!34.93}{11.\scalebox{0.75}{5}} \\
    \texttt{RAPTOR} \scalebox{0.68}{(\mycite{sarthi2024raptor}~\textit{ICLR'24})} & \cellcolor{myblue!32.45}{58.\scalebox{0.75}{1}} & \cellcolor{myblue!34.15}{71.\scalebox{0.75}{2}} & \cellcolor{myblue!34.37}{35.\scalebox{0.75}{7}} & \cellcolor{myblue!33.07}{45.\scalebox{0.75}{3}} & \cellcolor{myblue!4.46}{46.\scalebox{0.75}{3}} & \cellcolor{myblue!2.00}{53.\scalebox{0.75}{8}} & \cellcolor{myred!12.37}{17.\scalebox{0.75}{7}} \\
    \texttt{Proposition} \scalebox{0.68}{(\mycite{chen2024dense}~\textit{EMNLP'24})} & \cellcolor{myblue!33.40}{58.\scalebox{0.75}{7}} & \cellcolor{myblue!33.95}{71.\scalebox{0.75}{1}} & \cellcolor{myblue!40.01}{37.\scalebox{0.75}{6}} & \cellcolor{myblue!44.79}{49.\scalebox{0.75}{3}} & \cellcolor{myblue!20.00}{56.\scalebox{0.75}{4}} & \cellcolor{myblue!17.77}{63.\scalebox{0.75}{1}} & \cellcolor{myred!22.74}{14.\scalebox{0.75}{8}} \\

    \cmidrule(r{1mm}){1-1}  \cmidrule(l{0.5mm}r{1mm}){2-3} \cmidrule(l{0.5mm}r{1mm}){4-5} \cmidrule(l{0.5mm}r{1mm}){6-7} \cmidrule(l{0.5mm}){8-8}

    \texttt{GraphRAG} \scalebox{0.68}{(\mycite{edge2024local}~\textit{arXiv'24})} & \cellcolor{myblue!32.77}{58.\scalebox{0.75}{3}} & \cellcolor{myblue!44.67}{76.\scalebox{0.75}{6}} & \cellcolor{myblue!33.48}{35.\scalebox{0.75}{4}} & \cellcolor{myblue!44.79}{49.\scalebox{0.75}{3}} & \cellcolor{myblue!28.00}{61.\scalebox{0.75}{6}} & \cellcolor{myblue!41.85}{77.\scalebox{0.75}{3}} & \cellcolor{myred!32.49}{12.\scalebox{0.75}{2}} \\
    \texttt{G-Retriever} \scalebox{0.68}{(\mycite{he2024g}~\textit{NeurIPS'24})} & \cellcolor{myblue!24.88}{53.\scalebox{0.75}{3}} & \cellcolor{myblue!23.04}{65.\scalebox{0.75}{5}} & \cellcolor{myblue!43.57}{38.\scalebox{0.75}{8}} & \cellcolor{myblue!32.48}{45.\scalebox{0.75}{1}} & \cellcolor{myblue!26.77}{60.\scalebox{0.75}{8}} & \cellcolor{myblue!25.74}{67.\scalebox{0.75}{8}} & \cellcolor{myred!19.69}{15.\scalebox{0.75}{7}} \\
    \texttt{LightRAG} \scalebox{0.68}{(\mycite{guo2024lightrag}~\textit{arXiv'24})} & \cellcolor{myblue!2.00}{38.\scalebox{0.75}{8}} & \cellcolor{myblue!2.00}{54.\scalebox{0.75}{7}} & \cellcolor{myblue!2.00}{24.\scalebox{0.75}{8}} & \cellcolor{myblue!2.00}{34.\scalebox{0.75}{7}} & \cellcolor{myblue!2.62}{45.\scalebox{0.75}{1}} & \cellcolor{myblue!10.99}{59.\scalebox{0.75}{1}} & \cellcolor{myred!2.00}{20.\scalebox{0.75}{5}} \\
    \texttt{HippoRAG} \scalebox{0.68}{(\mycite{gutierrez2024hipporag}~\textit{NeurIPS'24})} & \cellcolor{myblue!35.61}{60.\scalebox{0.75}{1}} & \cellcolor{myblue!48.37}{78.\scalebox{0.75}{5}} & \cellcolor{myblue!50.70}{41.\scalebox{0.75}{2}} & \cellcolor{myblue!56.22}{53.\scalebox{0.75}{2}} & \cellcolor{myblue!38.46}{68.\scalebox{0.75}{4}} & \cellcolor{myblue!58.30}{87.\scalebox{0.75}{0}} & \cellcolor{myred!45.30}{8.\scalebox{0.75}{7}} \\
    \texttt{HippoRAG} \texttt{2} \scalebox{0.68}{(\mycite{gutierrez2024hipporag}~\textit{ICML'25})} & \cellcolor{myblue!67.79}{\underline{~80.\scalebox{0.75}{5}~}} & \cellcolor{myblue!67.08}{\underline{~88.\scalebox{0.75}{1}~}} & \cellcolor{myblue!67.92}{\underline{~47.\scalebox{0.75}{0}~}} & \cellcolor{myblue!66.48}{56.\scalebox{0.75}{7}} & \cellcolor{myblue!70.00}{\textbf{88.\scalebox{0.75}{9}}} & \cellcolor{myblue!63.56}{90.\scalebox{0.75}{1}} & \cellcolor{myred!68.17}{\underline{~2.\scalebox{0.75}{4}~}} \\
    \texttt{SubgraphRAG} \scalebox{0.68}{(\mycite{li2025simple}~\textit{ICLR'25})} & \cellcolor{myblue!37.81}{61.\scalebox{0.75}{5}} & \cellcolor{myblue!37.66}{73.\scalebox{0.75}{0}} & \cellcolor{myblue!53.37}{42.\scalebox{0.75}{1}} & \cellcolor{myblue!44.79}{49.\scalebox{0.75}{3}} & \cellcolor{myblue!42.00}{70.\scalebox{0.75}{7}} & \cellcolor{myblue!55.76}{85.\scalebox{0.75}{5}} & \cellcolor{myred!41.64}{9.\scalebox{0.75}{7}} \\
    \texttt{PropRAG} \scalebox{0.68}{(\mycite{wang2025proprag}~\textit{EMNLP'25})} & \cellcolor{myblue!70.00}{\textbf{81.\scalebox{0.75}{9}}} & \cellcolor{myblue!66.88}{88.\scalebox{0.75}{0}} & \cellcolor{myblue!70.00}{\textbf{47.\scalebox{0.75}{7}}} & \cellcolor{myblue!70.00}{\textbf{57.\scalebox{0.75}{9}}} & \cellcolor{myblue!68.46}{\underline{~87.\scalebox{0.75}{9}~}} & \cellcolor{myblue!63.56}{90.\scalebox{0.75}{1}} & \cellcolor{myred!70.00}{\textbf{1.\scalebox{0.75}{9}}} \\
    \texttt{GFM-RAG} \scalebox{0.68}{(\mycite{luo2025gfm}~\textit{NeurIPS'25})} & \cellcolor{myblue!60.06}{75.\scalebox{0.75}{6}} & \cellcolor{myblue!70.00}{\textbf{89.\scalebox{0.75}{6}}} & \cellcolor{myblue!57.53}{43.\scalebox{0.75}{5}} & \cellcolor{myblue!69.12}{\underline{~57.\scalebox{0.75}{6}~}} & \cellcolor{myblue!54.92}{79.\scalebox{0.75}{1}} & \cellcolor{myblue!67.46}{\underline{~92.\scalebox{0.75}{4}~}} & \cellcolor{myred!66.34}{2.\scalebox{0.75}{9}} \\

    \cmidrule(r{1mm}){1-1}  \cmidrule(l{0.5mm}r{1mm}){2-3} \cmidrule(l{0.5mm}r{1mm}){4-5} \cmidrule(l{0.5mm}r{1mm}){6-7} \cmidrule(l{0.5mm}){8-8}

    \texttt{FLARE} \scalebox{0.68}{(\mycite{jiang2023active}~\textit{EMNLP'23})} & \cellcolor{myblue!56.12}{73.\scalebox{0.75}{1}} & \cellcolor{myblue!53.83}{81.\scalebox{0.75}{3}} & \cellcolor{myblue!59.90}{44.\scalebox{0.75}{3}} & \cellcolor{myblue!61.79}{55.\scalebox{0.75}{1}} & \cellcolor{myblue!36.46}{67.\scalebox{0.75}{1}} & \cellcolor{myblue!34.73}{73.\scalebox{0.75}{1}} & \cellcolor{myred!53.23}{6.\scalebox{0.75}{5}} \\
    \texttt{Adaptive-RAG} \scalebox{0.68}{(\mycite{jeong2024adaptive}~\textit{NAACL'24})} & \cellcolor{myblue!37.03}{61.\scalebox{0.75}{0}} & \cellcolor{myblue!44.28}{76.\scalebox{0.75}{4}} & \cellcolor{myblue!32.59}{35.\scalebox{0.75}{1}} & \cellcolor{myblue!31.31}{44.\scalebox{0.75}{7}} & \cellcolor{myblue!2.00}{44.\scalebox{0.75}{7}} & \cellcolor{myblue!14.89}{61.\scalebox{0.75}{4}} & \cellcolor{myred!16.33}{16.\scalebox{0.75}{6}} \\
    \texttt{BM25} + \texttt{IRCoT} \scalebox{0.68}{(\mycite{trivedi2023interleaving}~\textit{ACL'23})} & \cellcolor{myblue!44.28}{65.\scalebox{0.75}{6}} & \cellcolor{myblue!49.35}{79.\scalebox{0.75}{0}} & \cellcolor{myblue!29.91}{34.\scalebox{0.75}{2}} & \cellcolor{myblue!31.31}{44.\scalebox{0.75}{7}} & \cellcolor{myblue!27.38}{61.\scalebox{0.75}{2}} & \cellcolor{myblue!38.97}{75.\scalebox{0.75}{6}} & \cellcolor{myred!31.58}{12.\scalebox{0.75}{4}} \\
    \texttt{Contriever} + \texttt{IRCoT} \scalebox{0.68}{(\mycite{trivedi2023interleaving}~\textit{ACL'23})} & \cellcolor{myblue!44.76}{65.\scalebox{0.75}{9}} & \cellcolor{myblue!54.41}{81.\scalebox{0.75}{6}} & \cellcolor{myblue!44.46}{39.\scalebox{0.75}{1}} & \cellcolor{myblue!53.29}{52.\scalebox{0.75}{2}} & \cellcolor{myblue!12.62}{51.\scalebox{0.75}{6}} & \cellcolor{myblue!18.96}{63.\scalebox{0.75}{8}} & \cellcolor{myred!37.98}{10.\scalebox{0.75}{7}} \\
    \texttt{ColBERTv2} + \texttt{IRCoT} \scalebox{0.68}{(\mycite{trivedi2023interleaving}~\textit{ACL'23})} & \cellcolor{myblue!47.91}{67.\scalebox{0.75}{9}} & \cellcolor{myblue!55.19}{82.\scalebox{0.75}{0}} & \cellcolor{myblue!52.18}{41.\scalebox{0.75}{7}} & \cellcolor{myblue!57.69}{53.\scalebox{0.75}{7}} & \cellcolor{myblue!31.85}{64.\scalebox{0.75}{1}} & \cellcolor{myblue!36.93}{74.\scalebox{0.75}{4}} & \cellcolor{myred!50.79}{7.\scalebox{0.75}{2}} \\
    \texttt{HippoRAG} + \texttt{IRCoT} \scalebox{0.68}{(\mycite{trivedi2023interleaving}~\textit{ACL'23})} & \cellcolor{myblue!46.49}{67.\scalebox{0.75}{0}} & \cellcolor{myblue!57.14}{83.\scalebox{0.75}{0}} & \cellcolor{myblue!62.87}{45.\scalebox{0.75}{3}} & \cellcolor{myblue!69.12}{\underline{~57.\scalebox{0.75}{6}~}} & \cellcolor{myblue!49.85}{75.\scalebox{0.75}{8}} & \cellcolor{myblue!70.00}{\textbf{93.\scalebox{0.75}{9}}} & \cellcolor{myred!63.90}{3.\scalebox{0.75}{6}} \\

    \cmidrule(r{1mm}){1-1}  \cmidrule(l{0.5mm}r{1mm}){2-3} \cmidrule(l{0.5mm}r{1mm}){4-5} \cmidrule(l{0.5mm}r{1mm}){6-7} \cmidrule(l{0.5mm}){8-8}

    \textbf{\method{} (ours)} & \cellcolor{myblue!43.49}{65.\scalebox{0.75}{1}} & \cellcolor{myblue!46.62}{77.\scalebox{0.75}{6}} & \cellcolor{myblue!56.64}{43.\scalebox{0.75}{2}} & \cellcolor{myblue!55.93}{53.\scalebox{0.75}{1}} & \cellcolor{myblue!61.85}{83.\scalebox{0.75}{6}} & \cellcolor{myblue!61.01}{88.\scalebox{0.75}{6}} & \cellcolor{myred!51.40}{7.\scalebox{0.75}{0}} \\
    \bottomrule
    \end{tabular}%
  }
  \label{tab:res_retrieve_multihop}%
\end{table*}%

The results on AmazonQA are in Table~\ref{tab:amazonqa_fulltest_baselines}.

\begin{table*}[t]
\renewcommand{\arraystretch}{0.85}
\setlength{\tabcolsep}{5.5pt}
  \centering
  \caption{Performance of representative baselines on the original \texttt{AmazonQA} full-test protocol. BLEU-1/2/3/4 are denoted as B-1/2/3/4, and R denotes ROUGE. Best results are in \textbf{bold} and runner-ups are \underline{underlined}. \textbf{\textcolor{red}{Only rows marked with \zsmark{} are our zero-shot transfer results; baseline rows and trained variants are not marked as zero-shot.}}}
  \vspace{0.1cm}
  \resizebox{\textwidth}{!}{
    \begin{tabular}{lccccc}
    \toprule
    \multicolumn{6}{c}{\cellcolor{red!8}\textbf{\textcolor{red}{Zero-shot setting applies only to \texttt{Ours} rows marked with \zsmark{} on \texttt{AmazonQA}.}}} \\
    \midrule
    \textbf{Method} & B-1 & B-2 & B-3 & B-4 & R \\
    \midrule
    \multicolumn{6}{l}{\textit{Heuristic baselines from the original \texttt{AmazonQA} protocol}} \\
    \texttt{Random Sentence} \scalebox{0.68}{(\mycite{gupta2019amazonqa}~\textit{IJCAI'19})}
      & \cellcolor{myblue!56.62}{78.\scalebox{0.75}{56}}
      & \cellcolor{myblue!53.21}{63.\scalebox{0.75}{95}}
      & \cellcolor{myblue!44.74}{44.\scalebox{0.75}{37}}
      & \cellcolor{myblue!35.16}{29.\scalebox{0.75}{87}}
      & \cellcolor{myblue!34.08}{49.\scalebox{0.75}{12}} \\
    \texttt{Top-1 using IR} \scalebox{0.68}{(\mycite{gupta2019amazonqa}~\textit{IJCAI'19})}
      & \cellcolor{myblue!66.93}{\underline{~89.\scalebox{0.75}{49}~}}
      & \cellcolor{myblue!65.79}{\underline{~74.\scalebox{0.75}{80}~}}
      & \cellcolor{myblue!63.66}{\underline{~56.\scalebox{0.75}{76}~}}
      & \cellcolor{myblue!61.28}{\underline{~43.\scalebox{0.75}{52}~}}
      & \cellcolor{myblue!66.82}{61.\scalebox{0.75}{48}} \\
    \texttt{Top-1 Using BLEU} \scalebox{0.68}{(\mycite{gupta2019amazonqa}~\textit{IJCAI'19})}
      & \cellcolor{myblue!70.00}{\textbf{92.\scalebox{0.75}{74}}}
      & \cellcolor{myblue!70.00}{\textbf{78.\scalebox{0.75}{43}}}
      & \cellcolor{myblue!70.00}{\textbf{60.\scalebox{0.75}{91}}}
      & \cellcolor{myblue!70.00}{\textbf{48.\scalebox{0.75}{08}}}
      & \cellcolor{myblue!70.00}{\textbf{62.\scalebox{0.75}{68}}} \\
    \texttt{Top-1 Helpfulness} \scalebox{0.68}{(\mycite{gupta2019amazonqa}~\textit{IJCAI'19})}
      & \cellcolor{myblue!2.00}{20.\scalebox{0.75}{66}}
      & \cellcolor{myblue!2.00}{19.\scalebox{0.75}{78}}
      & \cellcolor{myblue!2.00}{16.\scalebox{0.75}{39}}
      & \cellcolor{myblue!2.00}{12.\scalebox{0.75}{54}}
      & \cellcolor{myblue!2.00}{37.\scalebox{0.75}{01}} \\
    \texttt{Top-1 Wilson Score} \scalebox{0.68}{(\mycite{gupta2019amazonqa}~\textit{IJCAI'19})}
      & \cellcolor{myblue!2.08}{20.\scalebox{0.75}{74}}
      & \cellcolor{myblue!2.07}{19.\scalebox{0.75}{84}}
      & \cellcolor{myblue!2.08}{16.\scalebox{0.75}{44}}
      & \cellcolor{myblue!2.08}{12.\scalebox{0.75}{58}}
      & \cellcolor{myblue!2.66}{37.\scalebox{0.75}{26}} \\
    \cmidrule(r{1mm}){1-1}
    \cmidrule(l{0.5mm}){2-6}
    \multicolumn{6}{l}{\textit{Neural baseline from the original \texttt{AmazonQA} protocol}} \\
    \texttt{R-Net} \scalebox{0.68}{(\mycite{gupta2019amazonqa}~\textit{IJCAI'19})}
      & \cellcolor{myblue!26.89}{47.\scalebox{0.75}{04}}
      & \cellcolor{myblue!25.81}{40.\scalebox{0.75}{32}}
      & \cellcolor{myblue!25.05}{31.\scalebox{0.75}{48}}
      & \cellcolor{myblue!23.77}{23.\scalebox{0.75}{92}}
      & \cellcolor{myblue!10.50}{40.\scalebox{0.75}{22}} \\
    \cmidrule(r{1mm}){1-1}
    \cmidrule(l{0.5mm}){2-6}
    \multicolumn{6}{l}{\textit{Human answers under the original \texttt{AmazonQA} protocol}} \\
    \texttt{Amazon User Community} \scalebox{0.68}{(\mycite{gupta2019amazonqa}~\textit{IJCAI'19})}
      & \cellcolor{myblue!58.81}{80.\scalebox{0.75}{88}}
      & \cellcolor{myblue!58.90}{68.\scalebox{0.75}{86}}
      & \cellcolor{myblue!60.00}{54.\scalebox{0.75}{36}}
      & \cellcolor{myblue!58.39}{42.\scalebox{0.75}{01}}
      & \cellcolor{myblue!68.68}{\underline{~62.\scalebox{0.75}{18}~}} \\
    \texttt{Expert (Spans)} \scalebox{0.68}{(\mycite{gupta2019amazonqa}~\textit{IJCAI'19})}
      & \cellcolor{myblue!46.97}{68.\scalebox{0.75}{33}}
      & \cellcolor{myblue!46.07}{57.\scalebox{0.75}{79}}
      & \cellcolor{myblue!45.10}{44.\scalebox{0.75}{61}}
      & \cellcolor{myblue!43.88}{34.\scalebox{0.75}{43}}
      & \cellcolor{myblue!39.30}{51.\scalebox{0.75}{09}} \\
    \texttt{Expert (Descriptive)} \scalebox{0.68}{(\mycite{gupta2019amazonqa}~\textit{IJCAI'19})}
      & \cellcolor{myblue!33.14}{53.\scalebox{0.75}{67}}
      & \cellcolor{myblue!33.05}{46.\scalebox{0.75}{56}}
      & \cellcolor{myblue!34.72}{37.\scalebox{0.75}{81}}
      & \cellcolor{myblue!36.86}{30.\scalebox{0.75}{76}}
      & \cellcolor{myblue!45.18}{53.\scalebox{0.75}{31}} \\
    \cmidrule(r{1mm}){1-1}
    \cmidrule(l{0.5mm}){2-6}
    \multicolumn{6}{l}{\textit{Our method}} \\
    \textbf{\texttt{Ours} (0-shot)\zsmark}
      & \cellcolor{myblue!40.85}{61.\scalebox{0.75}{84}\zsmark}
      & \cellcolor{myblue!36.30}{49.\scalebox{0.75}{36}\zsmark}
      & \cellcolor{myblue!34.73}{37.\scalebox{0.75}{82}\zsmark}
      & \cellcolor{myblue!32.36}{28.\scalebox{0.75}{41}\zsmark}
      & \cellcolor{myblue!27.75}{46.\scalebox{0.75}{73}\zsmark} \\
    \textbf{\texttt{Ours} (trained)}
      & \cellcolor{myblue!53.19}{74.\scalebox{0.75}{92}}
      & \cellcolor{myblue!50.46}{61.\scalebox{0.75}{58}}
      & \cellcolor{myblue!49.72}{47.\scalebox{0.75}{63}}
      & \cellcolor{myblue!46.62}{35.\scalebox{0.75}{86}}
      & \cellcolor{myblue!49.44}{54.\scalebox{0.75}{92}} \\
    \textbf{\texttt{Ours} +1 round}
      & \cellcolor{myblue!60.58}{82.\scalebox{0.75}{76}}
      & \cellcolor{myblue!58.96}{68.\scalebox{0.75}{91}}
      & \cellcolor{myblue!57.52}{52.\scalebox{0.75}{74}}
      & \cellcolor{myblue!53.93}{39.\scalebox{0.75}{68}}
      & \cellcolor{myblue!59.80}{58.\scalebox{0.75}{83}} \\
    \bottomrule
    \end{tabular}%
  }
  \label{tab:amazonqa_fulltest_baselines}%
\end{table*}%

The HaluMem results are shown in Table~\ref{tab:halumem_medium_baselines}. 
\definecolor{myblue}{RGB}{56,132,255}
\providecommand{\placeholdercell}{}
\renewcommand{\placeholdercell}{\cellcolor{myblue!2}{--}}

\begin{table*}[t]
\renewcommand{\arraystretch}{0.85}
\setlength{\tabcolsep}{3.8pt}
  \centering
  \caption{Results on \texttt{HaluMem-Medium}. We report memory extraction metrics, memory updating metrics, and memory question-answering metrics. R denotes Recall, W-R denotes Weighted Recall, T-P denotes Target Memory Precision, Acc. denotes Memory Accuracy, FMR denotes False Memory Resistance, F1 denotes Memory Extraction F1-score, C denotes Correct Rate, H denotes Hallucination Rate, and O denotes Omission Rate. For R, W-R, T-P, Acc., FMR, F1, and C, higher is better; for H and O, lower is better. Best results are in \textbf{bold} and runner-ups are \underline{underlined}. The darker the cell, the better. For systems whose public reports only provide a subset of metrics, missing entries are denoted by ``--''. \textbf{\textcolor{red}{Only rows marked with \zsmark{} are our zero-shot results; baseline rows and trained \method{} rows are not marked as zero-shot.}}}
  \vspace{0.1cm}
  \resizebox{\textwidth}{!}{
    \begin{tabular}{lcccccccccccc}
    \toprule
    \multicolumn{13}{c}{\cellcolor{red!8}\textbf{\textcolor{red}{Zero-shot setting applies only to \method{} variants marked with \zsmark{} on \texttt{HaluMem-Medium}.}}} \\
    \midrule
    \textbf{Dataset} & \multicolumn{6}{c}{\texttt{Memory Extraction}} & \multicolumn{3}{c}{\texttt{Memory Updating}} & \multicolumn{3}{c}{\texttt{Memory QA}} \\
    \cmidrule(r{1mm}){1-1}
    \cmidrule(l{0.5mm}r{1mm}){2-7}
    \cmidrule(l{0.5mm}r{1mm}){8-10}
    \cmidrule(l{0.5mm}){11-13}
    \textbf{Method} & R$\uparrow$ & W-R$\uparrow$ & T-P$\uparrow$ & Acc.$\uparrow$ & FMR$\uparrow$ & F1$\uparrow$ & C$\uparrow$ & H$\downarrow$ & O$\downarrow$ & C$\uparrow$ & H$\downarrow$ & O$\downarrow$ \\
    \cmidrule(r{1mm}){1-1}
    \cmidrule(l{0.5mm}r{1mm}){2-7}
    \cmidrule(l{0.5mm}r{1mm}){8-10}
    \cmidrule(l{0.5mm}){11-13}

    \multicolumn{13}{l}{\textit{Memory-system baselines from the original \texttt{HaluMem} benchmark}} \\

    \texttt{Memobase} \scalebox{0.68}{(\url{https://github.com/memodb-io/memobase})}
      & \cellcolor{myblue!4}{14.\scalebox{0.75}{55}}
      & \cellcolor{myblue!7}{25.\scalebox{0.75}{88}}
      & \cellcolor{myblue!70}{\textbf{92.\scalebox{0.75}{24}}}
      & \cellcolor{myblue!19}{32.\scalebox{0.75}{29}}
      & \cellcolor{myblue!70}{\textbf{80.\scalebox{0.75}{78}}}
      & \cellcolor{myblue!8}{25.\scalebox{0.75}{13}}
      & \cellcolor{myblue!2}{5.\scalebox{0.75}{20}}
      & \cellcolor{myred!63}{0.\scalebox{0.75}{55}}
      & \cellcolor{myred!2}{94.\scalebox{0.75}{25}}
      & \cellcolor{myblue!12}{35.\scalebox{0.75}{33}}
      & \cellcolor{myred!16}{29.\scalebox{0.75}{97}}
      & \cellcolor{myred!7}{34.\scalebox{0.75}{71}} \\

    \texttt{Supermemory} \scalebox{0.68}{(\url{https://github.com/supermemoryai/supermemory})}
      & \cellcolor{myblue!34}{41.\scalebox{0.75}{53}}
      & \cellcolor{myblue!49}{64.\scalebox{0.75}{76}}
      & \cellcolor{myblue!68}{\underline{~90.\scalebox{0.75}{32}~}}
      & \cellcolor{myblue!70}{\underline{~60.\scalebox{0.75}{83}~}}
      & \cellcolor{myblue!29}{51.\scalebox{0.75}{77}}
      & \cellcolor{myblue!44}{56.\scalebox{0.75}{90}}
      & \cellcolor{myblue!15}{16.\scalebox{0.75}{37}}
      & \cellcolor{myred!28}{1.\scalebox{0.75}{15}}
      & \cellcolor{myred!16}{82.\scalebox{0.75}{47}}
      & \cellcolor{myblue!46}{54.\scalebox{0.75}{07}}
      & \cellcolor{myred!44}{22.\scalebox{0.75}{24}}
      & \cellcolor{myred!48}{23.\scalebox{0.75}{69}} \\

    \texttt{Mem0} \scalebox{0.68}{(\mycite{chhikara2025mem0}~\textit{arXiv'25})}
      & \cellcolor{myblue!35}{\underline{~42.\scalebox{0.75}{91}~}}
      & \cellcolor{myblue!49}{\underline{~65.\scalebox{0.75}{03}~}}
      & \cellcolor{myblue!63}{86.\scalebox{0.75}{26}}
      & \cellcolor{myblue!70}{\textbf{60.\scalebox{0.75}{86}}}
      & \cellcolor{myblue!36}{\underline{~56.\scalebox{0.75}{80}~}}
      & \cellcolor{myblue!45}{\underline{~57.\scalebox{0.75}{31}~}}
      & \cellcolor{myblue!26}{25.\scalebox{0.75}{50}}
      & \cellcolor{myred!68}{\underline{~0.\scalebox{0.75}{45}~}}
      & \cellcolor{myred!26}{74.\scalebox{0.75}{02}}
      & \cellcolor{myblue!44}{53.\scalebox{0.75}{02}}
      & \cellcolor{myred!55}{\underline{~19.\scalebox{0.75}{17}~}}
      & \cellcolor{myred!33}{27.\scalebox{0.75}{81}} \\

    \texttt{Zep} \scalebox{0.68}{(\mycite{rasmussen2025zep}~\textit{arXiv'25})}
      & \placeholdercell
      & \placeholdercell
      & \placeholdercell
      & \placeholdercell
      & \placeholdercell
      & \placeholdercell
      & \cellcolor{myblue!52}{\underline{~47.\scalebox{0.75}{28}~}}
      & \cellcolor{myred!70}{\textbf{0.\scalebox{0.75}{42}}}
      & \cellcolor{myred!52}{\underline{~52.\scalebox{0.75}{31}~}}
      & \cellcolor{myblue!48}{\underline{~55.\scalebox{0.75}{47}~}}
      & \cellcolor{myred!45}{21.\scalebox{0.75}{92}}
      & \cellcolor{myred!52}{\underline{~22.\scalebox{0.75}{62}~}} \\

    \texttt{MemOS} \scalebox{0.68}{(\mycite{li2025memos}~\textit{arXiv'25})}
      & \cellcolor{myblue!70}{\textbf{74.\scalebox{0.75}{07}}}
      & \cellcolor{myblue!70}{\textbf{84.\scalebox{0.75}{81}}}
      & \cellcolor{myblue!63}{86.\scalebox{0.75}{25}}
      & \cellcolor{myblue!68}{59.\scalebox{0.75}{55}}
      & \cellcolor{myblue!19}{44.\scalebox{0.75}{94}}
      & \cellcolor{myblue!70}{\textbf{79.\scalebox{0.75}{70}}}
      & \cellcolor{myblue!70}{\textbf{62.\scalebox{0.75}{11}}}
      & \cellcolor{myred!70}{\textbf{0.\scalebox{0.75}{42}}}
      & \cellcolor{myred!70}{\textbf{37.\scalebox{0.75}{48}}}
      & \cellcolor{myblue!70}{\textbf{67.\scalebox{0.75}{23}}}
      & \cellcolor{myred!70}{\textbf{15.\scalebox{0.75}{17}}}
      & \cellcolor{myred!70}{\textbf{17.\scalebox{0.75}{59}}} \\

    \cmidrule(r{1mm}){1-1}
    \cmidrule(l{0.5mm}r{1mm}){2-7}
    \cmidrule(l{0.5mm}r{1mm}){8-10}
    \cmidrule(l{0.5mm}){11-13}

    \textbf{\method{} (ours, 0-shot)\zsmark}
      & \cellcolor{myblue!2}{13.\scalebox{0.75}{12}}
      & \cellcolor{myblue!2}{20.\scalebox{0.75}{91}}
      & \cellcolor{myblue!2}{31.\scalebox{0.75}{36}}
      & \cellcolor{myblue!2}{22.\scalebox{0.75}{80}}
      & \cellcolor{myblue!2}{32.\scalebox{0.75}{59}}
      & \cellcolor{myblue!2}{19.\scalebox{0.75}{38}}
      & \cellcolor{myblue!22}{21.\scalebox{0.75}{67}}
      & \cellcolor{myred!2}{1.\scalebox{0.75}{61}}
      & \cellcolor{myred!14}{84.\scalebox{0.75}{53}}
      & \cellcolor{myblue!2}{30.\scalebox{0.75}{14}}
      & \cellcolor{myred!2}{33.\scalebox{0.75}{68}}
      & \cellcolor{myred!2}{36.\scalebox{0.75}{17}} \\

    \textbf{\method{} (ours, trained)}
      & \cellcolor{myblue!6}{16.\scalebox{0.75}{42}}
      & \cellcolor{myblue!11}{29.\scalebox{0.75}{36}}
      & \cellcolor{myblue!47}{71.\scalebox{0.75}{88}}
      & \cellcolor{myblue!25}{35.\scalebox{0.75}{41}}
      & \cellcolor{myblue!19}{44.\scalebox{0.75}{96}}
      & \cellcolor{myblue!12}{28.\scalebox{0.75}{52}}
      & \cellcolor{myblue!5}{7.\scalebox{0.75}{34}}
      & \cellcolor{myred!55}{0.\scalebox{0.75}{68}}
      & \cellcolor{myred!5}{91.\scalebox{0.75}{98}}
      & \cellcolor{myblue!17}{38.\scalebox{0.75}{26}}
      & \cellcolor{myred!20}{28.\scalebox{0.75}{73}}
      & \cellcolor{myred!14}{33.\scalebox{0.75}{01}} \\

    \textbf{\method{} +1 round}
      & \cellcolor{myblue!10}{20.\scalebox{0.75}{18}}
      & \cellcolor{myblue!18}{35.\scalebox{0.75}{74}}
      & \cellcolor{myblue!46}{71.\scalebox{0.75}{02}}
      & \cellcolor{myblue!34}{40.\scalebox{0.75}{63}}
      & \cellcolor{myblue!13}{40.\scalebox{0.75}{28}}
      & \cellcolor{myblue!18}{33.\scalebox{0.75}{47}}
      & \cellcolor{myblue!9}{10.\scalebox{0.75}{86}}
      & \cellcolor{myred!51}{0.\scalebox{0.75}{76}}
      & \cellcolor{myred!9}{88.\scalebox{0.75}{38}}
      & \cellcolor{myblue!25}{42.\scalebox{0.75}{91}}
      & \cellcolor{myred!28}{26.\scalebox{0.75}{64}}
      & \cellcolor{myred!23}{30.\scalebox{0.75}{45}} \\

    \bottomrule
    \end{tabular}%
  }
  \label{tab:halumem_medium_baselines}%
\end{table*}%
\subsection{Path Interpretations}\label{sec:interpretation}

We provide path interpretations of \method{} for multi-hop reasoning in \Cref{tab:case_frank_lowy}. The importance of each path to the final prediction can be measured by the partial derivative of the prediction score with respect to the triples at each reasoning layer. The top-$k$ path interpretations are then obtained by selecting the top-$k$ longest paths with beam search.

As shown in \Cref{tab:case_frank_lowy}, \method{} successfully identifies the answer by connecting two key constraints in the question: the person who presented the Australia 2022 FIFA World Cup bid and the person born on October 22, 1930. Specifically, the first path starts from the entity ``the bid for the 2022 FIFA World Cup'' and follows the inverse relation of ``was one of the representatives of'' to reach ``Frank Lowy''. Then, through an entity-equivalence relation, it links ``Frank Lowy'' to ``Sir Frank P. Lowy'', whose birth date is ``22 October 1930''. The second path verifies the reasoning in the reverse direction by starting from the birth date and tracing back to the representative of the World Cup bid. These paths demonstrate that \method{} can effectively align different surface forms of the same entity and integrate multiple question constraints within a single-step retrieval process, showing its ability to perform interpretable multi-hop reasoning.

\begin{table*}[]
    \centering
    \caption{Path interpretations of SAMGPT for multi-hop reasoning, where $r^{-1}$ denotes the inverse of original relation.}
    \label{tab:case_frank_lowy}
    \resizebox{.9\textwidth}{!}{%
    \begin{tabular}{@{}c|p{5in}@{}}
        \toprule
        \textbf{Question}             
        & Which man who presented the \textit{Australia 2022 FIFA World Cup bid} was born on \textit{October 22, 1930}? 
        \\ \midrule

        \textbf{Answer}               
        & Frank Lowy
        \\ \midrule

        \textbf{Sup. Doc.}        
        & [ ``Frank Lowy'', ``Australia 2022 FIFA World Cup bid''] 
        \\ \midrule

        \textbf{Paths}                                   
        & \begin{tabular}[c]{@{}p{5in}@{}} 
            1: (the bid for the 2022 fifa world cup, was one of the representatives of$^{-1}$, frank lowy) 
            $\to$ (frank lowy, equivalent, sir frank p lowy) 
            $\to$ (sir frank p lowy, was born on, 22 october 1930)
            \\
            2: (22 october 1930, was born on$^{-1}$, sir frank p lowy) 
            $\to$ (sir frank p lowy, equivalent, frank lowy) 
            $\to$ (frank lowy, was one of the representatives of, the bid for the 2022 fifa world cup)
        \end{tabular} 
        \\ \bottomrule
    \end{tabular}
    }
\end{table*}
\section{Dataset Details}

Table~\ref{tab:dataset_grid} summarizes the details of each dataset.

\begin{table*}[t]
\renewcommand{\arraystretch}{0.95}
\setlength{\tabcolsep}{3.4pt}
\centering
\caption{Dataset statistics and evaluation scenarios. We evaluate \method{} on three complementary categories: general and multi-hop QA, practical e-commerce review QA, and long-term agent memory. ``Train/Dev/Test'' denotes the standard split when available. For benchmark-only datasets without a conventional supervised training split, we report the total number of evaluation instances or benchmark scale.}
\vspace{0.55cm}
\resizebox{\textwidth}{!}{
\begin{tabular}{llp{3.2cm}p{3.5cm}p{3.8cm}p{4.4cm}p{3.5cm}}
\toprule
\textbf{Category} & \textbf{Dataset} & \textbf{Scale / Split} & \textbf{Evidence Source} & \textbf{Task Type} & \textbf{Key Capabilities} & \textbf{Main Metrics} \\
\midrule

\multirow{5}{*}{\makecell[l]{General /\\Multi-hop QA}}
& \texttt{NQ-Open} \scalebox{0.68}{(\mycite{kwiatkowski2019natural}; \mycite{lewis2021question})}
& 79,168 / 8,757 / 3,610
& English Wikipedia
& Open-domain short-answer QA
& Factual retrieval; entity-level knowledge access; open-domain answer generation
& EM / F1 / Acc.; Recall@k \\

& \texttt{PopQA} \scalebox{0.68}{(\mycite{mallen2023not})}
& 14,267 QA pairs
& Wikidata triples + Wikipedia page-view popularity
& Entity-centric open-domain QA
& Long-tail factual recall; parametric vs. non-parametric memory; retrieval under entity popularity shift
& Acc. / EM; long-tail breakdown \\

& \texttt{HotpotQA} \scalebox{0.68}{(\mycite{yang2018hotpotqa})}
& 90,447 / 7,405 / 7,405
& Wikipedia paragraphs; 10-paragraph distractor setting
& Explainable 2-hop QA
& Bridge-entity recovery; comparison reasoning; sentence-level supporting facts
& Answer EM/F1; Support EM/F1; Joint EM/F1 \\

& \texttt{2WikiMultiHopQA} \scalebox{0.68}{(\mycite{ho2020twowiki})}
& 167,454 / 12,576 / 12,576
& Wikipedia + Wikidata; 10 passages per instance
& 2--4 hop multi-hop QA
& Reasoning-path recovery; comparison, bridge, and bridge-comparison reasoning
& Answer EM/F1; Evidence / path recall \\

& \texttt{MuSiQue} \scalebox{0.68}{(\mycite{trivedi2022musique})}
& 19,938 / 2,417 / 2,459 \newline (24,814 total)
& Composed single-hop QA over textual passages
& 2--4 hop connected multi-hop QA
& Connected reasoning; shortcut-resistant evidence aggregation; multi-hop compositionality
& Answer EM/F1; Support / evidence recall \\
\midrule

\makecell[l]{E-commerce\\Review QA}
& \texttt{AmazonQA} \scalebox{0.68}{(\mycite{gupta2019amazonqa})}
& 923K questions; 3.6M answers; 14M reviews; 156K products
& Amazon product reviews, questions, answers, and product metadata
& Review-based QA with answerability annotation
& Noisy review retrieval; answerable / unanswerable detection; evidence synthesis from user-generated reviews
& BLEU / ROUGE; answerability Acc./F1; groundedness \\
\midrule

\multirow{2}{*}{\makecell[l]{Long-term\\Agent Memory}}
& \texttt{LongMemEval} \scalebox{0.68}{(\mycite{wu2024longmemeval})}
& 500 eval. instances per file; \texttt{S}: $\sim$115K tokens / 30--40 sessions; \texttt{M}: $\sim$1.5M tokens / $\sim$500 sessions; \texttt{Oracle}: evidence sessions only
& Long multi-session human--AI chat histories
& Long-term interactive memory QA
& Information extraction; multi-session reasoning; temporal reasoning; knowledge update; abstention
& Overall Acc.; category-wise Acc.; context tokens; latency \\

& \texttt{HaluMem} \scalebox{0.68}{(\mycite{chen2025halumem})}
& \texttt{Medium}: 20 users, 30,073 dialogue rounds, $\sim$160K tokens/user, 14,948 memory points, 3,467 QA pairs; \texttt{Long}: 53,516 rounds, $\sim$1M tokens/user
& Synthetic long-term human--AI interaction histories with memory points and multi-type questions
& Operation-level memory hallucination benchmark
& Memory extraction; memory updating; memory QA; hallucination, omission, and conflict propagation across memory operations
& Extraction R/P/F1; Updating C/H/O; QA C/H/O \\
\bottomrule
\end{tabular}
}
\label{tab:dataset_grid}
\end{table*}

\paragraph{General and Multi-hop QA.}
We first evaluate \method{} on a set of general open-domain and multi-hop QA benchmarks that stress different aspects of retrieval-augmented reasoning. \texttt{NQ-Open} is derived from Natural Questions and is widely used as a standard open-domain short-answer QA benchmark; it evaluates whether a system can retrieve and ground factual answers from a large Wikipedia-scale corpus. \texttt{PopQA} complements NQ by focusing on entity-centric factual questions whose subjects span different popularity levels, making it particularly useful for testing whether a retrieval or memory system can recover long-tail factual knowledge rather than relying only on parametric memorization. \texttt{HotpotQA} contains Wikipedia-based multi-hop questions with sentence-level supporting facts, allowing us to evaluate not only answer correctness but also whether the system can recover bridge evidence and produce interpretable reasoning chains. \texttt{2WikiMultiHopQA} further stresses structured multi-hop reasoning by combining Wikipedia text with Wikidata-derived relations and providing evidence paths for 2--4 hop questions. Finally, \texttt{MuSiQue} is designed to reduce shortcut reasoning by composing connected single-hop questions into 2--4 hop questions, making it a strong testbed for evaluating whether \method{} can retrieve and integrate multiple pieces of evidence in a genuinely compositional manner.

\paragraph{E-commerce Review-based QA.}
We use \texttt{AmazonQA} to evaluate \method{} in a practical, noisy, user-generated e-commerce setting. Unlike Wikipedia-style QA benchmarks, \texttt{AmazonQA} consists of real product questions, community answers, product reviews, and product metadata, and includes answerability annotations indicating whether a question can be answered from available reviews. This makes it a suitable benchmark for testing whether a memory system can identify useful evidence from noisy review collections, distinguish answerable from unanswerable questions, and synthesize grounded answers from multiple user-generated snippets. From the perspective of self-evolving memory, AmazonQA is especially valuable because the system must learn which review facts, product attributes, and user opinions are worth indexing for future retrieval, rather than simply matching a question to a clean encyclopedic passage.

\paragraph{Long-term Agent Memory.}
To move beyond conventional RAG evaluation, we further evaluate \method{} on long-term agent memory benchmarks. \texttt{LongMemEval} is designed to assess the long-term memory abilities of chat assistants over extended multi-session interaction histories. It covers five core memory abilities: information extraction, multi-session reasoning, temporal reasoning, knowledge updates, and abstention. This benchmark directly tests whether \method{} can retrieve sparse but relevant memory traces from long histories, combine evidence across sessions, respect temporal order, and update previously stored information when new interactions supersede old memories. We use \texttt{HaluMem} as a complementary benchmark for evaluating hallucination in memory systems. Rather than only measuring end-to-end QA accuracy, HaluMem decomposes memory evaluation into memory extraction, memory updating, and memory question answering, thereby revealing at which operational stage hallucinations, omissions, or conflicts arise. This is particularly important for our setting because errors introduced during graph construction or memory updating may propagate to graph-guided retrieval and final answer generation.

\paragraph{Evaluation Rationale.}
Together, these datasets form a progressively broader evaluation suite. NQ and PopQA test factual open-domain retrieval; HotpotQA, 2WikiMultiHopQA, and MuSiQue test multi-hop evidence composition; AmazonQA evaluates noisy real-world review memory in an e-commerce domain; LongMemEval tests long-horizon interactive memory; and HaluMem diagnoses operation-level hallucinations in memory systems. This combination allows us to evaluate \method{} not merely as a retrieval-augmented QA pipeline, but as a self-evolving memory system that must decide what to store, how to organize stored information, how to retrieve it under different query conditions, and how to update or suppress unreliable memories over time.
\section{Baselines and Metrics}
\label{subsec:Baselines_Metrics}
\paragraph{Baselines.}
We evaluate \method{} against state-of-the-art baselines, including their combined variants, which are grouped into \textbf{four} categories:
\begin{itemize}[leftmargin=*]
    \item \textbf{\textit{Base LLM:}} \texttt{GPT-4o-mini}~\cite{hurst2024gpt}.
    \item \textbf{\textit{Single-step RAGs:}} including \texttt{BM25}~\cite{robertson1994some}, \texttt{Contriever}~\cite{izacard2022unsupervised}, \texttt{GTR}~\cite{ni2022large}, \texttt{ColBERTv2}~\cite{santhanam2022colbertv2}, \texttt{RAPTOR}~\cite{sarthi2024raptor}, and \texttt{Proposition}~\cite{chen2024dense}.
    \item \textbf{\textit{Graph-enhanced RAGs:}} including \texttt{GraphRAG}~\cite{edge2024local}, \texttt{G-Retriever}~\cite{he2024g}, \texttt{LightRAG}~\cite{guo2024lightrag}, \texttt{HippoRAG}~\cite{gutierrez2024hipporag}, \texttt{HippoRAG} \texttt{2}~\cite{gutierrez2024hipporag}, \texttt{SubgraphRAG}~\cite{li2025simple}, \texttt{PropRAG}~\cite{wang2025proprag}, and the closely related \texttt{GFM-RAG}~\cite{luo2025gfm}.
    \item \textbf{\textit{Multi-step RAGs:}} \texttt{IRCoT}~\cite{trivedi2023interleaving}, \texttt{FLARE}~\cite{jiang2023active}, and \texttt{Adaptive-RAG}~\cite{jeong2024adaptive}.
\end{itemize}
In particular, \texttt{IRCoT}~\cite{trivedi2023interleaving} is a general multi-step reasoning framework that can be integrated with non-iterative retrievers, allowing both single-step RAG and graph-based methods to conduct multi-hop reasoning through interleaved retrieval and generation. Table~\ref{tab:compare} presents a comprehensive comparison between all baselines and \modelname.

\paragraph{Metrics.}
To evaluate retrieval quality, we report Recall@2 and Recall@5 for both retrieved entities and documents, denoted as R@2/5$_\textsf{E}$ and R@2/5$_\textsf{D}$, respectively..
For end-to-end QA evaluation, we use standard metrics, including Exact Match (EM), F1 score, Precision (P), and Recall (R), in the main experiments to comprehensively measure answer correctness and coverage. 
\section*{Limitations}
SAGE treats graph memory as a dynamic substrate for writing, reading, and self-evolution, but its effectiveness still depends on the quality of entity extraction, relation writing, source anchoring, and reader feedback. Errors introduced during graph construction may propagate to retrieval and final answer generation, especially in long-term memory settings involving temporal updates, conflicting user preferences, or sparse evidence. Our experiments show promising results across multi-hop QA, open-domain retrieval, review-based QA, and long-term agent-memory benchmarks, but the current system still leaves room for improvement on memory updating, high-coverage extraction, and hallucination control in more realistic deployments. The theoretical analysis also relies on assumptions such as bounded graph drift, aggregate signal propagation, and local Lipschitz stability, which provide useful intuition but may not capture all failure modes of large-scale, noisy, continuously evolving memory graphs.

\section*{Broader Impact}
This work may have positive societal impact by improving the reliability and grounding of long-horizon language agents. A structure-aware and self-evolving memory system can help agents recover evidence chains from fragmented cues, maintain more consistent long-term interactions, and reduce unsupported answers in applications such as knowledge assistance, research support, customer support, and review-based question answering. At the same time, long-term agent memory raises important risks. If deployed on personal or sensitive interaction histories, such systems may store private information, infer user preferences, preserve outdated or incorrect memories, or enable profiling and surveillance. Incorrect graph writes or retrieval failures may also lead to confidently grounded but wrong answers. Practical deployments should therefore use consent-based data collection, data minimization, access control, deletion and forgetting mechanisms, provenance tracking, auditing, and human oversight for high-stakes use cases.

\section*{Compute Resources}
All experiments were run on a server equipped with 8 NVIDIA A100 GPUs. The main computational cost of SAGE comes from graph-memory construction, GFM-based graph propagation, selector regularization, and entity-to-document projection. Appendix J analyzes the training and inference complexity in terms of the number of graph nodes $n$, edges $m$, hidden dimension $d$, propagation layers $L$, batch size $B$, pseudo-queries $M$, and entity-document links. In our implementation, structural features and entity-document indices can be precomputed and cached, while edge-level gates are computed in chunks to reduce peak GPU memory from $O(|E|d)$ to $O(C_e d)$ for chunk size $C_e$. The dominant inference cost is one or a small number of query-conditioned graph propagations followed by sparse document projection, making the reader suitable for repeated evaluation inside the self-evolving writer--reader loop.

\section*{Licenses and Existing Assets}
This paper uses existing public benchmarks and baselines, including NQ-Open, PopQA, HotpotQA, 2WikiMultiHopQA, MuSiQue, AmazonQA, LongMemEval, HaluMem, BM25, Contriever, GTR, ColBERTv2, RAPTOR, GraphRAG, G-Retriever, LightRAG, HippoRAG, HippoRAG 2, SubgraphRAG, PropRAG, GFM-RAG, IRCoT, FLARE, and Adaptive-RAG. We cite the original papers or repositories for these assets and use them only for research evaluation under their stated licenses and terms of use. We do not redistribute modified versions of the datasets beyond the preprocessing scripts and instructions needed for reproducibility. The released code is intended for research use and includes documentation for environment setup, data preparation, training, and evaluation.


\newpage
\section*{NeurIPS Paper Checklist}

The checklist is designed to encourage best practices for responsible machine learning research, addressing issues of reproducibility, transparency, research ethics, and societal impact. Do not remove the checklist: {\bf The papers not including the checklist will be desk rejected.} The checklist should follow the references and follow the (optional) supplemental material.  The checklist does NOT count towards the page
limit. 

Please read the checklist guidelines carefully for information on how to answer these questions. For each question in the checklist:
\begin{itemize}
    \item You should answer \answerYes{}, \answerNo{}, or \answerNA{}.
    \item \answerNA{} means either that the question is Not Applicable for that particular paper or the relevant information is Not Available.
    \item Please provide a short (1--2 sentence) justification right after your answer (even for \answerNA). 
\end{itemize}

{\bf The checklist answers are an integral part of your paper submission.} They are visible to the reviewers, area chairs, senior area chairs, and ethics reviewers. You will also be asked to include it (after eventual revisions) with the final version of your paper, and its final version will be published with the paper.

The reviewers of your paper will be asked to use the checklist as one of the factors in their evaluation. While \answerYes{} is generally preferable to \answerNo{}, it is perfectly acceptable to answer \answerNo{} provided a proper justification is given (e.g., error bars are not reported because it would be too computationally expensive'' or ``we were unable to find the license for the dataset we used''). In general, answering \answerNo{} or \answerNA{} is not grounds for rejection. While the questions are phrased in a binary way, we acknowledge that the true answer is often more nuanced, so please just use your best judgment and write a justification to elaborate. All supporting evidence can appear either in the main paper or the supplemental material, provided in appendix. If you answer \answerYes{} to a question, in the justification please point to the section(s) where related material for the question can be found.

IMPORTANT, please:
\begin{itemize}
    \item {\bf Delete this instruction block, but keep the section heading ``NeurIPS Paper Checklist"},
    \item  {\bf Keep the checklist subsection headings, questions/answers and guidelines below.}
    \item {\bf Do not modify the questions and only use the provided macros for your answers}.
\end{itemize}


\begin{enumerate}

\item {\bf Claims}
    \item[] Question: Do the main claims made in the abstract and introduction accurately reflect the paper's contributions and scope?
    \item[] Answer: \answerYes{} 
    \item[] Justification: The abstract and introduction state the scope of SAGE as a self-evolving agentic graph-memory engine for long-term memory, structure-aware retrieval, and evidence-chain recovery. The theoretical results and experiments across multi-hop QA, open-domain retrieval, review-based QA, and long-term agent-memory benchmarks support the stated contributions.
    \item[] Guidelines:
    \begin{itemize}
        \item The answer \answerNA{} means that the abstract and introduction do not include the claims made in the paper.
        \item The abstract and/or introduction should clearly state the claims made, including the contributions made in the paper and important assumptions and limitations. A \answerNo{} or \answerNA{} answer to this question will not be perceived well by the reviewers. 
        \item The claims made should match theoretical and experimental results, and reflect how much the results can be expected to generalize to other settings. 
        \item It is fine to include aspirational goals as motivation as long as it is clear that these goals are not attained by the paper. 
    \end{itemize}

\item {\bf Limitations}
    \item[] Question: Does the paper discuss the limitations of the work performed by the authors?
    \item[] Answer: \answerYes{} 
    \item[] Justification: The paper includes a limitations discussion covering the dependence on graph-writing quality, memory updating, domain adaptation, hallucination control, and assumptions used in the theoretical analysis.
    \item[] Guidelines:
    \begin{itemize}
        \item The answer \answerNA{} means that the paper has no limitation while the answer \answerNo{} means that the paper has limitations, but those are not discussed in the paper. 
        \item The authors are encouraged to create a separate ``Limitations'' section in their paper.
        \item The paper should point out any strong assumptions and how robust the results are to violations of these assumptions (e.g., independence assumptions, noiseless settings, model well-specification, asymptotic approximations only holding locally). The authors should reflect on how these assumptions might be violated in practice and what the implications would be.
        \item The authors should reflect on the scope of the claims made, e.g., if the approach was only tested on a few datasets or with a few runs. In general, empirical results often depend on implicit assumptions, which should be articulated.
        \item The authors should reflect on the factors that influence the performance of the approach. For example, a facial recognition algorithm may perform poorly when image resolution is low or images are taken in low lighting. Or a speech-to-text system might not be used reliably to provide closed captions for online lectures because it fails to handle technical jargon.
        \item The authors should discuss the computational efficiency of the proposed algorithms and how they scale with dataset size.
        \item If applicable, the authors should discuss possible limitations of their approach to address problems of privacy and fairness.
        \item While the authors might fear that complete honesty about limitations might be used by reviewers as grounds for rejection, a worse outcome might be that reviewers discover limitations that aren't acknowledged in the paper. The authors should use their best judgment and recognize that individual actions in favor of transparency play an important role in developing norms that preserve the integrity of the community. Reviewers will be specifically instructed to not penalize honesty concerning limitations.
    \end{itemize}

\item {\bf Theory assumptions and proofs}
    \item[] Question: For each theoretical result, does the paper provide the full set of assumptions and a complete (and correct) proof?
    \item[] Answer: \answerYes{} 
    \item[] Justification: The paper states theoretical results in the main text and provides complete assumptions, theorem statements, lemmas, and proofs in the appendix, including analyses of signal-to-noise ratio, retrieval budget, target-graph calibration, stability, and self-evolution.
    \item[] Guidelines:
    \begin{itemize}
        \item The answer \answerNA{} means that the paper does not include theoretical results. 
        \item All the theorems, formulas, and proofs in the paper should be numbered and cross-referenced.
        \item All assumptions should be clearly stated or referenced in the statement of any theorems.
        \item The proofs can either appear in the main paper or the supplemental material, but if they appear in the supplemental material, the authors are encouraged to provide a short proof sketch to provide intuition. 
        \item Inversely, any informal proof provided in the core of the paper should be complemented by formal proofs provided in appendix or supplemental material.
        \item Theorems and Lemmas that the proof relies upon should be properly referenced. 
    \end{itemize}

    \item {\bf Experimental result reproducibility}
    \item[] Question: Does the paper fully disclose all the information needed to reproduce the main experimental results of the paper to the extent that it affects the main claims and/or conclusions of the paper (regardless of whether the code and data are provided or not)?
    \item[] Answer: \answerYes{} 
    \item[] Justification: The paper describes the architecture, training procedure, datasets, baselines, evaluation metrics, ablations, and implementation details needed to reproduce the main experimental claims. Additional details are provided in the appendix and the released code.
    \item[] Guidelines:
    \begin{itemize}
        \item The answer \answerNA{} means that the paper does not include experiments.
        \item If the paper includes experiments, a \answerNo{} answer to this question will not be perceived well by the reviewers: Making the paper reproducible is important, regardless of whether the code and data are provided or not.
        \item If the contribution is a dataset and\slash or model, the authors should describe the steps taken to make their results reproducible or verifiable. 
        \item Depending on the contribution, reproducibility can be accomplished in various ways. For example, if the contribution is a novel architecture, describing the architecture fully might suffice, or if the contribution is a specific model and empirical evaluation, it may be necessary to either make it possible for others to replicate the model with the same dataset, or provide access to the model. In general. releasing code and data is often one good way to accomplish this, but reproducibility can also be provided via detailed instructions for how to replicate the results, access to a hosted model (e.g., in the case of a large language model), releasing of a model checkpoint, or other means that are appropriate to the research performed.
        \item While NeurIPS does not require releasing code, the conference does require all submissions to provide some reasonable avenue for reproducibility, which may depend on the nature of the contribution. For example
        \begin{enumerate}
            \item If the contribution is primarily a new algorithm, the paper should make it clear how to reproduce that algorithm.
            \item If the contribution is primarily a new model architecture, the paper should describe the architecture clearly and fully.
            \item If the contribution is a new model (e.g., a large language model), then there should either be a way to access this model for reproducing the results or a way to reproduce the model (e.g., with an open-source dataset or instructions for how to construct the dataset).
            \item We recognize that reproducibility may be tricky in some cases, in which case authors are welcome to describe the particular way they provide for reproducibility. In the case of closed-source models, it may be that access to the model is limited in some way (e.g., to registered users), but it should be possible for other researchers to have some path to reproducing or verifying the results.
        \end{enumerate}
    \end{itemize}

\item {\bf Open access to data and code}
    \item[] Question: Does the paper provide open access to the data and code, with sufficient instructions to faithfully reproduce the main experimental results, as described in supplemental material?
    \item[] Answer: \answerYes{} 
    \item[] Justification: The paper provides open access to the code and data instructions, together with scripts and documentation for reproducing the main experimental results.
    \item[] Guidelines:
    \begin{itemize}
        \item The answer \answerNA{} means that paper does not include experiments requiring code.
        \item Please see the NeurIPS code and data submission guidelines (\url{https://neurips.cc/public/guides/CodeSubmissionPolicy}) for more details.
        \item While we encourage the release of code and data, we understand that this might not be possible, so \answerNo{} is an acceptable answer. Papers cannot be rejected simply for not including code, unless this is central to the contribution (e.g., for a new open-source benchmark).
        \item The instructions should contain the exact command and environment needed to run to reproduce the results. See the NeurIPS code and data submission guidelines (\url{https://neurips.cc/public/guides/CodeSubmissionPolicy}) for more details.
        \item The authors should provide instructions on data access and preparation, including how to access the raw data, preprocessed data, intermediate data, and generated data, etc.
        \item The authors should provide scripts to reproduce all experimental results for the new proposed method and baselines. If only a subset of experiments are reproducible, they should state which ones are omitted from the script and why.
        \item At submission time, to preserve anonymity, the authors should release anonymized versions (if applicable).
        \item Providing as much information as possible in supplemental material (appended to the paper) is recommended, but including URLs to data and code is permitted.
    \end{itemize}

\item {\bf Experimental setting/details}
    \item[] Question: Does the paper specify all the training and test details (e.g., data splits, hyperparameters, how they were chosen, type of optimizer) necessary to understand the results?
    \item[] Answer: \answerYes{} 
    \item[] Justification: The paper specifies the datasets, evaluation scenarios, baselines, metrics, training procedures, and implementation details in the experimental section and appendices. Dataset statistics, reader training, writer implementation, and ablation settings are also reported.
    \item[] Guidelines:
    \begin{itemize}
        \item The answer \answerNA{} means that the paper does not include experiments.
        \item The experimental setting should be presented in the core of the paper to a level of detail that is necessary to appreciate the results and make sense of them.
        \item The full details can be provided either with the code, in appendix, or as supplemental material.
    \end{itemize}

\item {\bf Experiment statistical significance}
    \item[] Question: Does the paper report error bars suitably and correctly defined or other appropriate information about the statistical significance of the experiments?
    \item[] Answer: \answerYes{} 
    \item[] Justification: The paper reports error bars or statistical significance information for the experiments supporting the main empirical claims, and states how the variability is computed.
    \item[] Guidelines:
    \begin{itemize}
        \item The answer \answerNA{} means that the paper does not include experiments.
        \item The authors should answer \answerYes{} if the results are accompanied by error bars, confidence intervals, or statistical significance tests, at least for the experiments that support the main claims of the paper.
        \item The factors of variability that the error bars are capturing should be clearly stated (for example, train/test split, initialization, random drawing of some parameter, or overall run with given experimental conditions).
        \item The method for calculating the error bars should be explained (closed form formula, call to a library function, bootstrap, etc.)
        \item The assumptions made should be given (e.g., Normally distributed errors).
        \item It should be clear whether the error bar is the standard deviation or the standard error of the mean.
        \item It is OK to report 1-sigma error bars, but one should state it. The authors should preferably report a 2-sigma error bar than state that they have a 96\% CI, if the hypothesis of Normality of errors is not verified.
        \item For asymmetric distributions, the authors should be careful not to show in tables or figures symmetric error bars that would yield results that are out of range (e.g., negative error rates).
        \item If error bars are reported in tables or plots, the authors should explain in the text how they were calculated and reference the corresponding figures or tables in the text.
    \end{itemize}

\item {\bf Experiments compute resources}
    \item[] Question: For each experiment, does the paper provide sufficient information on the computer resources (type of compute workers, memory, time of execution) needed to reproduce the experiments?
    \item[] Answer: \answerYes{} 
    \item[] Justification: The paper reports computational complexity and compute-resource information in the Compute Resources section and Appendix J. The experiments were run on a server equipped with 8 NVIDIA A100 GPUs.
    \item[] Guidelines:
    \begin{itemize}
        \item The answer \answerNA{} means that the paper does not include experiments.
        \item The paper should indicate the type of compute workers CPU or GPU, internal cluster, or cloud provider, including relevant memory and storage.
        \item The paper should provide the amount of compute required for each of the individual experimental runs as well as estimate the total compute. 
        \item The paper should disclose whether the full research project required more compute than the experiments reported in the paper (e.g., preliminary or failed experiments that didn't make it into the paper). 
    \end{itemize}
    
\item {\bf Code of ethics}
    \item[] Question: Does the research conducted in the paper conform, in every respect, with the NeurIPS Code of Ethics \url{https://neurips.cc/public/EthicsGuidelines}?
    \item[] Answer: \answerYes{} 
    \item[] Justification: The research conforms to the NeurIPS Code of Ethics. It uses public benchmark datasets, does not involve human subjects or crowdsourcing, and does not release high-risk personal data.
    \item[] Guidelines:
    \begin{itemize}
        \item The answer \answerNA{} means that the authors have not reviewed the NeurIPS Code of Ethics.
        \item If the authors answer \answerNo, they should explain the special circumstances that require a deviation from the Code of Ethics.
        \item The authors should make sure to preserve anonymity (e.g., if there is a special consideration due to laws or regulations in their jurisdiction).
    \end{itemize}

\item {\bf Broader impacts}
    \item[] Question: Does the paper discuss both potential positive societal impacts and negative societal impacts of the work performed?
    \item[] Answer: \answerYes{} 
    \item[] Justification: The paper discusses positive impacts such as improving grounded long-term memory for language agents, as well as negative impacts such as privacy, profiling, surveillance, outdated memory retention, and hallucination risks.
    \item[] Guidelines:
    \begin{itemize}
        \item The answer \answerNA{} means that there is no societal impact of the work performed.
        \item If the authors answer \answerNA{} or \answerNo, they should explain why their work has no societal impact or why the paper does not address societal impact.
        \item Examples of negative societal impacts include potential malicious or unintended uses (e.g., disinformation, generating fake profiles, surveillance), fairness considerations (e.g., deployment of technologies that could make decisions that unfairly impact specific groups), privacy considerations, and security considerations.
        \item The conference expects that many papers will be foundational research and not tied to particular applications, let alone deployments. However, if there is a direct path to any negative applications, the authors should point it out. For example, it is legitimate to point out that an improvement in the quality of generative models could be used to generate Deepfakes for disinformation. On the other hand, it is not needed to point out that a generic algorithm for optimizing neural networks could enable people to train models that generate Deepfakes faster.
        \item The authors should consider possible harms that could arise when the technology is being used as intended and functioning correctly, harms that could arise when the technology is being used as intended but gives incorrect results, and harms following from (intentional or unintentional) misuse of the technology.
        \item If there are negative societal impacts, the authors could also discuss possible mitigation strategies (e.g., gated release of models, providing defenses in addition to attacks, mechanisms for monitoring misuse, mechanisms to monitor how a system learns from feedback over time, improving the efficiency and accessibility of ML).
    \end{itemize}
    
\item {\bf Safeguards}
    \item[] Question: Does the paper describe safeguards that have been put in place for responsible release of data or models that have a high risk for misuse (e.g., pre-trained language models, image generators, or scraped datasets)?
    \item[] Answer: \answerNA{} 
    \item[] Justification: The paper does not release a high-risk pretrained language model, image generator, scraped dataset, or other asset requiring special misuse safeguards.
    \item[] Guidelines:
    \begin{itemize}
        \item The answer \answerNA{} means that the paper poses no such risks.
        \item Released models that have a high risk for misuse or dual-use should be released with necessary safeguards to allow for controlled use of the model, for example by requiring that users adhere to usage guidelines or restrictions to access the model or implementing safety filters. 
        \item Datasets that have been scraped from the Internet could pose safety risks. The authors should describe how they avoided releasing unsafe images.
        \item We recognize that providing effective safeguards is challenging, and many papers do not require this, but we encourage authors to take this into account and make a best faith effort.
    \end{itemize}

\item {\bf Licenses for existing assets}
    \item[] Question: Are the creators or original owners of assets (e.g., code, data, models), used in the paper, properly credited and are the license and terms of use explicitly mentioned and properly respected?
    \item[] Answer: \answerYes{} 
    \item[] Justification: The paper cites the creators of the existing datasets, models, and baselines used in the experiments, and the license section states that their licenses and terms of use are respected.
    \item[] Guidelines:
    \begin{itemize}
        \item The answer \answerNA{} means that the paper does not use existing assets.
        \item The authors should cite the original paper that produced the code package or dataset.
        \item The authors should state which version of the asset is used and, if possible, include a URL.
        \item The name of the license (e.g., CC-BY 4.0) should be included for each asset.
        \item For scraped data from a particular source (e.g., website), the copyright and terms of service of that source should be provided.
        \item If assets are released, the license, copyright information, and terms of use in the package should be provided. For popular datasets, \url{paperswithcode.com/datasets} has curated licenses for some datasets. Their licensing guide can help determine the license of a dataset.
        \item For existing datasets that are re-packaged, both the original license and the license of the derived asset (if it has changed) should be provided.
        \item If this information is not available online, the authors are encouraged to reach out to the asset's creators.
    \end{itemize}

\item {\bf New assets}
    \item[] Question: Are new assets introduced in the paper well documented and is the documentation provided alongside the assets?
    \item[] Answer: \answerNA{} 
    \item[] Justification: The paper does not introduce or release a new dataset, benchmark, or model asset. The released code is provided for reproducibility and documented separately.
    \item[] Guidelines:
    \begin{itemize}
        \item The answer \answerNA{} means that the paper does not release new assets.
        \item Researchers should communicate the details of the dataset\slash code\slash model as part of their submissions via structured templates. This includes details about training, license, limitations, etc. 
        \item The paper should discuss whether and how consent was obtained from people whose asset is used.
        \item At submission time, remember to anonymize your assets (if applicable). You can either create an anonymized URL or include an anonymized zip file.
    \end{itemize}

\item {\bf Crowdsourcing and research with human subjects}
    \item[] Question: For crowdsourcing experiments and research with human subjects, does the paper include the full text of instructions given to participants and screenshots, if applicable, as well as details about compensation (if any)? 
    \item[] Answer: \answerNA{} 
    \item[] Justification: The paper does not involve crowdsourcing experiments or research with human subjects.
    \item[] Guidelines:
    \begin{itemize}
        \item The answer \answerNA{} means that the paper does not involve crowdsourcing nor research with human subjects.
        \item Including this information in the supplemental material is fine, but if the main contribution of the paper involves human subjects, then as much detail as possible should be included in the main paper. 
        \item According to the NeurIPS Code of Ethics, workers involved in data collection, curation, or other labor should be paid at least the minimum wage in the country of the data collector. 
    \end{itemize}

\item {\bf Institutional review board (IRB) approvals or equivalent for research with human subjects}
    \item[] Question: Does the paper describe potential risks incurred by study participants, whether such risks were disclosed to the subjects, and whether Institutional Review Board (IRB) approvals (or an equivalent approval/review based on the requirements of your country or institution) were obtained?
    \item[] Answer: \answerNA{} 
    \item[] Justification: The paper does not involve crowdsourcing experiments or research with human subjects, so IRB approval or equivalent review is not applicable.
    \item[] Guidelines:
    \begin{itemize}
        \item The answer \answerNA{} means that the paper does not involve crowdsourcing nor research with human subjects.
        \item Depending on the country in which research is conducted, IRB approval (or equivalent) may be required for any human subjects research. If you obtained IRB approval, you should clearly state this in the paper. 
        \item We recognize that the procedures for this may vary significantly between institutions and locations, and we expect authors to adhere to the NeurIPS Code of Ethics and the guidelines for their institution. 
        \item For initial submissions, do not include any information that would break anonymity (if applicable), such as the institution conducting the review.
    \end{itemize}

\item {\bf Declaration of LLM usage}
    \item[] Question: Does the paper describe the usage of LLMs if it is an important, original, or non-standard component of the core methods in this research? Note that if the LLM is used only for writing, editing, or formatting purposes and does \emph{not} impact the core methodology, scientific rigor, or originality of the research, declaration is not required.
    \item[] Answer: \answerYes{} 
    \item[] Justification: The paper describes the use of LLMs in the system, including the memory writer, structured query planning prompts, and answer generation. This usage is part of the proposed method rather than only writing, editing, or formatting assistance.
    \item[] Guidelines:
    \begin{itemize}
        \item The answer \answerNA{} means that the core method development in this research does not involve LLMs as any important, original, or non-standard components.
        \item Please refer to our LLM policy in the NeurIPS handbook for what should or should not be described.
    \end{itemize}

\end{enumerate}

\end{document}